\documentclass[a4paper,11pt]{article}


\usepackage{amsmath,amsfonts,bm}









\def\eqref#1{equation~\ref{#1}}









\def\1{\bm{1}}


\def\rf{{\textnormal{f}}}









\DeclareMathAlphabet{\mathsfit}{\encodingdefault}{\sfdefault}{m}{sl}
\SetMathAlphabet{\mathsfit}{bold}{\encodingdefault}{\sfdefault}{bx}{n}






\newcommand{\E}{\mathbb{E}}

\newcommand{\R}{\mathbb{R}}

\DeclareMathOperator{\sign}{sign}

\usepackage{geometry}
\geometry{left=2.5cm}
\geometry{right=2cm}
\geometry{top=2cm}
\geometry{bottom=2cm}

\usepackage[utf8]{inputenc} 
\usepackage[T1]{fontenc}    
\usepackage{url}            
\usepackage{booktabs}       
\usepackage{amsfonts}       
\usepackage{nicefrac}       
\usepackage{microtype}      
\usepackage{xcolor}         

\usepackage[parfill]{parskip}

\usepackage{microtype}
\usepackage{graphicx}
\usepackage{subfig}
\usepackage{booktabs} 
\usepackage{tcolorbox}
\usepackage{wrapfig}

\usepackage{amsmath}
\usepackage{amssymb}
\usepackage{mathtools}
\usepackage{amsthm}
\usepackage{rotating}
\usepackage{comment}
\usepackage{enumerate}
\usepackage{enumitem}

\usepackage{color}
\definecolor{darkgreen}{rgb}{0.0,0.5,0.0}

\usepackage[pagebackref]{hyperref}
\hypersetup{
	colorlinks=true,        
	linkcolor=blue,         
	filecolor=magenta,
	citecolor=darkgreen,         
	urlcolor=blue           
}
\renewcommand*{\backrefalt}[4]{%
    \ifcase #1 \footnotesize{(Not cited.)}%
    \or        \footnotesize{(Cited on page~#2)}%
    \else      \footnotesize{(Cited on pages~#2)}%
    \fi}

\usepackage[capitalize,noabbrev]{cleveref}


\usepackage{algorithm}
\usepackage{algorithmic}
\usepackage{authblk}
\usepackage{natbib}

\theoremstyle{plain}
\newtheorem{theorem}{Theorem}[section]
\newtheorem{proposition}[theorem]{Proposition}
\newtheorem{lemma}[theorem]{Lemma}
\newtheorem{corollary}[theorem]{Corollary}
\theoremstyle{definition}
\newtheorem{definition}[theorem]{Definition}
\newtheorem{assumption}[theorem]{Assumption}
\theoremstyle{remark}
\newtheorem{remark}[theorem]{Remark}
\usepackage{enumitem}
\newcommand{\bigO}{\mathcal{O}}

\newcommand{\erf}{\text{Erf}}
 
\newcommand{\tr}{\text{Tr}}


\DeclareMathOperator{\diag}{diag}


\def \lf{\left\lfloor}   
\def \rf{\right\rfloor}


\newtcolorbox{mybox}[2][]{
  colframe = white, 
  colback  = gray!7,
  #1
}


\newcommand{\mycoloredbox}[2]{%
  \begin{tcolorbox}[
    colback=gray!7, 
    colframe=white, 
    boxrule=0mm, 
    left=4pt, 
    right=4pt, 
    top=3pt, 
    bottom=3pt, 
    width=0.98\textwidth, 
    #1 
  ]
  #2
  \end{tcolorbox}
}

\def\toptitlebar{\hrule height1pt \vskip .25in} 
\def\bottomtitlebar{\vskip .22in \hrule height1pt \vskip .3in}

\title{
\toptitlebar
{{\center\baselineskip 18pt
                      {\Large\bf Adaptive Methods through the Lens of SDEs: \\ Theoretical Insights on the Role of Noise}}
} 
\bottomtitlebar}
\date{}

\author[1]{\hspace{1.6cm} Enea Monzio Compagnoni}
\author[1]{Tianlin Liu}
\author[1]{Rustem Islamov}
\author[2]{\newline  Frank Norbert Proske}
\author[3,4,5]{Antonio Orvieto}
\author[1]{Aurelien Lucchi}

\affil[1]{University of Basel, Switzerland}
\affil[2]{University of Oslo, Norway}
\affil[3]{Max Planck Institute for Intelligent Systems, Germany}
\affil[4]{ELLIS Institute Tübingen, Germany}
\affil[5]{Tübingen AI Center, Germany}

\begin{document}

\maketitle

\begin{abstract}

\noindent
Despite the vast empirical evidence supporting the efficacy of adaptive optimization methods in deep learning, their theoretical understanding is far from complete. This work introduces novel SDEs for commonly used adaptive optimizers: SignSGD, RMSprop(W), and Adam(W). These SDEs offer a quantitatively accurate description of these optimizers and help illuminate an intricate relationship between adaptivity, gradient noise, and curvature. Our novel analysis of SignSGD highlights a noteworthy and precise contrast to SGD in terms of convergence speed, stationary distribution, and robustness to heavy-tail noise. We extend this analysis to AdamW and RMSpropW, for which we observe that the role of noise is much more complex. Crucially, we support our theoretical analysis with experimental evidence by verifying our insights: this includes numerically integrating our SDEs using Euler-Maruyama discretization on various neural network architectures such as MLPs, CNNs, ResNets, and Transformers. Our SDEs accurately track the behavior of the respective optimizers, especially when compared to previous SDEs derived for Adam and RMSprop. We believe our approach can provide valuable insights into best training practices and novel scaling rules.

\end{abstract}

\vspace{-0.2cm}

\section{Introduction}\label{sec:Introduction}

\vspace{-0.1cm}

Adaptive optimizers lay the foundation for effective training of modern deep learning models. These methods are typically employed to optimize an objective function expressed as a sum of losses across $N$ individual data points: $\min_{x \in \R^d} [ f(x) := \frac{1}{N} \sum_{i=1}^N f_i(x) ], \text{ where } f,f_i:\R^d \to \R, \;\; i=1,\dots,N.$\\
Due to the practical difficulties of selecting the learning rate of stochastic gradient descent, adaptive methods have grown in popularity over the past decade. At a high level, these optimizers adjust the learning rate for each parameter based on the historical gradients. Popular optimizers that belong to this family are RMSprop \citep{hinton2012rmsprop}, Adam \citep{kingma2014adam}, SignSGD \citep{bernstein2018signsgd}, AdamW \citep{loshchilov2018decoupled}, and many other variants. SignSGD is often used for compressing gradients in distributed machine learning \citep{karimireddy2019error}, but it also has gained popularity due to its connection to RMSprop and Adam~\citep{balles2018dissecting}. The latter algorithms have emerged as the standard methods for training modern large language models, partly because of enhancements in signal propagation~\citep{noci2022signal}.

Although adaptive methods are widely favored in practice, their theoretical foundations remain enigmatic. Recent research has illuminated some of their advantages: \cite{zhang2020adaptive} demonstrated how gradient clipping addresses heavy-tailed gradient noise, \cite{pan2023toward} related the success of Adam over SGD to sharpness, and \cite{yang2024two} showed that adaptive methods are more resilient to poor learning rate tuning than SGD. At the same time, many optimization studies focus on worst-case convergence rates: These rates (e.g., \citet{defossez2020simple}) are valuable, yet they provide an incomplete depiction of algorithm behavior, showing no quantifiable advantage over SGD. One aspect lacking clarity is the precise role of noise in the algorithm trajectory.

Our investigation aims to study how gradient noise influences the dynamics of adaptive optimizers and how it impacts their asymptotic behaviors in terms of expected loss and stationary distribution. In particular, we want to understand which algorithms are more resilient to high (possibly heavy-tailed) gradient noise levels. To do this, we rely on stochastic differential equations (SDEs) which have become popular in the literature to study the behavior of optimization algorithms \citep{li2017stochastic,jastrzkebski2017three}. These continuous-time models unlock powerful tools from Itô calculus, enabling us to establish convergence bounds, determine stationary distributions, unveil implicit regularization, and elucidate the intricate interplay between landscape and noise. Notably, SDEs facilitate direct comparisons between optimizers by explicitly illustrating how each hyperparameter and certain landscape features influence their dynamics \citep{orvieto2019continuous,Malladi2022AdamSDE,compagnoni2023sde, compagnoni2024sde,compagnoni2025unbiased}.

We begin by analyzing SignSGD, showing how the ratio between the gradient and the level of gradient noise affects its dynamics and elucidating the impact of noise at convergence. After examining the case where the gradient noise has an infinite variance, we extend our analysis to Adam and RMSprop with \textit{decoupled} weight decay~\citep{loshchilov2018decoupled} -- i.e. AdamW and RMSpropW: for both, we refine batch size scaling rules and compare the role of noise to SignSGD. Our analysis provides some theoretical grounding for the resilience of these adaptive methods to high noise levels. We highlight that Adam and RMSprop are byproducts of our analysis and that our novel SDEs are derived under weaker assumptions than the literature \citep{zhou2020towards,Malladi2022AdamSDE}.

\vspace{-0.2cm}

\paragraph{Contributions.}
We identify our key contributions as follows:
\begin{enumerate}
    \item We derive the first\footnote{In a concurrent work, \cite{xiao2024exact} derived an SDE for SignSGD in the high dimensional setting for a linear regression task: See Appendix F in \cite{xiao2024exact}  for a comparison with our SDE.} SDE for SignSGD under very general assumptions: We show that SignSGD exhibits three different phases of the dynamics and characterize the loss behavior in these phases, including the stationary distribution and asymptotic loss value;
    \item We prove that for SignSGD, noise inversely affects the convergence rate of both the loss and the iterates. Differently, it has a linear impact on the asymptotic expected loss and the asymptotic variance of the iterates. This is in contrast to SGD, where noise does not influence the convergence speed, but it has a quadratic effect on the loss and variance of the iterates. Finally, we show that, even if the noise has infinite variance, SignSGD is resilient: its performance is only marginally impacted. In the same conditions, SGD diverges;
    \item We derive new, improved, SDEs for AdamW and RMSpropW and use them to (i) show a novel batch size scaling rule and (ii) inspect the stationary distribution and stationary loss value in convex quadratics. In particular, we dive into the properties of weight decay: while for vanilla Adam and RMSprop the effect of noise at convergence mimics SignSGD, something different happens in AdamW and RMSpropW --- Due to an intricate interaction between noise, curvature, and regularization, \textit{decoupled} weight decay plays a crucial stabilization role at high noise levels near the minimizer;
    \item We empirically verify every theoretical insight we derive. Importantly, we integrate our SDEs with Euler-Maruyama to confirm that our SDEs faithfully track their respective optimizers. We do so on an MLP, a CNN, a ResNet, and a Transformer. For RMSprop and Adam, our SDEs exhibit superior modeling power than the SDEs already in the literature. We emphasize that while our results rely on certain regularity assumptions for loss functions and gradient noise, their applicability extends beyond these. For example, we validate our novel scaling rule for AdamW on a Pythia-like 160M LLM \citep{pythia} trained on $2.5B$/$10B$ tokens from the SlimPajama dataset \citep{cerebras2023slimpajama}.
\end{enumerate}

\vspace{-0.2cm}

\section{Related work}
\label{sec:RelatedWorks}

\paragraph{SDE approximations and applications.}

\vspace{-0.2cm}

\citep{li2017stochastic} introduced a formal theoretical framework aimed at deriving SDEs that effectively model the inherent stochastic nature of optimizers. Ever since, SDEs have found several applications in the field of machine learning, for instance in connection with \emph{stochastic optimal control} to select the stepsize \citep{li2017stochastic,li2019stochastic} and batch size \citep{zhao2022batch}, the derivation of \emph{convergence bounds} and \emph{stationary distributions} \citep{compagnoni2023sde,compagnoni2024sde}, \emph{implicit regularization} \citep{smith2021origin}, and \emph{scaling rules} \citep{jastrzkebski2017three}. Previous work by \cite{Malladi2022AdamSDE} has already made strides in deriving SDE models for RMSprop and Adam, albeit under somewhat restrictive assumptions. They establish a scaling rule that they assert remains valid throughout the entirety of the dynamics. While their derivation builds on the approach of \cite{jastrzkebski2017three}, which may be problematic in the general setting (see Appendix \ref{sec:BackgroundApp} for further discussion), our analysis suggests that the SDEs proposed in \cite{Malladi2022AdamSDE} provide accurate approximations primarily near minima, implying that the corresponding scaling rules might not hold globally. \citep{zhou2020towards} derived a Lévy SDE for Adam; however, the approximation relies on random coefficients, a technique that is theoretically sound only under very specific conditions (see \cite{kohatsu1997stochastic,bishop2019stability}). \cite{xie2022adaptive} modeled AdamW with an SDE to investigate the roles of learning rate adaptivity and momentum in saddle-point escaping and flat minima selection. However, since their derivation does not follow a formal framework, it currently lacks comprehensive approximation guarantees. Finally, \cite{zhou2024towards} presented an informal SDE for the iterates of AdamW: Their derivation relies on several strong assumptions and approximations, which may benefit from further formal justification. 

\vspace{-0.4cm}

\paragraph{Influence of noise on convergence.} Several empirical papers demonstrate that adaptive algorithms adjust better to the noise during training. Specifically, \citep{zhang2020adaptive} noticed a consistent gap in the performance of SGD and Adam on language models and connected that phenomenon with heavy-tailed noise distributions.  \citep{pascanu2013difficulty} suggests using gradient clipping to deal with heavy tail noise, and consequently several follow-up works analyzed clipped SGD under heavy-tailed noise \citep{zhang2019gradient, mai2021stability,puchkin2024breaking}. \citet{kunstner2024heavy} 
present thorough numerical experiments illustrating that a significant contributor to heavy-tailed noise during language model training is class imbalance, where certain words occur much more frequently than others. They demonstrate that adaptive optimization methods such as Adam and SignSGD can better adapt to such class imbalances. However, the theoretical understanding of the influence of noise in the context of adaptive algorithms is much more limited. The first convergence results on Adam and RMSprop were derived under bounded stochastic gradients assumption \citep{de2018convergence, zaheer2018adaptive,chen2018convergence,defossez2020simple}. Later, this noise model was relaxed to weak growth condition \citep{zhang2022Adam, wang2022provable} and its coordinate-wise version \citep{hong2023high, wang2023closing} and sub-gaussian noise \citep{li2023convergence}. SignSGD and its momentum version Signum were originally studied as a method for compressed communication \citep{bernstein2018signsgd} under bounded variance assumption, but with a requirement of large batches. Several works provided counterexamples where SignSGD fails to converge if stochastic and full gradients are not correlated enough \citep{karimireddy2019errorfeedback, safaryan2021signsgd}. In the case of AdamW, \citep{zhou2022win,zhou2024towards} provided convergence guarantees under restrictive assumptions such as bounded gradient and bounded noise. All aforementioned results only show that SignSGD, Adam, and RMSprop at least do not perform worse than vanilla SGD. None of them studied how noise affects the dynamics of the algorithm: In this work, we attempt to close this gap.

\vspace{-0.4cm}

\section{Formal statements \& insights: the SDEs}\label{sec:Insights}

\vspace{-0.3cm}

This section provides the general formulations of the SDEs of SignSGD (Theorem \ref{thm:SignSGD_SDE_Insights_Full}) and AdamW (Theorem \ref{thm:AdamW_SDE_Insights_Full}). Due to the technical nature of the analysis, we refer the reader to the appendix for the complete formal statements and proofs and only provide a sketch of the proof of key results.

\vspace{-0.3cm}

\paragraph{Assumptions and notation.} In this section, we collect most of the notation and assumptions used in the paper. All our analysis take place on a filtered probability space $\left(\Omega, \mathcal{F}, \{\mathcal{F}_t\}_{t\ge0}, \mathbb{P}\right)$. The batches $\gamma$ are of size $B\geq 1$ and modeled as i.i.d.~random variables uniformly distributed on $\{ 1, \dots, N \}$. We assume that the stochastic gradient $\nabla f_{\gamma}(x) := \frac{1}{B} \sum_{i\in \gamma} \nabla (f_i(x))$ can be decomposed as $ \nabla f(x) + Z(x)$, where $\nabla f(x)$ is the full gradient and $Z(x)$ is the batch noise. We assume that $\E[Z(x)]=0$ and unless we study the cases where the gradient variance is unbounded, we write $Cov(Z(x))=\Sigma(x)$ (we omit the size of the batch $\gamma$ unless relevant.) s.t. $\sqrt{\Sigma(x)}$ is bounded, Lipschitz, satisfies affine growth, and together with its derivatives, it grows at most polynomially fast (Definition 2.5 in \cite{Malladi2022AdamSDE}). Importantly, we assume that $Z(x)$ has a bounded and smooth probability density function whose derivatives are all integrable: A common assumption in the literature is for $Z(x)$ to be Gaussian\footnote{See \cite{jastrzkebski2017three} for the justification why this might be the case.} \citep{ahn2012bayesian,chen2014stochastic,mandt2016variational,stephan2017stochastic,zhu2019anisotropic,wu2020noisy,Xie2021}, while our assumption allows for heavy-tailed distributions such as the Student's t. Specifically, \cite{li2017stochastic,mertikopoulos2018convergence,raginsky2012continuous,zhu2019anisotropic,mandt2016variational,ahn2012bayesian,jastrzkebski2017three} use a Gaussian noise with constant covariance matrix to model batch noise. To derive the stationary distribution around an optimum, we approximate the loss function with a quadratic convex function $f(x) = \frac{1}{2} x^{\top} H x$ as commonly done in the literature~\citep{ge2015escaping,levy2016power, jin2017escape, poggio2017theory, mandt2017stochastic,compagnoni2023sde}. Finally, $\eta >0$ is the step size, the $\beta$s refer to momentum parameters, $\theta>0$ is the~(decoupled) $L^2$-regularization parameter, and $\epsilon>0$ is a scalar used for numerical stability. Finally, we use $W_t$ to indicate a Brownian motion.

The following definition formalizes the idea that an SDE can be a ``good model'' to describe an optimizer. It is drawn from the field of numerical analysis of SDEs (see \cite{mil1986weak}) and it quantifies the disparity between the discrete and the continuous processes.

\begin{definition}[Weak Approximation]\label{def:weak_approximation}
A continuous-time stochastic process $\{X_t\}_{ t \in [0, T]}$ is an order $\alpha$ weak approximation (or $\alpha$-order SDE) of a discrete stochastic process $\{x_k\}_{k=0}^{\lf T/\eta \rf}$ if for every polynomial growth function $g$, there exists a positive constant $C$, independent of the stepsize $\eta$, such that $ \max _{k=0, \ldots, \lf T/\eta \rf}\left|\E g\left(x_k\right)-\E g\left(X_{k \eta}\right)\right| \leq C \eta^\alpha.$
\end{definition}

\vspace{-0.2cm}

\subsection{SignSGD SDE}

In this section, we derive an SDE model for SignSGD, which we believe to be a novel addition to the existing literature. This derivation will reveal the unique manner in which noise influences the dynamics of SignSGD. First, we recall the update equation of SignSGD:
\vspace{-0.2cm}
\begin{align} \label{eq:SignSGD_Discr_Update}
x_{k+1} = x_k - \eta \sign \left( \nabla f_{\gamma_k }(x_k) \right).
\end{align}

\vspace{-0.5cm}

The following theorem derives a formal continuous-time model for SignSGD.

\begin{figure}%
   \centering
   \vspace{-1.2cm}
    \subfloat{{\includegraphics[width=0.35\linewidth]{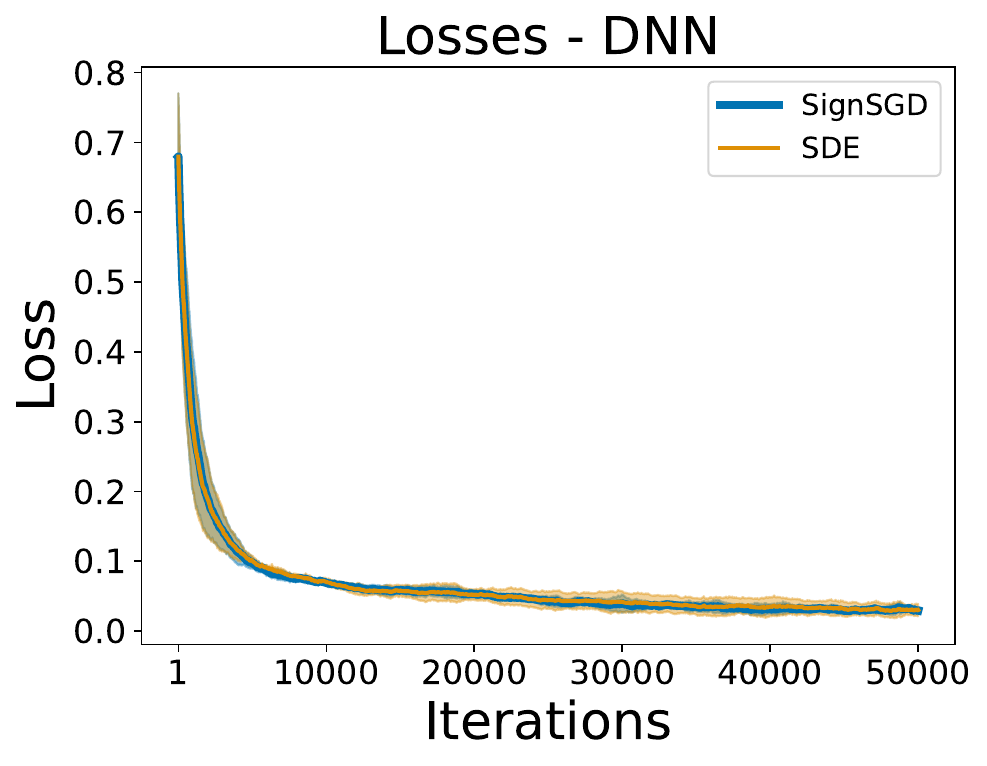} }}%
    \hspace{1.6cm}\subfloat{{\includegraphics[width=0.35\linewidth]{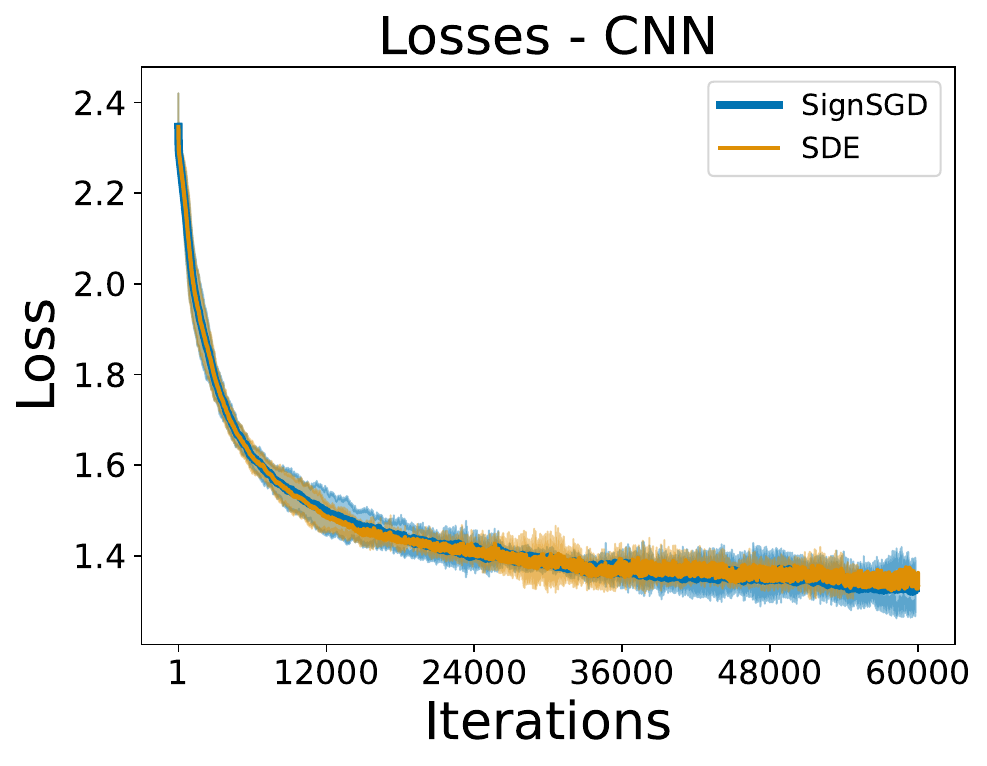} }} \\
    \subfloat{{\includegraphics[width=0.35\linewidth]{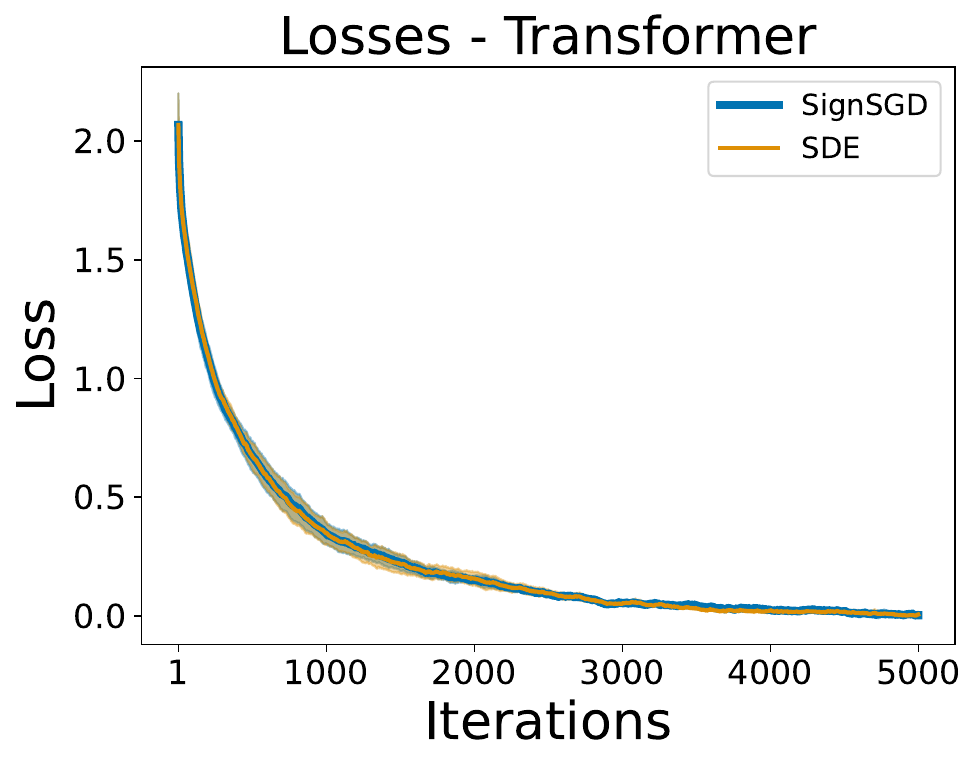} }}%
    \hspace{1.6cm}\subfloat{{\includegraphics[width=0.35\linewidth]{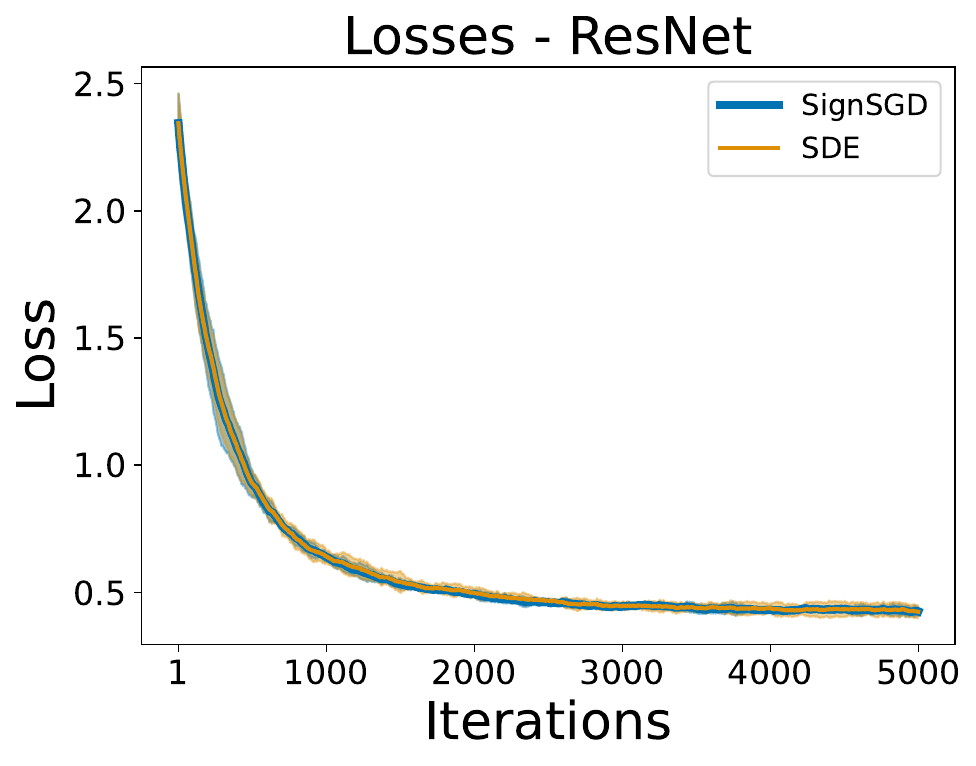} }}%
    \caption{Comparison of SignSGD and its SDE in terms of $f(x)$: Our SDE successfully tracks the dynamics of SignSGD on several architectures, datasets, and hyperparameters: DNN on the Breast Cancer dataset (Top-Left); CNN on MNIST (Top-Right); Transformer on MNIST (Bottom-Left); ResNet on CIFAR-10 (Bottom-Right).}%
    \label{fig:SignSGD_SDE_Validation}%
\end{figure}

\begin{theorem}[Informal Statement of Theorem \ref{thm:SignSGD_SDE}] \label{thm:SignSGD_SDE_Insights_Full}
Under sufficient regularity conditions, the solution of the following SDE is an order $1$ weak approximation of the discrete update of SignSGD:
\begin{align}\label{eq:SignSGD_SDE_Full_Insights}
d X_t = - (1 - 2 \mathbb{P}(\nabla f_{\gamma}(X_{t})<0)) dt + \sqrt{\eta} \sqrt{\bar\Sigma (X_t)} d W_t,
\end{align}

\vspace{-0.3cm}

where $\bar\Sigma(x)$ is the noise covariance $\bar\Sigma(x) = \E[\xi_{\gamma}(x)\xi_{\gamma}(x)^\top]$, and $\xi_{\gamma}(x):= \sign (\nabla f_{\gamma}(x)) - 1 + 2 \mathbb{P}(\nabla f_{\gamma}(x)<0)$ is the noise of $ \sign\left(\nabla f_{\gamma}(x) \right)$.
\end{theorem}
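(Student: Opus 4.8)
The plan is to invoke the standard moment-matching machinery for weak SDE approximations (Milstein's theorem \cite{mil1986weak}, in the form used by \cite{li2017stochastic,Malladi2022AdamSDE}), which reduces the global bound of Definition~\ref{def:weak_approximation} to a comparison of the first few conditional moments of a single step. Write $\Delta := x_{k+1}-x_k$ for the one-step increment of \eqref{eq:SignSGD_Discr_Update} started at $x_k=x$, and $\bar\Delta := X_{\eta}-X_0$ for the one-step increment of \eqref{eq:SignSGD_SDE_Full_Insights} started at $X_0=x$. It then suffices to show, uniformly in $x$, that $\lvert \E[\Delta_i]-\E[\bar\Delta_i]\rvert$ and $\lvert \E[\Delta_i\Delta_j]-\E[\bar\Delta_i\bar\Delta_j]\rvert$ are $\bigO(\eta^2)$, and that the third absolute moments $\E[\lvert\Delta_{i_1}\Delta_{i_2}\Delta_{i_3}\rvert]$ and $\E[\lvert\bar\Delta_{i_1}\bar\Delta_{i_2}\bar\Delta_{i_3}\rvert]$ are $\bigO(\eta^2)$; together with the regularity conditions imported in the assumptions, these yield order $\alpha=1$.

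The discrete moments are where the sign structure pays off. Since each coordinate of $\sign(\nabla f_\gamma(x))$ lies in $\{-1,0,1\}$, every moment of $\Delta/\eta$ is automatically finite and $\bigO(1)$, even for heavy-tailed $Z(x)$. Using that the density of $Z(x)$ has no atom at $0$, I get coordinatewise $\E[\sign(\nabla f_\gamma(x))] = 1-2\mathbb{P}(\nabla f_\gamma(x)<0)$, hence $\E[\Delta] = -\eta\,(1-2\mathbb{P}(\nabla f_\gamma(x)<0))$. Writing $\sign(\nabla f_\gamma(x)) = (1-2\mathbb{P}(\nabla f_\gamma(x)<0)) + \xi_\gamma(x)$ with $\E[\xi_\gamma(x)]=0$ gives $\E[\Delta_i\Delta_j] = \eta^2\big(\bar\Sigma_{ij}(x) + (1-2\mathbb{P}(\nabla f_\gamma(x)<0))_i(1-2\mathbb{P}(\nabla f_\gamma(x)<0))_j\big)$ and $\E[\lvert\Delta_{i_1}\Delta_{i_2}\Delta_{i_3}\rvert]\le\eta^3$. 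The continuous moments follow from an It\^o--Taylor expansion of \eqref{eq:SignSGD_SDE_Full_Insights} over $[0,\eta]$: because the diffusion coefficient carries an explicit $\sqrt{\eta}$, the martingale part has quadratic variation $\eta^2\bar\Sigma(x)+\bigO(\eta^3)$, matching the discrete $\eta^2\bar\Sigma(x)$, while the drift reproduces $\E[\bar\Delta]=-\eta(1-2\mathbb{P}(\nabla f_\gamma(x)<0))+\bigO(\eta^2)$ and $\E[\lvert\bar\Delta_{i_1}\bar\Delta_{i_2}\bar\Delta_{i_3}\rvert]=\bigO(\eta^3)$. Thus the first two one-step moments agree at leading order and the residuals are $\bigO(\eta^2)$, as required.

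The remaining, genuinely delicate ingredient is the regularity of the $x$-dependent coefficients, needed both for well-posedness of \eqref{eq:SignSGD_SDE_Full_Insights} and to license the Milstein estimate. Here the assumption that $Z(x)$ admits a bounded, smooth density with integrable derivatives is essential and replaces the usual Gaussian hypothesis: since $\nabla f_\gamma(x)_i<0 \iff Z(x)_i < -\nabla f(x)_i$, the drift is a marginal CDF evaluated at $-\nabla f(x)_i$, and differentiating under the integral sign shows that $x\mapsto 1-2\mathbb{P}(\nabla f_\gamma(x)<0)$ is smooth with polynomially growing, Lipschitz derivatives, with $\bar\Sigma(x)$ inheriting the same regularity. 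This is precisely the step that fails for discrete or atomic noise.

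The main obstacle I anticipate is controlling the matrix square root $\sqrt{\bar\Sigma(x)}$ that appears as the diffusion coefficient: $\bar\Sigma(x)$ degenerates exactly where a coordinate of the gradient sign becomes deterministic, i.e.\ $\mathbb{P}(\nabla f_\gamma(x)_i<0)\in\{0,1\}$, and extracting a Lipschitz square root through such rank drops is the one place where the boundedness of $\sign$ does not immediately help (a smooth positive semidefinite matrix generally has only H\"older-$1/2$ square root near degeneracy). I would address this by verifying that $\sqrt{\bar\Sigma(x)}$ satisfies the boundedness, affine-growth, and polynomial-growth-of-derivatives conditions imported from Definition~2.5 of \cite{Malladi2022AdamSDE}, using smoothness of $\bar\Sigma$ away from degeneracy together with a regularization/perturbation argument near it, after which the cited moment-comparison theorem closes the argument.
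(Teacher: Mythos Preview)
Your proposal is correct and follows essentially the same approach as the paper: reduce to Milstein's moment-matching criterion (via Lemma~1 of \cite{li2017stochastic}), compute the first two moments of the discrete SignSGD increment exactly as you do, match them against the It\^o--Taylor moments of the SDE, and bound third moments by $\eta^3$ using $\lvert\sign\rvert\le 1$. (Note: the paper uses the opposite notational convention, with $\bar\Delta$ for the discrete increment and $\Delta$ for the SDE increment.)

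The one place your proposal can be sharpened is the square-root regularity, which you correctly flag as the delicate step. You anticipate that $\bar\Sigma(x)$ may degenerate where $\mathbb{P}(\nabla f_\gamma(x)_i<0)\in\{0,1\}$ and propose a regularization/perturbation argument near such points. The paper instead includes among its ``sufficient regularity conditions'' that the noise density $g_x$ is \emph{strictly positive}; this forces $\mathbb{P}(\nabla f_\gamma(x)_i<0)\in(0,1)$ for all $x$ and, by a short quadratic-form argument on $y^\top\bar\Sigma(x)y=\E\big[(\sum_i y_i\xi_\gamma^i(x))^2\big]$, makes $\bar\Sigma(x)$ strictly positive definite everywhere. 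Since $A\mapsto A^{1/2}$ is Fr\'echet-smooth on the open cone of positive-definite matrices, $\sqrt{\bar\Sigma(\cdot)}$ is then automatically $C^7$, and the paper obtains the polynomial-growth bounds on its derivatives via explicit formulas for $\nabla^m(A^{1/2})$ together with lower bounds on $\lambda_{\min}(A)$ in terms of $\tr A$ and $\tr A^2$. So the degeneracy you worry about is real for general noise but is simply excluded by hypothesis here; no perturbation argument is needed.
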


\vspace{-0.4cm}

\begin{proof}[Proof idea]
   One needs to prove that the first and second moments of the increments of the discretization of the SDE match those of SignSGD up to an error of order $\mathcal{O}(\eta)$ and $\mathcal{O}(\eta^2)$, respectively.
\end{proof}

\vspace{-0.2cm}

For \textbf{didactic reasons}, we next present a corollary of Theorem \ref{thm:SignSGD_SDE_Insights_Full} that provides a more interpretable SDE. To do so, we model the batch noise with a Gaussian distribution with constant covariance matrix,\footnote{See Section \ref{sec:AltNoiseAss} for more realistic noise structures.} which is a common approach in the literature~\citep{li2017stochastic,mertikopoulos2018convergence,raginsky2012continuous,zhu2018anisotropic,mandt2016variational,ahn2012bayesian,jastrzkebski2017three}. Figure \ref{fig:SignSGD_SDE_Validation} shows the empirical validation of this model for various neural network classes: All details are presented in Appendix \ref{sec:Exper}.

\begin{corollary}[Informal Statement of Corollary \ref{thm:SignSGD_SDE_Simplified}] \label{thm:SignSGD_SDE_Simplified_Insights}
Under the assumptions of Theorem~\ref{thm:SignSGD_SDE_Insights_Full}, and that the stochastic gradient is $\nabla f_{\gamma}(x) = \nabla f(x) + Z$ such that $Z \sim \mathcal{N}(0, \Sigma) $, $\Sigma = \diag(\sigma_1^2, \cdots, \sigma_d^2)$, the following SDE provides a $1$ weak approximation of the discrete update of SignSGD

\vspace{-0.5cm}

{\small
\begin{align}
\label{eq:SignSGD_SDE_Simplified_Insights}
d X_t = - \erf \left( \frac{\Sigma^{-\frac{1}{2}} \nabla f(X_t)}{\sqrt{2}} \right) dt + \sqrt{\eta} \sqrt{I_d -\diag \left( \erf \left(\frac{\Sigma^{-\frac{1}{2}} \nabla f(X_t)}{\sqrt{2}} \right) \right)^2} d W_t,
\end{align}}

\vspace{-0.4cm}

where the error function $\erf(x):=\frac{2}{\sqrt{\pi}} \int_0^x e^{-t^2} dt$ and the square are applied component-wise.
\end{corollary}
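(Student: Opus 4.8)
The plan is to specialize the general SDE of Theorem \ref{thm:SignSGD_SDE_Insights_Full} to the Gaussian noise model, so that no new weak-approximation argument is required: the continuous-time process is literally the same SDE \eqref{eq:SignSGD_SDE_Full_Insights}, and all that remains is to evaluate its drift $-(1 - 2\mathbb{P}(\nabla f_\gamma(X_t)<0))$ and its covariance $\bar\Sigma(X_t)$ in closed form under the assumption $\nabla f_\gamma(x) = \nabla f(x) + Z$ with $Z\sim\mathcal{N}(0,\Sigma)$, $\Sigma = \diag(\sigma_1^2,\ldots,\sigma_d^2)$. Because these coefficients are computed coordinate-wise and the claimed SDE is an exact rewriting of \eqref{eq:SignSGD_SDE_Full_Insights}, the order-$1$ weak approximation property is inherited directly from Theorem \ref{thm:SignSGD_SDE_Insights_Full}; I only need to check that the resulting explicit coefficients meet the regularity hypotheses invoked there.

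First I would compute the drift coordinate by coordinate. Fix coordinate $j$ and set $u_j = \nabla f(x)_j/\sigma_j$. Since $Z_j\sim\mathcal{N}(0,\sigma_j^2)$, we have $\mathbb{P}((\nabla f_\gamma(x))_j<0) = \mathbb{P}(Z_j < -\nabla f(x)_j) = \Phi(-u_j)$, where $\Phi$ is the standard Gaussian CDF. Using $1 - 2\Phi(-u) = 2\Phi(u) - 1$ together with $\Phi(t) = \tfrac{1}{2}(1 + \erf(t/\sqrt{2}))$, the $j$-th drift entry becomes $-(1 - 2\Phi(-u_j)) = -\erf(u_j/\sqrt{2}) = -\erf(\nabla f(x)_j/(\sqrt{2}\,\sigma_j))$. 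Stacking coordinates yields exactly the drift $-\erf(\Sigma^{-1/2}\nabla f(X_t)/\sqrt{2})$ of \eqref{eq:SignSGD_SDE_Simplified_Insights}.

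Next I would evaluate $\bar\Sigma(x) = \E[\xi_\gamma(x)\xi_\gamma(x)^\top]$. Since $Z$ has a density, $(\nabla f_\gamma(x))_j \neq 0$ almost surely, so $\sign((\nabla f_\gamma(x))_j)^2 = 1$ and each $\xi_\gamma(x)_j$ is the centered version of a $\pm 1$ variable with mean $1 - 2\Phi(-u_j) = \erf(u_j/\sqrt{2})$. Hence the diagonal entries are $\bar\Sigma(x)_{jj} = \Var(\sign((\nabla f_\gamma(x))_j)) = 1 - \erf(u_j/\sqrt{2})^2$. For the off-diagonal entries, the assumption that $\Sigma$ is diagonal makes $Z_1,\ldots,Z_d$ independent, so $\sign((\nabla f_\gamma(x))_j)$ and $\sign((\nabla f_\gamma(x))_k)$ are independent for $j\neq k$ and $\E[\xi_\gamma(x)_j\,\xi_\gamma(x)_k] = \E[\xi_\gamma(x)_j]\,\E[\xi_\gamma(x)_k] = 0$. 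Therefore $\bar\Sigma(x) = I_d - \diag(\erf(\Sigma^{-1/2}\nabla f(x)/\sqrt{2}))^2$, which is precisely the matrix under the square root in \eqref{eq:SignSGD_SDE_Simplified_Insights}.

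Finally, I would confirm that the regularity conditions of Theorem \ref{thm:SignSGD_SDE_Insights_Full} transfer to these explicit coefficients: the drift is smooth, bounded, and (given smoothness of $\nabla f$) Lipschitz because $\erf$ and its derivatives are bounded, while $\bar\Sigma(x)$ is positive semidefinite with $1 - \erf^2 > 0$ for finite arguments, so its matrix square root is well defined. I expect the main obstacle to be technical rather than conceptual: verifying that $\sqrt{\bar\Sigma}$ retains the required Lipschitz and polynomial-growth regularity, since the coordinate-wise square root of $1 - \erf^2$ loses smoothness where $|\erf|\to 1$ (i.e. where a coordinate of the gradient dominates its noise scale). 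This is controlled by the boundedness of the trajectory over the finite horizon $[0,T]$ and the fact that $\erf$ never attains $\pm 1$, matching the regularity already assumed in Theorem \ref{thm:SignSGD_SDE_Insights_Full}.
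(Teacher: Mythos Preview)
Your proposal is correct and follows essentially the same approach as the paper: specialize the drift and diffusion coefficients of the general SDE in Theorem~\ref{thm:SignSGD_SDE_Insights_Full} to the Gaussian noise model by writing $1-2\mathbb{P}(\nabla f_\gamma(x)<0)$ via the standard normal CDF and the identity $\Phi(t)=\tfrac{1}{2}(1+\erf(t/\sqrt{2}))$, and then computing $\bar\Sigma$ as the diagonal matrix $I_d-\diag(\erf(\Sigma^{-1/2}\nabla f(x)/\sqrt{2}))^2$. Your explicit justification of the vanishing off-diagonal entries via coordinate-wise independence of $Z$ is in fact a little more detailed than the paper's own proof, which simply states the formula for $\bar\Sigma$; the regularity discussion you add is handled separately in the paper's appendix.
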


While Eq.~\ref{eq:SignSGD_SDE_Simplified_Insights} may appear intricate at first glance, it becomes apparent upon closer inspection that the properties of the $\erf(\cdot)$ function enable a detailed exploration of the dynamics of SignSGD. In particular, we demonstrate that the dynamics of SignSGD can be categorized into three distinct phases. The top-left of Figure \ref{fig:SignSGD_Results_Validation} empirically verifies this result on a convex quadratic function.

\vspace{-0.2cm}

\begin{lemma} \label{lemma:three_phases_Insights}
Under the assumptions of Coroll.~\ref{thm:SignSGD_SDE_Simplified_Insights},  $Y_t := \frac{\Sigma^{-\frac{1}{2}} \nabla f(X_t)}{\sqrt{2}}$, $\vert \cdot \vert $ applied element-wise, and some constants\footnote{See Lemma \ref{lemma:three_phases} for their definitions.} $m \in \mathbb{R}^{+}$, $\mathbf{q}^{+} \in \mathbb{R}^d$, $\mathbf{q}^{-} \in \mathbb{R}^d$, the dynamics of SignSGD exhibits three \textbf{phases}:
    \begin{enumerate} 
        \item \textbf{Phase 1:} If $ \left\vert Y_t  \right\vert>\frac{3}{2}$, the SDE coincides with the ODE of SignGD:
        \begin{equation} \label{sde:signsgd_ph1_Insights}
            d X_t = - \sign ( \nabla f(X_t)) dt;
        \end{equation}
        \vspace{-0.6cm}
        \item \textbf{Phase 2:}  If $ 1 < \left\vert Y_t \right\vert< \frac{3}{2}$:
        \begin{enumerate}
            \item $ -m Y_t - \mathbf{q}^{+} \leq \frac{d \E \left[ X_t \right]}{dt}  \leq -m Y_t - \mathbf{q}^{-}$;
            \item For any $a>0$, $\mathbb{P}\left[ \lVert X_t - \E \left[ X_t \right] \rVert^2_2 > a \right] \leq \frac{\eta}{a} \left( d - \lVert  m Y_t + \mathbf{q}^{-} \rVert^2_2\right)$;
        \end{enumerate}
        \item \textbf{Phase 3:} If $ \left\vert Y_t \right\vert<1$, the SDE is

        \vspace{-0.4cm}
        
        \begin{equation} \label{sde:signsgd_ph3_Insights}
            d X_t = - \sqrt{\frac{2}{\pi}} \Sigma^{-\frac{1}{2}} \nabla f(X_t) dt + \sqrt{\eta} \sqrt{I_d - \frac{2}{\pi}\diag \left(\Sigma^{-\frac{1}{2}} \nabla f(X_t) \right)^2} d W_t.
        \end{equation}
    \end{enumerate}
\end{lemma}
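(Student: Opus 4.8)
The plan is to treat Eq.~\ref{eq:SignSGD_SDE_Simplified_Insights} as the starting point and to read off each phase from elementary analytic properties of the error function applied coordinate-wise to $Y_t$. Since $\Sigma^{-1/2}$ is a positive diagonal matrix, $\sign(Y_t)=\sign(\nabla f(X_t))$, so $\erf$ interpolates smoothly between the saturated value $\sign(\nabla f)$ and its linearization at the origin, and the three phases simply formalize this interpolation on the windows $|Y_t|>\tfrac32$, $1<|Y_t|<\tfrac32$, and $|Y_t|<1$. The only facts about $\erf$ I would invoke are: it is odd, strictly increasing, concave on $\R^+$, satisfies $\erf(y)\to\pm1$ as $y\to\pm\infty$, and admits the Taylor expansion $\erf(y)=\tfrac{2}{\sqrt\pi}y+\mathcal{O}(y^3)$ at the origin.

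For Phase 1, I would bound the deviation from saturation: for $|Y_t|>\tfrac32$ one has $|\erf(Y_t)-\sign(\nabla f(X_t))|\le 1-\erf(\tfrac32)$, while the diffusion coefficient obeys $0\le 1-\erf(Y_t)^2\le 1-\erf(\tfrac32)^2$. Both are small constants, and since the diffusion enters with a factor $\sqrt\eta$, dropping it and replacing $\erf(Y_t)$ by $\sign(\nabla f(X_t))$ recovers the SignGD ODE~\ref{sde:signsgd_ph1_Insights} up to the stated regularity error. Phase 3 is the mirror image at the origin: substituting $\erf(y)=\tfrac{2}{\sqrt\pi}y+\mathcal{O}(y^3)$ with $Y_t=\Sigma^{-1/2}\nabla f(X_t)/\sqrt2$ turns the drift into $-\sqrt{2/\pi}\,\Sigma^{-1/2}\nabla f(X_t)$ and the diffusion coefficient into $I_d-\tfrac{2}{\pi}\diag(\Sigma^{-1/2}\nabla f(X_t))^2$, which is precisely Eq.~\ref{sde:signsgd_ph3_Insights}, the cubic remainder controlling the approximation on $|Y_t|<1$.

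Phase 2 is where the two quantitative sub-claims live. Here I would exploit concavity of $\erf$ on $\R^+$ (and oddness for negative coordinates) to sandwich it between two affine functions sharing the secant slope $m=(\erf(\tfrac32)-\erf(1))/\tfrac12$: the secant line gives the lower bound $\erf(y)\ge my+q^-$ on $[1,\tfrac32]$, and the supporting tangent taken at the mean-value point where $\erf'=m$ lies above the curve and gives $\erf(y)\le my+q^+$, which fixes $m,\mathbf{q}^+,\mathbf{q}^-$ coordinate-wise. For sub-claim (a), taking expectations in Eq.~\ref{eq:SignSGD_SDE_Simplified_Insights} annihilates the It\^o term, so $\tfrac{d\E[X_t]}{dt}=-\E[\erf(Y_t)]$, and inserting the affine bounds yields the two-sided inequality. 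For sub-claim (b), the quadratic variation of Eq.~\ref{eq:SignSGD_SDE_Simplified_Insights} produces an instantaneous variance rate $\eta(d-\|\erf(Y_t)\|_2^2)$; using the affine lower bound to write $\|\erf(Y_t)\|_2^2\ge\|mY_t+\mathbf{q}^-\|_2^2$ bounds the second moment by $\eta(d-\|mY_t+\mathbf{q}^-\|_2^2)$, after which Markov's inequality delivers the stated tail bound.

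The main obstacle I anticipate is the rigor of Phase 2(b): the clean bound requires identifying the accumulated second moment with the instantaneous diffusion coefficient, so I would either localize on the short window during which $Y_t$ remains in $(1,\tfrac32)$ or phrase it as a leading-order-in-$\eta$ estimate, and I would have to do the coordinate-wise sign bookkeeping carefully, since squaring the affine lower bound $mY_t+\mathbf{q}^-$ is monotone only where that bound is nonnegative (which does hold on $[1,\tfrac32]$ since there $my+q^-\ge\erf(1)>0$). A secondary, more cosmetic difficulty is pinning down the exact threshold $\tfrac32$ and the neglected constants in Phase 1 so that the ODE approximation error is genuinely absorbed into the $\mathcal{O}(\eta)$ weak-approximation guarantee inherited from Corollary~\ref{thm:SignSGD_SDE_Simplified_Insights}.
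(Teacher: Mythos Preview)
Your proposal is correct and follows essentially the same route as the paper: saturation of $\erf$ for Phase~1, the linear Taylor expansion $\erf(y)\approx\tfrac{2}{\sqrt{\pi}}y$ for Phase~3, and the concavity-based affine sandwich (secant below, parallel tangent above) for Phase~2, with the constants $m,\mathbf{q}^{\pm}$ identified exactly as the paper defines them. The only cosmetic difference is in Phase~2(b): the paper goes coordinate-wise via a union bound plus Chebyshev, whereas you apply Markov directly to $\lVert X_t-\E[X_t]\rVert_2^2$ using the trace of the diffusion; both arrive at the same bound, and the informality you flag (instantaneous diffusion rate standing in for accumulated variance) is present in the paper's argument as well.
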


\begin{figure}[H]
    \centering
    \vspace{-1.5cm}
    \subfloat{{\includegraphics[width=0.35\linewidth]{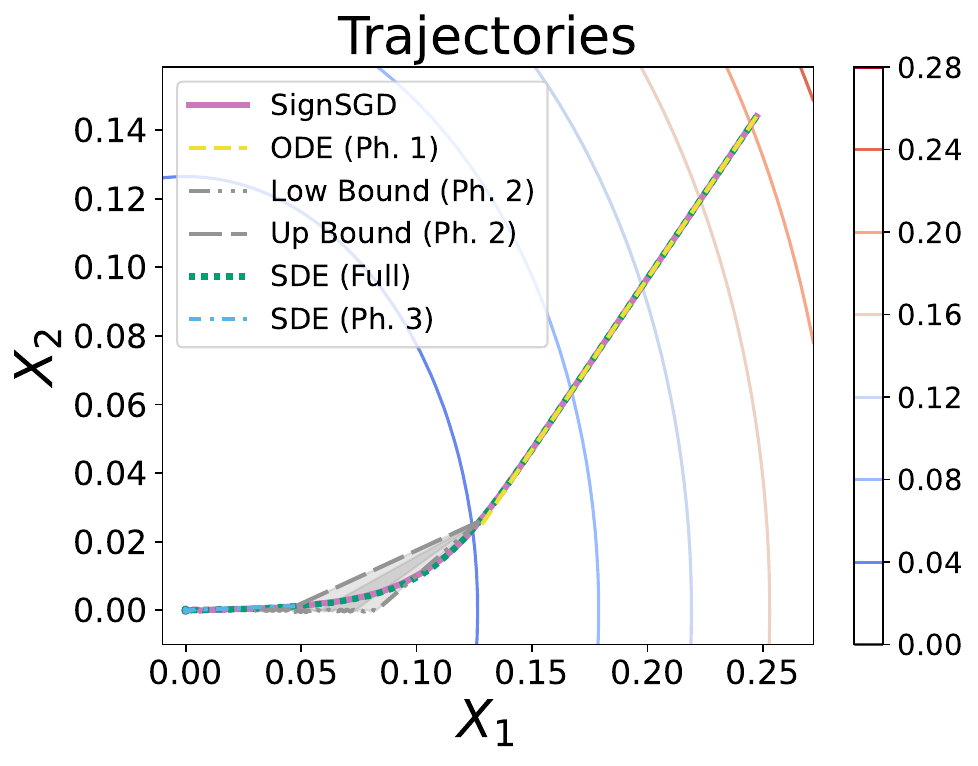} }}%
    \hspace{1.6cm}\subfloat{{\includegraphics[width=0.35\linewidth]{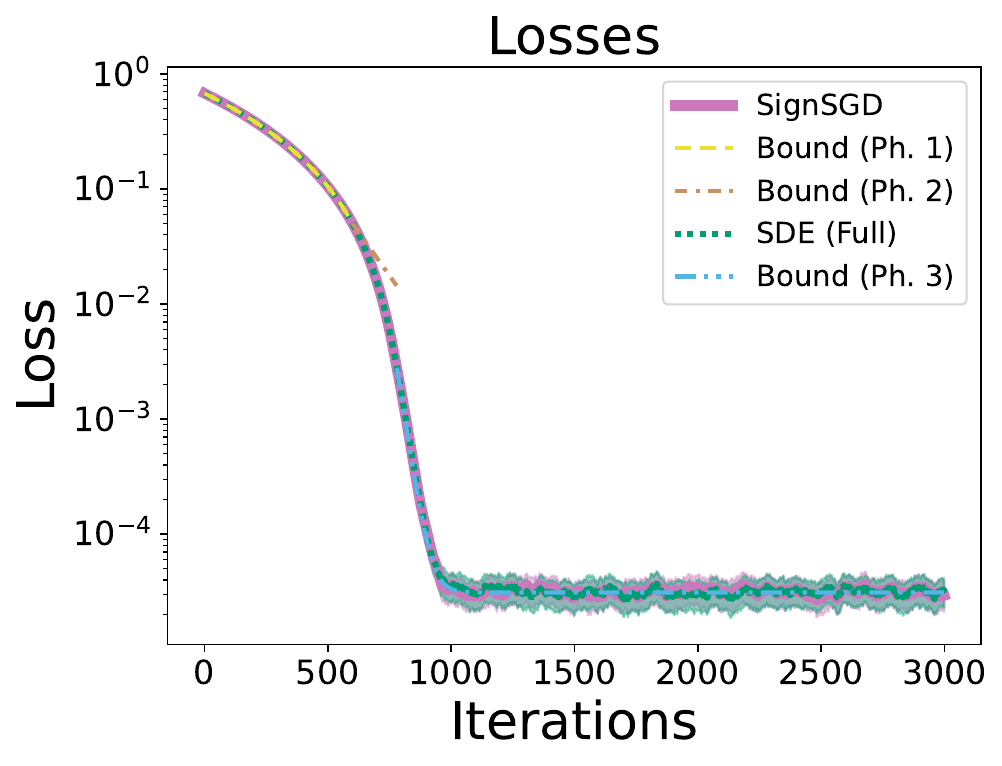} }} \\
    \subfloat{{\includegraphics[width=0.35\linewidth]{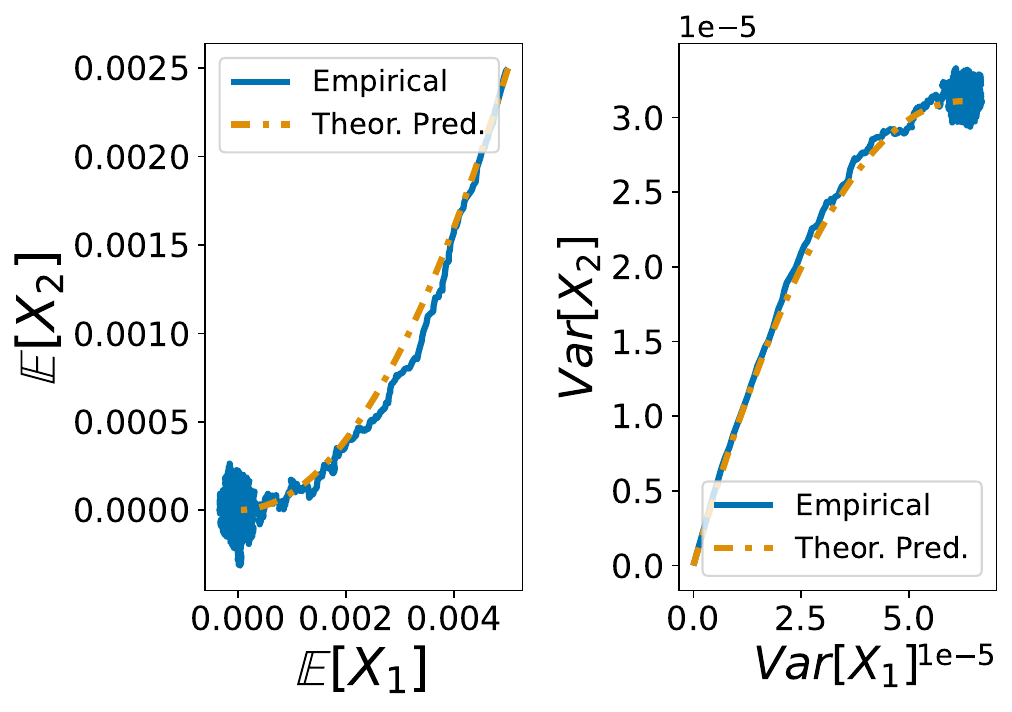} }}%
    \hspace{1.6cm}\subfloat{{\includegraphics[width=0.35\linewidth]{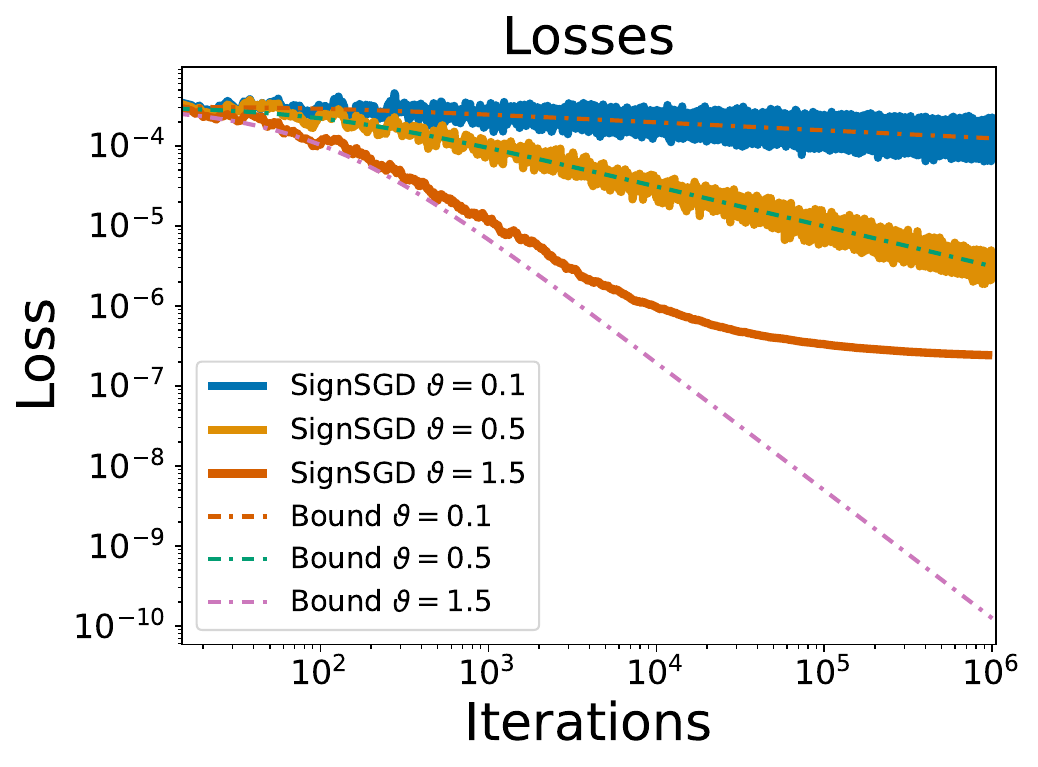} }}%
   \caption{\textbf{Phases of SignSGD}: The ODE of Phase 1 and SDE of Phase 3 overlap with the ``Full'' SDE as per \textbf{Lemma \ref{lemma:three_phases_Insights}}. In Phase 2, the dynamics satisfies the prescribed bounds (Top-Left); \textbf{Phases of the Loss}: The bounds derived in \textbf{Lemma \ref{lemma:SignSGD_dynam_loss_Insights}} for the loss during the different phases correctly track the loss evolution (Top-Right); The \textbf{dynamics of the moments} of $X_t$ predicted in \textbf{Lemma \ref{lemma:SignSGD_StaDistr_insights}} track the empirical ones (Bottom-Left); If the \textbf{schedulers} satisfy the condition in \textbf{Lemma \ref{lemma:Schedulers_Insights}}, the loss decays to $0$ as prescribed. If not, the loss does not converge to $0$ (Bottom-Right). For each figure, $f(x) = \frac{x^{\top}H x}{2}$ for $H = \diag(1,2)$, $\eta =0.001$, and $\Sigma = \sigma^2 I_2$ where $\sigma = 0.1$.}%
    \label{fig:SignSGD_Results_Validation}%
\end{figure}

\textbf{Remark:} \textit{For ease of reading}, we will \textit{informally} refer to the gradient $\nabla f(x)$ as the ``signal'' and to $\Sigma$ as the ``noise''. Then, Lemma \ref{lemma:three_phases_Insights} tells us that the behavior of SignSGD depends on the size of the ``signal-to-noise'' ratio. In particular, the SDE itself shows that in Phase 3, the inverse of the scale of the noise $\Sigma^{-\frac{1}{2}}$ premultiplies $\nabla f(x)$, affecting the rate of descent. This is not the case for SGD where $\Sigma$ only influences the diffusion term.\footnote{The SDE of SGD is $d X_t = - \nabla f(X_t) dt + \sqrt{\eta} \Sigma^{\frac{1}{2}} dW_t$.} To better understand the role of the noise, we study how it affects the dynamics of the loss on strongly convex functions and compare it with SGD.

\vspace{0.2cm}

\begin{lemma}\label{lemma:SignSGD_dynam_loss_Insights}
Let $f$ be $\mu$-strongly convex, $Tr(\nabla^2 f(x)) \leq \mathcal{L}_{\tau}$, $\sigma_{\text{max}}^2$ be the maximum eigenvalue of $\Sigma$, and $S_t:=f(X_t) - f(X_*)$. Then, during
\begin{enumerate}
    \item \textbf{Phase 1}, $S_t \leq \frac{1}{4} \left( \sqrt{\mu}t - 2 \sqrt{S_0}\right)^2$, so SignSGD stays in this phase for at most $t_* = 2 \sqrt{\frac{S_0}{\mu}}$;
    \item \textbf{Phase 2} as $\Delta:= \left( \frac{m}{\sqrt{2}\sigma_{\text{max}}} + \frac{\eta \mu m^2}{4 \sigma_{\text{max}}^2} \right)$:
    \begin{equation}
        \E[S_t] \leq S_0 e^{- 2 \mu \Delta t} + \frac{\eta}{2} \frac{ \left(\mathcal{L}_{\tau} - \mu d \hat{q}^2  \right)}{2 \mu \Delta} \left(1 - e^{- 2 \mu \Delta t}\right);
    \end{equation}
    \item \textbf{Phase 3} as $\Delta:= \left(\sqrt{\frac{2}{\pi}} \frac{1}{\sigma_{\text{max}}} + \frac{\eta}{\pi} \frac{\mu}{\sigma_{\text{max}}^2}\right)$:
    \begin{equation}
         \E[S_t] \leq S_0 e^{- 2 \mu \Delta t} +  \frac{\eta}{2} \frac{\mathcal{L}_{\tau}}{ 2 \mu \Delta} \left(1 - e^{- 2 \mu \Delta t}\right).
    \end{equation}
\end{enumerate}
\end{lemma}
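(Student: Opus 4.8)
The plan is to handle the three phases with a common engine: reduce each to a scalar (in)equality for $\E[S_t]$ and integrate. For the two stochastic phases I would apply It\^o's formula to $S_t=f(X_t)-f(X_*)$, take expectations so the $\sqrt{\eta}\,\nabla f(X_t)^\top\sqrt{\cdot}\,dW_t$ martingale drops (after the usual localization/integrability check), and then bound the drift and the It\^o-correction terms to reach a linear differential inequality $\frac{d}{dt}\E[S_t]\le -2\mu\Delta\,\E[S_t]+C$. Gr\"onwall's comparison lemma then gives the stated closed form $\E[S_t]\le S_0 e^{-2\mu\Delta t}+\frac{C}{2\mu\Delta}(1-e^{-2\mu\Delta t})$. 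The recurring ingredients are: the PL inequality from strong convexity, $\|\nabla f\|_2^2\ge 2\mu S_t$; the fact that $\nabla^2 f\succeq\mu I$ forces every diagonal entry $(\nabla^2 f)_{ii}\ge\mu$ and hence $\tr(\nabla^2 f)\ge\mu d$; the upper bound $\tr(\nabla^2 f)\le\mathcal{L}_\tau$; and the eigenvalue bound $\Sigma^{-1/2}\succeq\sigma_{\text{max}}^{-1}I$.

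For \textbf{Phase 1} the dynamics is the deterministic ODE $dX_t=-\sign(\nabla f(X_t))\,dt$, so $\dot S_t=\nabla f(X_t)^\top\dot X_t=-\|\nabla f(X_t)\|_1\le-\|\nabla f(X_t)\|_2\le-\sqrt{2\mu S_t}$, using $\|\cdot\|_1\ge\|\cdot\|_2$ and PL. Substituting $u_t=\sqrt{S_t}$ linearizes this to $\dot u_t\le -c$ for a constant $c$ proportional to $\sqrt{\mu}$, which integrates to the quadratic envelope $S_t\le\frac14(\sqrt{\mu}\,t-2\sqrt{S_0})^2$; setting this envelope to zero yields the exit time $t_*=2\sqrt{S_0/\mu}$.

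For \textbf{Phase 3} the SDE has drift $-\sqrt{2/\pi}\,\Sigma^{-1/2}\nabla f$ and diagonal diffusion $\sqrt{\eta}\sqrt{I_d-\frac{2}{\pi}\diag(\Sigma^{-1/2}\nabla f)^2}$, so It\^o produces three pieces: the drift term $-\sqrt{2/\pi}\,\nabla f^\top\Sigma^{-1/2}\nabla f\le-2\sqrt{2/\pi}\,\mu\sigma_{\text{max}}^{-1}S_t$; the trace part $\frac{\eta}{2}\tr(\nabla^2 f)\le\frac{\eta}{2}\mathcal{L}_\tau$, which is the constant $C$; and the correction $-\frac{\eta}{\pi}\sum_i(\nabla^2 f)_{ii}(\Sigma^{-1/2}\nabla f)_i^2\le-\frac{2\eta\mu^2}{\pi\sigma_{\text{max}}^2}S_t$. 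The two negative contributions sum to exactly $2\mu\Delta$ with $\Delta=\sqrt{2/\pi}\,\sigma_{\text{max}}^{-1}+\frac{\eta}{\pi}\mu\sigma_{\text{max}}^{-2}$, and Gr\"onwall closes it. \textbf{Phase 2} is identical in spirit but replaces the exact drift/diffusion by the linear envelopes of $\erf$ valid on $1<|Y_t|<\frac32$: sign-matching $(\nabla f)_i\erf(Y_{t,i})=|(\nabla f)_i|\,|\erf(Y_{t,i})|\ge0$ lets the lower envelope $|\erf(y)|\ge m|y|+q^-$ control the drift and produce the first term of $\Delta$; the envelope $\erf(y)^2\ge m^2y^2+\hat q^2$ feeds the diffusion, where the $m^2y^2$ part yields the second, $\eta$-dependent term of $\Delta$ and the $\hat q^2$ part combines with $\tr(\nabla^2 f)\ge\mu d$ to give the $-\mu d\hat q^2$ correction in the constant. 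The leftover drift term $-q^-\E\|\nabla f\|_1\le0$ is simply discarded.

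The main obstacle is \textbf{Phase 2}, where the drift and diffusion are known only through one-sided linear envelopes rather than in closed form. The delicate point is that the drift enters $\dot S_t$ as the inner product $\nabla f^\top(\text{drift})$, whose sign cannot be controlled coordinate-wise until one exploits that $\erf$ is odd and $Y_{t,i}$ shares the sign of $(\nabla f)_i$, so every summand is nonnegative and the scalar envelope applies coordinate-wise. Matching the constants in $\Delta$ and in $\mathcal{L}_\tau-\mu d\hat q^2$ then requires carefully separating which envelope (on $|\erf|$ versus on $\erf^2$) supplies which term, and using $\nabla^2 f\succeq\mu I$ consistently both as $(\nabla^2 f)_{ii}\ge\mu$ (for the quadratic-in-gradient terms) and as $\tr(\nabla^2 f)\ge\mu d$ (for the constant).
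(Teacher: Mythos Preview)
Your proposal is correct and follows essentially the same route as the paper: in each phase you apply It\^o's lemma (or the chain rule in Phase~1) to $S_t$, bound the drift via $\|\cdot\|_1\ge\|\cdot\|_2$ and the PL inequality, bound the It\^o correction using $\tr(\nabla^2 f)\le\mathcal{L}_\tau$ and $(\nabla^2 f)_{ii}\ge\mu$, and then close with Gr\"onwall. Your explicit observation that $(\nabla f)_i$ and $\erf(Y_{t,i})$ share a sign, which justifies applying the one-sided envelope coordinate-wise in Phase~2, is exactly the mechanism implicit in the paper's use of the vectors $\mathbf{q}^{\pm}$.
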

\begin{proof}[Proof idea]
    For each \textbf{Phase}, we use the respective SDE of SignSGD from Lemma \ref{lemma:three_phases_Insights} to derive the SDE of $S_t$ via Itô's lemma. Then, we take its expectation to obtain the ODE of $\mathbb{E} \left[ S_t \right]$ and leverage the assumptions to establish a bound.
\end{proof}

As per Eq. \ref{sde:signsgd_ph1_Insights}, during Phase 1 SignSGD behaves like SignGD: Lemma \ref{lemma:SignSGD_dynam_loss_Insights} shows that, consistently with the analysis of SignGD in \citep{ma2022qualitative}, such a strong decrease in the loss value explains the fast initial convergence of the optimizer as well as of RMSprop and Adam. In this phase, the loss undergoes a decrease which ensures the emergence of Phase 2 which in turn triggers that of Phase 3 which is characterized by an exponential decay to an asymptotic loss level: 
As a practical example, we verify the dynamics of the expected loss around a minimum in the top-right of Figure \ref{fig:SignSGD_Results_Validation}.

\begin{lemma}\label{lemma:SGDLoss}
For SGD, the expected loss satisfies: $\E[S_t] \leq S_0 e^{- 2 \mu t} + \frac{\eta}{2} \frac{\mathcal{L}_{\tau} \sigma_{\text{max}}^2}{2 \mu} \left(1 - e^{- 2 \mu t}\right)$.
\end{lemma}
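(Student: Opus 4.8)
The plan is to mirror the argument behind Lemma~\ref{lemma:SignSGD_dynam_loss_Insights}, now applied to the SGD diffusion $dX_t = -\nabla f(X_t)\,dt + \sqrt{\eta}\,\Sigma^{\frac{1}{2}}\,dW_t$. First I would apply It\^o's lemma to $S_t = f(X_t) - f(X_*)$. Since $f(X_*)$ is constant and the quadratic covariation of this SDE is $\eta\,\Sigma\,dt$, this yields
\[
dS_t = -\norm{\nabla f(X_t)}^2 dt + \frac{\eta}{2}\,\Tr\!\bigl(\nabla^2 f(X_t)\,\Sigma\bigr) dt + \sqrt{\eta}\,\nabla f(X_t)^\top \Sigma^{\frac{1}{2}}\, dW_t .
\]
Taking expectations, the stochastic integral has zero mean (it is a true martingale under the stated polynomial-growth/integrability assumptions), so I obtain the scalar ODE
\[
\frac{d\,\E[S_t]}{dt} = -\E\!\left[\norm{\nabla f(X_t)}^2\right] + \frac{\eta}{2}\,\E\!\left[\Tr\!\bigl(\nabla^2 f(X_t)\,\Sigma\bigr)\right].
\]

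Next I would bound the two terms using the hypotheses. For the drift, $\mu$-strong convexity implies the gradient-domination (Polyak--Lojasiewicz) inequality $\norm{\nabla f(x)}^2 \ge 2\mu\,(f(x)-f(X_*))$, which follows by minimizing the quadratic lower bound $f(y) \ge f(x) + \nabla f(x)^\top (y-x) + \frac{\mu}{2}\norm{y-x}^2$ over $y$; hence $\E[\norm{\nabla f(X_t)}^2] \ge 2\mu\,\E[S_t]$. For the diffusion term, since $\nabla^2 f \succeq 0$ and $\Sigma \succeq 0$ I use the trace inequality $\Tr(\nabla^2 f(X_t)\,\Sigma) \le \lambda_{\max}(\Sigma)\,\Tr(\nabla^2 f(X_t)) = \sigma_{\text{max}}^2\,\Tr(\nabla^2 f(X_t)) \le \sigma_{\text{max}}^2\,\mathcal{L}_{\tau}$, combining the assumptions $\Tr(\nabla^2 f(x)) \le \mathcal{L}_{\tau}$ and $\lambda_{\max}(\Sigma)=\sigma_{\text{max}}^2$. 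Plugging both into the ODE gives the differential inequality
\[
\frac{d\,\E[S_t]}{dt} \le -2\mu\,\E[S_t] + \frac{\eta}{2}\,\sigma_{\text{max}}^2\,\mathcal{L}_{\tau}.
\]

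Finally, this is a linear first-order differential inequality with constant forcing, so Gr\"onwall's lemma (equivalently, integrating the comparison ODE with integrating factor $e^{2\mu t}$) yields
\[
\E[S_t] \le S_0\, e^{-2\mu t} + \frac{\eta}{2}\,\frac{\mathcal{L}_{\tau}\,\sigma_{\text{max}}^2}{2\mu}\,\bigl(1 - e^{-2\mu t}\bigr),
\]
which is precisely the claimed bound. Each estimate is standard, so I expect no conceptual difficulty in the convexity or trace steps; the only delicate point is the rigorous justification that the It\^o stochastic integral is a genuine (not merely local) martingale, so that its expectation vanishes and differentiation under the expectation is valid. This is exactly where the regularity and integrability assumptions on $\sqrt{\Sigma}$ and the noise density enter, and I expect it to be the main obstacle rather than the elementary inequalities.
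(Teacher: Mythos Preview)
Your proposal is correct and follows exactly the approach the paper uses throughout: apply It\^o's lemma to $S_t=f(X_t)-f(X_*)$ on the SGD SDE $dX_t=-\nabla f(X_t)\,dt+\sqrt{\eta}\,\Sigma^{1/2}\,dW_t$, take expectations to kill the martingale term, bound $-\E\|\nabla f(X_t)\|^2\le -2\mu\,\E[S_t]$ via the PL inequality and $\Tr(\nabla^2 f\,\Sigma)\le \sigma_{\max}^2\mathcal{L}_\tau$, then integrate the resulting linear differential inequality. The paper does not spell out a separate proof for this lemma, but your argument is precisely the SGD analogue of its Phase~3 calculation in Lemma~\ref{lemma:SignSGD_dynam_loss} (and matches the bound it later records in Lemma~\ref{lemma:counterexample1}).
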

\textbf{Remark:} The two key observations are that:
\begin{enumerate}
    \item Both in Phase 2 and Phase 3, the noise level $\sigma_{\text{max}}$ inversely affects the exponential convergence speed, while this trend is not observed with SGD;
    \item The asymptotic loss of SignSGD is (almost) linear in $\sigma_{\text{max}}$ while that of SGD is quadratic. Indeed, the asymptotic value of $\E[S_t] $ in Phase 3 scales with $\frac{1}{\Delta}  = \frac{\pi  \sigma_{\text{max}} }{\sqrt{2 \pi}  + \frac{\eta \mu}{\sigma_{\text{max}}}}$: when the noise $\sigma_{\text{max}}$ dominates the learning rate $\eta$ and/or the minimum eigenvalue $\mu$ of the Hessian, or in general when $\frac{\eta \mu}{\sigma_{\text{max}}} \sim 0$, we can conclude that the scaling is (almost) linear in $\sigma_{\text{max}}$.
\end{enumerate}

Additionally, we characterize the stationary distribution of SignSGD around a minimum. To do this, we study the behavior of SignSGD on a quadratic loss function, which is a common approach in the literature~\citep{ge2015escaping,levy2016power, jin2017escape, poggio2017theory, mandt2017stochastic,compagnoni2023sde}. Empirical validation is provided in the bottom-left of Figure \ref{fig:SignSGD_Results_Validation}.

\begin{lemma}
\label{lemma:SignSGD_StaDistr_insights}
Let $H = \diag (\lambda_1, \dots, \lambda_d)$ and $M_t:=e^{-2 \left(\sqrt{\frac{2}{\pi}} \Sigma^{-\frac{1}{2}}H + \frac{\eta}{\pi} \Sigma^{-1} H^2 \right) t}$. Then,
\begin{enumerate}[leftmargin=*]
    \item $\E  \left[ X_t \right] = e^{- \sqrt{\frac{2}{\pi}} \Sigma^{-\frac{1}{2}} H t} X_0$;
    \item $ Cov\left[ X_t \right] = \left( M_t - e^{-2 \sqrt{\frac{2}{\pi}} \Sigma^{-\frac{1}{2}} H t} \right) X_0^2 + \frac{\eta}{2} \left( \sqrt{\frac{2}{\pi}} I_d + \frac{\eta}{\pi} H \Sigma^{-\frac{1}{2}}\right)^{-1} H^{-1} \Sigma^{\frac{1}{2}} \left( I_d - M_t \right)$.
\end{enumerate}
\vspace{-1mm}
Therefore, we have that the stationary distribution of SignSGD is:
\vspace{-0.1cm}
{\small
\begin{equation*}
\left(\mathbb{E} [X_\infty], Cov[X_\infty]\right)=\left(0, \frac{\eta}{2} \left( \sqrt{\frac{2}{\pi}} I_d + \frac{\eta}{\pi} H \Sigma^{-\frac{1}{2}}\right)^{-1} H^{-1} \Sigma^{\frac{1}{2}}  \right).
\end{equation*}}
\end{lemma}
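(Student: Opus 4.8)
The plan is to specialize the Phase~3 SDE (Eq.~\ref{sde:signsgd_ph3_Insights}) to the convex quadratic $f(x)=\tfrac12 x^\top H x$, for which $\nabla f(x)=Hx$. Since $H$, $\Sigma$, and hence $\Sigma^{-1/2}H$ are all diagonal, the SDE decouples completely across coordinates. Writing $X_t^{(i)}$ for the $i$-th component, each coordinate obeys a scalar Ornstein--Uhlenbeck-type equation with state-dependent diffusion,
\[
dX_t^{(i)} = -\sqrt{\tfrac{2}{\pi}}\,\tfrac{\lambda_i}{\sigma_i}\,X_t^{(i)}\,dt + \sqrt{\eta}\,\sqrt{1-\tfrac{2}{\pi}\tfrac{\lambda_i^2}{\sigma_i^2}\big(X_t^{(i)}\big)^2}\,dW_t^{(i)},
\]
so it suffices to track the first two moments coordinate-by-coordinate and reassemble them into the diagonal matrix identities claimed in the Lemma.

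For the first moment, I would take expectations of the scalar SDE. Because the diffusion coefficient is bounded by $\sqrt{\eta}$, the stochastic integral is a genuine martingale and contributes nothing in expectation, leaving the linear ODE $\tfrac{d}{dt}\E[X_t^{(i)}]=-\sqrt{2/\pi}\,(\lambda_i/\sigma_i)\,\E[X_t^{(i)}]$. Solving gives $\E[X_t^{(i)}]=e^{-\sqrt{2/\pi}(\lambda_i/\sigma_i)t}X_0^{(i)}$, which in matrix form is exactly item~1, $\E[X_t]=e^{-\sqrt{2/\pi}\Sigma^{-1/2}Ht}X_0$.

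For the second moment, I would apply Itô's lemma to $(X_t^{(i)})^2$. The drift contributes $-2\sqrt{2/\pi}(\lambda_i/\sigma_i)(X_t^{(i)})^2$, while the quadratic-variation term is $d\langle X^{(i)}\rangle_t=\eta\big(1-\tfrac{2}{\pi}\tfrac{\lambda_i^2}{\sigma_i^2}(X_t^{(i)})^2\big)dt$; taking expectations (the martingale term again vanishes) yields the linear ODE
\[
\frac{d}{dt}\E\big[(X_t^{(i)})^2\big] = -a_i\,\E\big[(X_t^{(i)})^2\big] + \eta,\qquad a_i := 2\sqrt{\tfrac{2}{\pi}}\tfrac{\lambda_i}{\sigma_i}+\tfrac{2\eta}{\pi}\tfrac{\lambda_i^2}{\sigma_i^2}.
\]
Crucially $a_i$ is exactly the $i$-th diagonal entry of the matrix $2(\sqrt{2/\pi}\,\Sigma^{-1/2}H+\tfrac{\eta}{\pi}\Sigma^{-1}H^2)$ appearing in $M_t$, so the solution $\E[(X_t^{(i)})^2]=(M_t)_{ii}(X_0^{(i)})^2+\tfrac{\eta}{a_i}(1-(M_t)_{ii})$ reassembles diagonally. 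Subtracting $(\E[X_t^{(i)}])^2=e^{-2\sqrt{2/\pi}(\lambda_i/\sigma_i)t}(X_0^{(i)})^2$ produces the covariance, and the routine identity $\tfrac{\eta}{a_i}=\tfrac{\eta}{2}\big(\sqrt{2/\pi}+\tfrac{\eta}{\pi}\tfrac{\lambda_i}{\sigma_i}\big)^{-1}\tfrac{\sigma_i}{\lambda_i}$ rewrites the constant term as the $i$-th diagonal of $\tfrac{\eta}{2}(\sqrt{2/\pi}\,I_d+\tfrac{\eta}{\pi}H\Sigma^{-1/2})^{-1}H^{-1}\Sigma^{1/2}$, matching item~2.

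Finally, letting $t\to\infty$: since $\lambda_i,\sigma_i>0$ we have $a_i>0$, so every diagonal entry of $M_t$ and of $e^{-2\sqrt{2/\pi}\Sigma^{-1/2}Ht}$ decays to zero, immediately giving $\E[X_\infty]=0$ and the stated stationary covariance. I expect the main obstacle to lie not in the moment ODEs, which are elementary, but in the care needed at the edges: justifying that the Itô integral is a true (not merely local) martingale so its expectation vanishes---guaranteed here by boundedness of the diffusion coefficient---and respecting the validity region $|Y_t|<1$ of the Phase~3 regime, where one should verify the diffusion stays well defined and that the concentration of $X_t$ near the minimizer keeps the dynamics inside this regime, so that the linearized moment recursions faithfully describe the stationary behavior.
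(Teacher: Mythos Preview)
Your proposal is correct and matches the paper's approach essentially line for line: take expectations of the Phase~3 SDE and use the martingale property for item~1, apply It\^o's lemma to $(X_t^{(i)})^2$ (equivalently to $X_tX_t^\top$, since everything is diagonal), take expectations to obtain a linear ODE for the second moment, solve, and subtract $(\E[X_t])^2$ for item~2. Your additional remarks on the bounded diffusion coefficient and on staying within the Phase~3 validity region are more careful than what the paper spells out, but the underlying strategy is the same.
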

\begin{proof}[Proof idea]
    For the $\mathbb{E} [X_t]$, we take the expected value of the SDE of Phase 3 from Lemma \ref{lemma:three_phases_Insights} and integrate the resulting ODE. For $Cov[X_t]$, we derive the SDE of $X_t X_t^{\top}$ via Itô's lemma, take the expectation, and integrate the resulting ODE. Then, we subtract $\mathbb{E} [X_t]\mathbb{E} [X_t]^{\top}$.
\end{proof}

\begin{lemma}\label{lemma:SGD_StatDistr}
Under the same assumptions as Lemma \ref{lemma:SignSGD_StaDistr_insights}, the stationary distribution for SGD is: 

$\quad\quad\quad \quad  \E  \left[ X_t \right] = e^{-H t} X_0 \overset{t \rightarrow \infty}{\rightarrow} 0 \quad \text{ and } \quad  Cov\left[ X_t \right] = \frac{\eta}{2} H^{-1} \Sigma \left(I_d -e^{-2H t} \right) \overset{t \rightarrow \infty}{\rightarrow} \frac{\eta}{2} H^{-1} \Sigma$.
\end{lemma}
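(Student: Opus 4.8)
The plan is to recognize that on the quadratic $f(x) = \frac{1}{2}x^\top H x$ the SGD SDE from the footnote becomes the linear (multivariate Ornstein--Uhlenbeck) equation $dX_t = -H X_t\,dt + \sqrt{\eta}\,\Sigma^{\frac{1}{2}}\,dW_t$, since $\nabla f(x) = Hx$. Both moments can then be extracted exactly, mirroring the strategy already used for Lemma~\ref{lemma:SignSGD_StaDistr_insights}: take the expectation of the SDE to get the mean, apply Itô's lemma to $X_t X_t^\top$ to get the second moment, and subtract.

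First I would handle the mean. Taking expectations of the SDE and using that the Itô integral is a zero-mean martingale, I obtain the linear ODE $\frac{d}{dt}\E[X_t] = -H\,\E[X_t]$, whose solution is $\E[X_t] = e^{-Ht} X_0$. Because $H = \diag(\lambda_1,\dots,\lambda_d)$ with all $\lambda_i > 0$, each factor $e^{-\lambda_i t}\to 0$, so $\E[X_t]\to 0$.

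Next I would compute the second moment via Itô's lemma applied to $X_t X_t^\top$. The product rule gives $d(X_t X_t^\top) = (dX_t) X_t^\top + X_t (dX_t)^\top + d\langle X,X\rangle_t$, where the quadratic-covariation term contributes $\eta\,\Sigma\,dt$. Taking expectations yields the Lyapunov matrix ODE $\frac{d}{dt}\E[X_t X_t^\top] = -H\,\E[X_t X_t^\top] - \E[X_t X_t^\top]\,H + \eta\,\Sigma$. Since $H$ and $\Sigma$ are both diagonal, this decouples into $d$ scalar ODEs $\frac{d}{dt}\E[(X_t^{(i)})^2] = -2\lambda_i\,\E[(X_t^{(i)})^2] + \eta\,\sigma_i^2$, each integrated by an integrating factor to give $\E[(X_t^{(i)})^2] = e^{-2\lambda_i t}(X_0^{(i)})^2 + \frac{\eta\sigma_i^2}{2\lambda_i}(1 - e^{-2\lambda_i t})$. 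Subtracting $\E[X_t]\E[X_t]^\top = \diag\!\big(e^{-2\lambda_i t}(X_0^{(i)})^2\big)$ cancels the initial-condition term and leaves, in matrix form, $\Cov[X_t] = \frac{\eta}{2}H^{-1}\Sigma\,(I_d - e^{-2Ht})$; letting $t\to\infty$ removes the exponential and produces $\frac{\eta}{2}H^{-1}\Sigma$.

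I do not anticipate a genuine obstacle: this is the textbook OU computation and is entirely routine once the drift is identified as linear. The only points requiring a little care are the justification for interchanging expectation and differentiation (equivalently, the vanishing of the Itô-integral expectation), which follows from the integrability and regularity of $\Sigma^{\frac{1}{2}}$ assumed in the setup, and the observation that the diagonality of $H$ and $\Sigma$ makes the Lyapunov equation separable, so no genuine matrix-exponential manipulation is needed. An equally clean alternative is to write the closed-form solution $X_t = e^{-Ht}X_0 + \sqrt{\eta}\int_0^t e^{-H(t-s)}\Sigma^{\frac{1}{2}}\,dW_s$ and apply the Itô isometry directly to the stochastic integral to read off the covariance.
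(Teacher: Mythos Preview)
Your proposal is correct and matches the paper's approach exactly. The paper does not give a separate proof for this lemma (treating it as standard), but for the analogous SignSGD result it uses precisely the strategy you outline: take the expectation of the SDE for the mean, apply It\^o's lemma to $X_tX_t^\top$ for the second moment, and subtract $\E[X_t]\E[X_t]^\top$; the brief appendix proof also mentions the It\^o isometry alternative you give at the end.
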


As we observed above, the noise inversely affects the convergence rate of the iterates of SignSGD while it does not impact that of SGD. Additionally, while both covariance matrices essentially scale inversely to the Hessian, that of SignSGD scales with $\Sigma^{\frac{1}{2}}$ while that of SGD scales with $\Sigma$.

We conclude this section by presenting a condition on the step size scheduler that ensures the asymptotic convergence of the expected loss to $0$ in Phase 3. For general schedulers, we characterize precisely the speed of convergence and the factors influencing it. Empirical validation is provided in the bottom-right of Figure \ref{fig:SignSGD_Results_Validation} for a convex quadratic as we use $\eta^\vartheta_t = \frac{1}{(t+1)^{\vartheta}}$ for $\vartheta \in \{\frac{1}{10}, \frac{1}{2}, \frac{3}{2}\}$.

\vspace{0.2cm}

\begin{lemma}\label{lemma:Schedulers_Insights}
    Under the assumptions of Lemma \ref{lemma:SignSGD_dynam_loss_Insights}, any step size scheduler $\eta_t$ such that
    \begin{equation}
        \int_0^\infty \eta_s ds = \infty \text{ and } \lim_{t \rightarrow \infty} \eta_t = 0 \implies
        \E[f(X_t) - f(X_*)] \overset{t \rightarrow \infty}{\rightarrow} \lesssim \frac{\mathcal{L}_{\tau}\sigma_{\text{max}}}{4 \mu} \sqrt{\frac{\pi}{2}} \eta_t \overset{t \rightarrow \infty}{\rightarrow}  0.
    \end{equation}
\end{lemma}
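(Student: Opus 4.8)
The plan is to generalize the Phase 3 analysis of Lemma \ref{lemma:SignSGD_dynam_loss_Insights} to a time-dependent step size and then perform an asymptotic analysis of the resulting scalar linear ODE. I work in the continuous time $t$ induced by the scheduler, in which one discrete step advances time by $\eta_k$, so that $t\to\infty$ is \emph{equivalent} to $\int_0^\infty \eta_s\,ds=\infty$. Under this convention the Phase 3 SDE of Lemma \ref{lemma:three_phases_Insights} keeps its $\eta$-independent drift $-\sqrt{2/\pi}\,\Sigma^{-\frac12}\nabla f(X_t)$ and acquires the time-varying diffusion $\sqrt{\eta_t}\sqrt{I_d-\tfrac{2}{\pi}\diag(\Sigma^{-\frac12}\nabla f(X_t))^2}$. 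This reparametrization is the conceptual key: it is what makes the hypothesis $\int_0^\infty\eta_s\,ds=\infty$ meaningful, as it is exactly the condition that the continuous dynamics runs for infinite time.

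First, applying Itô's lemma to $S_t=f(X_t)-f(X_*)$ exactly as in the proof of Lemma \ref{lemma:SignSGD_dynam_loss_Insights} (Phase 3) and taking expectations, I obtain the differential inequality $\frac{d}{dt}\E[S_t]\le -2\mu\Delta_t\,\E[S_t]+\frac{\eta_t}{2}\mathcal{L}_\tau$, where $\Delta_t=\sqrt{2/\pi}\,\sigma_{\max}^{-1}+\frac{\eta_t}{\pi}\mu\,\sigma_{\max}^{-2}$ is the scheduler-dependent version of the constant $\Delta$. The drift bound uses $\mu$-strong convexity ($\nabla f^\top\Sigma^{-\frac12}\nabla f\ge 2\mu S_t/\sigma_{\max}$) and the forcing bound uses $\Tr(\nabla^2 f)\le\mathcal{L}_\tau$ together with $I_d-\tfrac{2}{\pi}\diag(\cdot)^2\preceq I_d$ — precisely the ingredients already used for the constant-step bound. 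The crucial structural fact is that the decay rate is bounded below by a positive, $\eta$-independent constant, $2\mu\Delta_t\ge c:=2\mu\sqrt{2/\pi}\,\sigma_{\max}^{-1}>0$.

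Second, I solve the inequality with an integrating factor, yielding $\E[S_t]\le e^{-\int_0^t 2\mu\Delta_u\,du}S_0+\frac{\mathcal{L}_\tau}{2}\int_0^t e^{-\int_s^t 2\mu\Delta_u\,du}\,\eta_s\,ds$. The transient term is controlled by $e^{-ct}S_0$, which vanishes precisely because $\int_0^\infty\eta_s\,ds=\infty$ forces $t\to\infty$. For the forced term I would show it both vanishes and tracks $\eta_t$: splitting $[0,t]$ into $[0,t/2]$ and $[t/2,t]$, the first piece is exponentially suppressed by $e^{-ct/2}$, while the second is bounded by $\frac1c\sup_{s\ge t/2}\eta_s\to0$ (this is where $\eta_t\to0$ enters). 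A sharper, slowly-varying estimate — valid for instance for the polynomial schedulers $\eta_t=(t+1)^{-\vartheta}$ used in the experiments — gives the leading behavior $\int_0^t e^{-\int_s^t 2\mu\Delta_u\,du}\,\eta_s\,ds\sim \eta_t/c$ as $\eta_t\to0$, since then $\Delta_u\to\sqrt{2/\pi}\,\sigma_{\max}^{-1}$. Combining constants produces the stated bound $\E[S_t]\lesssim \frac{\mathcal{L}_\tau}{2}\cdot\frac{\eta_t}{c}=\frac{\mathcal{L}_\tau\sigma_{\max}}{4\mu}\sqrt{\pi/2}\,\eta_t\to0$.

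The main obstacle is the second step of the forced-term analysis: upgrading the crude $o(1)$ bound to the sharp asymptotic equivalence $\int_0^t e^{-\int_s^t 2\mu\Delta_u\,du}\,\eta_s\,ds\sim \eta_t/c$, which is needed to pin down the exact coefficient in the statement. This is an adiabatic/quasi-stationary argument: it requires $\eta_t$ to vary slowly relative to the $O(1)$ relaxation time $1/c$ of the dynamics, so that the system sits at each instant near the stationary loss level associated with the current step size $\eta_t$. The two hypotheses then play complementary roles — $\int\eta=\infty$ erases the memory of the initial condition, while $\eta_t\to0$ drives the quasi-stationary level to zero — and I would restrict the whole argument to the asymptotic Phase 3 regime, invoking Lemmas \ref{lemma:three_phases_Insights}--\ref{lemma:SignSGD_dynam_loss_Insights} to justify that the trajectory reaches and remains in Phase 3.
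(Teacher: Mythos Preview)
Your proof is correct and reaches the same conclusion as the paper's, but the packaging differs in two places worth noting. The paper does \emph{not} reparametrize time: it keeps the original time scale, so the Phase~3 SDE with scheduler becomes $dX_t=-\sqrt{2/\pi}\,\Sigma^{-1/2}\nabla f(X_t)\,\eta_t\,dt+\sqrt{\eta}\,\eta_t\sqrt{\cdots}\,dW_t$, leading to the integrating-factor bound
\[
\E[S_t]\le \frac{S_0+\tfrac{\eta\mathcal{L}_\tau}{2}\int_0^t e^{A(s)}\eta_s^2\,ds}{e^{A(t)}},\qquad A(t)=2\mu\!\int_0^t\!\Bigl(\sqrt{\tfrac{2}{\pi}}\,\sigma_{\max}^{-1}\eta_s+\tfrac{\eta\mu}{\pi}\sigma_{\max}^{-2}\eta_s^2\Bigr)ds,
\]
and then applies l'H\^opital once: $\int_0^\infty\eta_s\,ds=\infty$ forces $e^{A(t)}\to\infty$, and the ratio of derivatives is $\tfrac{\eta\mathcal{L}_\tau}{2}\eta_t^2/A'(t)\sim\tfrac{\mathcal{L}_\tau\sigma_{\max}}{4\mu}\sqrt{\pi/2}\,\eta\eta_t\to0$. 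Your scheduler-time substitution $d\tau=\eta_t\,dt$ turns this into exactly your ODE (divide the paper's inequality by $\eta_t$), and your splitting/adiabatic argument is the hands-on version of what l'H\^opital does in one line. The paper's route is shorter; yours is more explicit about \emph{why} the two hypotheses play complementary roles (memory loss vs.\ vanishing quasi-stationary level) and why the bound tracks the instantaneous $\eta_t$, which is a cleaner intuition and also sidesteps having to identify the correct l'H\^opital form.
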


\textbf{Remark:} Under the same conditions, SGD satisfies $\E[f(X_t) - f(X_*)] \overset{t \rightarrow \infty}{\rightarrow} \lesssim \frac{\mathcal{L}_{\tau}\sigma_{\text{max}}^2}{4 \mu} \eta_t \overset{t \rightarrow \infty}{\rightarrow}  0$.

\mycoloredbox{}{\textbf{Conclusion:} As noted in \cite{bernstein2018signsgd}, the ``signal-to-noise'' ratio is key in determining the dynamics of SignSGD. Our SDEs help clarify the mechanisms underlying the dynamics of SignSGD: we show that the effect of noise is radically different from SGD: 1) It affects the rate of convergence of the iterates, of the covariance of the iterates, and of the expected loss; 2) The asymptotic loss value and covariance of the iterates scale in $\Sigma^{\frac{1}{2}}$ while for SGD it does so in $\Sigma$. On the one hand, low levels of noise will ensure a faster and steadier loss decrease close to minima for SignSGD than for SGD. On the other, SGD will converge to much lower loss values. A symmetric argument holds for high levels of noise, which suggests that SignSGD is more resilient to high levels of noise.
}

\subsubsection{Heavy-tailed noise}

Interestingly, we can also replicate the efforts above when the noise $Z(x)$ is heavy-tailed as it is distributed according to a Student's t distribution. Notably, we derive the SDE when the noise has infinite variance and show how little marginal effect this has on the dynamics of SignSGD.

\begin{lemma}\label{lemma:SignSGD_SDE_Simplified_WildNoise}
Under the assumptions of Corollary \ref{thm:SignSGD_SDE_Simplified_Insights} but the noise on the gradients $Z \sim t_{\nu}(0, I_d) $ where $\nu \in \mathbb{Z}^{+}$: The following SDE is a $1$ weak approximation of the discrete update of SignSGD
\begin{align}\label{eq:SignSGD_SDE_Simplified_WildNoise_Insights}
d X_t = - 2 \Xi \left( \Sigma^{-\frac{1}{2}} \nabla f(X_t) \right) dt + \sqrt{\eta} \sqrt{I_d - 4\diag \left( \Xi \left( \Sigma^{-\frac{1}{2}} \nabla f(X_t) \right) \right)^2} d W_t,
\end{align}
where $\Xi(x)$ is defined as $\Xi(x) := x \frac{\Gamma\left(\frac{\nu+1}{2}\right)}{\sqrt{\pi \nu} \Gamma\left(\frac{\nu}{2}\right)}{ }_2 F_1\left(\frac{1}{2}, \frac{\nu+1}{2} ; \frac{3}{2} ;-\frac{x^2}{\nu}\right)$, $\Gamma$ is the gamma function, and ${ }_2 F_1$ is the hypergeometric function. Above, the $\Xi(x)$ and the square are applied component-wise.
\end{lemma}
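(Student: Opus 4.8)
The plan is to specialize the general SignSGD SDE of Theorem~\ref{thm:SignSGD_SDE_Insights_Full} to Student's $t$ noise by evaluating its drift and diffusion coefficients in closed form. The conceptual key is that both coefficients depend on the noise only through $\sign(\nabla f_\gamma(x))$, whose entries are bounded by $1$ irrespective of whether $Z$ has finite variance; this is precisely what lets the weak-approximation guarantee survive the passage to the heavy-tailed regime. Since $\Sigma = \diag(\sigma_1^2, \dots, \sigma_d^2)$ and the noise coordinates are taken independent (as in Corollary~\ref{thm:SignSGD_SDE_Simplified_Insights}), I would argue coordinate-wise throughout.

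First, for the drift. Writing $y_i := (\Sigma^{-1/2}\nabla f(x))_i = \nabla f(x)_i/\sigma_i$ and modeling $Z_i = \sigma_i T_i$ with $T_i$ a standard $t_\nu$ variable, the $i$-th drift entry of the general SDE is $-(1 - 2\mathbb{P}(\nabla f(x)_i + Z_i < 0)) = -(1 - 2F_\nu(-y_i))$, where $F_\nu$ is the standard Student's $t$ CDF. The symmetry $F_\nu(-y) = 1 - F_\nu(y)$ collapses this to $-2(F_\nu(y_i) - \tfrac12)$, so it remains only to identify $F_\nu(y) - \tfrac12$ with $\Xi(y)$. This is the one genuine computation: integrating the density $f_\nu(t) \propto (1 + t^2/\nu)^{-(\nu+1)/2}$ from $0$ to $y$ and recognizing the resulting power series as a Gauss hypergeometric function yields $F_\nu(y) - \tfrac12 = y\,\frac{\Gamma((\nu+1)/2)}{\sqrt{\pi\nu}\,\Gamma(\nu/2)}\,{}_2F_1(\tfrac12, \tfrac{\nu+1}{2}; \tfrac32; -\tfrac{y^2}{\nu}) = \Xi(y)$, giving the drift $-2\Xi(\Sigma^{-1/2}\nabla f(x))$.

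Next, the diffusion. By definition $\bar\Sigma(x) = \E[\xi_\gamma(x)\xi_\gamma(x)^\top]$ is the covariance of the bounded vector $\sign(\nabla f_\gamma(x))$, each entry of which is $\pm 1$-valued with mean $\mu_i = 1 - 2\mathbb{P}(\nabla f(x)_i + Z_i < 0) = 2\Xi(y_i)$ by the computation above. A $\pm 1$-valued variable with mean $\mu_i$ has variance $1 - \mu_i^2$, so the diagonal entries are $1 - 4\Xi(y_i)^2$; the off-diagonal entries vanish by coordinate-wise independence of the noise, exactly as in the Gaussian case. Hence $\bar\Sigma(x) = I_d - 4\diag(\Xi(\Sigma^{-1/2}\nabla f(x)))^2$, and the diffusion matrix is $\sqrt{\eta}$ times its square root, matching Eq.~\ref{eq:SignSGD_SDE_Simplified_WildNoise_Insights}.

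Finally, I would justify invoking Theorem~\ref{thm:SignSGD_SDE_Insights_Full}. Its regularity hypotheses constrain the drift $x \mapsto 1 - 2\mathbb{P}(\nabla f_\gamma(x) < 0)$ and the diffusion $x \mapsto \bar\Sigma(x)$, both of which have entries in $[-1,1]$ and inherit smoothness and polynomial-growth control of their derivatives from the smooth, integrable density of $Z$ — properties the Student's $t$ density enjoys for every $\nu \in \mathbb{Z}^+$. These coefficients never reference $\Var(Z)$, which is the whole point. I therefore expect the main obstacle to be not the approximation argument (which transfers verbatim from the general theorem) but rather the special-function identity $F_\nu(y) - \tfrac12 = \Xi(y)$, together with the care needed to confirm that boundedness of $\sign(\cdot)$ lets the theorem apply even when $\nu \le 2$ and the gradient noise has infinite variance.
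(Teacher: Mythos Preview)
Your proposal is correct and follows essentially the same route as the paper: specialize the general SDE of Theorem~\ref{thm:SignSGD_SDE_Insights_Full} by writing the drift as $1-2F_\nu(-\Sigma^{-1/2}\nabla f(x))$ and invoking the identity $F_\nu(y)=\tfrac12+\Xi(y)$, then obtain the diagonal diffusion from the variance $1-\mu_i^2$ of the $\pm1$-valued sign. Your write-up is in fact more explicit than the paper's (which simply cites the CDF identity and says ``similarly'' for the diffusion), and your closing remarks about regularity and the irrelevance of $\Var(Z)$ are handled in the paper elsewhere in the appendix rather than in this proof.
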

The following result shows the dynamics of SignSGD when $Z(x)$ has infinite variance.
\begin{corollary}\label{lemma:SignSGD_dynam_wild_noise_Insights}
Under the assumptions of Lemma \ref{lemma:SignSGD_SDE_Simplified_WildNoise} and $\nu = 2$, the dynamics in Phase 3 is:
    \begin{equation} \label{sde:signsgd_ph3_wn_Insights}
            d X_t = - \sqrt{\frac{1}{2}} \Sigma^{-\frac{1}{2}} \nabla f(X_t) dt + \sqrt{\eta} \sqrt{I_d - \frac{1}{2}\diag \left(\Sigma^{-\frac{1}{2}} \nabla f(X_t) \right)^2} d W_t.
        \end{equation}
\end{corollary}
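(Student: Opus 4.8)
The plan is to derive Eq.~\ref{sde:signsgd_ph3_wn_Insights} as a specialization of the general Student's-$t$ SDE of Lemma~\ref{lemma:SignSGD_SDE_Simplified_WildNoise} to $\nu=2$, followed by the same small-argument linearization that produced the Gaussian Phase~3 dynamics in Eq.~\ref{sde:signsgd_ph3_Insights}. I would begin with the drift $-2\Xi(\Sigma^{-1/2}\nabla f(X_t))$ and the diffusion coefficient $\sqrt{\eta}\sqrt{I_d - 4\diag(\Xi(\Sigma^{-1/2}\nabla f(X_t)))^2}$, set $\nu=2$, and reduce the hypergeometric factor inside $\Xi$ to an elementary expression.

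The decisive step is noticing that at $\nu=2$ the hypergeometric function degenerates: its second upper parameter $(\nu+1)/2=3/2$ coincides with its lower parameter $3/2$, so ${}_2F_1(1/2,3/2;3/2;z)$ falls under the Gauss identity ${}_2F_1(a,b;b;z)=(1-z)^{-a}$. With $a=1/2$ and $z=-x^2/\nu=-x^2/2$ this gives ${}_2F_1(1/2,3/2;3/2;-x^2/2)=(1+x^2/2)^{-1/2}$. Multiplying by the prefactor $\frac{\Gamma(3/2)}{\sqrt{2\pi}\,\Gamma(1)}=\frac{1}{2\sqrt2}$ (using $\Gamma(3/2)=\sqrt{\pi}/2$ and $\Gamma(1)=1$), I obtain the closed form $\Xi(x)=\frac{x}{2\sqrt2}\,(1+x^2/2)^{-1/2}$.

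It then remains to linearize in Phase~3. By construction Phase~3 is the regime where the argument $\Sigma^{-1/2}\nabla f(X_t)$ is small, in exact analogy with $|Y_t|$ being small for the Gaussian Phase~3 of Lemma~\ref{lemma:three_phases_Insights}, so I would expand $\Xi$ about the origin. Since $\Xi'(0)=1/(2\sqrt2)$, the leading term is $\Xi(x)\approx x/(2\sqrt2)$, and substituting componentwise yields the drift $-2\Xi(\Sigma^{-1/2}\nabla f)\to -\sqrt{\tfrac12}\,\Sigma^{-1/2}\nabla f$ together with $4\Xi(\Sigma^{-1/2}\nabla f)^2\to\tfrac12(\Sigma^{-1/2}\nabla f)^2$ inside the diffusion, which reproduces Eq.~\ref{sde:signsgd_ph3_wn_Insights}.

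I expect the only subtle point to be recognizing the degenerate $b=c$ case that turns the special function into an algebraic one; once that identity is applied, what is left is the routine leading-order expansion already used for the Gaussian case, with neglected contributions of order $\mathcal{O}(|\Sigma^{-1/2}\nabla f|^3)$ in the drift (and correspondingly in the diffusion). Hence the Phase~3 form is accurate precisely in the small-signal regime that defines the phase, and no genuine difficulty remains beyond bookkeeping the constants.
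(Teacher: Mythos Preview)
Your proposal is correct and follows essentially the same approach as the paper: set $\nu=2$ in the general Student's-$t$ SDE and linearize $\Xi$ near the origin to obtain $2\Xi(x)\sim x/\sqrt{2}$. You supply more detail than the paper---in particular the hypergeometric degeneration ${}_2F_1(1/2,3/2;3/2;z)=(1-z)^{-1/2}$ and the explicit closed form $\Xi(x)=\tfrac{x}{2\sqrt{2}}(1+x^2/2)^{-1/2}$---whereas the paper simply asserts the linearization; your exposition is therefore a fuller version of the same argument.
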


\mycoloredbox{}{\textbf{Conclusion:} We observe that the dynamics of SignSGD when the noise is Gaussian (Eq. \ref{sde:signsgd_ph3_Insights}) w.r.t. when it is heavy-tailed with unbounded variance (Eq. \ref{sde:signsgd_ph3_wn_Insights}) are very similar: By comparing the constants ($\sqrt{1/2}$ and $\sqrt{2/\pi}$) in front of the drift terms $\Sigma^{-\frac{1}{2}} \nabla f(X_t)$, they are only $\sim 10\%$ apart, and the diffusion coefficients are comparable. Not only do we once more showcase the resilience of SignSGD to high levels of noise, but in alignment with \citep{zhang2020adaptive}, we provide theoretical support to the success of Adam in such a scenario where SGD would diverge.
}
All the results derived above can be extended to this setting.

\subsection{AdamW SDE}

In the last subsection, we showcased how SDEs can serve as powerful tools to understand the dynamics of the simplest among coordinate-wise adaptive methods: SignSGD. Here, we extend the discussion to Adam with \textit{decoupled} weight decay, i.e. AdamW:
\begin{align}
\label{eq:AdamW_Discr_Update}
    & v_{k+1} = \beta_2 v_{k} + (1- \beta_2) \left( \nabla f_{\gamma_{k}}(x_{k}) \right)^2, \quad m_{k+1} = \beta_1 m_{k} + (1- \beta_1)  \nabla f_{\gamma_{k}}(x_{k}), \nonumber \\
    & x_{k+1} = x_{k}  - \eta \frac{\hat{m}_{k+1}}{\sqrt{\hat{v}_{k+1}}+\epsilon} -  \eta \theta  x_{k}  , \quad \hat{m}_{k} = \frac{m_{k}}{1 - \beta_1^k}, \quad \hat{v}_{k} = \frac{v_{k}}{1 - \beta_2^k},
\end{align}
which, of course, covers Adam, RMSprop, and RMSpropW depending on the values of $\theta$ and $\beta_1$.

\begin{figure}[H]
    \centering
    \subfloat{{\includegraphics[width=0.35\linewidth]{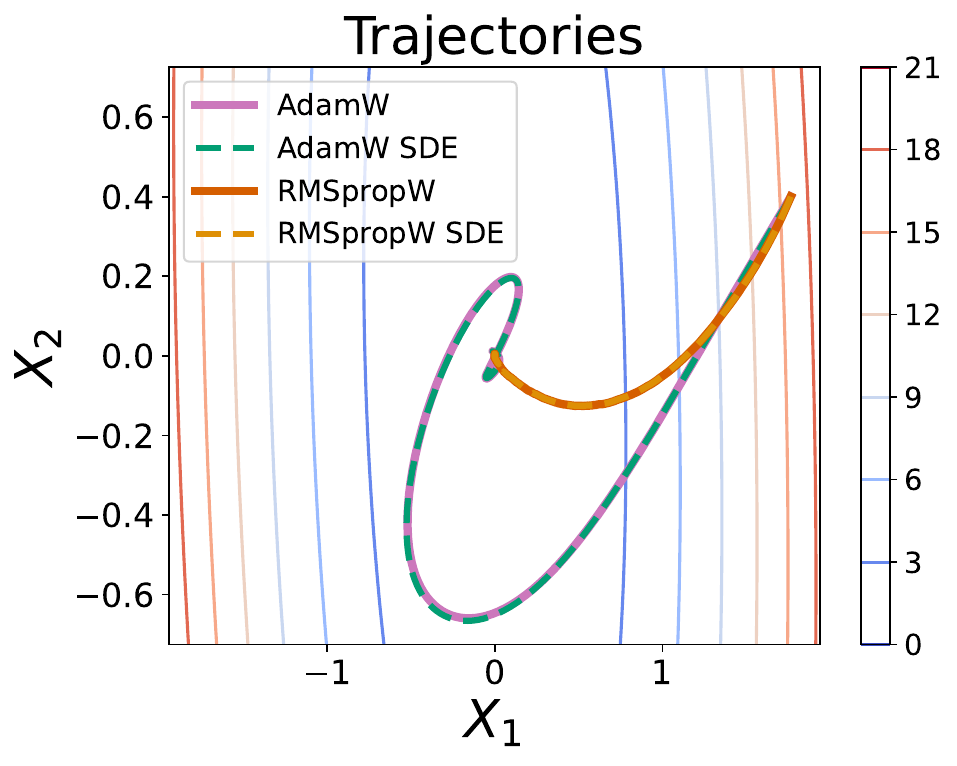} }}%
    \hspace{1.6cm}\subfloat{{\includegraphics[width=0.35\linewidth]{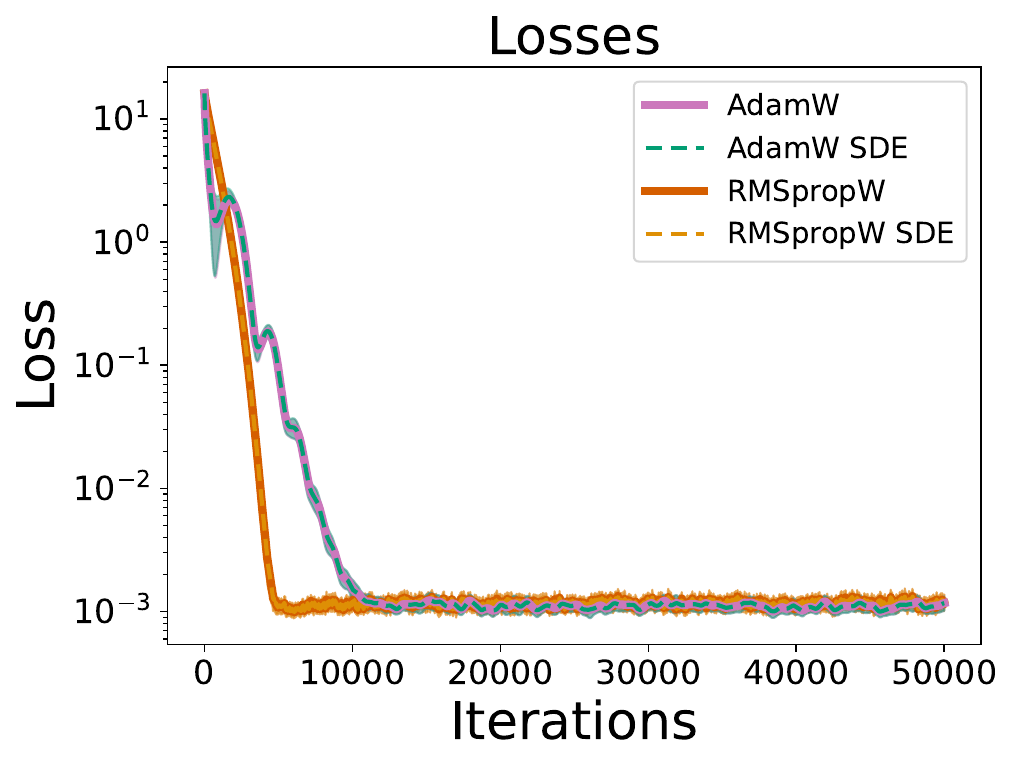} }} \\
    \subfloat{{\includegraphics[width=0.35\linewidth]{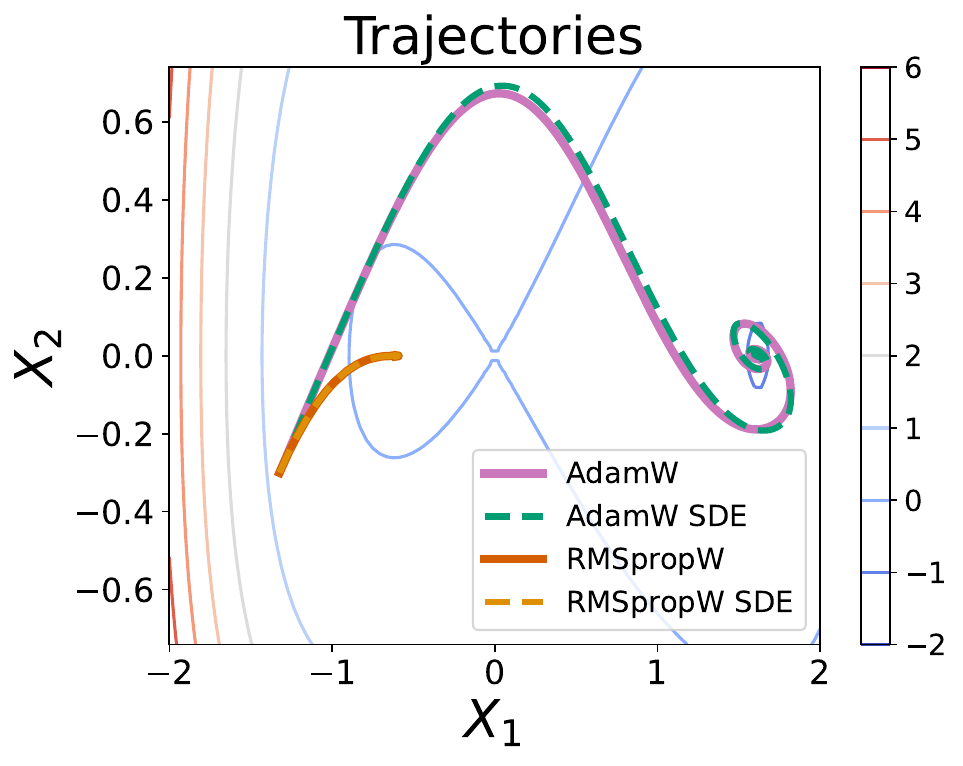} }}%
    \hspace{1.6cm}\subfloat{{\includegraphics[width=0.35\linewidth]{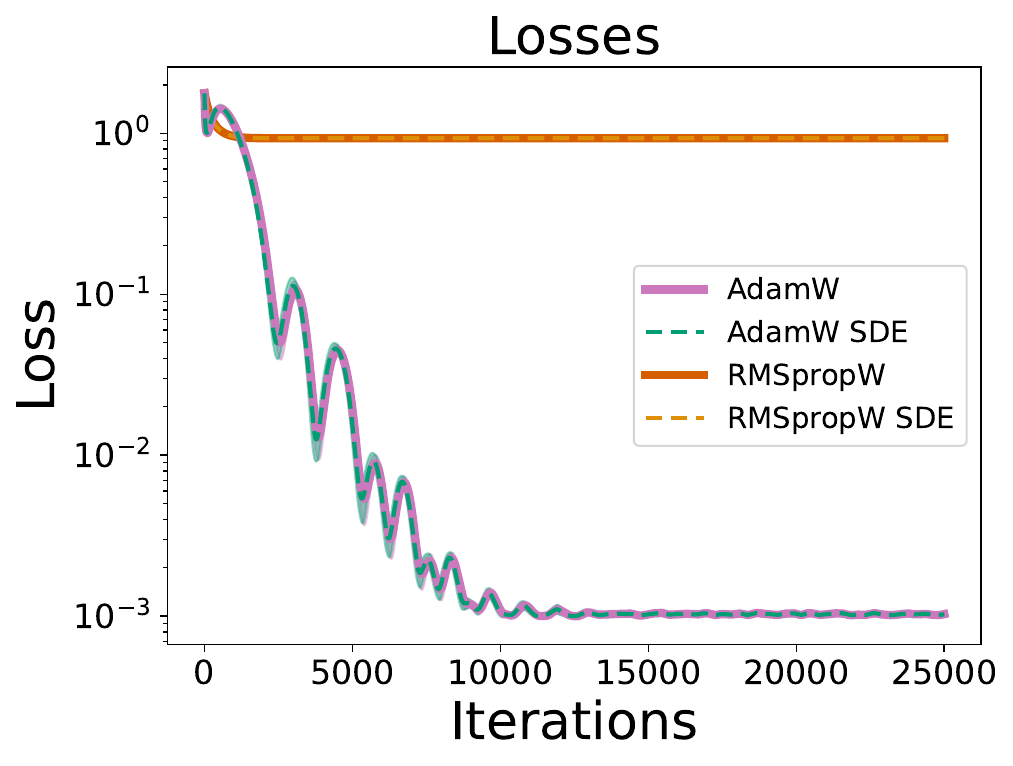} }}%
    \caption{The two images at the top compare the SDEs of AdamW and RMSpropW with the respective optimizers in terms of trajectories and $f(x)$ for a convex quadratic function while the other two provide a comparison for an embedded saddle. In all cases, we observe good agreements.}
\label{fig:AdamWRMSpropW_SDE_Verification_Quadratic_EmbSaddle}
\end{figure}

The following result proves the SDE of AdamW which we validate in Figure \ref{fig:AdamWRMSpropW_SDE_Verification_Quadratic_EmbSaddle} for two simple landscapes and in Figure \ref{fig:AdamW_RMSpropW_SDE_Verification} for a Transformer and a ResNet.

\begin{figure}
    \centering
    \subfloat{{\includegraphics[width=0.35\linewidth]{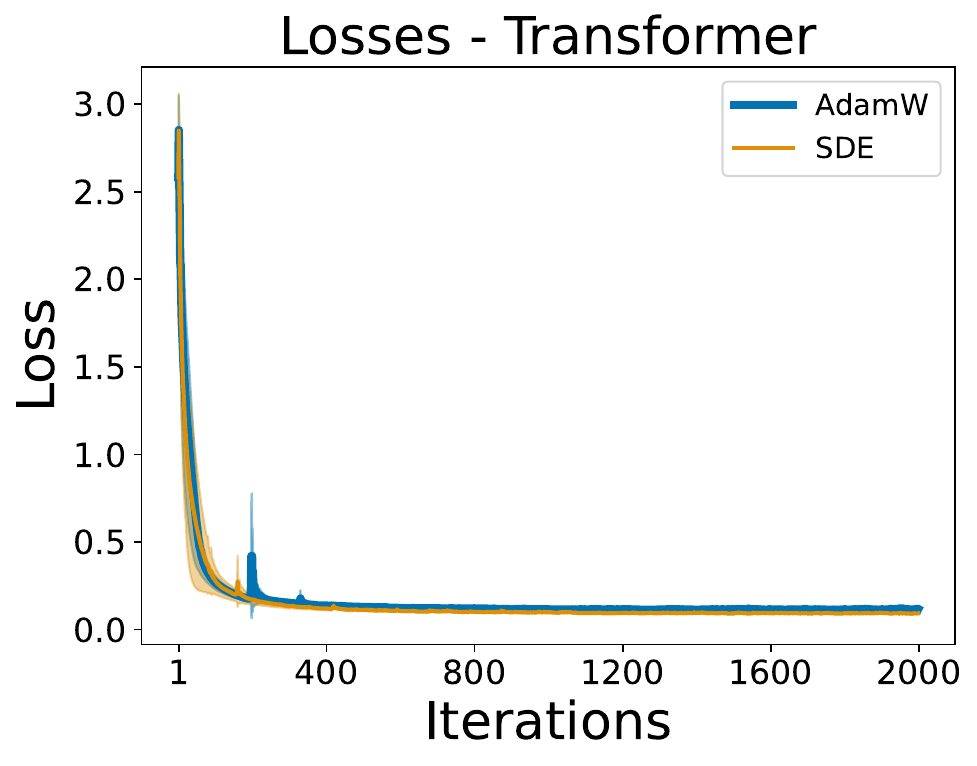} }}%
    \hspace{1.6cm}\subfloat{{\includegraphics[width=0.35\linewidth]{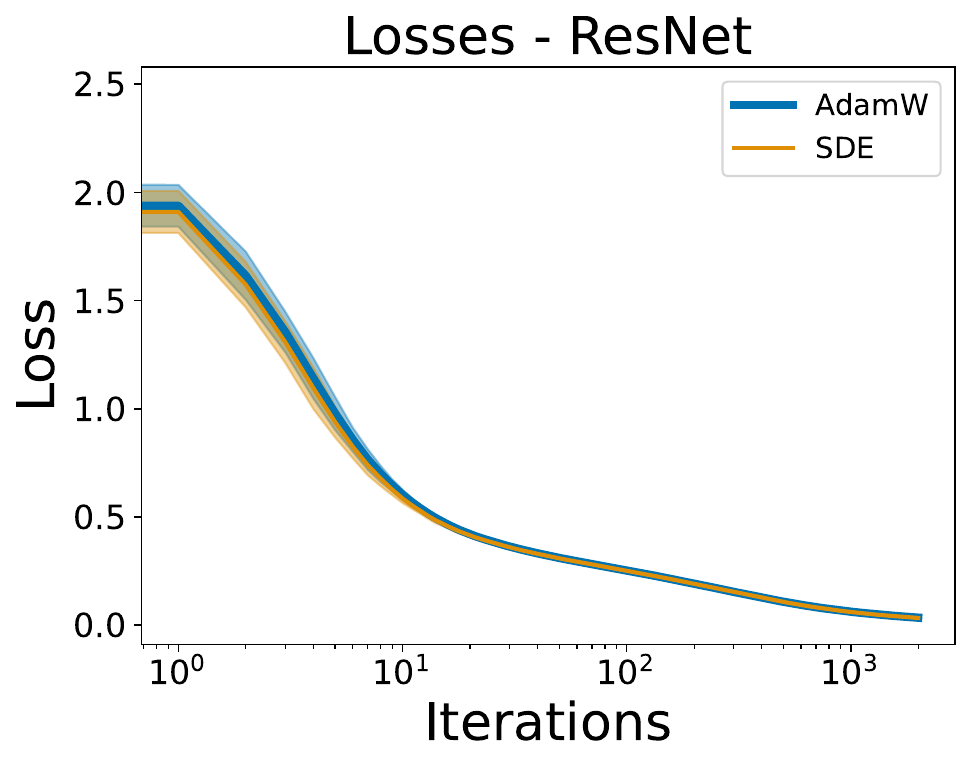} }} \\
    \subfloat{{\includegraphics[width=0.35\linewidth]{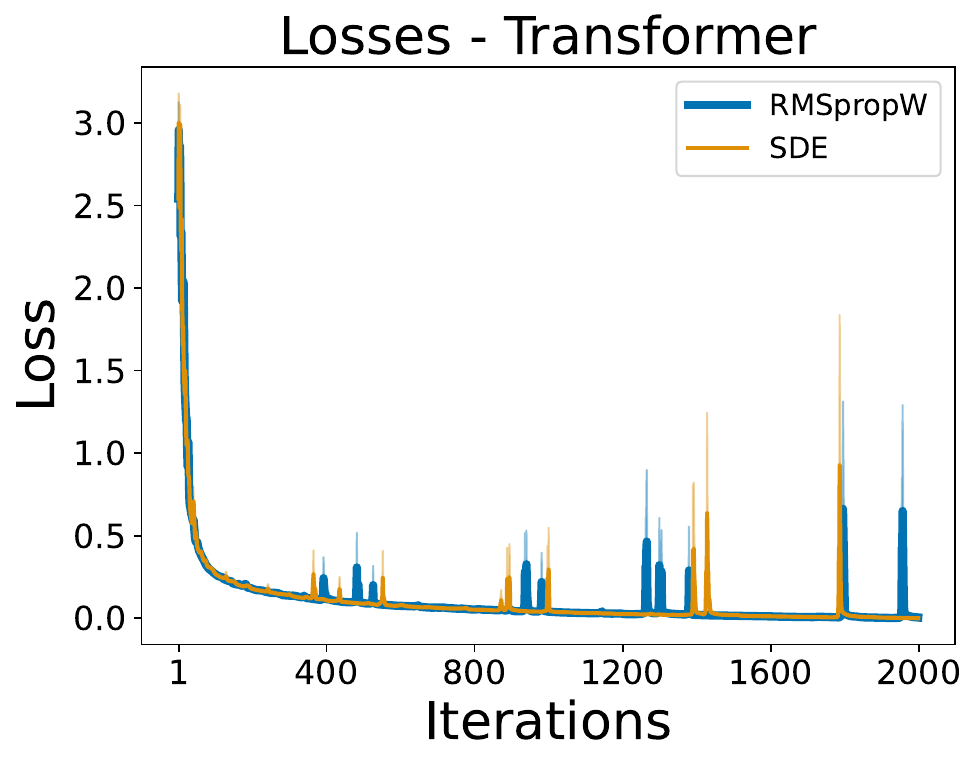} }}%
    \hspace{1.6cm}\subfloat{{\includegraphics[width=0.35\linewidth]{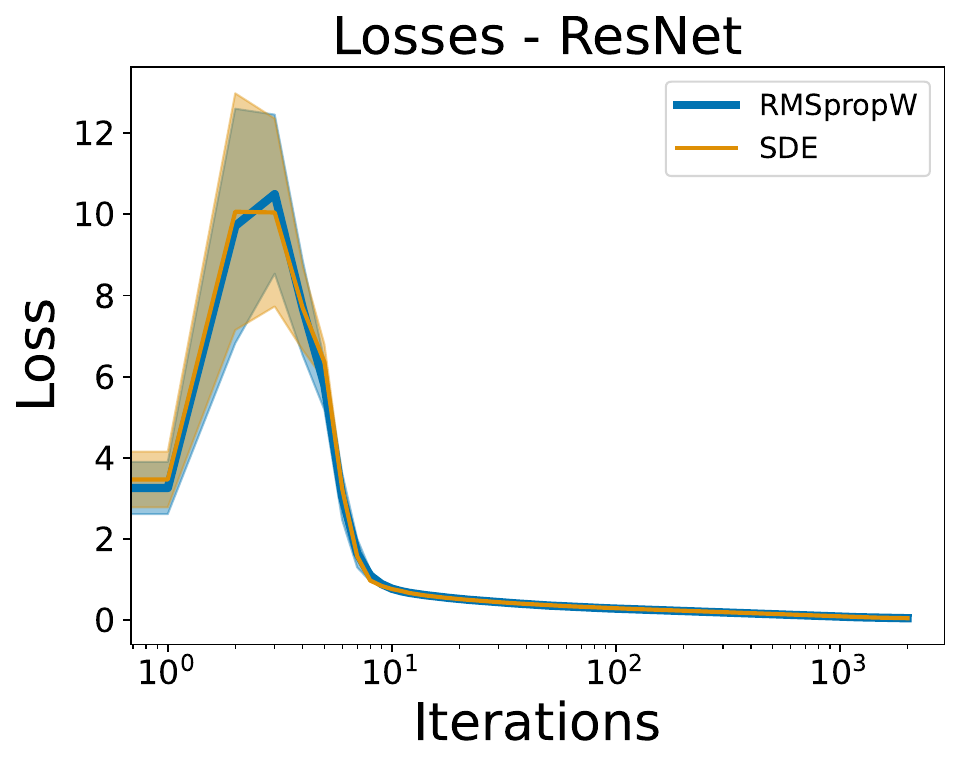} }}
    
    \caption{The two images at the top represent the comparison between AdamW and its SDE in terms of $f(x)$. The two at the bottom do the same for RMSpropW. In both cases, the first is a Transformer on MNIST and the second a ResNet on CIFAR-10: Our SDEs match the respective optimizers.}\label{fig:AdamW_RMSpropW_SDE_Verification}
\end{figure}

\begin{theorem}[Informal Statement of Theorem \ref{thm:AdamW_SDE}] \label{thm:AdamW_SDE_Insights_Full}
Under sufficient regularity conditions, $\rho_1 = \mathcal{O}(\eta^{-\zeta})$ s.t. $\zeta \in (0,1)$, and $\rho_2 = \mathcal{O}(1)$, the order $1$ weak approximation of AdamW is:
\begin{align}
\label{eq:Adam_SDE_Insights}
    d X_t & =-\frac{\sqrt{\iota_2(t)}}{\iota_1(t)} P_t^{-1} (M_t + \eta \rho_1 \left(\nabla f\left(X_t\right)-M_t\right)) d t - \theta X_t dt \\
    d M_t & =\rho_1\left(\nabla f\left(X_t\right)-M_t\right) d t+\sqrt{\eta} \rho_1 \sqrt{\Sigma\left(X_t\right)} d W_t \\
    d V_t & =\rho_2\left( (\nabla f(X_t))^2+  \diag\left(\Sigma\left(X_t\right)\right)-V_t\right) d t,
\end{align}
where $\beta_i = 1 - \eta \rho_i \sim 1$, $\iota_i(t) = 1 - e^{-\rho_i t}$, $t>t_0$,  and $P_t = \diag{\sqrt{V_t}} + \epsilon \sqrt{\iota_2(t)}I_d$.
\end{theorem}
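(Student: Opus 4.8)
The plan is to follow the standard moment-matching framework for weak approximations of stochastic optimizers (in the sense of Definition~\ref{def:weak_approximation}; see \cite{li2017stochastic,mil1986weak}): I will show that the one-step conditional moments of the discrete AdamW iteration \eqref{eq:AdamW_Discr_Update} match those of the proposed SDE system up to the orders required for an order-$1$ approximation, and then invoke the Milstein-type sufficiency theorem. The first move is to reparametrize $\beta_i = 1 - \eta\rho_i$ and set $t = k\eta$, so the geometric factors become $\beta_i^k = (1-\eta\rho_i)^{t/\eta} = e^{-\rho_i t} + \mathcal{O}(\eta)$ and the bias-correction denominators satisfy $1 - \beta_i^k = \iota_i(t) + \mathcal{O}(\eta)$ with $\iota_i(t) = 1 - e^{-\rho_i t}$. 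The restriction $t > t_0 > 0$ keeps $\iota_i(t)$ bounded away from zero, so that $\hat m_{k+1} = m_{k+1}/\iota_1(t)$ and $\hat v_{k+1} = v_{k+1}/\iota_2(t)$ are well-defined and their approximation errors are controlled uniformly in $k$.

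Next I would read off the auxiliary SDEs for $M_t$ and $V_t$ from the conditional moments of their discrete increments. Writing $\nabla f_{\gamma_k}(x_k) = \nabla f(x_k) + Z_k$ with $\E[Z_k]=0$ and $\Cov(Z_k)=\Sigma(x_k)$, the increment $m_{k+1}-m_k = \eta\rho_1(\nabla f(x_k) - m_k) + \eta\rho_1 Z_k$ has conditional mean $\eta\rho_1(\nabla f - M)$ and conditional covariance $\eta^2\rho_1^2\Sigma$; matching these against an Itô increment over time $\eta$ forces the drift $\rho_1(\nabla f - M)$ and a diffusion of size $\sqrt{\eta}\rho_1\sqrt{\Sigma}$, exactly as stated. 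For $V_t$, the componentwise square obeys $\E[(\nabla f_{\gamma_k})^2] = (\nabla f)^2 + \diag(\Sigma)$, giving the drift $\rho_2((\nabla f)^2 + \diag(\Sigma) - V)$; since $\rho_2=\mathcal{O}(1)$, the centered fluctuation of $v$ is only $\mathcal{O}(\eta^2)$ and contributes no diffusion at order $1$, which is why the $V_t$ equation is purely a drift.

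The core of the argument is the $X_t$ equation, where I expect the main difficulty. Because AdamW updates $m$ and $v$ \emph{before} the parameter, the step uses the freshly updated $m_{k+1} \approx M_t + \eta\rho_1(\nabla f(X_t) - M_t)$ and $v_{k+1}\approx V_t$. Substituting the bias corrections gives $\sqrt{\hat v_{k+1}} + \epsilon \approx \iota_2(t)^{-1/2}(\diag(\sqrt{V_t}) + \epsilon\sqrt{\iota_2(t)}I_d) = \iota_2(t)^{-1/2}P_t$, so the preconditioner collapses to $\sqrt{\iota_2(t)}\,P_t^{-1}$ and the adaptive term becomes $\frac{\sqrt{\iota_2(t)}}{\iota_1(t)}P_t^{-1}(M_t + \eta\rho_1(\nabla f(X_t) - M_t))$, which together with $-\eta\theta x_k \mapsto -\theta X_t\,dt$ yields the stated drift. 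A subtle but essential point is that the correction $\eta\rho_1(\nabla f - M_t)$ cannot be dropped: it enters the one-step parameter increment at order $\eta^2\rho_1 = \mathcal{O}(\eta^{2-\zeta})$, which exceeds the $\mathcal{O}(\eta^2)$ first-moment tolerance whenever $\zeta>0$. The companion observation concerns the diffusion of $X_t$: gradient noise reaches $x_{k+1}$ only through the factor $\eta\rho_1 Z_k$ inside $m_{k+1}$, producing a centered second moment of order $\eta^4\rho_1^2 = \mathcal{O}(\eta^{4-2\zeta})$; since $\zeta<1$ this is $o(\eta^2)$, so it falls within the order-$1$ error and $X_t$ carries no Brownian term. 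The two-sided use of $\zeta\in(0,1)$ -- retaining the drift correction while discarding the diffusion -- is the crux of the result.

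Finally I would verify the preconditions of the weak-approximation theorem: the SDE coefficients must be Lipschitz with at most affine growth, which holds because $\epsilon>0$ and $t>t_0$ bound $P_t^{-1}$ by $1/(\epsilon\sqrt{\iota_2(t_0)})$ and thereby tame the otherwise singular $\sqrt{V}$ and reciprocal, and the third and higher conditional moments of the discrete increments must be $\mathcal{O}(\eta^2)$ uniformly. Collecting the first-moment match to $\mathcal{O}(\eta^2)$, the second-moment match to $\mathcal{O}(\eta^2)$, and these higher-moment bounds, the Milstein sufficiency criterion delivers the order-$1$ weak approximation of \eqref{eq:AdamW_Discr_Update}, as claimed in the formal Theorem~\ref{thm:AdamW_SDE}. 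I anticipate the principal obstacles to be bookkeeping the two-timescale bias-correction errors (fast $\rho_1=\mathcal{O}(\eta^{-\zeta})$ versus slow $\rho_2=\mathcal{O}(1)$) uniformly in $k$, and justifying the replacement of the freshly-updated $m_{k+1},v_{k+1}$ by their continuous counterparts $M_t,V_t$ without incurring errors larger than $\mathcal{O}(\eta^2)$.
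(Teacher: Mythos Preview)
Your proposal is correct and follows essentially the same approach as the paper: moment matching of the discrete AdamW increments against the SDE system via Lemma~\ref{lemma:li1} and Theorem~\ref{thm:mils}, with the paper's proof of Theorem~\ref{thm:AdamW_SDE} reducing to that of Theorem~\ref{thm:Adam_SDE} plus the trivial $-\eta\theta x_k \mapsto -\theta X_t\,dt$. Your explanation of why $\zeta\in(0,1)$ is used on both sides (retaining the $\eta\rho_1(\nabla f - M_t)$ drift correction while discarding the $\mathcal{O}(\eta^{4-2\zeta})$ diffusion on $X_t$) is in fact more explicit than the paper's own presentation; the only step you elide slightly is the paper's intermediate expansion $1/\sqrt{v_{k+1}} = (1+\mathcal{O}(\eta))/\sqrt{v_k}$ that formally justifies replacing the freshly updated $v_{k+1}$ by $V_t$ in the preconditioner.
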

\vspace{-0.1cm}
$M_t$ and $V_t$ are the exponential moving averages of the gradient and the squared gradient, respectively. $P_t^{-1}$ acts as an adaptive preconditioner, scaling the parameter updates $X_t$ based on the accumulated squared gradients in $ V_t$. While $M_t$ is the momentum term and captures the history of gradients to smooth out the updates, $ \eta \rho_1 \left(\nabla f\left(X_t\right)-M_t\right)$ adjusts $M_t$ towards the current gradient, ensuring responsiveness to recent changes. Finally, $-\theta X_t $ applies regularization by shrinking the parameters.

We highlight that in contrast to \textit{Remark 4.3} of \cite{Malladi2022AdamSDE}, which suggests that an SDE for Adam is only viable if $\sigma \gg \lVert \nabla f(x) \rVert$ and $\sigma \sim \frac{1}{\eta}$, our derivation that does not need these assumptions: See Remark \ref{remark:AdamSDE} for a deeper discussion, the implications, and the experimental comparisons.    

The following result demonstrates how the asymptotic expected loss of AdamW scales with the noise level. Notably, it introduces the first scaling rule for AdamW, proposing an alternative to the one proposed for Adam in \citep{Malladi2022AdamSDE} and extending it to include weight decay scaling. It is crucial to understand that, unlike the typical approach in the literature (see \citep{jastrzkebski2017three,Malladi2022AdamSDE}), our objective in deriving these rules is not to maintain the dynamics of the optimizers or the SDE unchanged. Instead, our goal is to offer a practical strategy for adjusting hyperparameters (e.g., from $\eta$ to $\Tilde{\eta}$) to retain certain performance metrics or optimizer properties as the batch size increases (e.g., from $B$ to $\Tilde{B}$). Therefore, in our upcoming analysis, we aim to derive scaling rules that \textit{preserve} specific relevant aspects of the dynamics, such as the convergence bound on the loss or the speed. See Appendix \ref{sec:BackgroundApp} for a detailed discussion motivating our approach.

\vspace{0.2cm}

\begin{lemma} 
\label{lemma:HPSR_AdamW_Insights}
If $f$ is $\mu$-strongly convex and $L$-smooth, $\tr(\nabla^2 f(x)) \leq \mathcal{L}_{\tau}$, $X_*=0$, $\Sigma(x)=\sigma^2 I_d$, and $(\nabla f(x) )^2 = \mathcal{O}(\eta)$, $\Tilde{\eta} = \kappa \eta$, $\Tilde{B}= B \delta$, and $\Tilde{\rho_i} = \alpha_i \rho_i$, and $\Tilde{\theta} = \xi \theta$, AdamW satisfies

\vspace{-0.3cm}

\begin{equation} \label{eq:AdamWLoss_Insights}
    \E[f(X_t) - f(X_*)] \overset{t \rightarrow \infty}{\leq} \frac{\eta \mathcal{L}_\tau \sigma L}{2} \frac{\kappa}{2 \mu \sqrt{B \delta} L + \sigma \xi \theta (L + \mu)}.
\end{equation}

\vspace{-0.3cm}

We derive the novel scaling rule by 1) Preserving the upper bound, which requires that $\kappa = \sqrt{\delta}$ and $\xi = \kappa$;
2) Preserving the relative speed of $M_t$, $V_t$ and $X_t$, which requires that $\Tilde{\beta_i} = 1 - \kappa(1 - \beta_i)$.
\end{lemma}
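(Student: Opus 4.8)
The plan is to separate the argument into two stages: first establish the asymptotic loss bound in \eqref{eq:AdamWLoss_Insights}, and then obtain the scaling rule by demanding that this bound be invariant under the reparametrization. For the loss bound I would work in the near-stationary regime and reduce the coupled system of Theorem~\ref{thm:AdamW_SDE_Insights_Full} to an effective SDE for $X_t$ alone. Since $(\nabla f(x))^2=\mathcal{O}(\eta)$, the $V_t$-equation drives $V_t$ to the stochastic-gradient second moment, which to leading order is the mini-batch noise covariance $(\sigma^2/B)\,I_d$; hence the preconditioner asymptotes to $P_t^{-1}\to(\sqrt{B}/\sigma)\,I_d$ (neglecting the $\epsilon$-term), while $\sqrt{\iota_2(t)}/\iota_1(t)\to1$ as $t\to\infty$. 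Because $\beta_i\sim1$ forces $\eta\rho_1\to0$, the correction $\eta\rho_1(\nabla f-M_t)$ is negligible and the drift of $X_t$ collapses to $-(\sqrt{B}/\sigma)M_t-\theta X_t$.

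The delicate step, which I expect to be the main obstacle, is that the $X_t$-equation carries \emph{no} direct Brownian term: all randomness enters through $M_t$, so a naive Itô computation on $f(X_t)$ would produce no stationary loss floor. I would treat $(X_t,M_t)$ as a slow--fast system (the scaling $\rho_1=\mathcal{O}(\eta^{-\zeta})$ makes $M_t$ fast) and homogenize the fluctuation $\tilde M_t:=M_t-\nabla f(X_t)$, an Ornstein--Uhlenbeck process with reversion rate $\rho_1$ and noise amplitude $\sqrt{\eta}\,\rho_1\,\sigma/\sqrt{B}$. Its time integral $\int_0^t\tilde M_s\,ds$ is asymptotically Gaussian with variance rate $(\sqrt{\eta}\rho_1\sigma/\sqrt{B})^2/\rho_1^2=\eta\sigma^2/B$; pushing it through the preconditioner $\sqrt{B}/\sigma$ turns it into an effective diffusion on $X_t$ with covariance rate exactly $\eta I_d$, \emph{independent} of both $\sigma$ and $\rho_1$, which is precisely why neither survives in the final floor. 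This yields the effective SDE $dX_t=-\big((\sqrt{B}/\sigma)\nabla f(X_t)+\theta X_t\big)dt+\sqrt{\eta}\,dW_t$. Making this reduction rigorous inside the weak-approximation framework of Definition~\ref{def:weak_approximation} --- controlling the averaging error together with the $V_t\to\sigma^2/B$ and $\iota_i\to1$ approximations uniformly in $t$ --- is the technical crux.

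Granting the effective SDE, the loss estimate proceeds as in Lemma~\ref{lemma:SignSGD_dynam_loss_Insights}: applying Itô's lemma to $S_t:=f(X_t)-f(X_*)$ and taking expectations gives
\begin{equation}
\frac{d\,\E[S_t]}{dt}=-\frac{\sqrt{B}}{\sigma}\,\E\|\nabla f(X_t)\|^2-\theta\,\E[\nabla f(X_t)^\top X_t]+\frac{\eta}{2}\,\E[\tr(\nabla^2 f(X_t))].
\end{equation}
I would then bound each term from the hypotheses: the PL (gradient-domination) inequality from $\mu$-strong convexity gives $\|\nabla f\|^2\ge2\mu S_t$; combining strong convexity, $\nabla f(X)^\top X\ge S_t+\tfrac{\mu}{2}\|X\|^2$ (using $X_*=0$), with $L$-smoothness, $\|X\|^2\ge\tfrac{2}{L}S_t$, yields the key estimate $\nabla f(X)^\top X\ge(1+\mu/L)S_t$; and $\tr(\nabla^2 f)\le\mathcal{L}_\tau$. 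These produce the linear differential inequality $\tfrac{d}{dt}\E[S_t]\le-\big(\tfrac{2\mu\sqrt{B}}{\sigma}+\theta(1+\tfrac{\mu}{L})\big)\E[S_t]+\tfrac{\eta}{2}\mathcal{L}_\tau$, whose stationary value is $\tfrac{\eta\mathcal{L}_\tau\sigma L/2}{2\mu\sqrt{B}L+\sigma\theta(L+\mu)}$; inserting the scaled hyperparameters $\tilde\eta=\kappa\eta$, $\tilde B=B\delta$, $\tilde\theta=\xi\theta$ reproduces \eqref{eq:AdamWLoss_Insights} verbatim.

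For the scaling rule I would impose invariance of this floor. Matching the rescaled denominator term-by-term against $\kappa$ times the base denominator, i.e. $\kappa(2\mu\sqrt{B}L+\sigma\theta(L+\mu))=2\mu\sqrt{B\delta}L+\sigma\xi\theta(L+\mu)$, forces $\kappa=\sqrt{\delta}$ from the curvature term and $\xi=\kappa$ from the weight-decay term. Preserving instead the \emph{relative} timescales of $M_t$, $V_t$ and $X_t$ amounts to keeping the continuous rates $\rho_i$ fixed; since $1-\beta_i=\eta\rho_i$ and $\tilde\eta=\kappa\eta$, this is exactly $\tilde\beta_i=1-\kappa(1-\beta_i)$. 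Note that $\rho_i$ has dropped out of the loss floor (it cancels during homogenization), so the two requirements are decoupled, in agreement with the statement.
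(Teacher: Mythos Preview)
Your argument is correct and arrives at the same bound and scaling rule, but the mechanism you use to extract the noise floor differs from the paper's. The issue you flag---that $X_t$ carries no direct Brownian term in the AdamW SDE---is real, and you resolve it by slow--fast homogenization: treating $\tilde M_t=M_t-\nabla f(X_t)$ as a fast Ornstein--Uhlenbeck process and computing the Green--Kubo effective diffusion $(\sqrt{B}/\sigma)^2\cdot a^2/\rho_1^2=\eta$, which collapses the system to the RMSpropW-type SDE $dX_t=-\big((\sqrt{B}/\sigma)\nabla f(X_t)+\theta X_t\big)dt+\sqrt{\eta}\,dW_t$. The paper instead keeps the $(X_t,M_t)$ pair and applies It\^o to a Lyapunov function $H(X_t,M_t)=f(X_t)+C\|M_t\|^2/2$ with $C$ chosen so that the It\^o correction from $\|M_t\|^2$ produces exactly the $\tfrac{\eta}{2}\mathcal{L}_\tau$ floor; it then invokes $M_t\to\nabla f(X_t)$ to close the inequality in terms of $f$ alone. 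Both routes make the $\rho_1$-dependence cancel, which is why $\beta_i$ drops out of the floor.

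Once reduced to the effective SDE, your treatment of the weight-decay term---bounding $\nabla f(X)^\top X\ge(1+\mu/L)S_t$ via strong convexity plus $L$-smoothness---is identical to the paper's argument for RMSpropW, and the scaling-rule derivation (matching the denominator termwise to force $\kappa=\sqrt{\delta}$, $\xi=\kappa$) also coincides. Your reading of ``preserving relative speed'' as keeping the continuous rate $\rho_i$ fixed while $\eta\mapsto\kappa\eta$ differs in convention from the paper (which writes it as $\tilde\rho_i=\kappa\rho_i$ with $\tilde\rho_i:=(1-\tilde\beta_i)/\eta$ referenced to the \emph{original} $\eta$), but both yield $\tilde\beta_i=1-\kappa(1-\beta_i)$. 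What your approach buys is a cleaner intuition---one literally sees the AdamW dynamics degenerate to the RMSpropW SDE near stationarity---at the price of needing an averaging theorem; the paper's Lyapunov route avoids that machinery but is comparably informal in its ``$M_t\to\nabla f(X_t)$ with high probability'' step.
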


The top-left of Figure \ref{fig:AdamW_RMSpropW_ScalingMoments} shows the empirical verification of the predicted loss value and scaling rule on a convex quadratic function. Consistently with Lemma \ref{lemma:HPSR_AdamW_Insights}, such a value is bounded w.r.t. $\sigma$, meaning that the loss of AdamW does not diverge to infinity even if there is an infinite level of gradient noise: See the bottom-right of Figure \ref{fig:InfiniteSigma} for an experimental validation. Interestingly, the asymptotic loss value is not influenced by the choice of $\beta_i$: We argue that $\beta_i$ do not impact the asymptotic level of the loss, but rather drive the selection of the basin and speed at which AdamW converges to it --- The bottom-left of Figure \ref{fig:AdamW_RMSpropW_ScalingMoments} exemplifies this on a simple non-convex landscape.  Finally, we observe that the scaling rule informally derived in \cite{Malladi2022AdamSDE} prescribes $\Tilde{\beta_i} = 1 - \kappa^2(1 - \beta_i)$: This can be recovered by preserving other quantities. While there is no reason to prefer one over the other a priori, our analysis on LLMs in Appendix~\ref{sec:wd_batch} shows that our rescaling might be preferable in practice.

\begin{figure}[H]
   \centering
    \subfloat{{\includegraphics[width=0.35\linewidth]{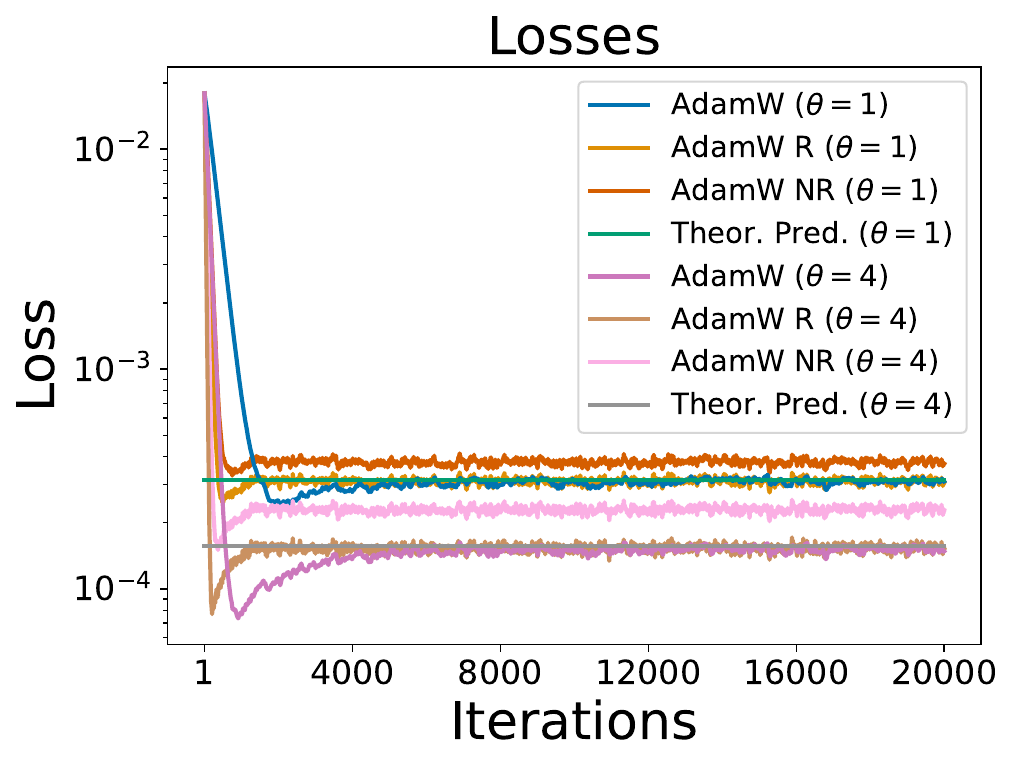} }}%
    \hspace{1.6cm}\subfloat{{\includegraphics[width=0.35\linewidth]{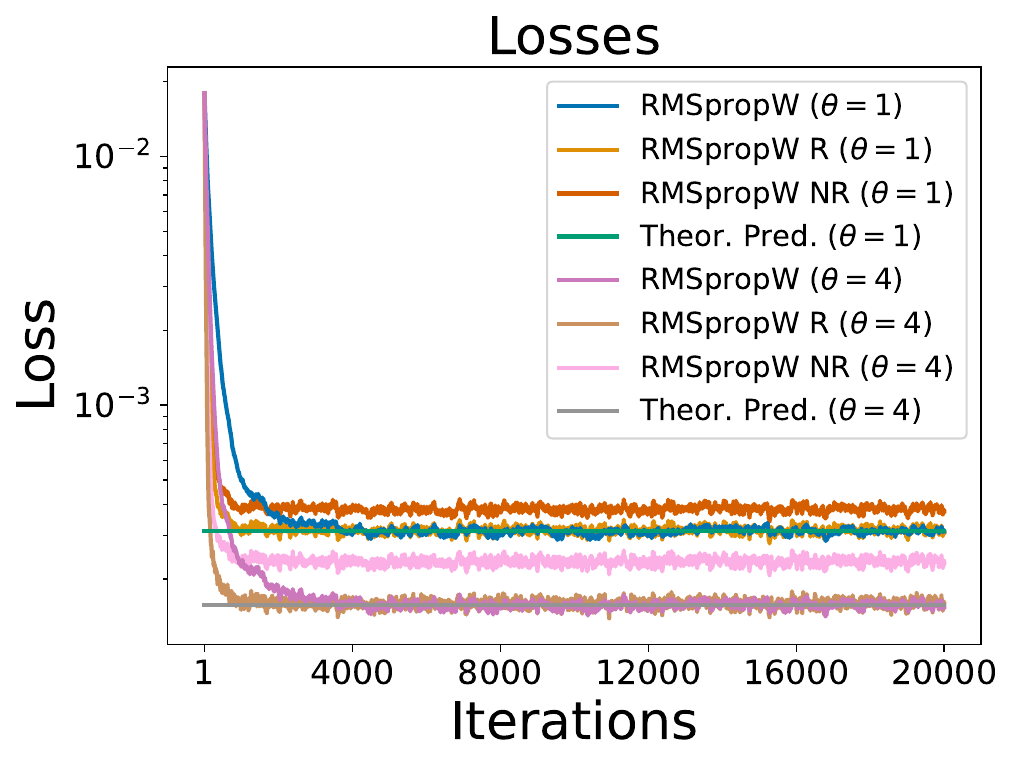} }} \\
    \subfloat{{\includegraphics[width=0.35\linewidth]{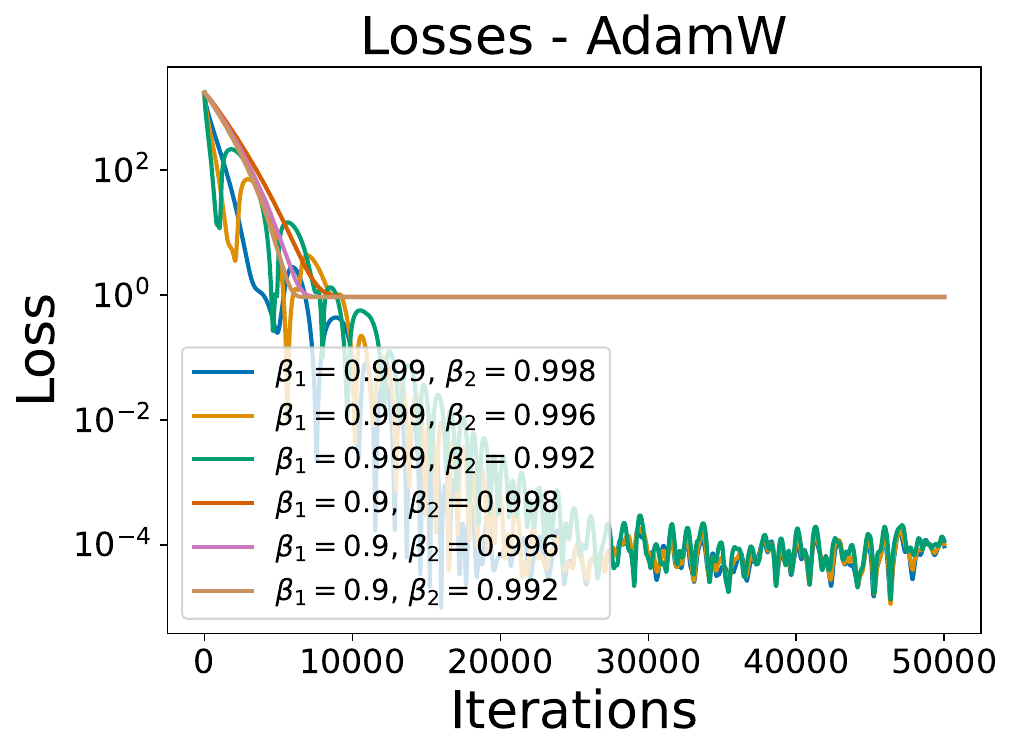} }}%
    \hspace{1.6cm}\subfloat{{\includegraphics[width=0.35\linewidth]{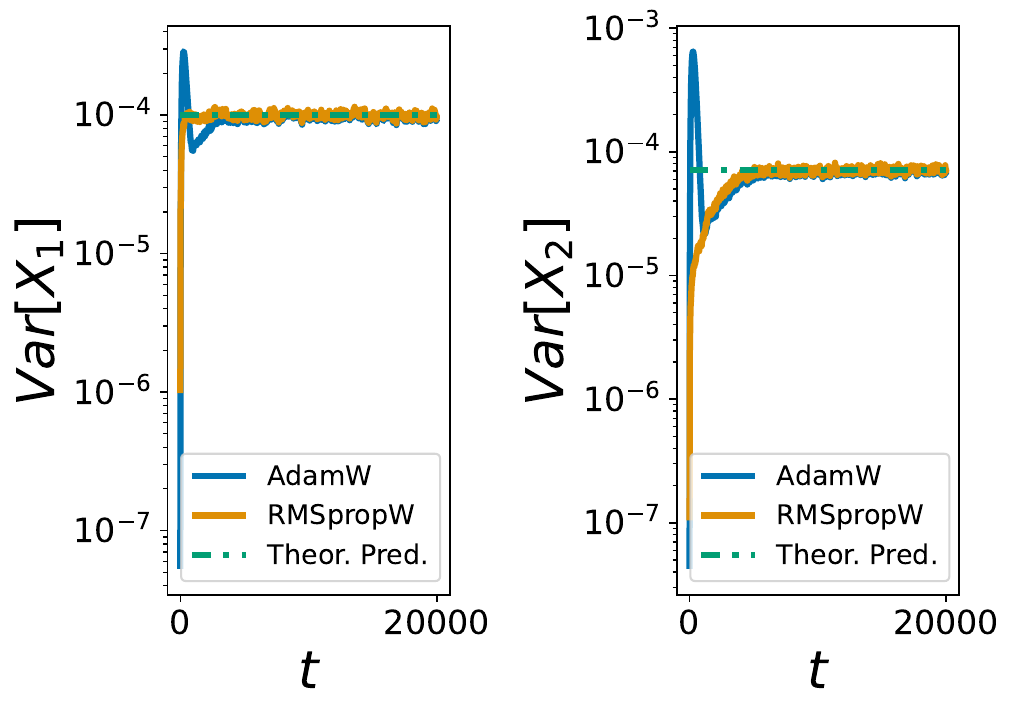} }}%
    \caption{The loss predicted in Lemma \ref{lemma:HPSR_AdamW_Insights} matches the experimental results on a convex quadratic function. \textit{AdamW} is run with regularization parameter $\theta = 1$. \textit{AdamW R} (AdamW Rescaled) is run as we apply the scaling rule with $\kappa=2$. \textit{AdamW NR} (AdamW \textbf{Not} Rescaled) is run as we apply the scaling rule with $\kappa=2$ on all hyperparameters but $\theta$, which is left unchanged: Our scaling rule holds, and failing to rescale $\theta$ leads the optimizer not to preserve the asymptotic loss level. The same happens for $\theta=4$ (Top-Left); The same for RMSpropW (Top-Right); For AdamW, $\beta_1$ and $\beta_2$ influence which basin will attract the dynamics and how fast this will converge, but not the asymptotic loss level inside the basin (Bottom-Left). For both AdamW and RMSpropW, the variance at convergence predicted in Lemma \ref{lemma:AdamW_StaDistr_Insights} matches the experimental results (Bottom-Right).}
\label{fig:AdamW_RMSpropW_ScalingMoments}
\end{figure}

Interestingly, the fact that the weight decay is \textit{decoupled} is key to determining the dependency of the asymptotic loss of AdamW w.r.t. the noise level $\sigma$. While the asymptotic loss of AdamW is upper-bounded in $\sigma$, the same does not hold if we use Adam on the $L^2$-regularized loss $f(x) + \frac{\theta\lVert x\rVert_2^2}{2}$. Under the same assumptions of Lemma \ref{lemma:HPSR_AdamW_Insights}, the dynamics of Adam on $f(x) + \frac{\theta\lVert x\rVert_2^2}{2}$ implies that
\begin{equation}
    \E[f(X_t) - f(X_*)] \overset{t \rightarrow \infty}{\leq} \frac{\eta \mathcal{L}_\tau \sigma }{2} \frac{L}{2 \mu L + \theta (L + \mu)},
\end{equation}
meaning that the asymptotic loss level grows linearly in $\sigma$: See Fig. \ref{fig:AdamL2_SigmaInfinite} for empirical validation.

We conclude this section with the stationary distribution of AdamW around a minimum which we empirically validate on the bottom-right of Figure \ref{fig:AdamW_RMSpropW_ScalingMoments}.
\begin{lemma} \label{lemma:AdamW_StaDistr_Insights}
If $\Sigma(x)=\Sigma$, the stationary distribution of AdamW is 
\vspace{-1mm}
\begin{equation*}
\left(\E [X_\infty], Cov[X_\infty]\right)=\left(0, \frac{\eta}{2} \left(  I_d + \theta  H^{-1}\Sigma^{\frac{1}{2}} \right)^{-1} H^{-1} \Sigma^{\frac{1}{2}}  \right).
\end{equation*}
\end{lemma}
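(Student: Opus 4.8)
The plan is to specialize the AdamW SDE of Theorem~\ref{thm:AdamW_SDE_Insights_Full} to the quadratic model $f(x)=\frac12 x^\top Hx$ with $H=\diag(\lambda_1,\dots,\lambda_d)$, so that $\nabla f(X_t)=HX_t$ and $X_*=0$, and then read off the stationary first two moments. First I would pass to the asymptotic regime $t\to\infty$, where $\iota_1(t),\iota_2(t)\to 1$ and hence $\tfrac{\sqrt{\iota_2(t)}}{\iota_1(t)}\to 1$. The $V_t$ equation carries no diffusion term and relaxes at rate $\rho_2$ to $V_\infty=(\nabla f(X_t))^2+\diag(\Sigma)$; near the minimizer $(\nabla f(X_t))^2=\mathcal{O}(\eta)$ (as in Lemma~\ref{lemma:HPSR_AdamW_Insights}), so to leading order $V_\infty=\diag(\Sigma)$ and therefore $P_t\to\diag(\sqrt{V_\infty})+\epsilon I_d\to\Sigma^{1/2}$, dropping the numerical-stability constant $\epsilon$. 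Since $\rho_1=\mathcal{O}(\eta^{-\zeta})$, the correction $\eta\rho_1(\nabla f-M_t)=\mathcal{O}(\eta^{1-\zeta})$ is negligible, leaving the reduced linear system $dX_t=-\Sigma^{-1/2}M_t\,dt-\theta X_t\,dt$ and $dM_t=\rho_1(HX_t-M_t)\,dt+\sqrt{\eta}\,\rho_1\Sigma^{1/2}\,dW_t$.

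Next I would observe that the pair $(X_t,M_t)$ is a linear, additive-noise SDE, hence an Ornstein--Uhlenbeck process with a Gaussian stationary law. Writing it as $dZ_t=-\mathcal{A}Z_t\,dt+\mathcal{B}\,dW_t$ with the obvious block matrices, the stationary mean solves $\mathcal{A}\,\E[Z_\infty]=0$; since each $2\times2$ directional block of $\mathcal{A}$ has positive trace $\rho_1+\theta$ and positive determinant $\rho_1(\theta+\lambda/\sigma)$, the matrix $\mathcal{A}$ is invertible with spectrum in the open right half-plane, giving $\E[X_\infty]=0$ and matching the claim. The stationary covariance $\mathcal{C}$ is the unique solution of the algebraic Lyapunov equation $\mathcal{A}\mathcal{C}+\mathcal{C}\mathcal{A}^\top=\mathcal{B}\mathcal{B}^\top$, and the desired quantity is its $X$--$X$ block $\Cov[X_\infty]$; equivalently, I could obtain $\mathcal{C}$ by deriving the SDE of $Z_tZ_t^\top$ via Itô's lemma and integrating the resulting moment ODEs to stationarity, exactly as in the proof of Lemma~\ref{lemma:SignSGD_StaDistr_insights}.

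Because all blocks are simultaneously diagonalizable with $H$ and $\Sigma$, the Lyapunov equation decouples into one $2\times 2$ system per eigendirection $(\lambda,\sigma^2)$. Solving the three scalar equations for $(a,b,c)=(\Cov[X],\Cov[X,M],\Cov[M])$ yields $a=\frac{\eta\sigma}{2}\cdot\frac{\rho_1}{(\lambda+\theta\sigma)(\rho_1+\theta)}$. Invoking the timescale separation $\rho_1\to\infty$ (legitimate since $\rho_1=\mathcal{O}(\eta^{-\zeta})$ while $\theta=\mathcal{O}(1)$) sends $\frac{\rho_1}{\rho_1+\theta}\to 1$, so $\Cov[X_\infty]=\frac{\eta}{2}\,\frac{\sigma}{\lambda+\theta\sigma}$ per coordinate; reassembling the directions gives $\Cov[X_\infty]=\frac{\eta}{2}(\Sigma^{-1/2}H+\theta I_d)^{-1}$, which factors as $\frac{\eta}{2}(I_d+\theta H^{-1}\Sigma^{1/2})^{-1}H^{-1}\Sigma^{1/2}$ by commutativity of the diagonal factors, precisely the stated formula. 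An equivalent and perhaps more transparent route is to adiabatically eliminate the fast variable $M_t$: replacing it by its conditional quasi-stationary law (mean $HX_t$ with fast OU fluctuations $\delta M$), the integrated fluctuation $-\Sigma^{-1/2}\!\int \delta M$ converges to an effective Brownian term $\sqrt{\eta}\,dW'$, producing the effective SDE $dX_t=-(\Sigma^{-1/2}H+\theta I_d)X_t\,dt+\sqrt{\eta}\,dW_t'$, whose stationary covariance $\frac{\eta}{2}(\Sigma^{-1/2}H+\theta I_d)^{-1}$ is read off directly.

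I expect the main obstacle to be making this reduction to the effective $X_t$-dynamics rigorous: one must justify, uniformly near the minimizer, both that $V_t$ concentrates at $\diag(\Sigma)$ so that $P_t^{-1}\to\Sigma^{-1/2}$ (controlling the $\mathcal{O}(\eta)$ curvature contribution $\E[(HX)^2]$ and the residual fluctuations of $V_t$), and that the fast Ornstein--Uhlenbeck variable $M_t$ may be replaced by effective white noise of the correct intensity $\sqrt{\eta}\,\Sigma^{1/2}$ --- i.e.\ that the $\rho_1\to\infty$ averaging limit commutes with passing to the stationary distribution. The rest, namely solving the decoupled $2\times2$ Lyapunov systems and the algebraic refactoring into $(I_d+\theta H^{-1}\Sigma^{1/2})^{-1}H^{-1}\Sigma^{1/2}$, is routine linear algebra.
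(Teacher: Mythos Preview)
Your proposal is correct and arrives at the stated covariance via an explicit $2\times 2$ algebraic Lyapunov equation per eigendirection, followed by the $\rho_1\to\infty$ limit; your alternative adiabatic-elimination route, yielding the effective SDE $dX_t=-(\Sigma^{-1/2}H+\theta I_d)X_t\,dt+\sqrt{\eta}\,dW_t'$, is exactly how the paper handles the RMSpropW case (Lemma~\ref{lemma:RMSpropW_StaDistr}). For AdamW itself, the paper instead refers to the Adam proof (Lemma~\ref{lemma:Adam_StaDistr}), which constructs a single hand-picked Lyapunov combination $H(X_t,M_t)=\tfrac{1}{2}X_t^2+\tfrac{\lambda^2}{2\sigma^2\rho_1^2}\tfrac{1}{2}M_t^2$ and, together with the fast-variable identification $M_t\to\lambda X_t$, applies It\^o's lemma to extract a closed ODE for $\E[X_t^2]$. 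Your Lyapunov-equation route is more systematic---it avoids having to guess the right linear combination and makes the dependence on $\rho_1$ explicit before taking the limit---while the paper's Lyapunov-function shortcut is more compact and sidesteps solving for the cross-covariance $b$ and $c$. The rigor caveats you flag (concentration of $V_t$ at $\diag(\Sigma)$ and the interchange of the $\rho_1\to\infty$ limit with stationarity) are treated heuristically in the paper as well, so you are not missing anything the paper supplies.
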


\paragraph{RMSpropW.} We derived the analogous results for RMSprop(W) and we reported them in Appendix \ref{sec:RMSpropW}: importantly, we validate the SDE in Fig. \ref{fig:AdamWRMSpropW_SDE_Verification_Quadratic_EmbSaddle} for two simple landscapes and in Fig. \ref{fig:AdamW_RMSpropW_SDE_Verification} for a Transformer and a ResNet. The results regarding the asymptotic loss level and stationary distributions are validated in the top-right and bottom-right of Fig. \ref{fig:AdamW_RMSpropW_ScalingMoments} for a convex quadratic function.

\mycoloredbox{}{\textbf{Conclusion:} While for both SignSGD and Adam the asymptotic loss value and the covariance of the iterates scale linearly with $\Sigma^{\frac{1}{2}}$, we observe for AdamW this is more intricate: The interaction between curvature, noise, and regularization implies that these two quantities are upper-bounded in $\Sigma^{\frac{1}{2}}$ and increasing $\Sigma$ to infinity does not lead to their explosion: \textit{decoupled} weight decay plays a crucial stabilization role at high noise levels near the minimizer --- See Figure
\ref{fig:InfiniteSigma} for a comparison across optimizers. Finally, we argue that $\beta_i$ play a key role in selecting the basin and the convergence speed to the asymptotic loss value rather than impacting the loss value itself.
}

\begin{figure}
   \centering
    \hspace{-2cm}
    \subfloat{{\includegraphics[width=0.52\linewidth]{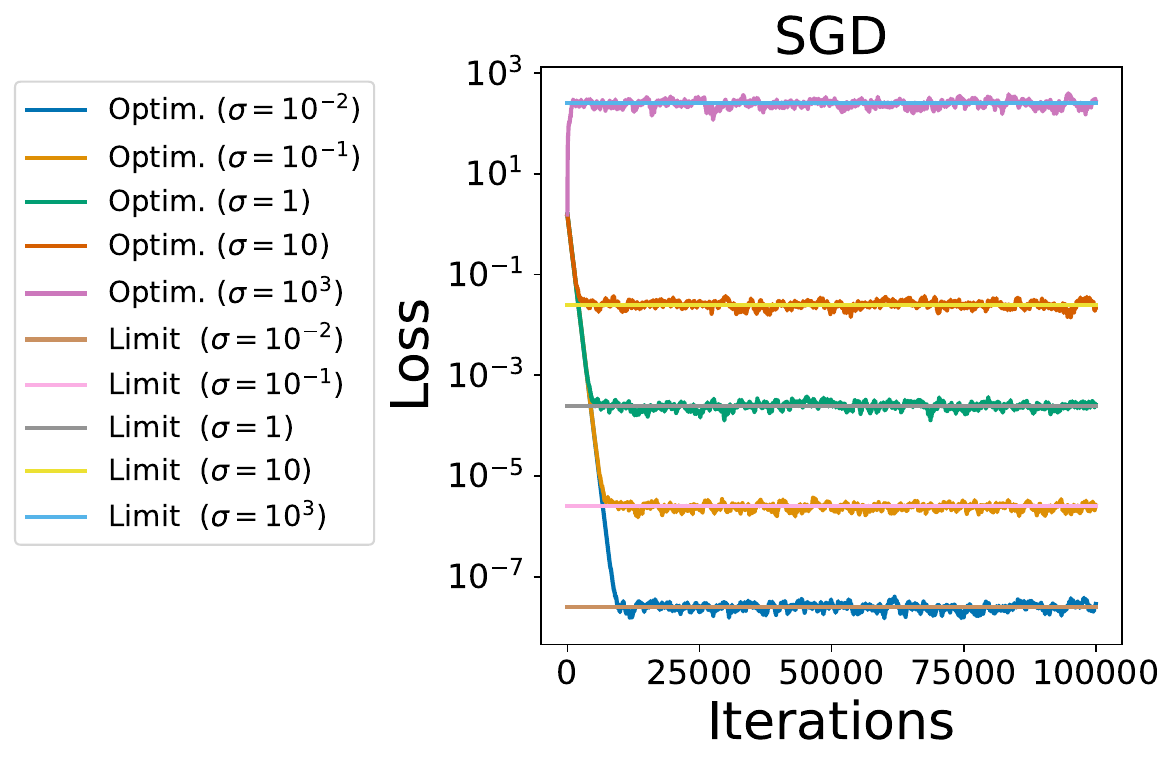} }}%
    \hspace{-0.05cm}\subfloat{{\includegraphics[width=0.35\linewidth]{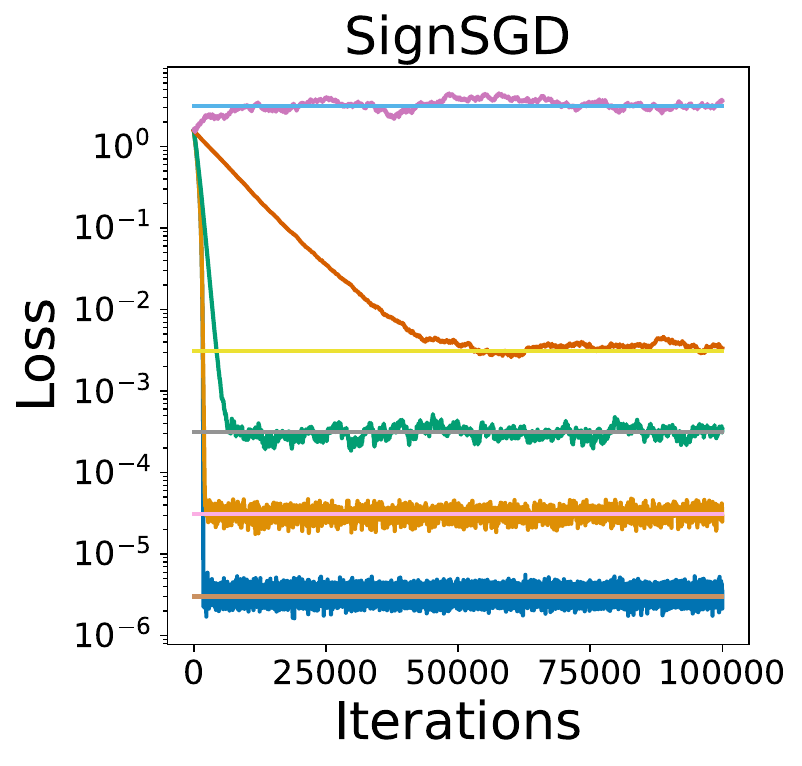} }} \\
    \subfloat{{\includegraphics[width=0.35\linewidth]{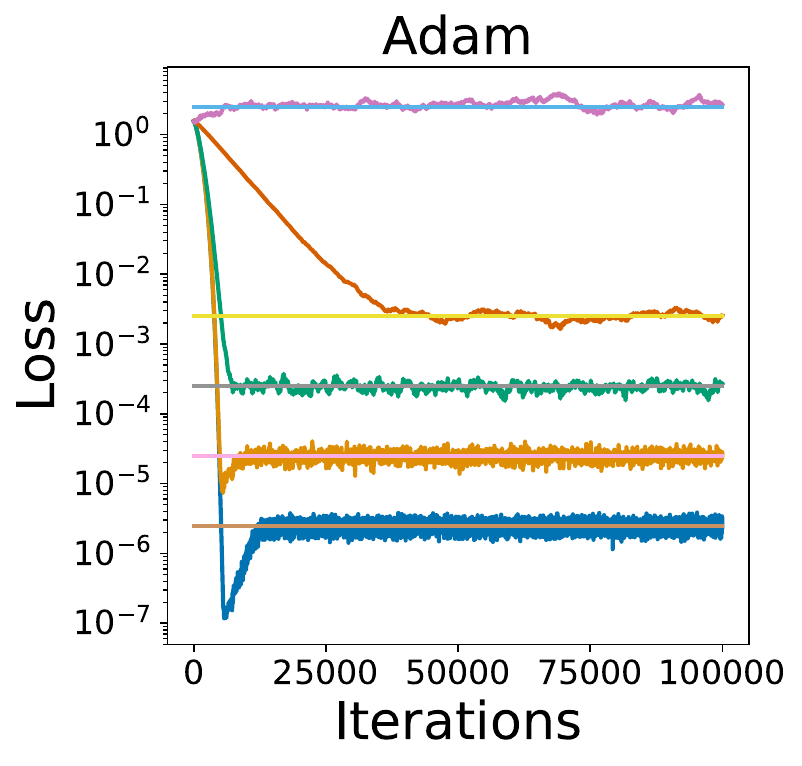} }}%
    \hspace{1.0cm}\subfloat{{\includegraphics[width=0.35\linewidth]{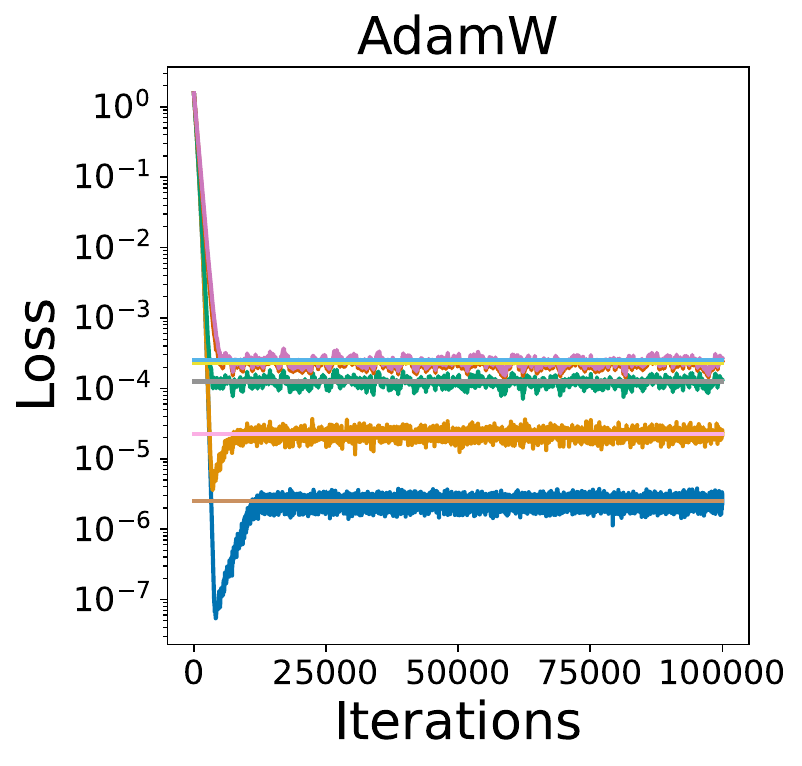} }}%
    \caption{For SGD (Top-Left), SignSGD (Top-Right), Adam (Bottom-Left), and AdamW (Bottom-Right): For each \textit{optimizer}, we plot the loss value on a convex quadratic and compare its asymptotic value with the \textit{limits} predicted by our theory. As we take $\Sigma = \sigma^2 I_d$, we confirm that the loss of SGD scales quadratically in $\sigma$ (Lemma \ref{lemma:SGDLoss}), and linearly for SignSGD (Lemma \ref{lemma:SignSGD_dynam_loss_Insights}) and Adam (Lemma \ref{lemma:HPSR_AdamW_Insights} with $\theta=0$). For AdamW, the maximum asymptotic loss value is bounded in $\sigma$ (Lemma \ref{lemma:HPSR_AdamW_Insights} with $\theta>0$). In accordance with the experiments, our theory predicts that adaptive methods are more resilient to noise.}
\label{fig:InfiniteSigma}
\end{figure}

\vspace{-0.2cm}

\section{Experiments: SDE validation}

\vspace{-0.2cm}

The point of our experiments is to validate the theoretical results derived from the SDEs.
Therefore, we first show that our SDEs faithfully represent the dynamics of their respective optimizers. To do so, we integrate the SDEs with Euler-Maruyama (Algorithm~\ref{algo:EulerMaruryama_SDE}): This is particularly challenging and expensive as one needs to calculate the full gradients of the DNNs at each iteration.\footnote{Many papers derive SDEs to model optimizers, but most lack validation. Some use toy landscapes, while only \cite{paquette2021sgd, compagnoni2023sde} validate on simple DNNs. See Appendix \ref{app:RelWorks} for details.} Ours is the first set of validation experiments on various architectures and datasets: An MLP on the Breast Cancer dataset, a CNN and a Transformer on MNIST, and a ResNet on CIFAR-10. Details in Appendix \ref{sec:Exper}.

\section{Conclusion}

We derived the first formal SDE for SignSGD, enabling us to demonstrate its dynamics traversing three discernible phases. We characterize how the ``signal-to-noise'' ratio drives the dynamics of the loss in each of these phases, and we derive the asymptotic value of the loss function, as well as the stationary distribution. Regarding the role of noise, we draw a straightforward comparison with SGD. For SignSGD, the noise level $\sqrt{\Sigma}$ has an inverse linear effect on the convergence speed of the loss and the iterates. However, it linearly affects the asymptotic expected loss and the asymptotic variance of the iterates. In contrast, for SGD, noise does not influence the convergence speed but has a quadratic impact on the loss level and variance. We also examine the scenario where the noise has infinite variance and demonstrate the resilience of SignSGD, showing that its performance is only marginally affected.
Finally, we generalize the analysis to include AdamW and RMSpropW. Specifically, we leverage our novel SDEs to derive the asymptotic value of the loss function, their stationary distribution on a convex quadratic, and a novel scaling rule. The key insight is that, similarly to SignSGD, the loss level and covariance matrix of the iterates of Adam and RMSprop scale linearly in the noise level $\Sigma^{\frac{1}{2}}$. For AdamW and RMSpropW, the complex interaction of noise, curvature, and regularization implies that these two quantities are bounded in terms of $\Sigma^{\frac{1}{2}}$, showing that \textit{decoupled} weight decay plays a crucial stabilization role at high noise levels near the minimizer. Interestingly, the SDEs for Adam and RMSprop are a straightforward corollary of our general results and were derived under weaker assumptions than the literature.
Finally, we thoroughly validate all our theoretical results: We compare the dynamics of the various optimizers with the respective SDEs and find good agreement on simple landscapes and DNNs. For Adam and RMSprop, our SDEs track them more faithfully than those derived in \citep{Malladi2022AdamSDE}.

\paragraph{Future work}
We believe our results can be extended to other optimizers commonly used in practice such as Signum, AdaGrad, AdaMax, and Nadam.  Additionally, inspired by the insights from our SDE analysis, there is potential for designing new optimization algorithms that combine and preserve the strengths of existing methods while mitigating their weaknesses. For example, developing hybrid optimizers that adaptively switch between different strategies based on the training phase or current state of the optimization process could offer superior performance.

\section{Acknowledgments}
Enea Monzio Compagnoni, Rustem Islamov, and Aurelien Lucchi acknowledge the financial support of the Swiss
National Foundation, SNF grant No 207392. Tianlin Liu acknowledges the financial support of the European Research Council Starting Grant 852821—SWING.
Antonio Orvieto acknowledges the financial support of the Hector Foundation.
Frank Norbert Proske acknowledges the financial support of the Norwegian Research Council (project No 274410) and MSCA4Ukraine (project No 101101923).

\clearpage
\bibliography{references}

\begin{thebibliography}{118}
\providecommand{\natexlab}[1]{#1}
\providecommand{\url}[1]{\texttt{#1}}
\expandafter\ifx\csname urlstyle\endcsname\relax
  \providecommand{\doi}[1]{doi: #1}\else
  \providecommand{\doi}{doi: \begingroup \urlstyle{rm}\Url}\fi

\bibitem[Ahn et~al.(2012)Ahn, Korattikara, and Welling]{ahn2012bayesian}
Sungjin Ahn, Anoop Korattikara, and Max Welling.
\newblock Bayesian posterior sampling via stochastic gradient fisher scoring.
\newblock \emph{arXiv preprint arXiv:1206.6380}, 2012.

\bibitem[An et~al.(2020)An, Lu, and Ying]{an2020stochastic}
Jing An, Jianfeng Lu, and Lexing Ying.
\newblock Stochastic modified equations for the asynchronous stochastic gradient descent.
\newblock \emph{Information and Inference: A Journal of the IMA}, 9\penalty0 (4):\penalty0 851--873, 2020.

\bibitem[Ankirchner and Perko(2024)]{ankirchner2024comparison}
Stefan Ankirchner and Stefan Perko.
\newblock A comparison of continuous-time approximations to stochastic gradient descent.
\newblock \emph{Journal of Machine Learning Research}, 25\penalty0 (13):\penalty0 1--55, 2024.

\bibitem[Ayadi and Turinici(2021)]{ayadi2021stochastic}
Imen Ayadi and Gabriel Turinici.
\newblock Stochastic runge-kutta methods and adaptive sgd-g2 stochastic gradient descent.
\newblock In \emph{2020 25th International Conference on Pattern Recognition (ICPR)}, pages 8220--8227. IEEE, 2021.

\bibitem[Balles and Hennig(2018)]{balles2018dissecting}
Lukas Balles and Philipp Hennig.
\newblock Dissecting adam: The sign, magnitude and variance of stochastic gradients.
\newblock In \emph{International Conference on Machine Learning}, pages 404--413. PMLR, 2018.

\bibitem[Barakat and Bianchi(2021)]{barakat2018convergence}
Anas Barakat and Pascal Bianchi.
\newblock Convergence and dynamical behavior of the adam algorithm for nonconvex stochastic optimization.
\newblock \emph{SIAM Journal on Optimization}, 31\penalty0 (1):\penalty0 244--274, 2021.

\bibitem[Bardi and Kouhkouh(2022)]{bardi2022deep}
Martino Bardi and Hicham Kouhkouh.
\newblock Deep relaxation of controlled stochastic gradient descent via singular perturbations.
\newblock \emph{arXiv preprint arXiv:2209.05564}, 2022.

\bibitem[Bercher et~al.(2020)Bercher, Gonon, Jentzen, and Salimova]{bercher2020weak}
Aritz Bercher, Lukas Gonon, Arnulf Jentzen, and Diyora Salimova.
\newblock Weak error analysis for stochastic gradient descent optimization algorithms.
\newblock \emph{arXiv preprint arXiv:2007.02723}, 2020.

\bibitem[Bernstein et~al.(2018)Bernstein, Wang, Azizzadenesheli, and Anandkumar]{bernstein2018signsgd}
Jeremy Bernstein, Yu-Xiang Wang, Kamyar Azizzadenesheli, and Animashree Anandkumar.
\newblock sign{SGD}: Compressed optimisation for non-convex problems.
\newblock In \emph{Proceedings of the 35th International Conference on Machine Learning}, 2018.

\bibitem[Biderman et~al.(2023)Biderman, Schoelkopf, Anthony, Bradley, O'Brien, Hallahan, Khan, Purohit, Prashanth, Raff, Skowron, Sutawika, and Van Der~Wal]{}
Stella Biderman, Hailey Schoelkopf, Quentin Anthony, Herbie Bradley, Kyle O'Brien, Eric Hallahan, Mohammad~Aflah Khan, Shivanshu Purohit, USVSN~Sai Prashanth, Edward Raff, Aviya Skowron, Lintang Sutawika, and Oskar Van Der~Wal.
\newblock Pythia: a suite for analyzing large language models across training and scaling.
\newblock In \emph{Proceedings of the 40th International Conference on Machine Learning}, ICML'23. JMLR.org, 2023.

\bibitem[Bishop and Del~Moral(2019)]{bishop2019stability}
Adrian~N Bishop and Pierre Del~Moral.
\newblock Stability properties of systems of linear stochastic differential equations with random coefficients.
\newblock \emph{SIAM Journal on Control and Optimization}, 57\penalty0 (2):\penalty0 1023--1042, 2019.

\bibitem[Bock and Wei{\ss}(2021)]{bock2021local}
Sebastian Bock and Martin~Georg Wei{\ss}.
\newblock Local convergence of adaptive gradient descent optimizers.
\newblock \emph{arXiv preprint arXiv:2102.09804}, 2021.

\bibitem[Bradbury et~al.(2018)Bradbury, Frostig, Hawkins, Johnson, Leary, Maclaurin, Necula, Paszke, Vander{P}las, Wanderman-{M}ilne, and Zhang]{jax2018github}
James Bradbury, Roy Frostig, Peter Hawkins, Matthew~James Johnson, Chris Leary, Dougal Maclaurin, George Necula, Adam Paszke, Jake Vander{P}las, Skye Wanderman-{M}ilne, and Qiao Zhang.
\newblock {JAX}: composable transformations of {P}ython+{N}um{P}y programs, 2018.
\newblock URL \url{http://github.com/google/jax}.

\bibitem[Chen et~al.(2022)Chen, Lu, and Xu]{chen2022approximation}
Peng Chen, Jianya Lu, and Lihu Xu.
\newblock Approximation to stochastic variance reduced gradient langevin dynamics by stochastic delay differential equations.
\newblock \emph{Applied Mathematics \& Optimization}, 85\penalty0 (2):\penalty0 15, 2022.

\bibitem[Chen et~al.(2014)Chen, Fox, and Guestrin]{chen2014stochastic}
Tianqi Chen, Emily Fox, and Carlos Guestrin.
\newblock Stochastic gradient hamiltonian monte carlo.
\newblock In \emph{International conference on machine learning}, pages 1683--1691. PMLR, 2014.

\bibitem[Chen et~al.(2019)Chen, Liu, Sun, and Hong]{chen2018convergence}
Xiangyi Chen, Sijia Liu, Ruoyu Sun, and Mingyi Hong.
\newblock On the convergence of a class of adam-type algorithms for non-convex optimization.
\newblock In \emph{International Conference on Learning Representations}, 2019.
\newblock URL \url{https://openreview.net/forum?id=H1x-x309tm}.

\bibitem[Compagnoni et~al.(2023)Compagnoni, Biggio, Orvieto, Proske, Kersting, and Lucchi]{compagnoni2023sde}
Enea~Monzio Compagnoni, Luca Biggio, Antonio Orvieto, Frank~Norbert Proske, Hans Kersting, and Aurelien Lucchi.
\newblock An sde for modeling sam: Theory and insights.
\newblock In \emph{International Conference on Machine Learning}, pages 25209--25253. PMLR, 2023.

\bibitem[Compagnoni et~al.(2024)Compagnoni, Orvieto, Kersting, Proske, and Lucchi]{compagnoni2024sde}
Enea~Monzio Compagnoni, Antonio Orvieto, Hans Kersting, Frank Proske, and Aurelien Lucchi.
\newblock Sdes for minimax optimization.
\newblock In \emph{International Conference on Artificial Intelligence and Statistics}, pages 4834--4842. PMLR, 2024.

\bibitem[Compagnoni et~al.(2025)Compagnoni, Islamov, Proske, and Lucchi]{compagnoni2025unbiased}
Enea~Monzio Compagnoni, Rustem Islamov, Frank~Norbert Proske, and Aurelien Lucchi.
\newblock Unbiased and sign compression in distributed learning: Comparing noise resilience via {SDE}s.
\newblock In \emph{The 28th International Conference on Artificial Intelligence and Statistics}, 2025.
\newblock URL \url{https://openreview.net/forum?id=RRrftHtEfK}.

\bibitem[Cui et~al.(2020)Cui, Fan, and Jia]{cui2020momentum}
Zhuo-Xu Cui, Qibin Fan, and Cui Jia.
\newblock Momentum methods for stochastic optimization over time-varying directed networks.
\newblock \emph{Signal Processing}, 174:\penalty0 107614, 2020.

\bibitem[Dambrine et~al.(2024)Dambrine, Dossal, Puig, and Rondepierre]{dambrine2024stochastic}
Marc Dambrine, Ch~Dossal, B{\'e}n{\'e}dicte Puig, and Aude Rondepierre.
\newblock Stochastic differential equations for modeling first order optimization methods.
\newblock \emph{SIAM Journal on Optimization}, 34\penalty0 (2):\penalty0 1402--1426, 2024.

\bibitem[De et~al.(2018)De, Mukherjee, and Ullah]{de2018convergence}
Soham De, Anirbit Mukherjee, and Enayat Ullah.
\newblock Convergence guarantees for rmsprop and adam in non-convex optimization and an empirical comparison to nesterov acceleration.
\newblock \emph{arXiv preprint arXiv:1807.06766}, 2018.

\bibitem[D{\'e}fossez et~al.(2022)D{\'e}fossez, Bottou, Bach, and Usunier]{defossez2020simple}
Alexandre D{\'e}fossez, Leon Bottou, Francis Bach, and Nicolas Usunier.
\newblock A simple convergence proof of adam and adagrad.
\newblock \emph{Transactions on Machine Learning Research}, 2022.
\newblock ISSN 2835-8856.

\bibitem[Del~Moral and Niclas(2018)]{MN}
Pierre Del~Moral and Angele Niclas.
\newblock A taylor expansion of the square root matrix function.
\newblock \emph{Journal of Mathematical Analysis and Applications}, 465\penalty0 (1):\penalty0 259--266, 2018.

\bibitem[Deng(2012)]{deng2012mnist}
Li~Deng.
\newblock The mnist database of handwritten digit images for machine learning research.
\newblock \emph{IEEE Signal Processing Magazine}, 29\penalty0 (6):\penalty0 141--142, 2012.

\bibitem[Dosovitskiy et~al.(2021)Dosovitskiy, Beyer, Kolesnikov, Weissenborn, Zhai, Unterthiner, Dehghani, Minderer, Heigold, Gelly, Uszkoreit, and Houlsby]{dosovitskiy2020image}
Alexey Dosovitskiy, Lucas Beyer, Alexander Kolesnikov, Dirk Weissenborn, Xiaohua Zhai, Thomas Unterthiner, Mostafa Dehghani, Matthias Minderer, Georg Heigold, Sylvain Gelly, Jakob Uszkoreit, and Neil Houlsby.
\newblock An image is worth 16x16 words: Transformers for image recognition at scale.
\newblock In \emph{International Conference on Learning Representations}, 2021.

\bibitem[Dua and Graff(2017)]{Dua:2019}
Dheeru Dua and Casey Graff.
\newblock {UCI} machine learning repository, 2017.
\newblock URL \url{http://archive.ics.uci.edu/ml}.

\bibitem[Fontaine et~al.(2021)Fontaine, De~Bortoli, and Durmus]{fontaine2021convergence}
Xavier Fontaine, Valentin De~Bortoli, and Alain Durmus.
\newblock Convergence rates and approximation results for sgd and its continuous-time counterpart.
\newblock In \emph{Conference on Learning Theory}, pages 1965--2058. PMLR, 2021.

\bibitem[Ge et~al.(2015)Ge, Huang, Jin, and Yuan]{ge2015escaping}
Rong Ge, Furong Huang, Chi Jin, and Yang Yuan.
\newblock Escaping from saddle points—online stochastic gradient for tensor decomposition.
\newblock In \emph{Conference on Learning Theory}, pages 797--842, 2015.

\bibitem[Gess et~al.(2024)Gess, Kassing, and Konarovskyi]{gess2024stochastic}
Benjamin Gess, Sebastian Kassing, and Vitalii Konarovskyi.
\newblock Stochastic modified flows, mean-field limits and dynamics of stochastic gradient descent.
\newblock \emph{Journal of Machine Learning Research}, 25\penalty0 (30):\penalty0 1--27, 2024.

\bibitem[Gu et~al.(2021)Gu, Guo, and Li]{gu2021adversarial}
Haotian Gu, Xin Guo, and Xinyu Li.
\newblock Adversarial training for gradient descent: Analysis through its continuous-time approximation.
\newblock \emph{arXiv preprint arXiv:2105.08037}, 2021.

\bibitem[Guo et~al.(2021)Guo, Xu, Yin, Jin, and Yang]{guo2021novel}
Zhishuai Guo, Yi~Xu, Wotao Yin, Rong Jin, and Tianbao Yang.
\newblock A novel convergence analysis for algorithms of the adam family.
\newblock \emph{arXiv preprint arXiv:2112.03459}, 2021.

\bibitem[Harris et~al.(2020)Harris, Millman, van~der Walt, Gommers, Virtanen, Cournapeau, Wieser, Taylor, Berg, Smith, Kern, Picus, Hoyer, van Kerkwijk, Brett, Haldane, del R{\'{i}}o, Wiebe, Peterson, G{\'{e}}rard-Marchant, Sheppard, Reddy, Weckesser, Abbasi, Gohlke, and Oliphant]{harris2020array}
Charles~R. Harris, K.~Jarrod Millman, St{\'{e}}fan~J. van~der Walt, Ralf Gommers, Pauli Virtanen, David Cournapeau, Eric Wieser, Julian Taylor, Sebastian Berg, Nathaniel~J. Smith, Robert Kern, Matti Picus, Stephan Hoyer, Marten~H. van Kerkwijk, Matthew Brett, Allan Haldane, Jaime~Fern{\'{a}}ndez del R{\'{i}}o, Mark Wiebe, Pearu Peterson, Pierre G{\'{e}}rard-Marchant, Kevin Sheppard, Tyler Reddy, Warren Weckesser, Hameer Abbasi, Christoph Gohlke, and Travis~E. Oliphant.
\newblock Array programming with {NumPy}.
\newblock \emph{Nature}, 585\penalty0 (7825):\penalty0 357--362, September 2020.
\newblock \doi{10.1038/s41586-020-2649-2}.
\newblock URL \url{https://doi.org/10.1038/s41586-020-2649-2}.

\bibitem[Higham(2001)]{higham2001algorithmic}
Desmond~J Higham.
\newblock An algorithmic introduction to numerical simulation of stochastic differential equations.
\newblock \emph{SIAM review}, 43\penalty0 (3):\penalty0 525--546, 2001.

\bibitem[Hong and Lin(2023)]{hong2023high}
Yusu Hong and Junhong Lin.
\newblock High probability convergence of adam under unbounded gradients and affine variance noise.
\newblock \emph{arXiv preprint arXiv:2311.02000}, 2023.

\bibitem[Hu et~al.(2019)Hu, Li, and Zhou]{hu2019global}
Wenqing Hu, Chris~Junchi Li, and Xiang Zhou.
\newblock On the global convergence of continuous--time stochastic heavy--ball method for nonconvex optimization.
\newblock In \emph{2019 IEEE International Conference on Big Data (Big Data)}, pages 94--104. IEEE, 2019.

\bibitem[Huang et~al.(2021)Huang, Li, and Huang]{huang2021super}
Feihu Huang, Junyi Li, and Heng Huang.
\newblock Super-adam: faster and universal framework of adaptive gradients.
\newblock \emph{Advances in Neural Information Processing Systems}, 34:\penalty0 9074--9085, 2021.

\bibitem[Ikeda and Watanabe(2014)]{IW}
Nobuyuki Ikeda and Shinzo Watanabe.
\newblock \emph{Stochastic differential equations and diffusion processes}.
\newblock Elsevier, 2014.

\bibitem[Jastrzebski et~al.(2018)Jastrzebski, Kenton, Arpit, Ballas, Fischer, Bengio, and Storkey]{jastrzkebski2017three}
Stanis{\l}aw Jastrzebski, Zachary Kenton, Devansh Arpit, Nicolas Ballas, Asja Fischer, Yoshua Bengio, and Amos Storkey.
\newblock Three factors influencing minima in sgd.
\newblock \emph{ICANN 2018}, 2018.

\bibitem[Jin et~al.(2017)Jin, Ge, Netrapalli, Kakade, and Jordan]{jin2017escape}
Chi Jin, Rong Ge, Praneeth Netrapalli, Sham~M Kakade, and Michael~I Jordan.
\newblock How to escape saddle points efficiently.
\newblock In \emph{International Conference on Machine Learning}, pages 1724--1732. PMLR, 2017.

\bibitem[Karatzas and Shreve(2014)]{karatzas2014brownian}
Ioannis Karatzas and Steven Shreve.
\newblock \emph{Brownian motion and stochastic calculus}, volume 113.
\newblock springer, 2014.

\bibitem[Karimireddy et~al.(2019{\natexlab{a}})Karimireddy, Rebjock, Stich, and Jaggi]{karimireddy2019error}
Sai~Praneeth Karimireddy, Quentin Rebjock, Sebastian Stich, and Martin Jaggi.
\newblock Error feedback fixes signsgd and other gradient compression schemes.
\newblock In \emph{International Conference on Machine Learning}, pages 3252--3261. PMLR, 2019{\natexlab{a}}.

\bibitem[Karimireddy et~al.(2019{\natexlab{b}})Karimireddy, Rebjock, Stich, and Jaggi]{karimireddy2019errorfeedback}
Sai~Praneeth Karimireddy, Quentin Rebjock, Sebastian Stich, and Martin Jaggi.
\newblock Error feedback fixes {S}ign{SGD} and other gradient compression schemes.
\newblock In \emph{Proceedings of the 36th International Conference on Machine Learning}, 2019{\natexlab{b}}.

\bibitem[Kingma and Ba(2015)]{kingma2014adam}
Diederik~P. Kingma and Jimmy Ba.
\newblock Adam: A method for stochastic optimization.
\newblock In \emph{International Conference on Learning Representations}, 2015.

\bibitem[Kohatsu-Higa et~al.(1997)Kohatsu-Higa, Le{\'o}n, and Nualart]{kohatsu1997stochastic}
Arturo Kohatsu-Higa, Jorge~A Le{\'o}n, and David Nualart.
\newblock Stochastic differential equations with random coefficients.
\newblock \emph{Bernoulli}, pages 233--245, 1997.

\bibitem[Krizhevsky et~al.(2009)Krizhevsky, Hinton, et~al.]{krizhevsky2009learning}
Alex Krizhevsky, Geoffrey Hinton, et~al.
\newblock Learning multiple layers of features from tiny images.
\newblock \emph{Toronto, ON, Canada}, 2009.

\bibitem[Kunin et~al.(2023)Kunin, Sagastuy-Brena, Gillespie, Margalit, Tanaka, Ganguli, and Yamins]{kunin2023limiting}
Daniel Kunin, Javier Sagastuy-Brena, Lauren Gillespie, Eshed Margalit, Hidenori Tanaka, Surya Ganguli, and Daniel~LK Yamins.
\newblock The limiting dynamics of sgd: Modified loss, phase-space oscillations, and anomalous diffusion.
\newblock \emph{Neural Computation}, 36\penalty0 (1):\penalty0 151--174, 2023.

\bibitem[Kunstner et~al.(2024)Kunstner, Yadav, Milligan, Schmidt, and Bietti]{kunstner2024heavy}
Frederik Kunstner, Robin Yadav, Alan Milligan, Mark Schmidt, and Alberto Bietti.
\newblock Heavy-tailed class imbalance and why adam outperforms gradient descent on language models.
\newblock \emph{arXiv preprint arXiv:2402.19449}, 2024.

\bibitem[Lanconelli and Lauria(2022)]{lanconelli2022note}
Alberto Lanconelli and Christopher~SA Lauria.
\newblock A note on diffusion limits for stochastic gradient descent.
\newblock \emph{arXiv preprint arXiv:2210.11257}, 2022.

\bibitem[Levy(2016)]{levy2016power}
Kfir~Y Levy.
\newblock The power of normalization: Faster evasion of saddle points.
\newblock \emph{arXiv preprint arXiv:1611.04831}, 2016.

\bibitem[Li et~al.(2023{\natexlab{a}})Li, Rakhlin, and Jadbabaie]{li2023convergence}
Haochuan Li, Alexander Rakhlin, and Ali Jadbabaie.
\newblock Convergence of adam under relaxed assumptions.
\newblock In \emph{Thirty-seventh Conference on Neural Information Processing Systems}, 2023{\natexlab{a}}.

\bibitem[Li and Wang(2022)]{li2022uniform}
Lei Li and Yuliang Wang.
\newblock On uniform-in-time diffusion approximation for stochastic gradient descent.
\newblock \emph{arXiv preprint arXiv:2207.04922}, 2022.

\bibitem[Li et~al.(2017)Li, Tai, and Weinan]{li2017stochastic}
Qianxiao Li, Cheng Tai, and E~Weinan.
\newblock Stochastic modified equations and adaptive stochastic gradient algorithms.
\newblock In \emph{International Conference on Machine Learning}, pages 2101--2110. PMLR, 2017.

\bibitem[Li et~al.(2019)Li, Tai, and Weinan]{li2019stochastic}
Qianxiao Li, Cheng Tai, and E~Weinan.
\newblock Stochastic modified equations and dynamics of stochastic gradient algorithms i: Mathematical foundations.
\newblock \emph{The Journal of Machine Learning Research}, 20\penalty0 (1):\penalty0 1474--1520, 2019.

\bibitem[Li et~al.(2021)Li, Malladi, and Arora]{Li2021validity}
Zhiyuan Li, Sadhika Malladi, and Sanjeev Arora.
\newblock On the validity of modeling {SGD} with stochastic differential equations ({SDE}s).
\newblock In A.~Beygelzimer, Y.~Dauphin, P.~Liang, and J.~Wortman Vaughan, editors, \emph{Advances in Neural Information Processing Systems}, 2021.

\bibitem[Li et~al.(2023{\natexlab{b}})Li, Wang, and Wang]{li2023fast}
Zhiyuan Li, Yi~Wang, and Zhiren Wang.
\newblock Fast equilibrium of sgd in generic situations.
\newblock In \emph{The Twelfth International Conference on Learning Representations}, 2023{\natexlab{b}}.

\bibitem[Liu et~al.(2021)Liu, Chen, Zhou, and Zhao]{liu2018diffusion}
Tianyi Liu, Zhehui Chen, Enlu Zhou, and Tuo Zhao.
\newblock A diffusion approximation theory of momentum stochastic gradient descent in nonconvex optimization.
\newblock \emph{Stochastic Systems}, 2021.

\bibitem[Loshchilov and Hutter(2019)]{loshchilov2018decoupled}
Ilya Loshchilov and Frank Hutter.
\newblock Decoupled weight decay regularization.
\newblock In \emph{International Conference on Learning Representations}, 2019.
\newblock URL \url{https://openreview.net/forum?id=Bkg6RiCqY7}.

\bibitem[Luo et~al.(2019)Luo, Xiong, and Liu]{luo2018adaptive}
Liangchen Luo, Yuanhao Xiong, and Yan Liu.
\newblock Adaptive gradient methods with dynamic bound of learning rate.
\newblock In \emph{International Conference on Learning Representations}, 2019.

\bibitem[Ma et~al.(2022)Ma, Wu, and Weinan]{ma2022qualitative}
Chao Ma, Lei Wu, and E~Weinan.
\newblock A qualitative study of the dynamic behavior for adaptive gradient algorithms.
\newblock In \emph{Mathematical and Scientific Machine Learning}, pages 671--692. PMLR, 2022.

\bibitem[Mai and Johansson(2021)]{mai2021stability}
Vien~V Mai and Mikael Johansson.
\newblock Stability and convergence of stochastic gradient clipping: Beyond lipschitz continuity and smoothness.
\newblock In \emph{International Conference on Machine Learning}, 2021.

\bibitem[Malladi et~al.(2022)Malladi, Lyu, Panigrahi, and Arora]{Malladi2022AdamSDE}
Sadhika Malladi, Kaifeng Lyu, Abhishek Panigrahi, and Sanjeev Arora.
\newblock On the {SDEs} and scaling rules for adaptive gradient algorithms.
\newblock In \emph{Advances in Neural Information Processing Systems}, 2022.

\bibitem[Mandt et~al.(2016)Mandt, Hoffman, and Blei]{mandt2016variational}
Stephan Mandt, Matthew Hoffman, and David Blei.
\newblock A variational analysis of stochastic gradient algorithms.
\newblock In \emph{International conference on machine learning}, pages 354--363. PMLR, 2016.

\bibitem[Mandt et~al.(2017)Mandt, Hoffman, and Blei]{mandt2017stochastic}
Stephan Mandt, Matthew~D Hoffman, and David~M Blei.
\newblock Stochastic gradient descent as approximate bayesian inference.
\newblock \emph{JMLR 2017}, 2017.

\bibitem[Mao(2007)]{mao2007stochastic}
Xuerong Mao.
\newblock \emph{Stochastic differential equations and applications}.
\newblock Elsevier, 2007.

\bibitem[Maulen-Soto et~al.(2024)Maulen-Soto, Fadili, Attouch, and Ochs]{maulen2024stochastic}
Rodrigo Maulen-Soto, Jalal Fadili, Hedy Attouch, and Peter Ochs.
\newblock Stochastic inertial dynamics via time scaling and averaging.
\newblock \emph{arXiv preprint arXiv:2403.16775}, 2024.

\bibitem[Maul{\'e}n~Soto(2021)]{maulen2021continuous}
Rodrigo~Ignacio Maul{\'e}n~Soto.
\newblock A continuous-time model of stochastic gradient descent: convergence rates and complexities under lojasiewicz inequality.
\newblock \emph{Universidad de Chile}, 2021.

\bibitem[Mertikopoulos and Staudigl(2018)]{mertikopoulos2018convergence}
Panayotis Mertikopoulos and Mathias Staudigl.
\newblock On the convergence of gradient-like flows with noisy gradient input.
\newblock \emph{SIAM Journal on Optimization}, 28\penalty0 (1):\penalty0 163--197, 2018.

\bibitem[Milstein(2013)]{milstein2013numerical}
Grigorii~Noikhovich Milstein.
\newblock \emph{Numerical integration of stochastic differential equations}, volume 313.
\newblock Springer Science \& Business Media, 2013.

\bibitem[Mil’shtein(1986)]{mil1986weak}
GN~Mil’shtein.
\newblock Weak approximation of solutions of systems of stochastic differential equations.
\newblock \emph{Theory of Probability \& Its Applications}, 30\penalty0 (4):\penalty0 750--766, 1986.

\bibitem[Noci et~al.(2022)Noci, Anagnostidis, Biggio, Orvieto, Singh, and Lucchi]{noci2022signal}
Lorenzo Noci, Sotiris Anagnostidis, Luca Biggio, Antonio Orvieto, Sidak~Pal Singh, and Aurelien Lucchi.
\newblock Signal propagation in transformers: Theoretical perspectives and the role of rank collapse.
\newblock \emph{Advances in Neural Information Processing Systems}, 35:\penalty0 27198--27211, 2022.

\bibitem[{\O}ksendal(1990)]{oksendal1990stochastic}
Bernt {\O}ksendal.
\newblock When is a stochastic integral a time change of a diffusion?
\newblock \emph{Journal of theoretical probability}, 3\penalty0 (2):\penalty0 207--226, 1990.

\bibitem[Orvieto and Lucchi(2019)]{orvieto2019continuous}
Antonio Orvieto and Aurelien Lucchi.
\newblock Continuous-time models for stochastic optimization algorithms.
\newblock \emph{Advances in Neural Information Processing Systems}, 32, 2019.

\bibitem[Pan and Li(2022)]{pan2023toward}
Yan Pan and Yuanzhi Li.
\newblock Toward understanding why adam converges faster than {SGD} for transformers.
\newblock In \emph{OPT 2022: Optimization for Machine Learning (NeurIPS 2022 Workshop)}, 2022.
\newblock URL \url{https://openreview.net/forum?id=Sf1NlV2r6PO}.

\bibitem[Paquette et~al.(2021)Paquette, Lee, Pedregosa, and Paquette]{paquette2021sgd}
Courtney Paquette, Kiwon Lee, Fabian Pedregosa, and Elliot Paquette.
\newblock Sgd in the large: Average-case analysis, asymptotics, and stepsize criticality.
\newblock In \emph{Conference on Learning Theory}, pages 3548--3626. PMLR, 2021.

\bibitem[Paquette et~al.(2024)Paquette, Paquette, Xiao, and Pennington]{paquette20244}
Elliot Paquette, Courtney Paquette, Lechao Xiao, and Jeffrey Pennington.
\newblock 4+ 3 phases of compute-optimal neural scaling laws.
\newblock \emph{arXiv preprint arXiv:2405.15074}, 2024.

\bibitem[Pascanu et~al.(2013)Pascanu, Mikolov, and Bengio]{pascanu2013difficulty}
Razvan Pascanu, Tomas Mikolov, and Yoshua Bengio.
\newblock On the difficulty of training recurrent neural networks.
\newblock In \emph{International conference on machine learning}, 2013.

\bibitem[Pedregosa et~al.(2011)Pedregosa, Varoquaux, Gramfort, Michel, Thirion, Grisel, Blondel, Prettenhofer, Weiss, Dubourg, Vanderplas, Passos, Cournapeau, Brucher, Perrot, and Duchesnay]{scikit-learn}
F.~Pedregosa, G.~Varoquaux, A.~Gramfort, V.~Michel, B.~Thirion, O.~Grisel, M.~Blondel, P.~Prettenhofer, R.~Weiss, V.~Dubourg, J.~Vanderplas, A.~Passos, D.~Cournapeau, M.~Brucher, M.~Perrot, and E.~Duchesnay.
\newblock Scikit-learn: Machine learning in {P}ython.
\newblock \emph{Journal of Machine Learning Research}, 12:\penalty0 2825--2830, 2011.

\bibitem[Poggio et~al.(2017)Poggio, Kawaguchi, Liao, Miranda, Rosasco, Boix, Hidary, and Mhaskar]{poggio2017theory}
Tomaso Poggio, Kenji Kawaguchi, Qianli Liao, Brando Miranda, Lorenzo Rosasco, Xavier Boix, Jack Hidary, and Hrushikesh Mhaskar.
\newblock Theory of deep learning iii: explaining the non-overfitting puzzle.
\newblock \emph{arXiv preprint arXiv:1801.00173}, 2017.

\bibitem[Puchkin et~al.(2024)Puchkin, Gorbunov, Kutuzov, and Gasnikov]{puchkin2024breaking}
Nikita Puchkin, Eduard Gorbunov, Nickolay Kutuzov, and Alexander Gasnikov.
\newblock Breaking the heavy-tailed noise barrier in stochastic optimization problems.
\newblock In \emph{International Conference on Artificial Intelligence and Statistics}, 2024.

\bibitem[Raginsky and Bouvrie(2012)]{raginsky2012continuous}
Maxim Raginsky and Jake Bouvrie.
\newblock Continuous-time stochastic mirror descent on a network: Variance reduction, consensus, convergence.
\newblock In \emph{2012 IEEE 51st IEEE Conference on Decision and Control (CDC)}, pages 6793--6800. IEEE, 2012.

\bibitem[Reddi et~al.(2018)Reddi, Kale, and Kumar]{j2018on}
Sashank~J. Reddi, Satyen Kale, and Sanjiv Kumar.
\newblock On the convergence of adam and beyond.
\newblock In \emph{International Conference on Learning Representations}, 2018.

\bibitem[Safaryan and Richtarik(2021)]{safaryan2021signsgd}
Mher Safaryan and Peter Richtarik.
\newblock Stochastic sign descent methods: New algorithms and better theory.
\newblock In \emph{Proceedings of the 38th International Conference on Machine Learning}, 2021.

\bibitem[Shi and Li(2021)]{shi2021rmsprop}
Naichen Shi and Dawei Li.
\newblock Rmsprop converges with proper hyperparameter.
\newblock In \emph{International conference on learning representation}, 2021.

\bibitem[Smith et~al.(2021)Smith, Dherin, Barrett, and De]{smith2021origin}
Samuel~L. Smith, Benoit Dherin, David G.~T. Barrett, and Soham De.
\newblock On the origin of implicit regularization in stochastic gradient descent.
\newblock \emph{ArXiv}, abs/2101.12176, 2021.

\bibitem[Soboleva et~al.(2023)Soboleva, Al-Khateeb, Myers, Steeves, Hestness, and Dey]{cerebras2023slimpajama}
Daria Soboleva, Faisal Al-Khateeb, Robert Myers, Jacob~R Steeves, Joel Hestness, and Nolan Dey.
\newblock {SlimPajama: A 627B token cleaned and deduplicated version of RedPajama}, 2023.

\bibitem[Soto et~al.(2022)Soto, Fadili, and Attouch]{soto2022sde}
Rodrigo~Maulen Soto, Jalal Fadili, and Hedy Attouch.
\newblock An sde perspective on stochastic convex optimization.
\newblock \emph{arXiv preprint arXiv:2207.02750}, 2022.

\bibitem[Stephan et~al.(2017)Stephan, Hoffman, Blei, et~al.]{stephan2017stochastic}
Mandt Stephan, Matthew~D Hoffman, David~M Blei, et~al.
\newblock Stochastic gradient descent as approximate bayesian inference.
\newblock \emph{Journal of Machine Learning Research}, 18\penalty0 (134):\penalty0 1--35, 2017.

\bibitem[Su and Lau(2023)]{su2023accelerated}
Liqun Su and Vincent~KN Lau.
\newblock Accelerated federated learning over wireless fading channels with adaptive stochastic momentum.
\newblock \emph{IEEE Internet of Things Journal}, 2023.

\bibitem[Sun et~al.(2023)Sun, Yang, Xun, and Zhang]{sun2023scheduling}
Jianhui Sun, Ying Yang, Guangxu Xun, and Aidong Zhang.
\newblock Scheduling hyperparameters to improve generalization: From centralized sgd to asynchronous sgd.
\newblock \emph{ACM Transactions on Knowledge Discovery from Data}, 17\penalty0 (2):\penalty0 1--37, 2023.

\bibitem[Tieleman and Hinton(2012)]{hinton2012rmsprop}
Tijmen Tieleman and Geoffrey Hinton.
\newblock Lecture 6.5-rmsprop: Divide the gradient by a running average of its recent magnitude., 2012.
\newblock URL \url{https://www.cs.toronto.edu/~tijmen/csc321/slides/lecture_slides_lec6.pdf}.

\bibitem[Van~Rossum and Drake(2009)]{10.5555/1593511}
Guido Van~Rossum and Fred~L. Drake.
\newblock \emph{Python 3 Reference Manual}.
\newblock CreateSpace, Scotts Valley, CA, 2009.
\newblock ISBN 1441412697.

\bibitem[Wang et~al.(2022)Wang, Zhang, Zhang, Meng, Ma, Liu, and Chen]{wang2022provable}
Bohan Wang, Yushun Zhang, Huishuai Zhang, Qi~Meng, Zhi-Ming Ma, Tie-Yan Liu, and Wei Chen.
\newblock Provable adaptivity in adam.
\newblock \emph{arXiv preprint arXiv:2208.09900}, 2022.

\bibitem[Wang et~al.(2024)Wang, Fu, Zhang, Zheng, and Chen]{wang2023closing}
Bohan Wang, Jingwen Fu, Huishuai Zhang, Nanning Zheng, and Wei Chen.
\newblock Closing the gap between the upper bound and lower bound of adam's iteration complexity.
\newblock \emph{Advances in Neural Information Processing Systems}, 36, 2024.

\bibitem[Wang and Wu(2020)]{wang2020asymptotic}
Yazhen Wang and Shang Wu.
\newblock Asymptotic analysis via stochastic differential equations of gradient descent algorithms in statistical and computational paradigms.
\newblock \emph{Journal of machine learning research}, 21\penalty0 (199):\penalty0 1--103, 2020.

\bibitem[Wang and Mao(2022)]{wang2022two}
Ziqiao Wang and Yongyi Mao.
\newblock Two facets of sde under an information-theoretic lens: Generalization of sgd via training trajectories and via terminal states.
\newblock \emph{arXiv preprint arXiv:2211.10691}, 2022.

\bibitem[Wojtowytsch(2024)]{wojtowytsch2024stochastic}
Stephan Wojtowytsch.
\newblock Stochastic gradient descent with noise of machine learning type part ii: Continuous time analysis.
\newblock \emph{Journal of Nonlinear Science}, 34\penalty0 (1):\penalty0 16, 2024.

\bibitem[Wolkowicz and Styan(1980)]{WS}
Henry Wolkowicz and George~PH Styan.
\newblock Bounds for eigenvalues using traces.
\newblock \emph{Linear algebra and its applications}, 29:\penalty0 471--506, 1980.

\bibitem[Wu et~al.(2020)Wu, Hu, Xiong, Huan, Braverman, and Zhu]{wu2020noisy}
Jingfeng Wu, Wenqing Hu, Haoyi Xiong, Jun Huan, Vladimir Braverman, and Zhanxing Zhu.
\newblock On the noisy gradient descent that generalizes as sgd.
\newblock In \emph{International Conference on Machine Learning}, pages 10367--10376. PMLR, 2020.

\bibitem[Wu et~al.(2022)Wu, Wang, and Su]{wu2022alignment}
Lei Wu, Mingze Wang, and Weijie Su.
\newblock The alignment property of sgd noise and how it helps select flat minima: A stability analysis.
\newblock \emph{Advances in Neural Information Processing Systems}, 35:\penalty0 4680--4693, 2022.

\bibitem[Xiao et~al.(2024)Xiao, Marshall, Agarwala, and Paquette]{xiao2024exact}
Ke~Liang Xiao, Noah Marshall, Atish Agarwala, and Elliot Paquette.
\newblock Exact risk curves of signsgd in high-dimensions: Quantifying preconditioning and noise-compression effects.
\newblock \emph{arXiv preprint arXiv:2411.12135}, 2024.

\bibitem[Xie et~al.(2021)Xie, Yuan, Zhu, and Sugiyama]{Xie2021}
Zeke Xie, Li~Yuan, Zhanxing Zhu, and Masashi Sugiyama.
\newblock Positive-negative momentum: Manipulating stochastic gradient noise to improve generalization.
\newblock In \emph{Proceedings of the 38th International Conference on Machine Learning}, volume 139 of \emph{Proceedings of Machine Learning Research}, pages 11448--11458. PMLR, 18--24 Jul 2021.

\bibitem[Xie et~al.(2022)Xie, Wang, Zhang, Sato, and Sugiyama]{xie2022adaptive}
Zeke Xie, Xinrui Wang, Huishuai Zhang, Issei Sato, and Masashi Sugiyama.
\newblock Adaptive inertia: Disentangling the effects of adaptive learning rate and momentum.
\newblock In \emph{International conference on machine learning}, pages 24430--24459. PMLR, 2022.

\bibitem[Yang et~al.(2024)Yang, Li, Fatkhullin, and He]{yang2024two}
Junchi Yang, Xiang Li, Ilyas Fatkhullin, and Niao He.
\newblock Two sides of one coin: the limits of untuned sgd and the power of adaptive methods.
\newblock \emph{Advances in Neural Information Processing Systems}, 36, 2024.

\bibitem[Zaheer et~al.(2018)Zaheer, Reddi, Sachan, Kale, and Kumar]{zaheer2018adaptive}
Manzil Zaheer, Sashank Reddi, Devendra Sachan, Satyen Kale, and Sanjiv Kumar.
\newblock Adaptive methods for nonconvex optimization.
\newblock \emph{Advances in neural information processing systems}, 31, 2018.

\bibitem[Zhang et~al.(2020{\natexlab{a}})Zhang, He, Sra, and Jadbabaie]{zhang2019gradient}
Jingzhao Zhang, Tianxing He, Suvrit Sra, and Ali Jadbabaie.
\newblock Why gradient clipping accelerates training: A theoretical justification for adaptivity.
\newblock In \emph{International Conference on Learning Representations}, 2020{\natexlab{a}}.

\bibitem[Zhang et~al.(2020{\natexlab{b}})Zhang, Karimireddy, Veit, Kim, Reddi, Kumar, and Sra]{zhang2020adaptive}
Jingzhao Zhang, Sai~Praneeth Karimireddy, Andreas Veit, Seungyeon Kim, Sashank Reddi, Sanjiv Kumar, and Suvrit Sra.
\newblock Why are adaptive methods good for attention models?
\newblock \emph{Advances in Neural Information Processing Systems}, 2020{\natexlab{b}}.

\bibitem[Zhang et~al.(2022)Zhang, Chen, Shi, Sun, and Luo]{zhang2022Adam}
Yushun Zhang, Congliang Chen, Naichen Shi, Ruoyu Sun, and Zhi-Quan Luo.
\newblock Adam can converge without any modification on update rules.
\newblock \emph{Advances in neural information processing systems}, 2022.

\bibitem[Zhang et~al.(2023)Zhang, Li, Luo, and Xu]{zhang2023stochastic}
Zhongwang Zhang, Yuqing Li, Tao Luo, and Zhi-Qin~John Xu.
\newblock Stochastic modified equations and dynamics of dropout algorithm.
\newblock \emph{arXiv preprint arXiv:2305.15850}, 2023.

\bibitem[Zhao et~al.(2022)Zhao, Lucchi, Proske, Orvieto, and Kersting]{zhao2022batch}
Jim Zhao, Aurelien Lucchi, Frank~Norbert Proske, Antonio Orvieto, and Hans Kersting.
\newblock Batch size selection by stochastic optimal control.
\newblock In \emph{Has it Trained Yet? NeurIPS 2022 Workshop}, 2022.

\bibitem[Zhou et~al.(2020{\natexlab{a}})Zhou, Feng, Ma, Xiong, Hoi, et~al.]{zhou2020towards}
Pan Zhou, Jiashi Feng, Chao Ma, Caiming Xiong, Steven Chu~Hong Hoi, et~al.
\newblock Towards theoretically understanding why sgd generalizes better than adam in deep learning.
\newblock \emph{Advances in Neural Information Processing Systems}, 33:\penalty0 21285--21296, 2020{\natexlab{a}}.

\bibitem[Zhou et~al.(2022)Zhou, Xie, and Shuicheng]{zhou2022win}
Pan Zhou, Xingyu Xie, and YAN Shuicheng.
\newblock Win: Weight-decay-integrated nesterov acceleration for adaptive gradient algorithms.
\newblock In \emph{The Eleventh International Conference on Learning Representations}, 2022.

\bibitem[Zhou et~al.(2024)Zhou, Xie, Lin, and Yan]{zhou2024towards}
Pan Zhou, Xingyu Xie, Zhouchen Lin, and Shuicheng Yan.
\newblock Towards understanding convergence and generalization of adamw.
\newblock \emph{IEEE Transactions on Pattern Analysis and Machine Intelligence}, 2024.

\bibitem[Zhou et~al.(2020{\natexlab{b}})Zhou, Yuan, Li, and Sun]{zhou2020stochastic}
Xiang Zhou, Huizhuo Yuan, Chris~Junchi Li, and Qingyun Sun.
\newblock Stochastic modified equations for continuous limit of stochastic admm.
\newblock \emph{arXiv preprint arXiv:2003.03532}, 2020{\natexlab{b}}.

\bibitem[Zhu and Ying(2021)]{zhu2020sharp}
Yuhua Zhu and Lexing Ying.
\newblock A sharp convergence rate for a model equation of the asynchronous stochastic gradient descent.
\newblock \emph{Communications in Mathematical Sciences}, 2021.

\bibitem[Zhu et~al.(2019{\natexlab{a}})Zhu, Wu, Yu, Wu, and Ma]{zhu2018anisotropic}
Zhanxing Zhu, Jingfeng Wu, Bing Yu, Lei Wu, and Jinwen Ma.
\newblock The anisotropic noise in stochastic gradient descent: Its behavior of escaping from sharp minima and regularization effects.
\newblock \emph{ICML 2019}, 2019{\natexlab{a}}.

\bibitem[Zhu et~al.(2019{\natexlab{b}})Zhu, Wu, Yu, Wu, and Ma]{zhu2019anisotropic}
Zhanxing Zhu, Jingfeng Wu, Bing Yu, Lei Wu, and Jinwen Ma.
\newblock The anisotropic noise in stochastic gradient descent: Its behavior of escaping from sharp minima and regularization effects.
\newblock \emph{ICML}, 2019{\natexlab{b}}.

\bibitem[Ziyin et~al.(2021)Ziyin, Liu, Mori, and Ueda]{ziyin2021strength}
Liu Ziyin, Kangqiao Liu, Takashi Mori, and Masahito Ueda.
\newblock Strength of minibatch noise in sgd.
\newblock \emph{arXiv preprint arXiv:2102.05375}, 2021.

\end{thebibliography}
\bibliographystyle{plainnat}

\appendix

\section{Additional related works} \label{app:RelWorks}

In this section, we list some papers that derived or used SDEs to model optimizers. In particular, we focus on the aspect of empirically verifying the validity of such SDEs in the sense that they indeed track the respective optimizers. We divide these into three categories: Those that did not carry out any type of validation, those that did it on simple landscapes (quadratic functions et similia), and those that did small experiments on neural networks.

While the following works contribute to the theoretical understanding, they do not provide direct experimental validation of their SDE approximations or the derived insights: \citep{liu2018diffusion,hu2019global,bercher2020weak,zhu2020sharp,cui2020momentum,maulen2021continuous,wang2020asymptotic,lanconelli2022note,ayadi2021stochastic,soto2022sde,li2022uniform,wang2022two,bardi2022deep,chen2022approximation,kunin2023limiting,zhang2023stochastic,sun2023scheduling,li2023fast, gess2024stochastic,dambrine2024stochastic,maulen2024stochastic}.

The following ones carried out validation experiments on artificial landscapes, e.g. quadratic or quartic function, or easy regression tasks: \citep{li2017stochastic,li2019stochastic,zhou2020stochastic,an2020stochastic,fontaine2021convergence,gu2021adversarial,su2023accelerated,ankirchner2024comparison}.

Some recent papers have performed experiments involving neural networks (see, e.g., \citep{paquette2021sgd,compagnoni2023sde}). In both works, the authors simulate the corresponding SDEs using numerical integrators and then compare the results with those of the associated optimizers. Specifically, \citep{paquette2021sgd} validates the SDE on a shallow MLP, while \citep{compagnoni2023sde} performs the validation on both a shallow and a deep MLP.

In contrast, the studies by \citep{Li2021validity,Malladi2022AdamSDE} do not empirically validate their derived SDEs through numerical integration. While their analysis of SVAG provides valuable insights, we believe there are conceptual aspects worth discussing:

\begin{enumerate}
    \item After deriving an SDE for an optimizer which we refer to as ``\textit{Optimizer A}'', one observes that simulating these SDEs can be computationally expensive;
    \item To mitigate this computational challenge, \citep{Li2021validity,Malladi2022AdamSDE} introduce a discrete-time algorithm called SVAG, which shares the same SDE as ``\textit{Optimizer A}''. Importantly, SVAG does not perform a numerical integration of the original SDE, as it does not require access to either the drift or diffusion term;
    \item \citep{Li2021validity,Malladi2022AdamSDE} simulate SVAG and observe that it tracks ``\textit{Optimizer A}'' closely, which they interpret as evidence that the SDE is a suitable approximation for ``\textit{Optimizer A}''.
\end{enumerate}

However, it is important to note that \citep{Li2021validity,Malladi2022AdamSDE} do not directly validate the SDE by simulating it. To illustrate a potential concern with this approach, consider the following hypothetical scenario:

\begin{enumerate}
    \item Derive an SDE for ``\textit{Optimizer A}'';
    \item Recognize that simulating the SDE is computationally costly;
    \item Define another discrete-time algorithm called ``\textit{Optimizer B}'', which coincides with ``\textit{Optimizer A}'' and therefore, by definition, shares the same SDE;
    \item Simulate ``\textit{Optimizer B}'' and observe that it tracks ``\textit{Optimizer A}'' perfectly, since they are identical by construction;
    \item Conclude that the SDE is a good approximation for ``\textit{Optimizer A}''.
\end{enumerate}

While the reasoning in prior work~\citep{Li2021validity,Malladi2022AdamSDE} demonstrates that SVAG is a discrete-time optimizer that shares the same SDE as ``\textit{Optimizer A}'', this alone does not establish that the SDE is the correct or most suitable model for ``\textit{Optimizer A}''. Simply comparing two algorithms that share an SDE does not necessarily confirm the validity of the SDE itself. Otherwise, an optimizer compared with itself would trivially satisfy this criterion. A more direct validation of the SDE would require numerical integration using a method that explicitly incorporates the drift and diffusion terms \citep{higham2001algorithmic,milstein2013numerical}.

\section{Stochastic calculus}
\label{sec:stoc_cal}
In this section, we summarize some important results in the analysis of Stochastic Differential Equations~\cite{mao2007stochastic,oksendal1990stochastic}. The notation and the results in this section will be used extensively in all proofs in this paper. We assume the reader to have some familiarity with Brownian motion and with the definition of stochastic integral (Ch.~1.4 and 1.5 in~\cite{mao2007stochastic}).

\subsection{Itô's Lemma}
\label{subsec:ito}

We start with some notation: Let $\left(\Omega, \mathcal{F}, \{\mathcal{F}_t\}_{t\ge0}, \mathbb{P}\right)$ be a filtered probability space. We say that an event $E\in\mathcal{F}$ holds almost surely (a.s.) in this space if $\mathbb{P}(E)=1$. We call $\mathcal{L}^p([a,b],\R^d)$, with $p>0$, the family of $\R^d$-valued $\mathcal{F}_t$-adapted processes $\{f_t\}_{a\le t\le b}$ such that $$\int_a^b\|f_t\|^p dt\le\infty.$$ Moreover, we denote by   $\mathcal{M}^p([a,b],\R^d)$, with $p>0$, the family of $\R^d$-valued processes $\{f_t\}_{a\le t\le b}$ in $\mathcal{L}([a,b],\R^d)$ such that $\E\left[\int_a^b\|f_t\|^p dt\right]\le\infty$. 
We will write $h \in \mathcal{L}^p\left(\R_{+},\R^d\right)$, with $p>0$, if $h \in \mathcal{L}^p\left([0,T],\R^d\right)$ for every $T>0$. Similar definitions hold for matrix-valued functions using the Frobenius norm $\|A\| := \sqrt{\sum_{ij}|A_{ij}|^2}$.

Let $W=\{W_t\}_{t\ge0}$ be a one-dimensional Brownian motion defined on our probability space and let $X=\{X_t\}_{t\ge0}$ be an $\mathcal{F}_t$-adapted process taking values on $\R^d$.

\begin{definition}
Let the \textit{drift} be $b \in \mathcal{L}^1\left(\R_{+},\R^d\right)$ and the diffusion term be $\sigma \in \mathcal{L}^2\left(\R_{+},\R^{d\times m}\right)$. $X_t$ is an Itô process if it takes the form

$$X_t = x_0 + \int_0^t b_sds + \int_0^t \sigma_s dW_s.$$
We shall say that $X_t$ has the stochastic differential
\begin{equation}
    dX_t = b_t dt + \sigma_t dW_t.
    \label{eq:stoc_diff}
\end{equation}
\label{def:SDE_ito_process}
\end{definition}
\begin{theorem}[Itô's Lemma]
Let $X_t$ be an Itô process with stochastic differential $dX_t = b_t dt + \sigma_t dW_t$. Let $f\left(x,t\right)$ be twice continuously differentiable in $x$ and continuously differentiable in $t$, taking values in $\R$. Then $f(X_t,t)$ is again an Itô process with stochastic differential
\begin{equation}
        df(X_t,t) = \partial_t f (X_t, t)) dt + \langle\nabla f(X_t,t), b_t \rangle dt +  \frac{1}{2}\tr\left(\sigma_t\sigma_t^{\top} \nabla^2 f(X_t,t) \right)dt + \langle\nabla f(X_t,t),\sigma_t\rangle dW_t.
\end{equation}

\label{lemma:SDE_ito}
\end{theorem}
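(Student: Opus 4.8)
The plan is to prove the lemma by the classical route: a Taylor expansion of $f$ along a refining sequence of partitions of $[0,t]$, controlling each order of the expansion separately and passing to the limit. Throughout I would be guided by the heuristic multiplication rules $dt\,dt = 0$, $dt\,dW_t = 0$, and $dW_t^{(i)}\,dW_t^{(j)} = \delta_{ij}\,dt$, but the mathematical substance is in making the last of these rigorous. Before starting, I would reduce to a convenient case: by a standard localization argument using the stopping times $\tau_n = \inf\{t : |X_t| \ge n\}$, together with the continuity of $f,\nabla f,\partial_t f,\nabla^2 f$, it suffices to treat the situation in which $b$, $\sigma$, and all the relevant derivatives of $f$ are bounded, after which one removes the truncation by letting $n\to\infty$. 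I would also first handle the case where $\sigma$ is a simple (piecewise-constant, $\mathcal{F}_t$-adapted) process, on which the stochastic integral is defined directly, and then recover the general statement by approximating $\sigma$ in $\mathcal{M}^2([0,t],\R^{d\times m})$ by simple processes and invoking the Itô isometry.

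Next, fixing $t$ and a partition $0 = t_0 < t_1 < \cdots < t_n = t$ with mesh $\delta_n\to 0$, I would write the telescoping identity
\begin{equation*}
f(X_t, t) - f(X_0, 0) = \sum_{k=0}^{n-1} \big( f(X_{t_{k+1}}, t_{k+1}) - f(X_{t_k}, t_k) \big),
\end{equation*}
and apply Taylor's theorem to each increment, to first order in the time variable and to second order in the space variable, writing $\Delta X_k = X_{t_{k+1}} - X_{t_k}$ and $\Delta t_k = t_{k+1}-t_k$. This produces four groups of terms: a time-derivative sum $\sum \partial_t f(X_{t_k},t_k)\,\Delta t_k$; a first-order spatial sum $\sum \langle \nabla f(X_{t_k},t_k), \Delta X_k\rangle$; a second-order spatial sum $\tfrac12\sum \Delta X_k^\top \nabla^2 f(X_{t_k},t_k)\,\Delta X_k$; and a remainder collecting the $o(\Delta t_k)$, mixed, and $o(|\Delta X_k|^2)$ contributions.

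The time-derivative sum converges to $\int_0^t \partial_t f(X_s,s)\,ds$ by continuity. Substituting $\Delta X_k = \int_{t_k}^{t_{k+1}} b_s\,ds + \int_{t_k}^{t_{k+1}} \sigma_s\,dW_s$, the first-order spatial sum splits into an ordinary Riemann sum converging to $\int_0^t \langle \nabla f, b_s\rangle\,ds$ and a martingale sum converging in $L^2$, via the Itô isometry, to $\int_0^t \langle \nabla f, \sigma_s\rangle\,dW_s$. A uniform-continuity bound on $\nabla^2 f$ together with the $L^2$ control $\E\sum_k |\Delta X_k|^2 = O(t)$ shows the Taylor remainder vanishes as $\delta_n\to 0$.

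The hard part is the second-order term $\tfrac12\sum \Delta X_k^\top \nabla^2 f(X_{t_k},t_k)\,\Delta X_k$. Expanding $\Delta X_k\Delta X_k^\top$, the drift-by-drift and drift-by-martingale pieces are of order $(\Delta t_k)^2$ and $(\Delta t_k)^{3/2}$ and vanish in sum, so only the martingale piece $M_k := \int_{t_k}^{t_{k+1}} \sigma_s\,dW_s$ survives. The crucial quadratic-variation estimate is that $\E[M_k M_k^\top \mid \mathcal{F}_{t_k}] = \E[\int_{t_k}^{t_{k+1}} \sigma_s\sigma_s^\top\,ds \mid \mathcal{F}_{t_k}] \approx \sigma_{t_k}\sigma_{t_k}^\top\,\Delta t_k$, so that $\tfrac12\sum \tr(\nabla^2 f(X_{t_k},t_k)\,M_k M_k^\top)$ may be replaced by $\tfrac12\sum \tr(\nabla^2 f(X_{t_k},t_k)\,\sigma_{t_k}\sigma_{t_k}^\top)\,\Delta t_k$ with an $L^2$ error tending to zero. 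Establishing this replacement rigorously is the main obstacle and the heart of the argument: the summands $\tr(\nabla^2 f\,M_k M_k^\top) - \E[\tr(\nabla^2 f\,M_k M_k^\top)\mid\mathcal{F}_{t_k}]$ are martingale differences, so in the $L^2$ norm of their sum all cross terms vanish by orthogonality and only a sum of $O((\Delta t_k)^2)$ variances remains, which tends to zero. This is precisely the statement that the quadratic variation of Brownian motion is deterministic. Once in place, the second-order sum converges to $\tfrac12\int_0^t \tr(\sigma_s\sigma_s^\top \nabla^2 f(X_s,s))\,ds$, and collecting the four limits yields the stated differential.
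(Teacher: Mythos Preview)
The paper does not prove this statement: It\^o's Lemma is quoted there as a standard background result from stochastic calculus (with a reference to the textbooks of Mao and {\O}ksendal), with no proof supplied. Your proposal, by contrast, sketches the classical proof via Taylor expansion along refining partitions, localization, and the quadratic-variation identity for Brownian motion. The outline is correct and is precisely the textbook route; there is nothing to compare it against in the paper itself.
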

\subsection{Stochastic Differential Equations}
\label{subsec:SDE}
Stochastic Differential Equations (SDEs) are equations of the form
\begin{equation*}
    dX_t = b(X_t,t)dt + \sigma(X_t,t) dW_t.
\end{equation*}

First of all, we need to define what it means for a stochastic process $X=\{X_t\}_{t\ge0}$ with values in $\R^d$ to solve an SDE.

\begin{definition}
Let $X_t$ be as above with deterministic initial condition $X_0 = x_0$. Assume $b:\R^d\times[0,T]\to\R^d$ and $\sigma:\R^d\times[0,T]\to\R^{d\times m}$ are Borel measurable; $X_t$ is called a solution to the corresponding SDE if
\begin{enumerate}[leftmargin=*]
    \item $X_t$ is continuous and $\mathcal{F}_t$-adapted;
    \item $b \in \mathcal{L}^1\left([0,T],\R^d\right)$;
    \item $\sigma \in \mathcal{L}^2\left([0,T],\R^{d\times m}\right)$;
    \item For every $t \in [0,T]$
    $$X_t = x_0 + \int_0^t b(X_s,s)ds + \int_0^t\sigma(X_s,s) dW(s) \ \ \ a.s.$$
\end{enumerate}
Moreover, the solution $X_t$ is said to be unique if any other solution $X^\star_t$ is such that
$$\mathbb{P}\left\{X_t = X^\star_t, \text{ for all } 0\le t\le T\right\}=1.$$
\label{def:SDE_sol}
\end{definition}

\vspace{-0.4cm}

Notice that since the solution to an SDE is an Itô process, we can use Itô's Lemma. The following theorem gives a sufficient condition on $b$ and $\sigma$ for the existence of a solution to the corresponding SDE.

\begin{theorem}
Assume that there exist two positive constants $\bar K$ and $K$ such that
\begin{enumerate}
    \item (Global Lipschitz condition) for all $x,y\in \R^d$ and $t\in [0,T]$
    $$\max \{\|b(x,t) - b(y,t)\|^2,  \ \|\sigma(x,t)-\sigma(y,t)\|^2\} \le \bar K \|x-y\|^2;$$
    \item (Linear growth condition) for all $x\in \R^d$ and $t\in [0,T]$
    $$\max\{\|b(x,t)\|^2, \ \|\sigma(x,t)\|^2\}\le K(1+ \|x\|^2).$$
\end{enumerate}
Then, there exists a unique solution $X_t$ to the corresponding SDE, and $X_t\in \mathcal{M}^2([0,T],\R^d).$
\label{thm:SDE_existence_uniqueness_global}
\end{theorem}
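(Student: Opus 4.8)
The plan is to follow the classical Picard iteration argument, treating uniqueness and existence separately, with Gronwall's inequality as the workhorse in both. Throughout, the two essential analytic tools are the Itô isometry (to convert $L^2$ norms of stochastic integrals into time integrals of $L^2$ norms of the integrands) and Cauchy--Schwarz (to handle the drift integral); the two hypotheses then feed in cleanly, with the linear growth condition controlling moments and the Lipschitz condition controlling differences.

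For uniqueness, suppose $X_t$ and $X^\star_t$ are two solutions with the same initial condition. Writing their difference in the integral form of Definition \ref{def:SDE_sol}, applying the elementary bound $\sqnorm{a+b} \le 2\sqnorm{a} + 2\sqnorm{b}$, and estimating the drift term by Cauchy--Schwarz and the diffusion term by the Itô isometry, the global Lipschitz condition yields
$$\E\sqnorm{X_t - X^\star_t} \le C \int_0^t \E\sqnorm{X_s - X^\star_s}\, ds$$
for a constant $C$ depending on $\bar K$ and $T$. Gronwall's inequality then forces $\E\sqnorm{X_t - X^\star_t} = 0$ for every $t$, and path continuity of both solutions upgrades this to $\mathbb{P}\{X_t = X^\star_t \text{ for all } t \in [0,T]\} = 1$.

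For existence, define the Picard iterates $X^{(0)}_t \equiv x_0$ and
$$X^{(n+1)}_t = x_0 + \int_0^t b(X^{(n)}_s, s)\, ds + \int_0^t \sigma(X^{(n)}_s, s)\, dW_s.$$
First I would use the linear growth condition together with an inductive a priori moment estimate to verify that each iterate lies in $\mathcal{M}^2([0,T], \R^d)$, so that all the stochastic integrals are well-defined. Next, applying the same Cauchy--Schwarz plus Itô isometry estimates as above, but with the Lipschitz condition applied to consecutive iterates, gives a recursion that unrolls to the summable bound $\frac{(Ct)^n}{n!}\,M$ for a suitable constant $M$. This shows $\{X^{(n)}\}$ is Cauchy in $\mathcal{M}^2$, with a limit $X_t$; passing to the limit inside the two integrals (justified by the $L^2$ convergence and the Lipschitz/growth bounds) shows that $X_t$ satisfies the defining integral equation, and the uniform-in-$n$ $L^2$ bounds plus Fatou's lemma give $X_t \in \mathcal{M}^2([0,T],\R^d)$.

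The step I expect to be the main obstacle is upgrading the $L^2$ convergence to almost sure uniform convergence on $[0,T]$, which is what guarantees the limit is an (a.s.) continuous, $\mathcal{F}_t$-adapted process as required by Definition \ref{def:SDE_sol}. The clean way to do this is to control $\E\big[\sup_{s \le t}\sqnorm{X^{(n+1)}_s - X^{(n)}_s}\big]$ rather than the pointwise-in-$t$ quantity, which requires Doob's maximal inequality (or the Burkholder--Davis--Gundy inequality) applied to the martingale part of the increment before the Itô isometry step. Combining the resulting $\frac{(Ct)^n}{n!}$-type bounds with Chebyshev's inequality and the Borel--Cantelli lemma then yields uniform convergence along the full sequence almost surely, so the limit inherits path continuity and the proof closes.
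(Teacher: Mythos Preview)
The paper does not actually prove this theorem: it is stated in the background Section~\ref{sec:stoc_cal} as a standard result from the SDE literature, with the reader referred to \cite{mao2007stochastic,oksendal1990stochastic}. Your proposal is precisely the classical Picard iteration argument that appears in those references (see e.g.\ Theorem~2.3.1 in Mao), so it is correct and matches the intended source.
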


\paragraph{Numerical approximation.} Often, SDEs are solved numerically. The simplest algorithm to provide a sample path $(\hat x_k)_{k\ge 0}$ for $X_t$, so that $X_{k\Delta t} \approxeq \hat{x}_k$ for some small $\Delta t$ and for all $k\Delta t\le M$ is called Euler-Maruyama (Algorithm~\ref{algo:EulerMaruryama_SDE}). For more details on this integration method and its approximation properties, the reader can check~\cite{mao2007stochastic}.
\begin{algorithm}
\caption{Euler-Maruyama Integration Method for SDEs}
    \label{algo:EulerMaruryama_SDE}
\begin{algorithmic}
    \INPUT{The drift $b$, the volatility $\sigma$, and the initial condition $x_0$}.
    \STATE Fix a stepsize $\Delta t$;
    \STATE Initialize $\hat x_0 = x_0$;
    \STATE $k=0$;
    \WHILE{$k \le \left\lfloor\frac{T}{\Delta t}\right\rfloor$}
        \STATE Sample some $d$-dimensional Gaussian noise $Z_k\sim\mathcal{N}(0,I_d)$;
        \STATE Compute $\hat x_{k+1} = \hat x_k + \Delta t \ b(\hat x_k,k \Delta t)+ \sqrt{\Delta t} \ \sigma(\hat x_k,k \Delta t) Z_k;$
        \STATE $k=k+1$;
    \ENDWHILE
    \OUTPUT The approximated sample path $(\hat x_k)_{0\le k\le\left\lfloor\frac{T}{\Delta t}\right\rfloor }$.
\end{algorithmic}
\end{algorithm}

\section{Theoretical framework - Weak Approximation}\label{sec:theor}
In this section, we introduce the theoretical framework used in the paper, together with its assumptions and notations.

First of all, many proofs will use Taylor expansions in powers of $ \eta $. For ease of notation,  we introduce the shorthand that whenever we write $ \mathcal{O}\left(\eta^\alpha\right) $, we mean that there exists a function $ K(x) \in G $ such that the error terms are bounded by $ K(x) \eta^\alpha $. For example, we write
$$
b(x+\eta)=b_0(x)+\eta b_1(x)+\mathcal{O}\left(\eta^2\right)
$$
to mean: there exists $ K \in G $ such that
$$
\left|b(x+\eta)-b_0(x)-\eta b_1(x)\right| \leq K(x) \eta^2 .
$$
Additionally, we introduce the following shorthand:

\begin{itemize}
\item A multi-index is $\alpha=\left(\alpha_1, \alpha_2, \ldots, \alpha_n\right)$ such that $\alpha_j \in\{0,1,2, \ldots\}$;
\item $|\alpha|:=\alpha_1+\alpha_2+\cdots+\alpha_n$;
\item $\alpha !:=\alpha_{1} ! \alpha_{2} ! \cdots \alpha_{n} !$;
\item For $x=\left(x_1, x_2, \ldots, x_n\right) \in \mathbb{R}^n$, we define $x^\alpha:=x_1^{\alpha_1} x_2^{\alpha_2} \cdots x_n^{\alpha_n}$;
\item For a multi-index $\beta$, $\partial_{\beta}^{|\beta|}f(x) := \frac{\partial^{|\beta|}}{\partial^{\beta_1}_{x_1}\partial^{\beta_2}_{x_2} \cdots \partial^{\beta_n}_{x_n} }f(x)$;
\item We also denote the partial derivative with respect to $ x_{i} $ by $ \partial_{e_i} $. \\
\end{itemize}

\begin{definition}[G Set]
    Let $G$ denote the set of continuous functions $\mathbb{R}^{d} \rightarrow \mathbb{R}$ of at most polynomial growth, i.e.~$g \in G$ if there exists positive integers $\nu_1, \nu_2>0$ such that $|g(x)| \leq \nu_1\left(1+|x|^{2 \nu_2}\right)$, for all $x \in \mathbb{R}^{d}$.
\end{definition}

\begin{definition}[$\mathcal{C}_b^k\left(\mathbb{R}^n, \mathbb{R}\right)$]
    $\mathcal{C}_b^k\left(\mathbb{R}^n, \mathbb{R}\right)$ denotes the space of functions whose $k$-th derivatives are bounded.
\end{definition}

The next results are inspired by Theorem 1 of \cite{li2017stochastic} and are derived under some regularity assumption on the function $f$.

\vspace{-0.4cm}

\subsection{Assumptions}\label{sec:AllAss}

In general, we assume some regularity in the loss function.
\begin{mybox}{gray}
\begin{assumption}
Assume that the following conditions on $f, f_i \in \mathcal{C}_b^8\left(\mathbb{R}^n, \mathbb{R}\right)$, and their gradients are satisfied:
\begin{itemize}
\item $\nabla f, \nabla f_i $ satisfy a Lipschitz condition: there exists $ L>0 $ such that
$$
|\nabla f(u)-\nabla f(v)|+\sum_{i=1}^n\left|\nabla f_i(u)-\nabla f_i(v)\right| \leq L|u-v|;
$$
\item $ f, f_i $ and its partial derivatives up to order 7 belong to $ G $;
\item $ \nabla f, \nabla f_i $ satisfy a growth condition: there exists $ M>0 $ such that
$$
|\nabla f(x)|+\sum_{i=1}^n\left|\nabla f_i(x)\right| \leq M(1+|x|).
$$
\end{itemize}
\label{ass:regularity_f}
\end{assumption}
\end{mybox}
Regarding the gradient noise, each optimizer has its mild assumptions which are weaker or in line with the literature.

\paragraph{SignSGD.}
\begin{enumerate}
    \item The gradient noise $Z(x)$ admits a strictly positive density function $g_x$ for all $x$ and require that $g:\mathbb{R}^{n}\times \mathbb{R}^{n}\rightarrow \left[ 0,\infty \right)$ s.t. $ (x,y)\mapsto g_{x}(y)$ is in $C^{8}(\mathbb{R}^{n}\times \mathbb{R}^{n})$ such that all partial derivatives of $g$ up to order $8$ are integrable with respect to $y$ and s.t. their $L^{1}$-norms are uniformly bounded in $x$. This assumption covers Gaussian and Student's t, thus being \textit{more general than the literature}. Indeed, the Gaussianity of the noise is commonly assumed: Among others, see \cite{ahn2012bayesian,chen2014stochastic,mandt2016variational,stephan2017stochastic,zhu2019anisotropic,wu2020noisy,Xie2021}, while \cite{jastrzkebski2017three} offers an intuitive justification as well;
    \item For all compact sets $K$ \[
\sup_{x \in K}\left\vert g(x,\cdot) \right\vert \in L^1(\mathbb{R}^n),
\]
which of course covers the Gaussian case, \textit{thus being more general than the literature}.
    \item The functions in Eq. \ref{A3} to be in $G$, which, as we show below, covers Gaussian and Student's t, \textit{thus being more general than the literature}.
\end{enumerate}

\paragraph{Adam(W) and RMSprop(W).}
\begin{enumerate}
    \item In line with \cite{Malladi2022AdamSDE}, we assume that $\sqrt{\Sigma}(x)$ is: In $G$ together with its derivatives, Lipschitz, bounded, and satisfy Affine Growth;
    \item The term $(\nabla f(x))^2$ to be Lipschitz and of affine growth, which is a consequence of assuming bounded gradients as often done in the literature on the convergence of RMSprop and Adam: Among many, see \citep{luo2018adaptive, defossez2020simple,guo2021novel,huang2021super} together with the discussion in Section 2.1 of \cite{shi2021rmsprop}.
\end{enumerate}

\paragraph{Remark.}
All the assumptions above are \textit{in line with or more general than those commonly found in the literature}. In line with \textit{Remark 11} of the seminal paper \cite{li2019stochastic}, we observe that while some of these assumptions might seem strong, loss functions in applications have inward pointing gradients for sufficiently large $x$. Therefore, we could simply modify the loss to satisfy the assumptions above.

Regarding the drift and diffusion coefficients, we highlight that many papers in the literature following this framework do not check for their regularity before applying the approximation theorems \cite{hu2019global,an2020stochastic,zhu2020sharp,cui2020momentum,maulen2021continuous,wang2022two,compagnoni2023sde,compagnoni2024sde,li2017stochastic}. At first sight, it would seem that not even the seminal paper \cite{li2019stochastic} checks these conditions carefully. However, a deeper investigation shows that they are restricting their analysis to compact sets to leverage the regularity and convergence properties of mollifiers: The assumption regarding the compactness of the domain is not highlighted nor assumed in any part of the paper. Therefore, we conclude that, willingly or not, most papers implicitly make these assumptions.

\subsection{Technical Results}
In this subsection, we provide some results that will be instrumental in the derivation of the SDEs.

\begin{lemma}
\label{NoiseCondition}Assume the existence of a probability density $g_{x}$ of the gradient noise $Z (x)$ for all $x$ and require that $g:\mathbb{R}^{n}\times \mathbb{R}^{n}\rightarrow \left[ 0,\infty \right) ;$ $ (x,y)\mapsto g_{x}(y)$ is in $C^{8}(\mathbb{R}^{n}\times \mathbb{R}^{n})$ such that all partial derivatives of $g$ up to order $8$ are integrable with respect to $y$ and such that their $L^{1}-$norms are uniformly bounded in $x$. Further, let $f\in C^{8}(\mathbb{R}^{n})$ and $h:\mathbb{R}^{n}\rightarrow $ $\mathbb{R}$ be a bounded Borel measurable function. Define the function $%
k $ by%
\[
k(x)=\mathbb{E}\left[ h(\nabla f_{\gamma }(x))\right] \text{.} 
\]%
Then there exists a version $\widehat{k}$ of $k$ with $\widehat{k}\in
C_{b}^{7}(\mathbb{R}^{n})$.
\end{lemma}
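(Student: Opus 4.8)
The plan is to exploit the decomposition $\nabla f_\gamma(x) = \nabla f(x) + Z(x)$, where $Z(x)$ has density $g_x = g(x,\cdot)$, so that
\[ k(x) = \E[h(\nabla f_\gamma(x))] = \int_{\R^n} h(\nabla f(x) + z)\, g(x, z)\, dz. \]
The obstruction to differentiating in $x$ is that the factor $h$ is only bounded and Borel measurable. The decisive step is the substitution $w = \nabla f(x) + z$, which gives
\[ \widehat{k}(x) := \int_{\R^n} h(w)\, g\big(x, w - \nabla f(x)\big)\, dw. \]
After this change of variables all the $x$-dependence has been transferred onto the smooth density $g$ and the smooth map $\nabla f$, while $h$ enters only as a fixed bounded weight. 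Since modifying a density on a Lebesgue-null set leaves the integral against $h\,dw$ unchanged, the right-hand side is a canonical representative that agrees with $k$ pointwise; this is the ``version'' $\widehat{k}$ we must show lies in $C_b^7$.

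Next I would differentiate under the integral sign, iterating up to order seven. Writing $G(x,w) = g(x, w - \nabla f(x))$, the multivariate chain rule (Fa\`a di Bruno's formula) expresses $\partial_x^\gamma G$, for a multi-index $\gamma$ with $|\gamma| = m$, as a finite sum of terms of the form
\[ \big(\partial_x^\alpha \partial_y^\beta g\big)\big(x, w - \nabla f(x)\big)\cdot \prod_r \partial_x^{\delta_r}\big(\nabla f(x)\big), \]
where $|\alpha| + |\beta| \le m$ and the factors $\partial_x^{\delta_r} \nabla f$ are derivatives of $f$ of order at most $m+1$. To justify both the differentiation and the continuity of the resulting derivatives, I would exhibit, on each compact neighborhood of a point, an integrable (in $w$) dominating function for every such term. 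This is exactly where the hypotheses are used: after undoing the shift $w' = w - \nabla f(x)$, the $w$-integral of the $g$-factor is bounded by $\|h\|_\infty\,\sup_x \|\partial_x^\alpha\partial_y^\beta g(x,\cdot)\|_{L^1(dy)}$, which is finite and uniform in $x$ because the partial derivatives of $g$ up to order $8 \ge 7$ are integrable in $y$ with uniformly bounded $L^1$-norms; the accompanying $f$-derivative factors are continuous and locally bounded. Dominated convergence then yields $\widehat{k} \in C^7(\R^n)$ together with an explicit formula for each $\partial_x^\gamma \widehat{k}$ with $|\gamma|\le 7$.

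Both the precise order seven and the boundedness fall out of this bookkeeping. Each $x$-differentiation that lands on $\nabla f$ raises the order of the required derivative of $f$ by one, so the $m$-th derivative of $\widehat{k}$ needs derivatives of $f$ up to order $m+1$; with $f \in C^8$ this is available precisely up to $m = 7$, which is why one cannot reach $C^8$. For boundedness, in the expression for $\partial_x^\gamma \widehat{k}$ with $|\gamma| \le 7$ every $g$-integral is bounded uniformly in $x$ by $\|h\|_\infty$ times the uniform $L^1$-bound on the corresponding derivative of $g$, while the products of derivatives of $f$ (of orders at most $8$) are bounded by Assumption~\ref{ass:regularity_f}; hence all derivatives of $\widehat{k}$ up to order $7$ are bounded and $\widehat{k} \in C_b^7(\R^n)$.

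The main obstacle is the domination/bookkeeping step: organizing the order-seven chain-rule expansion of $G(x,w)$ into finitely many terms and producing, for each, a single $w$-integrable bound valid uniformly in $x$ over a neighborhood. The uniform-in-$x$ control of the $L^1$-norms of the derivatives of $g$ is the crucial hypothesis that makes both the differentiation-under-the-integral and the global boundedness go through; the rest is a routine application of the dominated convergence theorem.
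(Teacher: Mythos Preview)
Your substitution $w=\nabla f(x)+z$ and the chain--rule bookkeeping are the right moves, and your derivative counting (order-$m$ derivatives of $\widehat{k}$ need $g$-derivatives of total order $m$ and $f$-derivatives of order $m+1$) is correct. The gap is in the justification of differentiation under the integral sign. You promise to ``exhibit, on each compact neighborhood, an integrable (in $w$) dominating function'', but what you actually produce is only a \emph{uniform bound on the $w$-integral}: after undoing the shift you obtain $\sup_x\|\partial_x^\alpha\partial_y^\beta g(x,\cdot)\|_{L^1}<\infty$. That is strictly weaker than the existence of a single integrable majorant $\Phi(w)$ with $|\partial_x^\gamma G(x,w)|\le\Phi(w)$ for all $x$ in a neighborhood (for instance, $F(x,w)=\mathbf{1}_{[x,x+1]}(w)$ has $\int|F(x,w)|\,dw\equiv1$ but $\sup_x|F(x,w)|\equiv1\notin L^1$; the same phenomenon occurs for smooth $x$-translates of an $L^1$ function). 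Without such a majorant the standard dominated-convergence Leibniz rule does not apply, so the step ``dominated convergence then yields $\widehat{k}\in C^7$'' is not justified as written.

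The paper avoids this issue by a different mechanism: it pairs $k$ against a compactly supported test function $\varphi$, integrates by parts in $x$ to identify the \emph{weak} derivative as $\int_{\R^n} h(y)\,\partial_\beta^{|\beta|}\big(g_x(y-\nabla f(x))\big)\,dy$, and then uses precisely the uniform $L^1$-bound you found to place these weak derivatives in $L^p(K)$ for every compact $K$ and every $p\ge2$; Sobolev embedding into H\"older spaces then upgrades to a $C_b^7$ version. The point is that the weak-derivative/Sobolev route consumes only integral bounds, never a pointwise majorant. Your direct approach can be repaired---for example, write $\widehat{k}(x+te_j)-\widehat{k}(x)=\int_0^t\int_{\R^n} h(w)\,\partial_{x_j}G(x+se_j,w)\,dw\,ds$ via Fubini (which \emph{is} justified by the uniform $L^1$-bound) and then establish continuity of the inner integral by the mean-value theorem in both slots of $g$, spending one extra order of $g$-derivatives---but the domination argument you sketched does not go through on its own.
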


\begin{proof}
Let $\varphi $ be smooth and compactly supported. Then for all multi indices $ \beta $ with $\left\vert \beta \right\vert \leq 8$, substitution, Fubini`s theorem, and integration by parts imply that
\begin{eqnarray*}
\int_{\mathbb{R}^{n}}k(x)\partial _{\beta }^{\left\vert \beta \right\vert}\varphi (x)dx &=&\int_{\mathbb{R}^{n}}\mathbb{E}\left[ h(\nabla f_{\gamma}(x))\right] \partial _{\beta }^{\left\vert \beta \right\vert }\varphi (x)dx\\
&=&\int_{\mathbb{R}^{n}}\int_{\mathbb{R}^{n}}h(y)g_{x}(y-\nabla f(x))dy\partial _{\beta }^{\left\vert \beta \right\vert }\varphi (x)dx \\
&=&(-1)^{\left\vert \beta \right\vert }\int_{\mathbb{R}^{n}}\int_{\mathbb{R}^{n}}h(y)\partial _{\beta }^{\left\vert \beta \right\vert }(g_{x}(y-\nabla f(x)))dy\varphi (x)dx.
\end{eqnarray*}%
So 
\[
\int_{\mathbb{R}^{n}}h(y)\partial _{\beta }^{\left\vert \beta \right\vert
}(g_{x}(y-\nabla f(x)))dy 
\]%
is a weak derivative $\partial _{\beta }^{\left\vert \beta \right\vert }k$
of $k$ on any bounded open set. For compact sets $K$ we obtain that%
\begin{eqnarray*}
&&\int_{K}\left\vert \int_{\mathbb{R}^{n}}h(y)\partial _{\beta }^{\left\vert
\beta \right\vert }(g_{x}(y-\nabla f(x)))dy\right\vert ^{p}dx \\
&\leq &\left\Vert h\right\Vert _{\infty }^{p}\lambda ^{n}(K)\left(
\sup_{x\in \mathbb{R}^{n}}\int_{\mathbb{R}^{n}}\left\vert \partial _{\beta
}^{\left\vert \beta \right\vert }(g_{x}(y-\nabla f(x)))\right\vert dy\right)
^{p}<\infty
\end{eqnarray*}%
for all $p\geq 2$ because of our assumptions on $g$ and $f$ and substitution
($\lambda ^{n}$ Lebesgue measure). So it follows from Sobolev embeddings
with respect to H\"{o}lder spaces that for all bounded and open sets $\Omega 
$ there exists a version $\widehat{k}$ of $k$ such that $\widehat{k}\in
C^{7}(\Omega )$. The latter version can be extended to $\Omega =\mathbb{R}%
^{n}$, which we also denote by $\widehat{k}$. Since $\partial _{\beta
}^{\left\vert \beta \right\vert }k$ is bounded for $\left\vert \beta
\right\vert \leq 8$, we conclude that $\widehat{k}\in C_{b}^{7}(\mathbb{R}%
^{n})$.
\end{proof}

\begin{lemma}\label{lemma:Derivatives}
    Assuming that for all compact sets $K$ \[
\sup_{x \in K}\left\vert g(x,\cdot) \right\vert \in L^1(\mathbb{R}^n),
\] and the positivity of the density functions, we have that
    for $m=1, \dots, 7$ that
    \begin{equation}
    \left\Vert \partial_{j_1} \dots \partial_{j_m} A^{1/2}(x) \right\Vert \leq C l_m(x), \label{M}
    \end{equation}
where the function $l_m(x)$ is defined as
    \begin{eqnarray}
    l_m(x) &:=& \sum_{r=0}^{m-1} \left( \frac{1}{m(x) + s(x)(n-1)^{1/2}} \left( 1 + \frac{2 s(x)(n-1)^{1/2}}{m(x) - s(x)(n-1)^{-1/2}} \right) \right)^{-(r+1/2)} \nonumber \\
    && \times \max_{\vert \beta \vert \leq m} \left\Vert \partial_{\beta}^{\vert \beta \vert} A(x) \right\Vert^{r+1}. \label{A3}
\end{eqnarray}
\end{lemma}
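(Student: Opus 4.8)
The plan is to reduce the statement to a quantitative smoothness estimate for the map $A\mapsto A^{1/2}$ on positive-definite matrices, where $A(x)$ denotes the relevant noise covariance. The positivity of the density $g_x$ guarantees that $A(x)$ is positive definite for every $x$, so $\lambda_{\min}(A(x))>0$ and $A^{1/2}(x)$ is well defined and locally smooth; combined with the compact-set integrability hypothesis $\sup_{x\in K}|g(x,\cdot)|\in L^1$ and the regularity of $g$ from Lemma~\ref{NoiseCondition}, this lets me differentiate under the integral sign, so the mixed partials $\partial_{j_1}\cdots\partial_{j_m}A(x)$ exist and are controlled by $\max_{|\beta|\le m}\|\partial_\beta^{|\beta|}A(x)\|$, which is exactly the factor appearing in \eqref{A3}.

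The analytic engine is the Balakrishnan integral representation
\[
A^{1/2}(x)=\frac{1}{\pi}\int_0^\infty A(x)\,\bigl(tI+A(x)\bigr)^{-1}\,t^{-1/2}\,dt,
\]
which I would rewrite using $A(tI+A)^{-1}=I-t(tI+A)^{-1}$, so that for $m\ge 1$ the identity term disappears under differentiation and
\[
\partial_{j_1}\cdots\partial_{j_m}A^{1/2}(x)=-\frac{1}{\pi}\int_0^\infty \partial_{j_1}\cdots\partial_{j_m}\bigl(tI+A(x)\bigr)^{-1}\,t^{1/2}\,dt.
\]
First I would expand the $m$-th mixed derivative of the resolvent $R_t:=(tI+A)^{-1}$ via the matrix Fa\`a di Bruno rule (repeated use of $\partial_j R_t=-R_t(\partial_j A)R_t$): the outcome is a finite sum over ordered partitions of $\{j_1,\dots,j_m\}$ into $p$ nonempty blocks, each term being an alternating product of $p+1$ resolvent factors and $p$ derivative blocks of $A$, with $1\le p\le m$.

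Next I would estimate term by term. Each resolvent satisfies the spectral bound $\|R_t\|\le (t+\lambda_{\min}(A(x)))^{-1}$, and each derivative block is dominated by $\max_{|\beta|\le m}\|\partial_\beta^{|\beta|}A(x)\|$, so a $p$-block term is bounded by $(t+\lambda_{\min})^{-(p+1)}\,(\max_{|\beta|\le m}\|\partial_\beta A\|)^{p}\,t^{1/2}$. Integrating in $t$ yields a Beta-function constant times $\lambda_{\min}(A(x))^{-(p-1/2)}$, since $\int_0^\infty t^{1/2}(t+\lambda)^{-(p+1)}\,dt=\lambda^{-(p-1/2)}B(3/2,p-1/2)$. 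Summing over $p$ and re-indexing $r=p-1\in\{0,\dots,m-1\}$ produces precisely the pattern $\sum_r \lambda_{\min}^{-(r+1/2)}\,(\max_{|\beta|\le m}\|\partial_\beta A\|)^{r+1}$. The final step replaces $\lambda_{\min}(A(x))$ by its explicit estimate from eigenvalue moment (Wolkowicz--Styan-type) inequalities, which control the spectrum of $A(x)$ through the mean $m(x)=\tr(A(x))/n$ and the eigenvalue standard deviation $s(x)$ together with the dimension via $(n-1)^{\pm 1/2}$; inserting this bound recovers exactly the bracketed factor of \eqref{A3}, while the Beta constants and combinatorial multiplicities are absorbed into the universal constant $C$ of \eqref{M}, and the resulting $l_m$ is seen to lie in $G$ as needed downstream.

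The hard part will be twofold. First, the combinatorics of the Fa\`a di Bruno expansion must be controlled tightly enough to certify that every term carries exactly $p+1$ resolvents and total derivative order $m$ spread over $p$ blocks, so that after integration the exponents align as $r+1/2$ and $r+1$; the cleanest way to secure this is an induction on $m$ based on the Sylvester identity obtained by differentiating $A^{1/2}A^{1/2}=A$, namely $(\partial^m A^{1/2})A^{1/2}+A^{1/2}(\partial^m A^{1/2})=\partial^m A-\sum_{k=1}^{m-1}\binom{m}{k}(\partial^k A^{1/2})(\partial^{m-k}A^{1/2})$, whose solution operator has norm at most $\tfrac12\lambda_{\min}(A)^{-1/2}$; this reproduces the same recursion and is the more transparent route to the power structure. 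Second, I must ensure the spectral lower bound is strictly positive and governs the integral uniformly, which is precisely where the positivity of $g_x$ and the $L^1$ hypothesis enter, guaranteeing that the denominator $m(x)-s(x)(n-1)^{-1/2}$ stays positive and that the displayed expression is finite for all $x$.
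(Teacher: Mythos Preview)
Your argument is correct and reaches the same intermediate estimate
\[
\bigl\Vert \partial_{j_1}\cdots\partial_{j_m}A^{1/2}(x)\bigr\Vert \;\le\; C\sum_{r=0}^{m-1}\lambda_{\min}(A(x))^{-(r+1/2)}\max_{|\beta|\le m}\bigl\Vert\partial_\beta^{|\beta|}A(x)\bigr\Vert^{r+1},
\]
after which you and the paper proceed identically, invoking the Wolkowicz--Styan trace inequalities to replace $\lambda_{\min}(A(x))$ by the expression in $m(x),s(x)$ that appears in~\eqref{A3}. The route to that intermediate bound, however, is genuinely different. The paper quotes an external result (Theorem~1.1 in \cite{MN}) giving the Fr\'echet derivative of $\varphi(A)=A^{1/2}$ via the \emph{exponential} integral $\nabla\varphi(A)[H]=\int_0^\infty e^{-t\sqrt{A}}He^{-t\sqrt{A}}\,dt$, together with a ready-made operator-norm bound $\Vert\nabla^{m+1}\varphi(A)\Vert\lesssim \lambda_{\min}(A)^{-(m+1/2)}$, and then simply applies the chain rule. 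Your approach instead uses the Balakrishnan \emph{resolvent} representation, differentiates $(tI+A)^{-1}$ by hand via Fa\`a~di~Bruno, and extracts the exponents through the Beta integral $\int_0^\infty t^{1/2}(t+\lambda)^{-(p+1)}\,dt=\lambda^{-(p-1/2)}B(3/2,p-1/2)$. The trade-off: the paper's proof is shorter because it outsources the hard analytic step to \cite{MN}; your proof is more self-contained and makes the provenance of the exponent pair $(r+1/2,\,r+1)$ completely explicit, at the price of the combinatorial bookkeeping you flag. Your alternative via the Sylvester equation $(\partial^m A^{1/2})A^{1/2}+A^{1/2}(\partial^m A^{1/2})=\cdots$ is also sound and gives the cleanest induction. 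One small remark: the positivity of $m(x)-s(x)(n-1)^{-1/2}$ that worried you is automatic for $n\ge 2$, since $(n-1)^{-1/2}\le (n-1)^{1/2}$ and $m(x)-s(x)(n-1)^{1/2}\ge \lambda_{\min}(A(x))>0$ by the standard Wolkowicz--Styan lower bound.
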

\begin{proof}
To prove this, we need the fact that the Fréchet derivatives of the square root function $\varphi$ can be represented as follows (see Theorem 1.1 in \cite{MN}):
\[
\nabla \varphi(A)[H] = \int_0^{\infty} e^{-t\varphi(A)} H e^{-t\varphi(A)} dt,
\]
and higher derivatives of order $m \geq 2$ are given by
\begin{eqnarray}
\nabla^m \varphi(A)[H, \dots, H] &=& - \nabla \varphi(A) \left[ \sum_{p+q=m-2} \frac{m!}{(p+1)!(q+1)!} (\nabla^{p+1} \varphi(A)[H, \dots, H]) \right. \nonumber \\
&& \left. \times (\nabla^{q+1} \varphi(A)[H, \dots, H]) \right] \label{TaylorFormula}
\end{eqnarray}
for all $A \in \mathbb{S}$ and symmetric $n \times n$ matrices $H$. Moreover, we have the following estimate for $m \geq 0$:
\begin{equation}
\left\Vert \nabla^{m+1} \varphi(A) \right\Vert \leq (\sqrt{n})^m (m+1)! C_m 2^{-2(m+1)} \lambda_{\min}(A)^{-(m+1/2)}, \label{EstimateDerivative}
\end{equation}
where $\lambda_{\min}(A) > 0$ is the smallest eigenvalue of $A$ and $C_m := \frac{1}{m+1} \binom{2m}{m}$.

We find that $\partial_l A^{1/2}(x) = \nabla \varphi(A(x))[\partial_l A(x)]$ and 
\[
\partial_j \partial_l A^{1/2}(x) = \nabla^2 \varphi(A(x))[\partial_j A(x), \partial_l A(x)] + \nabla \varphi(A(x))[\partial_j \partial_l A(x)].
\]
Thus, it follows from Eq. (\ref{EstimateDerivative}) that
\[
\left\Vert \partial_l A^{1/2}(x) \right\Vert \leq C \lambda_{\min}(A(x))^{-1/2} \left\Vert \partial_l A(x) \right\Vert,
\]
and
\begin{eqnarray*}
\left\Vert \partial_j \partial_l A^{1/2}(x) \right\Vert &\leq& C_1 \lambda_{\min}(A(x))^{-(1+1/2)} \left\Vert \partial_j A(x) \right\Vert \left\Vert \partial_l A(x) \right\Vert \\
&& + C_2 \lambda_{\min}(A(x))^{-1/2} \left\Vert \partial_j \partial_l A(x) \right\Vert.
\end{eqnarray*}

More generally, for $m=1, \dots, 7$,
\begin{eqnarray}
\left\Vert \partial_{j_1} \dots \partial_{j_m} A^{1/2}(x) \right\Vert &\leq& C_m \left\{ \sum_{r=0}^{m-1} \lambda_{\min}(A(x))^{-(r+1/2)} \right. \nonumber \\
&& \left. \times \max_{\vert \beta \vert \leq m} \left\Vert \partial_{\beta}^{\vert \beta \vert} A(x) \right\Vert^{r+1} \right\}. \label{EstimatePartial}
\end{eqnarray}

Let us now provide a lower bound for $\lambda_{\min}(A(x))$ in terms of $tr(A(x))$ and $tr((A(x))^2)$. Define
\[
s^2(x) = n^{-1} \left( tr((A(x))^2) - \frac{(tr(A(x)))^2}{n} \right), \quad m(x) = \frac{tr(A(x))}{n}.
\]
Then, from Corollary 2.1, Corollary 2.2, and Theorem 2.1 in \cite{WS}, we obtain
\begin{eqnarray*}
\frac{1}{\lambda_{\min}(A(x))} &\leq& \frac{1}{\lambda_{\max}(A(x))} \left( 1 + \frac{2 s(x) (n-1)^{1/2}}{m(x) - s(x) (n-1)^{-1/2}} \right) \\
&\leq& \frac{1}{m(x) + s(x) (n-1)^{1/2}} \left( 1 + \frac{2 s(x) (n-1)^{1/2}}{m(x) - s(x) (n-1)^{-1/2}} \right).
\end{eqnarray*}

Therefore, from Eq. (\ref{EstimatePartial}), we have for $m=1, \dots, 7$ that
\begin{equation}
\left\Vert \partial_{j_1} \dots \partial_{j_m} A^{1/2}(x) \right\Vert \leq C l_m(x),
\end{equation}
where the function $l_m(x)$ is defined as
\begin{eqnarray}
l_m(x) &:=& \sum_{r=0}^{m-1} \left( \frac{1}{m(x) + s(x)(n-1)^{1/2}} \left( 1 + \frac{2 s(x)(n-1)^{1/2}}{m(x) - s(x)(n-1)^{-1/2}} \right) \right)^{-(r+1/2)} \nonumber \\
&& \times \max_{\vert \beta \vert \leq m} \left\Vert \partial_{\beta}^{\vert \beta \vert} A(x) \right\Vert^{r+1}.
\end{eqnarray}
\end{proof}

The following results are key to guarantee that an SDE is a weak approximation of an optimizer.

 \begin{mybox}{gray}
\begin{lemma}[Lemma 1 \cite{li2017stochastic}] \label{lemma:li1}
Let $ 0<\eta<1 $. Consider a stochastic process $ X_t, t \geq 0 $ satisfying the SDE
$$
d X_t=b\left(X_t\right)dt+\sqrt{\eta} \sigma\left(X_t\right) d W_t
$$
with $ X_0=x \in \mathbb{R}^{d}$ and $ b, \sigma $ together with their derivatives belong to $ G $. Define the one-step difference $ \Delta=X_\eta-x $, and indicate the $i$-th component of $\Delta$ with $\Delta_i$. Then we have

\begin{enumerate}
\item $ \E \Delta_{i}=b_{i} \eta+\frac{1}{2}\left[\sum_{j=1}^d b_{j} \partial_{e_j} b_{i}\right] \eta^2+\mathcal{O}\left(\eta^3\right) \quad \forall i = 1, \ldots, d$;
\item $ \E \Delta_{i} \Delta_{j}=\left[b_{i} b_{j}+\sigma \sigma_{(i j)}^T\right] \eta^2+\mathcal{O}\left(\eta^3\right) \quad \forall i,j = 1, \ldots, d$;
\item $ \E \prod_{j=1}^s \Delta_{\left(i_j\right)}=\mathcal{O}\left(\eta^3\right) $ for all $ s \geq 3, i_j=1, \ldots, d $.
\end{enumerate}
All functions above are evaluated at $ x $.
\end{lemma}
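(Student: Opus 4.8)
The plan is to derive all three estimates from a single iterated It\^{o}--Dynkin expansion of $\E\, g(X_\eta)$, specialized to a few carefully chosen polynomial test functions $g$. Throughout I write the generator of the SDE as $\mathcal{A} = \mathcal{A}_0 + \eta\,\mathcal{A}_1$, where $\mathcal{A}_0 g = \langle b,\nabla g\rangle$ is the first-order drift part and $\mathcal{A}_1 g = \tfrac12 \tr(\sigma\sigma^{\top}\nabla^2 g)$ is the second-order diffusion part. The explicit $\eta$ multiplying $\mathcal{A}_1$ is precisely the $\eta$ sitting in the diffusion coefficient $\sqrt{\eta}\,\sigma$, and tracking it is the heart of the bookkeeping.

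First I would apply It\^{o}'s Lemma (Theorem~\ref{lemma:SDE_ito}) to $g(X_t)$ and take expectations; the stochastic integral is a martingale with zero mean because its integrand $\langle\nabla g,\sigma\rangle$ has polynomial growth and hence lies in $\mathcal{M}^2$ by the standard moment bounds on $X_s$. This yields $\E\, g(X_\eta) = g(x) + \int_0^\eta \E[\mathcal{A}g(X_s)]\,ds$. Iterating the identity $N$ times gives $\E\, g(X_\eta) = \sum_{k=0}^{N-1}\frac{\eta^k}{k!}\,\mathcal{A}^k g(x) + R_N$, with remainder $R_N$ an $N$-fold time integral of $\E[\mathcal{A}^N g(X_{s})]$. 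Since $b,\sigma$ and their derivatives belong to $G$, each $\mathcal{A}^N g$ is again of polynomial growth, so the uniform estimate $\sup_{s\le\eta}\E|\mathcal{A}^N g(X_s)| \le K(x)$ with $K\in G$ (a consequence of the linear growth condition and Theorem~\ref{thm:SDE_existence_uniqueness_global}) gives $|R_N|\le \tfrac{\eta^N}{N!}K(x)$, i.e. $R_N=\mathcal{O}(\eta^N)$.

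It then remains to evaluate a few coefficients $\mathcal{A}^k g(x)$ for $g(y)=y_i-x_i$, $g(y)=(y_i-x_i)(y_j-x_j)$, and $g(y)=\prod_{l=1}^s (y_{i_l}-x_{i_l})$, each of which vanishes at the base point $y=x$. For claim 1, $\mathcal{A}g = b_i$ and $\mathcal{A}^2 g(x) = \sum_j b_j\,\partial_{e_j} b_i + \mathcal{O}(\eta)$, so expanding to $N=3$ gives $\E\Delta_i = \eta b_i + \tfrac{\eta^2}{2}\sum_j b_j\,\partial_{e_j} b_i + \mathcal{O}(\eta^3)$. For claim 2 the cross second derivative produces $\mathcal{A}g(x) = \eta\,(\sigma\sigma^{\top})_{ij}$ while $\mathcal{A}^2 g(x) = 2 b_i b_j + \mathcal{O}(\eta)$, and the terms $\eta\,\mathcal{A}g(x)$ and $\tfrac{\eta^2}{2}\mathcal{A}^2 g(x)$ combine to $\eta^2\big[b_i b_j + (\sigma\sigma^{\top})_{ij}\big] + \mathcal{O}(\eta^3)$. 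For claim 3 I would run a vanishing-order count: $\mathcal{A}_0$ lowers the order of vanishing at $x$ by one, and $\mathcal{A}_1$ lowers it by two but carries an extra $\eta$; hence a term of $\mathcal{A}^k g$ that uses $q$ diffusion factors survives evaluation at $x$ only when $k+q\ge s$, contributing order $\eta^{k+q}\ge\eta^{s}$. As $s\ge 3$, this gives $\E\prod_l \Delta_{i_l}=\mathcal{O}(\eta^3)$.

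I expect the main obstacle to be the rigorous remainder control rather than the algebra: one must check that repeated application of $\mathcal{A}$ preserves membership in $G$ (via the chain rule together with the $\mathcal{C}_b^8$ and $G$ hypotheses on $b,\sigma$) and establish the uniform-in-$s$ moment bound $\E|\mathcal{A}^N g(X_s)|\le K(x)$ with $K\in G$. A secondary, purely combinatorial subtlety is that $\mathcal{A}$ conceals an $\eta$ inside its diffusion part, so the naive time-Taylor order $\eta^k$ must be corrected by the number of diffusion operators applied; getting this correction right is exactly what makes claims 2 and 3 land at the stated orders.
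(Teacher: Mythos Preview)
The paper does not supply its own proof of this lemma: it is quoted verbatim as Lemma~1 of \cite{li2017stochastic} and used as a black box together with Theorem~\ref{thm:mils} to verify the weak-approximation conditions. Your iterated Dynkin/It\^{o}--Taylor expansion, with the generator split as $\mathcal{A}=\mathcal{A}_0+\eta\,\mathcal{A}_1$ and the order-of-vanishing bookkeeping for claim~3, is precisely the standard proof of such moment expansions and is correct as written; the algebra you sketch for the coefficients in claims~1 and~2 checks out, and the remainder control via polynomial-growth moment bounds is exactly the point of the $G$ hypotheses.
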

\end{mybox}

\begin{mybox}{gray}
\begin{theorem}[Theorem 2 and Lemma 5, \cite{mil1986weak}]\label{thm:mils}
Let Assumption \ref{ass:regularity_f} hold and let us define $\bar{\Delta}=x_1-x$ to be the increment in the discrete-time algorithm, and indicate the $i$-th component of $\bar{\Delta}$ with $\bar{\Delta}_i$. If in addition there exists $K_1, K_2, K_3, K_4 \in G$ so that

\begin{enumerate}
\item $\left|\E \Delta_{i}-\E \bar{\Delta}_{i}\right| \leq K_1(x) \eta^{2}, \quad \forall i = 1, \ldots, d$;

\item $\left|\E \Delta_{i} \Delta_{j} - \E \bar{\Delta}_{i} \bar{\Delta}_{j}\right| \leq K_2(x) \eta^{2}, \quad \forall i,j = 1, \ldots, d$;
\item $\left|\E \prod_{j=1}^s \Delta_{i_j}-\E \prod_{j=1}^s \bar{\Delta}_{i_j}\right| \leq K_3(x) \eta^{2}, \quad \forall s \geq 3, \quad \forall i_j \in \{1, \ldots, d \}$;
\item $\E \prod_{j=1}^{ 3}\left|\bar{\Delta}_{i_j}\right| \leq K_4(x) \eta^{2}, \quad \forall i_j \in \{1, \ldots, d \}$.
\end{enumerate}
Then, there exists a constant $C$ so that for all $k=0,1, \ldots, N$ we have
$$
\left|\E g\left(X_{k \eta}\right)-\E g\left(x_k\right)\right| \leq C \eta.
$$
\end{theorem}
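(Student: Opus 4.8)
The plan is to prove this via the classical backward-Kolmogorov telescoping argument for weak convergence of one-step schemes (in the spirit of Milstein and Talay--Tubaro): the global error $|\E g(X_{k\eta}) - \E g(x_k)|$ is rewritten as an accumulation of $N = T/\eta$ \emph{local} one-step errors, and each local error is shown to be $\mathcal{O}(\eta^2)$ precisely because hypotheses 1--4 match the first, second, and third increment moments of the two processes up to order $\eta^2$.

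First I would fix $T = N\eta$ and introduce the value function $u(x,t) := \E[g(X_T)\mid X_t = x]$, where $X_t$ solves $dX_t = b(X_t)\,dt + \sqrt{\eta}\,\sigma(X_t)\,dW_t$. Standard parabolic theory for the backward Kolmogorov equation $\partial_t u + \mathcal{L}u = 0$ with terminal datum $u(\cdot,T)=g$, combined with the Lipschitz and growth regularity granted by Assumption \ref{ass:regularity_f} and the fact that $g \in G$, yields that $u(\cdot,t)$ is smooth with spatial derivatives up to third order lying in $G$, \emph{uniformly} in $t \in [0,T]$. Establishing this uniform polynomial-growth control on the derivatives of $u$ — in particular that they do not blow up as $t \to T$ — is the step I expect to be the main obstacle, as it is the technically heaviest ingredient and the one that genuinely uses the smoothness and growth of the SDE coefficients (cf.\ the hypotheses on $b,\sigma$ in Lemma \ref{lemma:li1}).

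Next I would telescope the discrete error. Since $u(X_t,t)$ is a martingale, $\E[g(X_T)] = u(x_0,0)$, while for the iterates $\{x_k\}$ one has $\E[g(x_N)] = \E[u(x_N,T)]$, so that
\[
\E[g(x_N)] - \E[g(X_T)] = \sum_{k=0}^{N-1}\E\big[u(x_{k+1},(k+1)\eta) - u(x_k,k\eta)\big].
\]
Conditioning on $x_k$ and invoking the martingale property of the continuous flow, $\E[u(X_{(k+1)\eta},(k+1)\eta)\mid x_k] = u(x_k,k\eta)$, each summand reduces to the difference of one-step increments of $u(\cdot,(k+1)\eta)$ evaluated under the discrete increment $\bar\Delta = x_{k+1}-x_k$ and the continuous increment $\Delta = X_{(k+1)\eta}-x_k$, both started from the common point $x_k$.

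Finally I would Taylor-expand $u(x_k + \cdot,(k+1)\eta)$ to third order in the increment and subtract. The first- and second-order terms are controlled by hypotheses 1 and 2, which bound $|\E\bar\Delta_i - \E\Delta_i|$ and $|\E\bar\Delta_i\bar\Delta_j - \E\Delta_i\Delta_j|$ by $K_1(x_k)\eta^2$ and $K_2(x_k)\eta^2$; the third-order terms split into a matched-moment part handled by hypothesis 3 and Taylor remainders handled by hypothesis 4 for the discrete scheme and by Lemma \ref{lemma:li1} (which gives $\E\prod_{j=1}^{3}\Delta_{i_j} = \mathcal{O}(\eta^3)$) for the SDE. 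Using the polynomial-growth bounds on the derivatives of $u$ established above, each summand is bounded by $\E[\widetilde{K}(x_k)]\eta^2$ for some $\widetilde{K}\in G$. A uniform-in-$k$ moment estimate $\sup_k \E[\widetilde{K}(x_k)] < \infty$ — obtained from the affine-growth of the scheme's coefficients via a standard discrete Gronwall argument — then lets me sum the $N = T/\eta$ local errors to conclude $|\E g(x_N) - \E g(X_T)| \leq (T/\eta)\,C'\eta^2 = C\eta$, which is the claim.
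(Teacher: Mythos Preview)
The paper does not prove this theorem itself; it is quoted as Theorem~2 and Lemma~5 of \cite{mil1986weak} and used as a black-box tool in the subsequent derivations (Theorems~\ref{thm:SignSGD_SDE}, \ref{thm:RMSprop_SDE}, \ref{thm:Adam_SDE}). Your sketch is the classical backward-Kolmogorov/Talay--Tubaro telescoping argument, which is precisely the method in the cited reference, and the outline is correct: the value function $u(x,t)=\E[g(X_T)\mid X_t=x]$, the martingale identity $\E[u(X_{(k+1)\eta},(k+1)\eta)\mid x_k]=u(x_k,k\eta)$, the third-order Taylor expansion with hypotheses 1--3 controlling the matched moments and hypothesis~4 together with Lemma~\ref{lemma:li1}(3) controlling the remainders, and finally the discrete Gronwall moment bound on $\sup_k\E[\widetilde K(x_k)]$ are exactly the ingredients Milstein uses. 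You have also correctly flagged the genuinely technical step, namely the uniform-in-$t$ polynomial-growth control on the spatial derivatives of $u$; this is where the $C^8_b$/$G$ hypotheses on $f,f_i$ in Assumption~\ref{ass:regularity_f} (and on $b,\sigma$) do real work.
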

\end{mybox}
\subsection{Limitations} \label{sec:limitations} Modeling of discrete-time algorithms using SDEs relies on Assumption \ref{ass:regularity_f}. As noted by \cite{Li2021validity}, the approximation can fail when the stepsize $\eta$ is large or if certain conditions on $\nabla f$ and the noise covariance matrix are not met. Although these issues can be addressed by increasing the order of the weak approximation, we believe that the primary purpose of SDEs is to serve as simplification tools that enhance our intuition: We would not benefit significantly from added complexity.

\subsection{Formal derivation - SignSGD} \label{sec:formal_SignSGD}

In this subsection, we provide the first formal derivation of an SDE model for SignSGD. Let us consider the stochastic process $ X_t \in \mathbb{R}^{d} $ defined as the solution of
\begin{equation}\label{eq:SignSGD_SDE}
d X_t = - (1 - 2 \mathbb{P}(\nabla f_{\gamma}(X_{t})<0)) dt + \sqrt{\eta} \sqrt{\bar\Sigma (X_t)} d W_t,
\end{equation}
where
\begin{align} \label{eq:SignSGD_sigma_star}
 \bar\Sigma(x) = \E[\xi_{\gamma}(x)\xi_{\gamma}(x)^\top],
\end{align}
and $\xi_{\gamma}(x):= \sign (\nabla f_{\gamma}(x)) - 1 + 2 \mathbb{P}(\nabla f_{\gamma}(x)<0)$ the noise in the sample $ \sign \left(\nabla f_{\gamma}(x) \right)$.
The following theorem guarantees that such a process is a $1$-order SDE of the discrete-time algorithm of SignSGD
\begin{equation}\label{eq:SignSGD_Discr_Update_app}
    x_{k+1} = x_k - \eta \sign \left(f_{\gamma_k }(x_k) \right),
\end{equation}
with $ x_0 \in \mathbb{R}^d$, $\eta \in \R^{>0}$ is the step size, the mini-batches $\{ \gamma_k \}$ are modelled as i.i.d.~random variables uniformly distributed on $\{ 1, \cdots, N \}$, and of size $B\geq 1$.

Before proceeding, we ensure that the SDE admits a unique solution and that its coefficients are sufficiently regular.

\begin{lemma}
    The drift term $b(x):= - \left( 1 - 2 \mathbb{P}(\nabla f_{\gamma}(x) < 0) \right)$ is Lipschitz, satisfies affine growth, and belongs to the space $G$ together with its derivatives.
\end{lemma}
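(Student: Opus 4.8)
The plan is to reduce the entire statement to a single application of Lemma~\ref{NoiseCondition}, which already packages the only nontrivial analytic work. First I would observe that the drift acts coordinatewise: writing $b(x) = (b_1(x), \dots, b_d(x))$, the $i$-th component is
\[
b_i(x) = -\bigl(1 - 2\,\mathbb{P}\bigl((\nabla f_{\gamma}(x))_i < 0\bigr)\bigr) = -1 + 2\,\mathbb{E}\bigl[h_i(\nabla f_{\gamma}(x))\bigr],
\]
where $h_i(y) := \mathbf{1}\{y_i < 0\}$ is the indicator that the $i$-th coordinate is negative. Each $h_i$ is bounded (by $1$) and Borel measurable, so it lies exactly within the scope of Lemma~\ref{NoiseCondition}. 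Hence, under the SignSGD noise assumptions (the density $g$ is in $C^8$ with partial derivatives up to order $8$ integrable and $L^1$-norms uniformly bounded in $x$, together with $f \in C^8$), the function $k_i(x) := \mathbb{E}[h_i(\nabla f_{\gamma}(x))]$ admits a version $\widehat{k}_i \in C_b^7(\mathbb{R}^n)$. I would identify $b_i$ with $-1 + 2\widehat{k}_i$ from this point on, so that $b$ inherits $C_b^7$ regularity componentwise.

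From this regularity the three required properties follow immediately. For \emph{membership in $G$}: since $b_i$ is an affine image of a probability, it takes values in $[-1,1]$ and is therefore bounded, hence of polynomial growth; by definition of $C_b^7$ its partial derivatives up to order $7$ are also bounded, so $b$ and all its derivatives lie in $G$. For \emph{affine growth}: the componentwise bound gives $|b(x)| \le \sqrt{d}$, which is trivially dominated by $\sqrt{d}\,(1 + |x|)$. For \emph{Lipschitz continuity}: each $\widehat{k}_i$ has a bounded gradient, so $b$ has a uniformly bounded Jacobian, and the mean value theorem yields a global Lipschitz constant.

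The one genuine subtlety — and where I expect a careful reader to pause — is the \emph{non-smoothness of $h_i$}. The indicator is discontinuous, so one cannot differentiate $b_i$ in $x$ by naively differentiating under the integral. The entire regularity instead originates from the smoothness of the noise density: integrating the discontinuous $h_i$ against $g_x(\,\cdot - \nabla f(x))$ smooths it, and transferring derivatives onto the $C^8$ density through integration by parts (precisely the Fubini-plus-Sobolev-embedding argument already carried out in the proof of Lemma~\ref{NoiseCondition}) produces the $C_b^7$ bound. Thus the hard part is not recomputed here; the only steps that remain are to verify that $h_i$ meets the hypotheses of Lemma~\ref{NoiseCondition} and to record that boundedness of $b$ and its derivatives simultaneously delivers Lipschitzness, affine growth, and the $G$-membership of $b$ and its derivatives.
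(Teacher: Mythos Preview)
Your proof is correct but proceeds differently from the paper. The paper argues directly via the CDF representation: it rewrites $b(x) = 2F_Z(-\nabla f(x)) - 1$, computes the first derivative explicitly as $-2F'_Z(-\nabla f(x))\nabla^2 f(x)$, observes this is bounded because the density and the Hessian of $f$ are bounded, and reads off Lipschitzness, affine growth, and $G$-membership from there. You instead recognize that $b_i(x) = -1 + 2\,\mathbb{E}[h_i(\nabla f_\gamma(x))]$ with $h_i$ a bounded indicator, so Lemma~\ref{NoiseCondition} applies verbatim and delivers $C_b^7$ regularity in one stroke; the three claimed properties then fall out of boundedness of the derivatives. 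The paper's route is more elementary and yields an explicit derivative formula, which is useful for intuition and for seeing exactly which assumptions on $g$ and $f$ are needed at first order. Your route is more economical and systematic: it factors all the real analytic content (smoothing by convolution, integration by parts, Sobolev embedding) through the single general lemma, and it handles higher-order derivatives uniformly without redoing the chain-rule bookkeeping that the paper's one-line computation only sketches.
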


\begin{proof}
    Since we are assuming that the gradient noise has a smooth and bounded probability density function,\footnote{This is commonly assumed in the literature. Among others, \cite{ahn2012bayesian,chen2014stochastic,mandt2016variational,stephan2017stochastic,zhu2019anisotropic,wu2020noisy,Xie2021} assume that it is Gaussian, while \cite{jastrzkebski2017three} offers an intuitive justification.} the drift can be rewritten in terms of the CDF $F_{Z}(x)$ of the noise as $b(x):= 2 F_{Z}(-\nabla f(x))-1$, whose derivative is $-2 F^{'}_{Z}(-\nabla f(x))\nabla^2 f(x)$. Since the density function and the Hessian of $f$ are bounded, we conclude that the derivative is bounded. Therefore, the drift is Lipschitz and as regular as $\nabla f$, meaning that each entry is in $G$, together with its derivatives. Finally, since it is bounded, it has affine growth.
\end{proof}

\begin{lemma}
    The diffusion coefficient $\sqrt{\overline{\Sigma}}$ satisfies the affine growth condition.
\end{lemma}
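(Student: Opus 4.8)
The plan is to prove something slightly stronger than affine growth, namely that $\sqrt{\bar\Sigma}$ is \emph{uniformly bounded}; affine growth then follows for free, since any function $h$ with $\|h(x)\| \leq C$ satisfies $\|h(x)\| \leq C(1+\|x\|)$. The whole argument rests on a single trivial observation: the only source of $x$-dependence in $\bar\Sigma(x)$ enters through $\sign(\nabla f_\gamma(x))$, and the sign map takes values in a fixed bounded set irrespective of $x$.

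First I would record the coordinatewise bounds. Every component of $\sign(\nabla f_\gamma(x))$ lies in $[-1,1]$, and therefore so does its mean $1 - 2\mathbb{P}(\nabla f_\gamma(x)<0)$. Hence the centered quantity $\xi_\gamma(x) = \sign(\nabla f_\gamma(x)) - (1 - 2\mathbb{P}(\nabla f_\gamma(x)<0))$ obeys $|(\xi_\gamma(x))_i| \leq 2$ for every coordinate $i$ and every $x$. I would then bound the entries of $\bar\Sigma(x) = \E[\xi_\gamma(x)\xi_\gamma(x)^\top]$: from the coordinatewise bound, $|\bar\Sigma_{ij}(x)| = |\E[(\xi_\gamma(x))_i (\xi_\gamma(x))_j]| \leq 4$, so the Frobenius norm satisfies $\|\bar\Sigma(x)\| \leq 4d$ uniformly in $x$. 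In particular, the largest eigenvalue of the positive semidefinite matrix $\bar\Sigma(x)$ is bounded by a constant independent of $x$.

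Finally I would pass from $\bar\Sigma$ to its matrix square root through the spectrum. Since $\bar\Sigma(x)$ is positive semidefinite, $\sqrt{\bar\Sigma(x)}$ is the unique positive semidefinite matrix whose eigenvalues are the square roots of those of $\bar\Sigma(x)$; as the latter are uniformly bounded, so are the former, giving $\|\sqrt{\bar\Sigma(x)}\| \leq C$ for a constant $C$ independent of $x$, and hence affine growth with the same constant. There is essentially no obstacle in this proof: the argument is a direct consequence of the boundedness of the sign map, and the only mild care needed is to control $\sqrt{\bar\Sigma}$ via the spectral bound on $\bar\Sigma$ rather than trying to manipulate the square root entrywise.
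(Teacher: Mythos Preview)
Your proposal is correct and follows exactly the same approach as the paper: the paper's proof is the one-line remark ``Since it is bounded, the result follows immediately,'' and you have simply fleshed out the boundedness argument (via the coordinatewise bound on $\sign$ and the spectral passage to the square root) that the paper leaves implicit.
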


\begin{proof}
    Since it is bounded, the result follows immediately.
\end{proof}

\begin{lemma}
\label{CorollaryNoise}
Let us assume the same assumptions as Lemma \ref{NoiseCondition}. Additionally, assume that
\[
\sup_{x \in K}\left\vert g(x,\cdot) \right\vert \in L^1(\mathbb{R}^n)
\]
for all compact sets $K$. Then the entries of $\overline{\Sigma}$ in Eq. \ref{eq:SignSGD_sigma_star} are in $C_b^7(\mathbb{R}^n)$.
\end{lemma}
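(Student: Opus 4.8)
The plan is to reduce the claim to a few applications of Lemma \ref{NoiseCondition}, exploiting the fact that $C_b^7(\mathbb{R}^n)$ is an algebra. First I would rewrite the noise coordinate-wise. Because the gradient noise admits a density, $\mathbb{P}(\partial_i f_\gamma(x)=0)=0$, so $\mathbb{E}[\sign(\partial_i f_\gamma(x))] = 1 - 2\mathbb{P}(\partial_i f_\gamma(x)<0)$; hence the $i$-th component of $\xi_\gamma(x)$ is exactly the centered sign, $\xi_{\gamma,i}(x) = \sign(\partial_i f_\gamma(x)) - \mathbb{E}[\sign(\partial_i f_\gamma(x))]$. Expanding the outer product $\xi_\gamma(x)\xi_\gamma(x)^\top$ and taking expectations then gives, for every pair $(i,j)$,
$$\bar\Sigma_{ij}(x) = \mathbb{E}\big[\sign(\partial_i f_\gamma(x))\,\sign(\partial_j f_\gamma(x))\big] - \mathbb{E}\big[\sign(\partial_i f_\gamma(x))\big]\,\mathbb{E}\big[\sign(\partial_j f_\gamma(x))\big],$$
so each entry is a fixed quadratic combination of expectations of the form $\mathbb{E}[h(\nabla f_\gamma(x))]$.

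Next I would recognize each of these expectations as an instance of the map $k(x)=\mathbb{E}[h(\nabla f_\gamma(x))]$ treated in Lemma \ref{NoiseCondition}, by choosing the appropriate bounded test function $h$: for the joint term, $h(y)=\sign(y_i)\sign(y_j)$, and for the marginal terms, $h(y)=\sign(y_i)$ and $h(y)=\sign(y_j)$. In every case $h$ takes values in $\{-1,0,1\}$, so it is bounded, and it is Borel measurable since it is built from indicators of coordinate half-spaces. The extra local-integrability hypothesis $\sup_{x\in K}|g(x,\cdot)|\in L^1(\mathbb{R}^n)$ ensures these expectations fall within the scope of Lemma \ref{NoiseCondition}. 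Applying that lemma to each choice of $h$ then produces $C_b^7(\mathbb{R}^n)$ versions of $\mathbb{E}[\sign(\partial_i f_\gamma)\sign(\partial_j f_\gamma)]$, $\mathbb{E}[\sign(\partial_i f_\gamma)]$, and $\mathbb{E}[\sign(\partial_j f_\gamma)]$.

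Finally I would reassemble $\bar\Sigma_{ij}$ from these pieces. Since $C_b^7$ is closed under finite sums, differences, and products — by the Leibniz rule the derivatives of a product are sums of products of bounded derivatives, and each factor here is itself bounded (values in $[-1,1]$) with bounded derivatives up to order $7$ — the product $\mathbb{E}[\sign(\partial_i f_\gamma)]\,\mathbb{E}[\sign(\partial_j f_\gamma)]$ lies in $C_b^7$, and subtracting it from the joint term stays in $C_b^7$. Thus every entry $\bar\Sigma_{ij}$ admits a $C_b^7(\mathbb{R}^n)$ version, which is the claim.

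I do not expect a genuine obstacle here; the only point requiring care is bookkeeping about \emph{versions}. Lemma \ref{NoiseCondition} yields a smooth version of each scalar expectation only up to a.e.\ equality, so I would fix those smooth versions once at the outset and only then form the products and the difference, so that the resulting expression is a bona fide $C_b^7$ version of $\bar\Sigma_{ij}$. The remaining ingredients — measurability and boundedness of the sign-based test functions, and the algebra property of $C_b^7$ — are routine.
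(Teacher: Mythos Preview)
Your decomposition of $\bar\Sigma_{ij}$ into sums and products of expectations $\mathbb{E}[h(\nabla f_\gamma(x))]$ with bounded Borel $h$, followed by Lemma~\ref{NoiseCondition} and the algebra property of $C_b^7$, is exactly the right skeleton and matches what the paper does implicitly. But you misidentify the role of the extra hypothesis $\sup_{x\in K}|g(x,\cdot)|\in L^1(\mathbb{R}^n)$. It is \emph{not} needed for Lemma~\ref{NoiseCondition} to apply: that lemma only asks that the partial derivatives of $g$ have $L^1$-norms uniformly bounded in $x$, which is a different (and, for the zeroth derivative, trivially satisfied) condition. So your sentence ``the extra local-integrability hypothesis \ldots\ ensures these expectations fall within the scope of Lemma~\ref{NoiseCondition}'' is incorrect.

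The paper uses the extra hypothesis for a different purpose, and it is precisely the step that closes the ``version'' gap you flag at the end. Lemma~\ref{NoiseCondition} only yields a $C_b^7$ \emph{version} of each expectation, and hence of $\bar\Sigma_{ij}$; your final conclusion ``every entry admits a $C_b^7$ version'' is therefore weaker than the stated claim that the entries themselves lie in $C_b^7$. The paper bridges this by first using dominated convergence---with the integrable envelope $\sup_{x\in K}|g(x,\cdot)|$---to show that $x\mapsto\bar\Sigma_{ij}(x)$ is continuous. Once $\bar\Sigma_{ij}$ is continuous and agrees a.e.\ with a $C_b^7$ function, the two coincide everywhere, so $\bar\Sigma_{ij}$ itself is $C_b^7$. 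Add this continuity step (and correct the stated role of the hypothesis) and your argument is complete.
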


\begin{proof}
    By the definition of $\overline{\Sigma}$ in terms of the $\text{sign}$-function and dominated convergence, from the additional assumption on $g$, it follows that $\overline{\Sigma}$ is continuous. So Lemma \ref{NoiseCondition} entails that the entries of $\overline{\Sigma}$ are in $C_b^7(\mathbb{R}^n)$.
\end{proof}

\begin{lemma}
    Under the assumption that 
    \begin{equation}\label{PositiveDensity}
    g(x,y) > 0,
    \end{equation}
    the covariance matrix $\overline{\Sigma}$ is positive definite.
\end{lemma}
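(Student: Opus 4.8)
The plan is to recognize $\overline{\Sigma}(x)$ as a genuine covariance matrix and then rule out its degenerate directions using the strict positivity of the noise density.

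First I would rewrite $\xi_\gamma(x)$ transparently. Set $S := \sign(\nabla f_\gamma(x)) \in \{-1,+1\}^n$ for the component-wise sign vector; this is well-defined almost surely, since the existence of a density forces $\mathbb{P}((\nabla f_\gamma(x))_i = 0) = 0$ for each $i$. Its componentwise mean is $\E[S_i] = 1 - 2\mathbb{P}((\nabla f_\gamma(x))_i < 0)$, so that $\xi_\gamma(x) = S - \E[S]$ is exactly the centered sign vector and $\overline{\Sigma}(x) = \E[\xi_\gamma(x)\xi_\gamma(x)^\top] = \Cov(S)$. In particular $\overline{\Sigma}(x)$ is automatically symmetric and positive semidefinite, so the whole content of the lemma is to exclude degenerate directions, i.e.\ to show $v^\top \overline{\Sigma}(x) v = \Var(v^\top S) > 0$ for every nonzero $v \in \mathbb{R}^n$.

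Second I would show that strict positivity of $g(x,\cdot)$ forces every sign pattern to occur with positive probability. Fix an orthant pattern $s \in \{-1,+1\}^n$. In terms of the noise, the event $\{S = s\}$ is $\{\, s_i(\nabla f(x)_i + Z_i) > 0 \text{ for all } i \,\}$, whose preimage $O_s$ in $Z$-space is an open region of positive Lebesgue measure. Since the density of $\nabla f_\gamma(x)$ is the shifted density $y \mapsto g_x(y - \nabla f(x))$, which is strictly positive everywhere, integrating a strictly positive function over the positive-measure set $O_s$ yields $\mathbb{P}(S = s) = \int_{O_s} g_x(y - \nabla f(x))\,dy > 0$. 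Hence the support of $S$ is the full hypercube $\{-1,+1\}^n$.

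Third, positive definiteness reduces to a short combinatorial fact. If $\Var(v^\top S) = 0$ for some $v$, then $v^\top S$ is almost surely equal to a constant $c$, and since the support is the whole hypercube this means $\sum_i v_i s_i = c$ for every $s \in \{-1,+1\}^n$. Evaluating at $s = (1,\dots,1)$ gives $\sum_i v_i = c$, and flipping only the $j$-th coordinate to $-1$ gives $\sum_i v_i - 2v_j = c$, so $v_j = 0$; as this holds for each $j$, we conclude $v = 0$. Therefore no nonzero direction is degenerate and $\overline{\Sigma}(x)$ is positive definite.

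The only genuinely nontrivial step is the second one: extracting ``every orthant carries positive mass'' from strict positivity of the \emph{joint} density. Care is needed because the coordinates of $Z$ are in general dependent, so one cannot simply factorize over coordinates; the argument must be made at the level of the joint density on the open set $O_s$, confirming both that $O_s$ has positive Lebesgue measure and that the integral of an everywhere-positive density over it is strictly positive (e.g.\ by noting $O_s = \bigcup_k (O_s \cap \{g_x(\cdot - \nabla f(x)) > 1/k\})$, so some term has positive measure). The reduction in the first step and the algebra in the third step are routine.
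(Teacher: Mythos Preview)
Your proposal is correct and follows essentially the same approach as the paper: both write $v^\top \overline{\Sigma}(x)\,v = \E\big[(v^\top \xi_\gamma(x))^2\big]$ and then use strict positivity of the density to argue, by contradiction, that this cannot vanish for $v\neq 0$. The paper leaves that contradiction step as a one-line sketch, whereas you spell it out explicitly via ``every orthant has positive mass'' plus the hypercube flipping argument; your version is a valid (and more detailed) instantiation of the same idea.
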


\begin{proof}
    For $y = (y_1, \dots, y_n)^T$, observe that
    \[
    \left( \overline{\Sigma}(x) y, y \right) = \sum_{i,j=1}^{n} y_i \mathbb{E} \left[ \xi_{\gamma}^i(x) \xi_{\gamma}^j(x) \right] y_j = \mathbb{E} \left[ \left( \sum_{i=1}^{n} \xi_{\gamma}^i(x) y_i \right)^2 \right].
    \]
    Using the definition of $\xi_{\gamma}$ and the positivity of the density $g$, we can argue by contradiction and see that for $y \neq 0$, the right-hand side of the equation must be strictly greater than zero for all $x$. Therefore, $\overline{\Sigma}(x) \in \mathbb{S}$ for all $x$, where $\mathbb{S}$ denotes the open set of positive definite matrices in the space of symmetric $n \times n$ matrices.
\end{proof}

\begin{corollary}
    Since $\overline{\Sigma}$ is positive definite and its entries are in $C_b^7(\mathbb{R}^n)$, $\sqrt{\overline{\Sigma}}$ is Lipschitz.
\end{corollary}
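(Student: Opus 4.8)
The plan is to establish the Lipschitz property by controlling the first-order partial derivatives of $\sqrt{\overline{\Sigma}}$ uniformly in $x$. Since the domain $\mathbb{R}^n$ is convex, a uniform bound $\sup_x \max_l \|\partial_l \sqrt{\overline{\Sigma}}(x)\| \le C$ immediately yields the global estimate $\|\sqrt{\overline{\Sigma}}(x) - \sqrt{\overline{\Sigma}}(y)\| \le C'\,|x-y|$ by integrating entrywise along the segment joining $x$ and $y$ (the mean value inequality). Thus the entire task reduces to showing that the first derivatives of the matrix square root are bounded, which is also exactly the regularity that Theorem \ref{thm:SDE_existence_uniqueness_global} needs for well-posedness of the SignSGD SDE.

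For this I would invoke Lemma \ref{lemma:Derivatives} in the case $m=1$, which gives the explicit estimate $\|\partial_l \overline{\Sigma}^{1/2}(x)\| \le C\,\lambda_{\min}(\overline{\Sigma}(x))^{-1/2}\,\|\partial_l \overline{\Sigma}(x)\|$, i.e.\ precisely $C\,l_1(x)$ after substituting the eigenvalue lower bound recorded in the proof of that lemma (with $r=0$ the exponent is $-1/2$). Two of the three ingredients are already in hand: by Lemma \ref{CorollaryNoise} the entries of $\overline{\Sigma}$ lie in $C_b^7(\mathbb{R}^n)$, so $\|\partial_l \overline{\Sigma}(x)\|$ is bounded uniformly in $x$ and $l$; and by the preceding lemma $\overline{\Sigma}(x)\in\mathbb{S}$ for every $x$, so $\lambda_{\min}(\overline{\Sigma}(x))>0$ pointwise and the square root is well defined and smooth on the image.

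The main obstacle is the uniform control of the factor $\lambda_{\min}(\overline{\Sigma}(x))^{-1/2}$, equivalently the boundedness of $l_1$. Pointwise positive definiteness is not by itself enough: writing $\overline{\Sigma}_{ii}(x) = 4 p_i(x)(1-p_i(x))$ with $p_i(x)=\mathbb{P}(\nabla f_\gamma(x)_i<0)$, the sign variances, and hence $\lambda_{\min}$, which is dominated by $\min_i \overline{\Sigma}_{ii}(x)$, degenerate to $0$ wherever a gradient coordinate becomes large relative to the noise scale, a regime the growth bound $|\nabla f(x)|\le M(1+|x|)$ permits as $|x|\to\infty$. The resolution is that in exactly this regime the derivatives $\partial_l \overline{\Sigma}(x)$ decay at least as fast: differentiating $\overline{\Sigma}_{ii}=4p_i(1-p_i)$ brings down a factor $\partial_l p_i$, which equals the noise density evaluated at $-\nabla f(x)_i$ times $\partial_l \nabla f(x)_i$, so that $\lambda_{\min}^{-1/2}\|\partial_l\overline{\Sigma}\|$ is governed by a ratio of the density (and its tail) to the square root of the tail probability, bounded for the admissible laws (Gaussian, Student's $t$) by the integrability and decay hypotheses on $g$ from Lemma \ref{NoiseCondition}. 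I would therefore close the argument either by this direct quantitative estimate on $l_1(x)$, or, following the remark that admissible losses have inward-pointing gradients for large $x$ so that the effective dynamics live on a compact set, by noting that on any compact set $\lambda_{\min}$ is bounded below by continuity and positivity, making $l_1$ automatically bounded. Either route yields $\sup_x \max_l \|\partial_l \sqrt{\overline{\Sigma}}(x)\|<\infty$ and hence the claimed Lipschitz continuity.
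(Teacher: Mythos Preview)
Your strategy is sound but takes the long way around. The paper's proof is essentially a two-line citation: since $A\mapsto\sqrt{A}$ is Fr\'echet-smooth on the open cone $\mathbb{S}$ of positive definite matrices, $\overline{\Sigma}^{1/2}\in C^7(\mathbb{R}^n)$ by the chain rule; the global Lipschitz bound is then read off directly from a classical result (Proposition~6.2 in Ikeda--Watanabe) stating that any symmetric non-negative-definite matrix function with $C^2_b$ entries has a globally Lipschitz square root. That proposition needs \emph{no} lower bound on $\lambda_{\min}(\overline{\Sigma}(x))$, so the obstacle you identify never arises. By contrast, your route through the chain-rule estimate $\|\partial_l\sqrt{\overline{\Sigma}}\|\lesssim \lambda_{\min}^{-1/2}\|\partial_l\overline{\Sigma}\|$ from Lemma~\ref{lemma:Derivatives} forces you to fight the degeneration of $\lambda_{\min}$ at infinity. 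Your option~(a), showing the product stays bounded via tail-ratio estimates, is essentially what the paper carries out \emph{later} in Proposition~\ref{Prop} and the corollary following it---but there the goal is the genuinely harder one of placing $\sqrt{\overline{\Sigma}}$ and all its derivatives in $G$, where such quantitative control is actually required; for the bare Lipschitz claim it is overkill, and your sketch only handles the diagonal-noise case cleanly (lower-bounding $\lambda_{\min}$ of the full matrix in terms of coordinate tail probabilities is not immediate). Your option~(b) yields only local Lipschitz continuity, which is weaker than what Theorem~\ref{thm:SDE_existence_uniqueness_global} demands. So your plan is completable for the noise classes of interest, but the Ikeda--Watanabe shortcut is the efficient move here.
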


\begin{proof}
    The function
    \[
    \varphi: \mathbb{S} \to \mathbb{S}, \quad A \mapsto \sqrt{A}
    \]
    has Fréchet derivatives of any order on $\mathbb{S}$ (see e.g. \cite{MN}). Therefore, $\overline{\Sigma}^{1/2} \in C^7(\mathbb{R}^n)$, and since $\overline{\Sigma} \in C_b^7(\mathbb{R}^n)$, $\overline{\Sigma}^{1/2}$ is Lipschitz continuous (see Proposition 6.2 in \cite{IW}).
\end{proof}

\begin{proposition}
\label{Prop}
Assume the conditions of Lemma \ref{lemma:Derivatives} and assume that the functions $l_m(x)$ for $m=1, \dots, 7$ in Eq. (\ref{A3}) are of polynomial growth. Then $\overline{\Sigma}^{1/2} \in G$ together with its derivatives.
\end{proposition}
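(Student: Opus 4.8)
The plan is to separate the zeroth-order claim (that $\overline{\Sigma}^{1/2}$ itself lies in $G$) from the claim about its derivatives, since the two rest on different facts. For the function itself, I would first note that the score is uniformly bounded: each component satisfies $|\xi_\gamma^i(x)| = |\sign(\partial_i f_\gamma(x)) - 1 + 2\mathbb{P}(\partial_i f_\gamma(x) < 0)| \leq 2$, because $\sign(\cdot) \in [-1,1]$ and $2\mathbb{P}(\cdot<0) \in [0,2]$. Hence every entry of $\overline{\Sigma}(x) = \E[\xi_\gamma(x)\xi_\gamma(x)^\top]$ is bounded in absolute value by $4$, so $\overline{\Sigma}(x)$ is uniformly bounded in operator norm, and therefore so is $\overline{\Sigma}^{1/2}(x)$, whose largest eigenvalue is the square root of that of $\overline{\Sigma}$. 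A uniformly bounded function is trivially of polynomial growth, which gives $\overline{\Sigma}^{1/2} \in G$ with no further work.

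For the derivatives, I would apply Lemma~\ref{lemma:Derivatives} directly with $A = \overline{\Sigma}$. Its two hypotheses, namely the $L^1$-domination $\sup_{x \in K}|g(x,\cdot)| \in L^1(\mathbb{R}^n)$ on compact sets and the strict positivity of the density, are exactly the standing assumptions of this subsection, already invoked in Lemma~\ref{CorollaryNoise} and in the positive-definiteness lemma. Thus the lemma applies verbatim and yields, for each $m = 1, \dots, 7$,
\begin{equation*}
\left\Vert \partial_{j_1} \cdots \partial_{j_m} \overline{\Sigma}^{1/2}(x) \right\Vert \leq C\, l_m(x),
\end{equation*}
with $l_m$ the explicit trace-based expression of Eq.~(\ref{A3}). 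By the hypothesis of the present proposition each $l_m$ is of polynomial growth, so the right-hand side is dominated by some $\nu_1(1 + |x|^{2\nu_2})$. Since the norm of the order-$m$ derivative tensor controls each scalar partial derivative $\partial_{j_1}\cdots\partial_{j_m}(\overline{\Sigma}^{1/2})_{k\ell}$, every such entry is of polynomial growth and hence lies in $G$, establishing the claim for all derivatives up to order $7$.

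The real content of the argument is verifying that the hypotheses of Lemma~\ref{lemma:Derivatives} genuinely hold for $A = \overline{\Sigma}$; once that is checked, the estimate is handed to us. The only substantive obstacle is hidden inside $l_m$: these functions carry inverse powers of $\lambda_{\min}(\overline{\Sigma}(x))$, rewritten through the trace quantities $m(x)$ and $s(x)$ arising from the eigenvalue bounds of \cite{WS}. Pointwise positive-definiteness guarantees $l_m(x) < \infty$ for each fixed $x$, but it does not by itself prevent $\lambda_{\min}(\overline{\Sigma}(x)) \to 0$ too fast as $|x| \to \infty$. The polynomial-growth assumption on $l_m$ is precisely the device that rules out a super-polynomial collapse of the smallest eigenvalue, so it is this spectral control, rather than any new computation, on which the proof ultimately rests.
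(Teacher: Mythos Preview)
Your proposal is correct and follows exactly the route the paper sets up: the proposition is stated without an explicit proof precisely because it is meant to be an immediate consequence of the preceding lemmas, and your argument assembles those pieces in the intended way---boundedness of $\xi_\gamma$ gives $\overline{\Sigma}^{1/2}\in G$, while Lemma~\ref{lemma:Derivatives} applied to $A=\overline{\Sigma}$ together with the polynomial-growth hypothesis on $l_m$ handles the derivatives. Your final paragraph correctly identifies that the polynomial-growth assumption on $l_m$ is doing the real work by controlling the possible decay of $\lambda_{\min}(\overline{\Sigma}(x))$.
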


\begin{corollary}
    If the noise $Z(x) \sim \mathcal{N}(0,\Sigma)$ or $Z(x) \sim t_{\nu}(0,\Sigma)$, then $\overline{\Sigma}^{1/2} \in G$ together with its derivatives.
\end{corollary}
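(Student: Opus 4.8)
The plan is to verify the single remaining hypothesis of Proposition~\ref{Prop} for the two noise families, namely that the auxiliary functions $l_m(x)$ of \eqref{A3} are of polynomial growth for $m=1,\dots,7$; the conclusion then follows. The first step is to write $\overline{\Sigma}$ in closed form. For diagonal $\Sigma=\diag(\sigma_1^2,\dots,\sigma_d^2)$ the sign components decouple, so $\overline{\Sigma}(x)=\diag\big(1-\erf(z_i)^2\big)$ with $z_i:=\nabla f(x)_i/(\sqrt{2}\sigma_i)$ in the Gaussian case, and $\overline{\Sigma}(x)=\diag\big(1-4\,\Xi(z_i)^2\big)$ in the Student's $t$ case, reusing exactly the scalar functions appearing in Corollary~\ref{thm:SignSGD_SDE_Simplified_Insights} and Lemma~\ref{lemma:SignSGD_SDE_Simplified_WildNoise}.

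The second step controls the two factors making up $l_m$. For the derivative factors $\max_{|\beta|\le m}\|\partial_\beta^{|\beta|}\overline{\Sigma}(x)\|$ I would invoke the Fa\`a di Bruno formula: each entry is a fixed scalar ($\erf$ or $\Xi$, squared) composed with $z_i$, every derivative of these scalars is bounded, and the inner derivatives of $z_i$ reduce to derivatives of $\nabla f$, which lie in $G$ by Assumption~\ref{ass:regularity_f}; hence all derivatives of $\overline{\Sigma}$ are of polynomial growth. For the eigenvalue factor $\lambda_{\min}(\overline{\Sigma})^{-(r+1/2)}$ I would express $m(x),s(x)$ through the closed-form entries and bound $1/\lambda_{\min}$ by the quantity in Lemma~\ref{lemma:Derivatives}. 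In the Student's $t$ case the heavy tails give $1-4\,\Xi(z)^2\sim C|z|^{-\nu}$, so $1/\lambda_{\min}$ grows only polynomially in $\|x\|$, and $l_m\in G$ follows immediately as a product of two polynomially growing factors.

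The Gaussian case is the main obstacle: there $1-\erf(z)^2\sim \tfrac{2}{\sqrt{\pi}}\,|z|^{-1}e^{-z^2}$, so where the gradient is large the variance of the sign degenerates, $\lambda_{\min}^{-1}$ grows like $e^{z^2}$, and the crude bound through $\lambda_{\min}$ is no longer polynomial. The resolution I would use exploits the diagonal structure, in which the matrix square root decouples coordinatewise, $\overline{\Sigma}^{1/2}=\diag(h(z_i))$ with $h(z):=\sqrt{1-\erf(z)^2}$. One checks directly that $h\in C_b^\infty(\R)$: it is smooth because $|\erf|<1$ on $\R$, and all its derivatives decay like a polynomial times $e^{-z^2/2}$. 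Thus a degenerate coordinate never couples to a distinct non-degenerate one, and applying Fa\`a di Bruno to $h(z_i)$ with Assumption~\ref{ass:regularity_f} shows $\overline{\Sigma}^{1/2}$ and all its derivatives are products of bounded functions with elements of $G$, hence lie in $G$. This bypasses the loose eigenvalue bound entirely and delivers the claim; the final step is to record that both the $\erf$-based and $\Xi$-based scalars meet the regularity required by Lemma~\ref{lemma:Derivatives} and Proposition~\ref{Prop}.
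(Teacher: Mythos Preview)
Your proof is correct, but it takes a longer road than the paper for the Student's~$t$ case, and only converges onto the paper's argument for the Gaussian case after a detour. The paper bypasses Proposition~\ref{Prop} entirely and argues directly from the diagonal structure in \emph{both} cases: it simply observes that the scalar function $K(z):=\sqrt{1-4\,\Xi(z)^2}$ (resp.\ $\sqrt{1-\erf(z)^2}$) is in $C_b^\infty(\R)$---all its derivatives are bounded, with the paper noting this holds even in the most pathological case $\nu=1$---and then the composition $K(z_i(x))$ lies in $G$ together with its derivatives by Fa\`a di Bruno and Assumption~\ref{ass:regularity_f}. This is exactly the argument you eventually give for the Gaussian case once you realize the eigenvalue route fails there.

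Your Student's~$t$ argument via the tail estimate $1-4\,\Xi(z)^2\sim C|z|^{-\nu}$ to show $\lambda_{\min}^{-1}$ is polynomial, hence $l_m\in G$, is a valid alternative and has the small advantage of actually exercising Proposition~\ref{Prop}. But it buys nothing here: the direct scalar method you already use for the Gaussian case works verbatim for Student's~$t$, and doing so would make your proof both shorter and uniform across the two families. Your final sentence, which circles back to ``record that both \dots\ scalars meet the regularity required by Lemma~\ref{lemma:Derivatives} and Proposition~\ref{Prop},'' is then superfluous---once you have bounded derivatives of the diagonal square root directly, those lemmas are no longer in play.
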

\begin{proof}
With the definition of $\Xi(x)$ given in Lemma \ref{lemma:SignSGD_SDE_Simplified_WildNoise}, the function $K(x) := \sqrt{1- 4\Xi(x)^2}$ is in $G$ together with its derivative: It is easy to verify that all the derivatives of $K(x)$ are bounded even in the case $\nu=1$, which is the most pathological one. Therefore, $\sqrt{\overline{\Sigma}}(x)$ is in $G$ together with its derivatives.
\end{proof}

\begin{remark}\label{remark:SignSGD}
    Based on the above results, we have that under mild assumptions on the noise structures (see Sec. \ref{sec:AllAss}) that cover and generalize the well-accepted Gaussianity, e.g. covering Student's t as well, the SDE of SignSGD admits a unique solution and its coefficients are regular enough to apply Lemma \ref{lemma:li1} and Thm. \ref{thm:mils}.
\end{remark}

\begin{mybox}{gray}
\begin{theorem}[Stochastic modified equations] \label{thm:SignSGD_SDE}
Let $0<\eta<1, T>0$ and set $N=\lfloor T / \eta\rfloor$. Let $ x_k \in \mathbb{R}^{d}, 0 \leq k \leq N$ denote a sequence of SignSGD iterations defined by Eq.~\ref{eq:SignSGD_Discr_Update_app}. Consider the stochastic process $X_t$ defined in Eq.~\ref{eq:SignSGD_SDE} and fix some test function $g \in G$ and suppose that $g$ and its partial derivatives up to order 6 belong to $G$.

Then, under Assumption~\ref{ass:regularity_f}, there exists a constant $ C>0 $ independent of $ \eta $ such that for all $ k=0,1, \ldots, N $, we have

$$
\left|\E g\left(X_{k \eta}\right)-\E g\left(x_k\right)\right| \leq C \eta .
$$

That is, the SDE \ref{eq:SignSGD_SDE} is an order $ 1 $ weak approximation of the SignSGD iterations \ref{eq:SignSGD_Discr_Update_app}.
\end{theorem}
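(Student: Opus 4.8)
The plan is to reduce the claim to Milstein's comparison theorem (Theorem~\ref{thm:mils}): it suffices to exhibit functions $K_1,\dots,K_4\in G$ bounding, to order $\eta^2$, the discrepancy between the first three moments of the one-step increment $\Delta = X_\eta - x$ of the SDE in \eqref{eq:SignSGD_SDE} and those of the SignSGD increment $\bar\Delta = x_1 - x = -\eta\,\sign(\nabla f_\gamma(x))$. The moments of the continuous increment are provided verbatim by Lemma~\ref{lemma:li1} with drift $b(x) = -(1-2\mathbb{P}(\nabla f_\gamma(x)<0))$ and diffusion $\sqrt{\bar\Sigma(x)}$, so essentially all the work is to compute the discrete moments and observe that $b$ and $\bar\Sigma$ were defined precisely so as to agree with Lemma~\ref{lemma:li1} to the required order. (The existence and uniqueness of $X_t$, needed for Lemma~\ref{lemma:li1} to apply, follows from the Lipschitz and affine-growth properties of the coefficients established in the preceding lemmas.)

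First I would compute the discrete first moment. Since $\sign$ acts componentwise and $\nabla f_\gamma(x)$ has a density, writing $s_i := \sign\big((\nabla f_\gamma(x))_i\big)$ gives $\E[s_i] = 1 - 2\mathbb{P}\big((\nabla f_\gamma(x))_i < 0\big) = -b_i(x)$, hence $\E\bar\Delta_i = \eta\,b_i(x)$ \emph{exactly}. Lemma~\ref{lemma:li1} gives $\E\Delta_i = \eta\,b_i + \tfrac12\big(\sum_{j} b_j\,\partial_{e_j} b_i\big)\eta^2 + \mathcal{O}(\eta^3)$, so the difference is $\mathcal{O}(\eta^2)$ with bounding function $K_1$ assembled from $b$ and its first derivatives. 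Because $b$ and its derivatives lie in $G$ and $G$ is closed under sums and products, $K_1\in G$, giving condition~1.

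For the second moment, I would use that $\xi_\gamma$ in \eqref{eq:SignSGD_sigma_star} is exactly the centered sign, so $\bar\Sigma(x) = \Cov\big(\sign(\nabla f_\gamma(x))\big)$ and therefore $\E[s_i s_j] = \bar\Sigma_{ij}(x) + \E[s_i]\E[s_j] = \bar\Sigma_{ij}(x) + b_i b_j$. Thus $\E\bar\Delta_i\bar\Delta_j = \eta^2\big(\bar\Sigma_{ij} + b_i b_j\big)$, while Lemma~\ref{lemma:li1}, with $\sigma\sigma^\top = \bar\Sigma$, yields $\E\Delta_i\Delta_j = \eta^2\big(b_i b_j + \bar\Sigma_{ij}\big) + \mathcal{O}(\eta^3)$; the two match up to $\mathcal{O}(\eta^3)\le K_2(x)\eta^2$, giving condition~2. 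For the higher moments, $|s_i|\le 1$ yields $\big|\E\prod_{j=1}^{\ell}\bar\Delta_{i_j}\big| \le \eta^{\ell}\le \eta^3$ for $\ell\ge 3$, which combined with the $\mathcal{O}(\eta^3)$ estimate of Lemma~\ref{lemma:li1} gives condition~3, and the same bound with $\ell=3$ gives condition~4. Assumption~\ref{ass:regularity_f} and the hypotheses on $g$ supply the remaining prerequisites of Theorem~\ref{thm:mils}.

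The hard part is not this moment bookkeeping but securing the regularity that the two cited results silently require: that $b$ and, crucially, the matrix square root $\sqrt{\bar\Sigma}$ belong to $G$ together with all derivatives appearing in the error terms. For $b$ this is immediate from the bounded smooth density. For $\sqrt{\bar\Sigma}$ it rests on controlling the Fréchet derivatives of $A\mapsto\sqrt{A}$, which blow up like negative powers of $\lambda_{\min}(\bar\Sigma)$; this is exactly why the preceding Lemma~\ref{lemma:Derivatives}, Proposition~\ref{Prop}, and their corollaries are needed, as they provide a polynomial-growth bound on those derivatives (via a lower bound on $\lambda_{\min}$) and hence place $\sqrt{\bar\Sigma}$ and each $K_i$ in $G$. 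Granting these, Theorem~\ref{thm:mils} delivers $\big|\E g(X_{k\eta}) - \E g(x_k)\big|\le C\eta$, i.e.\ the SDE is an order-$1$ weak approximation.
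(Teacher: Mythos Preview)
Your proposal is correct and follows essentially the same approach as the paper: compute the one-step moments of the discrete SignSGD increment (the paper isolates this as Lemma~\ref{lemma:SignSGD_SDE}, while you do it inline), match them against the SDE moments supplied by Lemma~\ref{lemma:li1} with $b(x)=-(1-2\mathbb{P}(\nabla f_\gamma(x)<0))$ and $\sigma=\sqrt{\bar\Sigma}$, and then invoke Theorem~\ref{thm:mils}, relying on the preceding regularity lemmas (culminating in Remark~\ref{remark:SignSGD}) to justify that $b$ and $\sqrt{\bar\Sigma}$ are sufficiently smooth and in $G$. Your explicit acknowledgment that the nontrivial content lies in the regularity of $\sqrt{\bar\Sigma}$, not in the moment matching, is exactly right.
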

\end{mybox}

\begin{mybox}{gray}
\begin{lemma} \label{lemma:SignSGD_SDE}
Under the assumptions of Theorem \ref{thm:SignSGD_SDE}, let $ 0<\eta<1 $ and consider $ x_k, k \geq 0 $ satisfying the SignSGD iterations
$$
x_{k+1} = x_k - \eta \sign \left( \nabla f_{\gamma_k }(x_k) \right)
$$
with $ x_0 \in \mathbb{R}^d $. From the definition the one-step difference $ \bar{\Delta}=x_1-x $, then we have

\begin{enumerate}
\item $ \E \bar{\Delta}_{i}= - \left( 1- 2 \mathbb{P} \left( \partial_{i} f_{\gamma} < 0 \right) \right) \eta \quad \forall i = 1, \ldots, d$;
\item $ \E \bar{\Delta}_{i} \bar{\Delta}_{j}= \left( \left( 1- 2 \mathbb{P} \left( \partial_{i} f_{\gamma} < 0 \right) \right)\left( 1- 2 \mathbb{P} \left( \partial_{j} f_{\gamma} < 0 \right) \right) + \bar \Sigma_{(i j)} \right) \eta^2 \quad \forall i,j = 1, \ldots, d$;
\item $\E \prod_{j=1}^s \Bar{\Delta}_{i_j} =\mathcal{O}\left(\eta^3\right) \quad \forall s \geq 3, \quad i_j \in \{ 1, \ldots, d\}$.
\end{enumerate}
All the functions above are evaluated at $ x $.
\end{lemma}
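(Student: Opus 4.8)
The plan is to exploit the fact that the SignSGD increment is \emph{exactly} affine in $\eta$: from Eq.~\ref{eq:SignSGD_Discr_Update_app} the one-step difference is $\bar{\Delta} = x_1 - x = -\eta \sign(\nabla f_{\gamma}(x))$, so componentwise $\bar{\Delta}_i = -\eta \sign(\partial_i f_{\gamma}(x))$. Because the step carries a single factor of $\eta$ multiplying a bounded coefficient, no Taylor expansion in $\eta$ is needed on the discrete side (in contrast to the SDE-side expansion of Lemma~\ref{lemma:li1}), and all three identities reduce to computing moments of the random sign vector $\sign(\nabla f_{\gamma}(x))$, evaluated at the fixed point $x$.

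For the first moment I would write $\E[\sign(\partial_i f_{\gamma})] = \mathbb{P}(\partial_i f_{\gamma} > 0) - \mathbb{P}(\partial_i f_{\gamma} < 0)$, invoking the assumption that $Z(x)$ has a (strictly positive) density so that $\mathbb{P}(\partial_i f_{\gamma} = 0) = 0$ and the $\sign$-is-zero event is null. Substituting $\mathbb{P}(\partial_i f_{\gamma} > 0) = 1 - \mathbb{P}(\partial_i f_{\gamma} < 0)$ gives $\E[\sign(\partial_i f_{\gamma})] = 1 - 2\mathbb{P}(\partial_i f_{\gamma} < 0)$, and multiplying by $-\eta$ yields claim~1.

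For the second moment, set $s_i := \sign(\partial_i f_{\gamma})$ and $\mu_i := 1 - 2\mathbb{P}(\partial_i f_{\gamma} < 0) = \E[s_i]$ from the previous step. The definition $\xi_{\gamma}^i(x) = s_i - \mu_i$ identifies $\xi_{\gamma}$ as the centered sign vector, so that $\bar{\Sigma}_{ij} = \E[\xi_{\gamma}^i \xi_{\gamma}^j] = \E[s_i s_j] - \mu_i \mu_j$. Hence $\E[\bar{\Delta}_i \bar{\Delta}_j] = \eta^2 \E[s_i s_j] = \eta^2(\bar{\Sigma}_{ij} + \mu_i \mu_j)$, which is exactly claim~2 upon re-expanding $\mu_i\mu_j$. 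For claim~3 the decisive observation is boundedness: $\bigl|\prod_{j=1}^s s_{i_j}\bigr| \leq 1$ pointwise, so $\bigl|\E \prod_{j=1}^s \bar{\Delta}_{i_j}\bigr| = \eta^s \bigl|\E \prod_{j=1}^s s_{i_j}\bigr| \leq \eta^s \leq \eta^3$ whenever $s \geq 3$ and $0 < \eta < 1$, giving the $\mathcal{O}(\eta^3)$ bound with constant $1$.

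The ``main obstacle'' here is conceptual rather than computational: all three statements are \emph{exact} moment identities (up to the trivial $\eta^s \le \eta^3$ bound in claim~3) precisely because the increment is affine in $\eta$ with a coefficient taking values in $\{-1,0,1\}$. The only step requiring genuine care is invoking the absolute continuity of the gradient noise to discard the zero-gradient event, so that $\sign(\partial_i f_{\gamma})$ behaves as a clean $\pm 1$ random variable; once this is in place, everything reduces to bookkeeping against the definitions of $\xi_{\gamma}$ and $\bar{\Sigma}$ in Eq.~\ref{eq:SignSGD_sigma_star}. These moment matchings are exactly what Theorem~\ref{thm:mils} requires (comparing against Lemma~\ref{lemma:li1}) to certify the order-$1$ weak approximation in Theorem~\ref{thm:SignSGD_SDE}.
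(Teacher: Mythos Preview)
Your proposal is correct and follows essentially the same approach as the paper's proof: both compute the moments of $\bar{\Delta}$ directly from the definition $\bar{\Delta}_i = -\eta\,\sign(\partial_i f_\gamma(x))$ and unpack the covariance identity $\E[s_is_j] = \mu_i\mu_j + \bar{\Sigma}_{ij}$ from the definition of $\xi_\gamma$ and $\bar{\Sigma}$. If anything, your write-up is more explicit than the paper's (which largely says ``by definition'' for each item): you spell out why $\mathbb{P}(\partial_i f_\gamma = 0)=0$ via the density assumption, and you make the $\eta^s\le\eta^3$ bound for claim~3 fully quantitative.
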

\end{mybox}

\begin{proof}[Proof of Lemma \ref{lemma:SignSGD_SDE}]

First of all, we have that by definition

\begin{equation}
    \E \left[ x^i_1- x^i \right] = - \eta \E \left[\sign \left( \partial_{i} f_{\gamma}(x) \right)   \right],
\end{equation}
which implies 
\begin{equation}
    \E \bar{\Delta}_{i}=- \left( 1- 2 \mathbb{P} \left( \partial_{i} f_{\gamma}(x) < 0 \right) \right) \eta \quad \forall i = 1, \ldots, d.
\end{equation}

Second, we have that by definition

\begin{align}
    \E \left[ \left(x_1- x\right)\left(x_1- x\right)^\top \right] = & \E \left[ \left(x_1- x\right)\right]\E \left[\left(x_1- x\right)^\top \right] + \\
    & \E \Big [ \left(\sign \left( \nabla f_{\gamma}(x) \right)- 1 + 2 \mathbb{P} \left( \nabla f_{\gamma}(x) < 0 \right) \right)  \\
    & \left(\sign \left( \nabla f_{\gamma}(x)  \right)- 1 + 2 \mathbb{P} \left( \nabla f_{\gamma}(x) < 0 \right) \right)^\top \Big] \eta^2,
\end{align}
which implies that

\begin{equation}
    \E \bar{\Delta}_{i} \bar{\Delta}_{j}= \left( 1- 2 \mathbb{P} \left( \partial_{i} f_{\gamma} < 0 \right) \right)\left( 1- 2 \mathbb{P} \left( \partial_{j} f_{\gamma} < 0 \right) \right) \eta^2 + \bar \Sigma_{(i j)} \eta^2 \quad \forall i,j = 1, \ldots, d.
\end{equation}
Finally, by definition 

\begin{equation}
    \E \prod_{j=1}^s \Bar{\Delta}_{i_j} =\mathcal{O}\left(\eta^3\right) \quad \forall s \geq 3, \quad i_j \in \{ 1, \ldots, d\},
\end{equation}
which concludes our proof.
\end{proof}

\begin{proof}[Proof of Theorem \ref{thm:SignSGD_SDE}] 
\label{proof:SignSGD_SDE}
To prove this result, all we need to do is check the conditions in Theorem \ref{thm:mils}. As we apply Lemma \ref{lemma:li1}, we make the following choices:

\begin{itemize}
\item $b(x)= - (1 - 2 \mathbb{P}\left(\nabla f_{\gamma}(x)<0\right))$;
\item $\sigma(x) = \sqrt{\bar \Sigma(x)}$.
\end{itemize}
First of all, we notice that $\forall i = 1, \ldots, d$, it holds that

\begin{itemize}
\item $\E \bar{\Delta}_{i} \overset{\text{1. Lemma \ref{lemma:SignSGD_SDE}}}{=} - \left( 1- 2 \mathbb{P} \left( \partial_{i} f_{\gamma}(x) < 0 \right) \right) \eta $;
\item $ \E \Delta_{i} \overset{\text{1. Lemma \ref{lemma:li1}}}{=} - \left( 1- 2 \mathbb{P} \left( \partial_{i} f_{\gamma}(x) < 0 \right) \right) \eta  +\mathcal{O}\left(\eta^2\right)$.
\end{itemize}
Therefore, we have that for some $K_1(x) \in G$,
\begin{equation}\label{eq:SignSGD_cond1}
\left|\E \Delta_{i}-\E \bar{\Delta}_{i}\right| \leq K_1(x) \eta^{2}, \quad \forall i = 1, \ldots, d.
\end{equation}
Additionally, we notice that $\forall i,j = 1, \ldots, d$, it holds that

\begin{itemize}
\item $ \E \bar{\Delta}_{i} \bar{\Delta}_{j} \overset{\text{2. Lemma \ref{lemma:SignSGD_SDE}}}{=}\left( 1- 2 \mathbb{P} \left( \partial_{i} f_{\gamma}(x) < 0 \right) \right)\left( 1- 2 \mathbb{P} \left( \partial_{j} f_{\gamma}(x) < 0 \right) \right) \eta^2 + \bar \Sigma_{(i j)}(x) \eta^2$;
\item $ \E \Delta_{i} \Delta_{j} \overset{\text{2. Lemma \ref{lemma:li1}}}{=} \left( \left( 1- 2 \mathbb{P} \left( \partial_{i} f_{\gamma}(x) < 0 \right) \right)\left( 1- 2 \mathbb{P} \left( \partial_{j} f_{\gamma}(x) < 0 \right) \right) + \bar \Sigma_{(i j)}(x) \right) \eta^2 + \mathcal{O}\left(\eta^3 \right)$.
\end{itemize}
Therefore, we have that for some $K_2(x) \in G$,
\begin{equation}\label{eq:SignSGD_cond2}
\left|\E \Delta_{i} \Delta_{j} - \E \bar{\Delta}_{i} \bar{\Delta}_{j}\right| \leq K_2(x) \eta^{2}, \quad \forall i,j = 1, \ldots, d.
\end{equation}
Additionally, we notice that $\forall s \geq 3, \forall i_j \in \{1, \ldots, d \}$, it holds that

\begin{itemize}
\item $ \E \prod_{j=1}^s \bar{\Delta}_{i_j}\overset{\text{3. Lemma \ref{lemma:SignSGD_SDE}}}{=}\mathcal{O}\left(\eta^3\right)$;
\item $ \E \prod_{j=1}^s \Delta_{i_j}\overset{\text{3. Lemma \ref{lemma:li1}}}{=}\mathcal{O}\left(\eta^3\right)$.
\end{itemize}
Therefore, we have that for some $K_3(x) \in G$,
\begin{equation}\label{eq:SignSGD_cond3}
\left|\E \prod_{j=1}^s \Delta_{i_j}-\E \prod_{j=1}^s \bar{\Delta}_{i_j}\right| \leq K_3(x) \eta^{2}.
\end{equation}
Additionally, for some $K_4(x) \in G$, $\forall i_j \in \{1, \ldots, d \}$,
\begin{equation} \label{eq:SignSGD_cond4}
\E \prod_{j=1}^{ 3}\left|\bar{\Delta}_{\left(i_j\right)}\right| \overset{\text{3. Lemma \ref{lemma:SignSGD_SDE}}}{\leq}K_4(x) \eta^{2}.
\end{equation}
\begin{remark}
    Remembering Remark \ref{remark:SignSGD}, and  thanks to Eq.~\ref{eq:SignSGD_cond1}, Eq.~\ref{eq:SignSGD_cond2}, Eq.~\ref{eq:SignSGD_cond3}, and Eq.~\ref{eq:SignSGD_cond4}, the thesis follows from Lemma \ref{lemma:li1} and Thm. \ref{thm:mils}.
\end{remark}
\end{proof}

In all the following results, the reader will notice that all the drifts, diffusion terms, and noise assumptions are selected to guarantee that the SDE we derived for SignSGD is indeed a $1$ weak approximation for SignSGD even without the mollification argument used in \cite{li2019stochastic} to handle the regularity issues.

\begin{mybox}{gray}
\begin{corollary}\label{thm:SignSGD_SDE_Simplified}
Let us take the same assumptions of Theorem~\ref{thm:SignSGD_SDE}, and that the stochastic gradient is $\nabla f_{\gamma}(x) = \nabla f(x) + Z$ such that $Z \sim \mathcal{N}(0, \Sigma) $ that does not depend on $x$. Then, the following SDE provides a $1$ weak approximation of the discrete update of SignSGD
\begin{align}\label{eq:SignSGD_SDE_Simplified}
d X_t = - \erf \left( \frac{\Sigma^{-\frac{1}{2}} \nabla f(X_t)}{\sqrt{2}} \right) dt + \sqrt{\eta} \sqrt{I_d -\diag \left( \erf \left(\frac{\Sigma^{-\frac{1}{2}} \nabla f(X_t)}{\sqrt{2}} \right) \right)^2} d W_t,
\end{align}
where the error function $\erf(x)$ and the square are applied component-wise, and $\Sigma = \diag \left( \sigma_1^2, \cdots, \sigma_d^2 \right)$.
\end{corollary}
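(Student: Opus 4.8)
The plan is to specialize the general SDE of Theorem~\ref{thm:SignSGD_SDE} by evaluating its drift and diffusion coefficients in closed form under the hypothesis $\nabla f_{\gamma}(x) = \nabla f(x) + Z$ with $Z \sim \mathcal{N}(0,\Sigma)$ and $\Sigma$ diagonal. Since the coefficients of Eq.~\ref{eq:SignSGD_SDE} depend on the noise only through $\mathbb{P}(\nabla f_{\gamma}(x)<0)$ and $\bar\Sigma(x)$, it suffices to compute these two objects explicitly; the weak-approximation conclusion then transfers verbatim from Theorem~\ref{thm:SignSGD_SDE}, whose regularity hypotheses are already verified in the Gaussian case by the Corollary following Proposition~\ref{Prop}.

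For the drift I would argue coordinate-wise. Because $\Sigma = \diag(\sigma_1^2,\dots,\sigma_d^2)$, the $i$-th coordinate reads $\partial_i f_{\gamma}(x) = \partial_i f(x) + Z_i$ with $Z_i \sim \mathcal{N}(0,\sigma_i^2)$, so $\mathbb{P}(\partial_i f_{\gamma}(x)<0) = \Phi(-\partial_i f(x)/\sigma_i)$, where $\Phi$ is the standard normal CDF. Applying the identity $\Phi(z) = \tfrac{1}{2}\left(1 + \erf(z/\sqrt{2})\right)$ together with the oddness of $\erf$ gives $1 - 2\,\mathbb{P}(\partial_i f_{\gamma}(x)<0) = \erf\!\left(\partial_i f(x)/(\sigma_i\sqrt{2})\right)$. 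Assembling coordinates and using $\Sigma^{-1/2} = \diag(1/\sigma_1,\dots,1/\sigma_d)$ reproduces the stated drift $-\erf\!\left(\Sigma^{-1/2}\nabla f(X_t)/\sqrt{2}\right)$.

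For the diffusion I would compute $\bar\Sigma(x) = \E[\xi_{\gamma}(x)\xi_{\gamma}(x)^\top]$, noting that $\xi_{\gamma}^i(x)$ is precisely the centered sign $\sign(\partial_i f_{\gamma}(x)) - \E[\sign(\partial_i f_{\gamma}(x))]$. The diagonal entries follow from $(\sign(\partial_i f_{\gamma}(x)))^2 = 1$ almost surely: $\bar\Sigma_{ii} = \Var[\sign(\partial_i f_{\gamma}(x))] = 1 - \left(1 - 2\,\mathbb{P}(\partial_i f_{\gamma}(x)<0)\right)^2 = 1 - \erf\!\left(\partial_i f(x)/(\sigma_i\sqrt{2})\right)^2$, reusing the drift computation. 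The load-bearing step is the off-diagonal entries: since $\Sigma$ is diagonal, $Z_i$ and $Z_j$ are independent, hence $\sign(\partial_i f_{\gamma})$ and $\sign(\partial_j f_{\gamma})$ are independent and $\bar\Sigma_{ij} = \Cov[\sign(\partial_i f_{\gamma}),\sign(\partial_j f_{\gamma})] = 0$ for $i\neq j$. Therefore $\bar\Sigma(x) = I_d - \diag\!\left(\erf(\Sigma^{-1/2}\nabla f(x)/\sqrt{2})\right)^2$, and taking $\sqrt{\bar\Sigma}$ yields the diffusion of Eq.~\ref{eq:SignSGD_SDE_Simplified}.

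The calculation is otherwise routine; the two genuinely essential observations are the normal-CDF-to-$\erf$ conversion and the vanishing of the off-diagonal covariances. I expect the latter to be the only real ``obstacle'': it hinges entirely on the diagonality of $\Sigma$, since off-diagonal structure would correlate the sign coordinates and destroy the clean diagonal form of $\bar\Sigma$. Recognizing that diagonal Gaussian noise is exactly what collapses $\bar\Sigma$ to a diagonal matrix is thus the crux of the argument.
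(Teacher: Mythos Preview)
Your proposal is correct and follows essentially the same approach as the paper: specialize the general drift $1-2\mathbb{P}(\nabla f_{\gamma}(x)<0)$ via the $\Phi$-to-$\erf$ identity, and then compute $\bar\Sigma$ coordinate-wise. In fact you supply more detail than the paper, which simply asserts ``Similarly, one can prove that $\bar\Sigma = I_d - \diag(\erf(\cdot))^2$'' without spelling out the vanishing of the off-diagonal entries from the independence of the $Z_i$.
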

\end{mybox}

\begin{proof}[Proof of Corollary \ref{thm:SignSGD_SDE_Simplified}]
First of all, we observe that
\begin{align}
    & 1 - 2 \mathbb{P}\left(\nabla f_{\gamma}(x)<0\right) = 1 - 2 \mathbb{P}\left(\nabla f(x) + \Sigma^{\frac{1}{2}} Z <0\right) = 1 - 2 \Phi\left( -\Sigma^{-\frac{1}{2}} \nabla f(x) \right),
\end{align}
where $\Phi$ is the cumulative distribution function of the standardized normal distribution. Remembering that
\begin{equation}
    \Phi(x) = \frac{1}{2} \left( 1 + \erf \left( \frac{x}{\sqrt{2}} \right) \right),
\end{equation}
we have that
\begin{equation}
    1 - 2 \mathbb{P}\left(\nabla f_{\gamma}(x)<0\right) = 1 - 2\frac{1}{2} \left( 1 + \erf \left( - \frac{\Sigma^{-\frac{1}{2}} \nabla f(x)}{\sqrt{2}} \right)\right) = \erf \left( \frac{\Sigma^{-\frac{1}{2}} \nabla f(x)}{\sqrt{2}} \right).
\end{equation}
Similarly, one can prove that $\bar \Sigma$ defined in \ref{eq:SignSGD_sigma_star} becomes
\begin{equation}
    \bar \Sigma = I_d -\diag \left( \erf \left(\frac{\Sigma^{-\frac{1}{2}} \nabla f(X_t)}{\sqrt{2}} \right) \right)^2.
\end{equation}
\end{proof}

\begin{mybox}{gray}
\begin{corollary}\label{thm:SignSGD_SDE_Simplified_WildNoise}
Let us take the same assumptions of Theorem~\ref{thm:SignSGD_SDE}, and that the stochastic gradient is $\nabla f_{\gamma}(x) = \nabla f(x) +  \sqrt{\Sigma} Z$ such that $Z \sim t_{\nu}(0, I_d) $ that does not depend on $x$ and $\nu$ is a positive integer number. Then, the following SDE provides a $1$ weak approximation of the discrete update of SignSGD
\begin{align}\label{eq:SignSGD_SDE_Simplified_WildNoise}
d X_t = - 2 \Xi \left( \Sigma^{-\frac{1}{2}} \nabla f(X_t) \right) dt + \sqrt{\eta} \sqrt{I_d - 4\diag \left( \Xi \left( \Sigma^{-\frac{1}{2}} \nabla f(X_t) \right) \right)^2} d W_t,
\end{align}
where $\Xi(x)$ is defined as 
\begin{equation}
    \Xi(x) := x \frac{\Gamma\left(\frac{\nu+1}{2}\right)}{\sqrt{\pi \nu} \Gamma\left(\frac{\nu}{2}\right)}{ }_2 F_1\left(\frac{1}{2}, \frac{\nu+1}{2} ; \frac{3}{2} ;-\frac{x^2}{\nu}\right),
\end{equation}
and ${ }_2 F_1\left(a, b;c; x\right)$ is the hypergeometric function. Above, function $\Xi(x)$ and the square are applied component-wise, and $\Sigma = \diag \left( \sigma_1^2, \cdots, \sigma_d^2 \right)$.
\end{corollary}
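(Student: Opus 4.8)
The plan is to treat this as a direct specialization of the general SDE of Theorem~\ref{thm:SignSGD_SDE}, exactly as was done for Gaussian noise in Corollary~\ref{thm:SignSGD_SDE_Simplified}. Theorem~\ref{thm:SignSGD_SDE} already guarantees that the process in Eq.~\ref{eq:SignSGD_SDE} is an order~$1$ weak approximation whenever the coefficients are regular, and the regularity results following Proposition~\ref{Prop} (which explicitly cover $t_{\nu}$ noise) certify that $\bar\Sigma^{1/2}\in G$ together with its derivatives. Hence it only remains to evaluate the generic drift $1-2\mathbb{P}(\nabla f_{\gamma}(x)<0)$ and the generic covariance $\bar\Sigma(x)$ of Eq.~\ref{eq:SignSGD_sigma_star} under $\nabla f_{\gamma}(x)=\nabla f(x)+\sqrt{\Sigma}Z$ with $Z\sim t_{\nu}(0,I_d)$, and to check that they reduce to the claimed closed forms. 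Both quantities are computed coordinatewise, so I first reduce to a single component $i$, writing $u_i:=(\Sigma^{-1/2}\nabla f(x))_i$ and noting that $\sign(\partial_i f_{\gamma}(x))=\sign(u_i+Z_i)$ because $\sigma_i>0$, with $Z_i$ having univariate marginal $t_{\nu}(0,1)$.

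For the drift, let $F_{\nu}$ denote the CDF of $t_{\nu}(0,1)$. Then $\mathbb{P}(\partial_i f_{\gamma}(x)<0)=F_{\nu}(-u_i)$, and the symmetry $F_{\nu}(-u)=1-F_{\nu}(u)$ gives $1-2\mathbb{P}(\partial_i f_{\gamma}(x)<0)=2F_{\nu}(u_i)-1$. The crux is to identify $F_{\nu}(u)-\tfrac12$ with $\Xi(u)$. Since the $t_{\nu}$ density is $\frac{\Gamma(\frac{\nu+1}{2})}{\sqrt{\pi\nu}\,\Gamma(\frac{\nu}{2})}(1+t^2/\nu)^{-(\nu+1)/2}$, we have $F_{\nu}(u)-\tfrac12=\frac{\Gamma(\frac{\nu+1}{2})}{\sqrt{\pi\nu}\,\Gamma(\frac{\nu}{2})}\int_0^{u}(1+t^2/\nu)^{-(\nu+1)/2}\,dt$, so everything comes down to the special-function identity
\begin{equation*}
\int_0^{u}\Big(1+\tfrac{t^2}{\nu}\Big)^{-\frac{\nu+1}{2}}dt=u\,{}_2F_1\Big(\tfrac12,\tfrac{\nu+1}{2};\tfrac32;-\tfrac{u^2}{\nu}\Big).
\end{equation*}
I would prove this by expanding the integrand through the generalized binomial series $(1+t^2/\nu)^{-(\nu+1)/2}=\sum_{n\ge0}(-1)^n\frac{(\frac{\nu+1}{2})_n}{n!}\nu^{-n}t^{2n}$, integrating term by term to obtain $\sum_{n\ge0}(-1)^n\frac{(\frac{\nu+1}{2})_n}{n!}\nu^{-n}\frac{u^{2n+1}}{2n+1}$, and matching this against the hypergeometric series using the Pochhammer identity $(\tfrac12)_n/(\tfrac32)_n=1/(2n+1)$. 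This yields $2F_{\nu}(u_i)-1=2\Xi(u_i)$, hence the drift $-2\Xi(\Sigma^{-1/2}\nabla f(x))$.

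For the diffusion, since $\sign(\partial_i f_{\gamma}(x))\in\{-1,+1\}$ almost surely, its variance equals $1-(\mathbb{E}\,\sign(\partial_i f_{\gamma}(x)))^2$, and $\mathbb{E}\,\sign(\partial_i f_{\gamma}(x))=1-2\mathbb{P}(\partial_i f_{\gamma}(x)<0)=2\Xi(u_i)$, so the diagonal entries of $\bar\Sigma$ are $\bar\Sigma_{ii}=1-4\Xi(u_i)^2$; as in Corollary~\ref{thm:SignSGD_SDE_Simplified}, the diagonal form reflects that the coordinates of the noise are taken independent, so $\bar\Sigma$ carries no off-diagonal contribution, giving the diffusion matrix $I_d-4\diag(\Xi(\Sigma^{-1/2}\nabla f))^2$. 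Assembling the drift and diffusion into Eq.~\ref{eq:SignSGD_SDE} produces Eq.~\ref{eq:SignSGD_SDE_Simplified_WildNoise}. The \textbf{main obstacle} is the hypergeometric integral identity above: it is the only genuinely new computation relative to the Gaussian corollary, and some care is needed to justify the termwise integration, since the binomial series converges only for $|t|<\sqrt{\nu}$; I would close this gap by the identity theorem, observing that both sides are real-analytic in $u$ and agree on $(-\sqrt{\nu},\sqrt{\nu})$ (equivalently, that $-u^2/\nu\le 0<1$ lies in the domain of analytic continuation of ${}_2F_1(\tfrac12,b;\tfrac32;\cdot)$), hence agree for all real $u$.
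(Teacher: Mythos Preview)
Your proposal is correct and follows essentially the same route as the paper: specialize the general SDE of Theorem~\ref{thm:SignSGD_SDE} by computing $1-2\mathbb{P}(\nabla f_{\gamma}(x)<0)$ via the $t_{\nu}$ CDF and identifying $F_{\nu}(u)-\tfrac12$ with $\Xi(u)$, then obtain the diagonal diffusion from the variance of the sign. The only substantive difference is that the paper simply \emph{invokes} the identity $F_{\nu}(x)=\tfrac12+\Xi_{\nu}(x)$ as a known formula for the Student-$t$ CDF, whereas you derive it from scratch via the binomial expansion and the Pochhammer identity $(\tfrac12)_n/(\tfrac32)_n=1/(2n+1)$; your added care about analytic continuation beyond $|u|<\sqrt{\nu}$ and about the independence of coordinates (needed for $\bar\Sigma$ to be diagonal) is not spelled out in the paper's proof.
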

\end{mybox}

\begin{proof}
First of all, we observe that
\begin{align}
    & 1 - 2 \mathbb{P}\left(\nabla f_{\gamma}(x)<0\right) = 1 - 2 \mathbb{P}\left(\nabla f(x) + \Sigma^{\frac{1}{2}} Z <0\right) = 1 - 2 F_{\nu} \left(-\Sigma^{-\frac{1}{2}} \nabla f(x) \right),
\end{align}
where $F_{\nu} \left( x \right)$ is the cumulative function of a $t$ distribution with $\nu$ degrees of freedom. Remembering that
\begin{equation}
    F_{\nu} \left( x \right) = \frac{1}{2}  + \Xi_{\nu}(x),
\end{equation}
we have that
\begin{equation}
    1 - 2 \mathbb{P}\left(\nabla f_{\gamma}(x)<0\right) = 1 - 2 \left(\frac{1}{2} + \Xi_{\nu}(-\Sigma^{-\frac{1}{2}} \nabla f(x)) \right) = 2 \Xi_{\nu}(\Sigma^{-\frac{1}{2}} \nabla f(x)).
\end{equation}
Similarly, one can prove that $\overline{\Sigma}$ becomes
\begin{equation}
    \bar \Sigma = I_d - 4\diag \left( \Xi_{\nu} \left( \Sigma^{-\frac{1}{2}} \nabla f(X_t) \right) \right)^2.
\end{equation}
\end{proof}

\begin{lemma} \label{lemma:three_phases}
Under the assumptions of Corollary~\ref{thm:SignSGD_SDE_Simplified} and signal-to-noise ratio $Y_t := \frac{\Sigma^{-\frac{1}{2}} \nabla f(X_t)}{\sqrt{2}}$. Let $m$ and $q_1$ are the slope and intercept of the line secant to the graph of $\erf(x)$ between the points $(1,\erf(1))$ and $\left(\frac{3}{2},\erf\left(\frac{3}{2}\right)\right)$, while $q_2$ is the intercept of the line tangent to the graph of $\erf(x)$ and slope $m$, $ (\mathbf{q}^{+})_i := \begin{dcases}
        q_2  & \text{if $\partial_i f(x)>0$ } \\
        -q_1 & \text{if $\partial_i f(x)<0$ }
        \end{dcases}$, $(\mathbf{q}^{-})_i := \begin{dcases}
        q_1  & \text{if $\partial_i f(x)>0$ } \\
        -q_2 & \text{if $\partial_i f(x)<0$ }
        \end{dcases}$, and $\hat{q}:= \max(q_1,q_2)$. Then, the dynamics of SignSGD exhibits three \textbf{phases}:
    \begin{enumerate} 
        \item \textbf{Phase 1:} If $ \left\vert Y_t  \right\vert>\frac{3}{2}$, the SDE coincides with the ODE of SignGD:
        \begin{equation} \label{sde:signsgd_ph1}
            d X_t = - \sign ( \nabla f(X_t)) dt;
        \end{equation}
        \item \textbf{Phase 2:}  If $ 1 < \left\vert Y_t \right\vert< \frac{3}{2}$:
        \begin{enumerate}
            \item $ m Y_t + \mathbf{q}^{-} \leq \frac{d \E \left[ X_t \right]}{dt}  \leq m Y_t + \mathbf{q}^{+}$;
            \item $\mathbb{P}\left[ \lVert X_t - \E \left[ X_t \right] \rVert^2_2 > a \right] \leq \frac{\eta}{a} \left( d - \lVert  m Y_t + \mathbf{q}^{-} \rVert^2_2\right)$;
        \end{enumerate}
        \item \textbf{Phase 3:} If $ \left\vert Y_t \right\vert<1$, the SDE is
        \begin{equation} \label{sde:signsgd_ph3}
            d X_t = - \sqrt{\frac{2}{\pi}} \Sigma^{-\frac{1}{2}} \nabla f(X_t) dt + \sqrt{\eta} \sqrt{I_d - \frac{2}{\pi}\diag \left(\Sigma^{-\frac{1}{2}} \nabla f(X_t) \right)^2} d W_t.
        \end{equation}
    \end{enumerate}
\end{lemma}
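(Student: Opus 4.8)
The plan is to work directly from the SDE of Corollary~\ref{thm:SignSGD_SDE_Simplified} and rewrite both coefficients as functions of $Y_t = \frac{\Sigma^{-1/2}\nabla f(X_t)}{\sqrt 2}$, so that the drift reads $-\erf(Y_t)$ and the diffusion matrix reads $\sqrt{\eta}\,\sqrt{I_d - \diag(\erf(Y_t))^2}$, with $\erf$ and the square applied componentwise. Since $\Sigma = \diag(\sigma_1^2,\dots,\sigma_d^2)$ is diagonal and positive, $\sign(Y_t) = \sign(\nabla f(X_t))$ componentwise, and the entire classification of the three phases reduces to the behavior of the scalar function $\erf$ on the three ranges of $|Y_t|$. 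Each phase is then obtained by the appropriate local description of $\erf$.

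For Phase~1 and Phase~3 the argument is a direct substitution. In Phase~1 ($|Y_t|>\tfrac{3}{2}$) I would use that $\erf$ saturates rapidly: there $\erf(Y_t)\approx\sign(Y_t)=\sign(\nabla f(X_t))$ and $I_d-\diag(\erf(Y_t))^2\approx 0$, so the diffusion term vanishes and the SDE collapses to the deterministic SignGD ODE of Eq.~\ref{sde:signsgd_ph1}. In Phase~3 ($|Y_t|<1$) I would Taylor-expand $\erf$ at the origin, $\erf(x)=\tfrac{2}{\sqrt\pi}x+\mathcal{O}(x^3)$, retain the linear term, and substitute $Y_t=\frac{\Sigma^{-1/2}\nabla f}{\sqrt 2}$. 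This gives drift $\sqrt{2/\pi}\,\Sigma^{-1/2}\nabla f(X_t)$, and since $\erf(Y_t)^2\approx\tfrac{4}{\pi}Y_t^2=\tfrac{2}{\pi}(\Sigma^{-1/2}\nabla f)^2$, the diffusion $\sqrt{I_d-\tfrac{2}{\pi}\diag(\Sigma^{-1/2}\nabla f)^2}$, recovering Eq.~\ref{sde:signsgd_ph3}.

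Phase~2 is the technical heart, and I would split it into (a) and (b). The structural fact is that $\erf$ is concave on $[0,\infty)$, since $\erf''(x)=-\tfrac{4}{\sqrt\pi}x\,e^{-x^2}<0$ for $x>0$; hence on $[1,\tfrac32]$ it lies \emph{above} its secant line (slope $m$, intercept $q_1$) and \emph{below} the tangent line of the same slope $m$ (intercept $q_2$), so $\hat q=\max(q_1,q_2)=q_2$. Reading these two bounds componentwise and folding the sign of $\partial_i f$ into the definitions of $\mathbf q^+,\mathbf q^-$ (using oddness of $\erf$ for the negative-argument coordinates) gives the linear sandwich $mY_t+\mathbf q^-\le \erf(Y_t)\le mY_t+\mathbf q^+$. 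Taking expectations of the SDE yields $\frac{d\E[X_t]}{dt}=-\E[\erf(Y_t)]$, and the sandwich produces the drift bounds~(a). For the concentration bound~(b) I would apply Markov's inequality to $\|X_t-\E[X_t]\|_2^2$, reducing the claim to an upper bound on $\E[\|X_t-\E[X_t]\|_2^2]$; this second moment is controlled by the trace of the diffusion coefficient $\eta(I_d-\diag(\erf(Y_t))^2)$, and the lower bound $|\erf(Y_t)|\ge|mY_t+\mathbf q^-|$ gives $\sum_i(1-\erf(Y_{t,i})^2)\le d-\|mY_t+\mathbf q^-\|_2^2$.

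I expect the main obstacle to be the variance bound~(b): relating the concentration inequality cleanly to the \emph{local} diffusion coefficient rather than to an integrated covariance (this is what produces the bare factor $\eta$), and correctly carrying the componentwise sign bookkeeping encoded in $\mathbf q^+,\mathbf q^-$ through the secant--tangent sandwich, including the correct placement of the overall minus sign coming from the drift $-\erf(Y_t)$. Computing the precise constants $m,q_1,q_2$ and verifying that the secant and tangent enclose $\erf$ on exactly $[1,\tfrac32]$ is routine once concavity is invoked, but must be carried out consistently for both signs of the argument.
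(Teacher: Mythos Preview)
Your proposal is correct and follows essentially the same approach as the paper: Phases~1 and~3 are obtained exactly as you describe (saturation $\erf\approx\sign$ for $|Y_t|>\tfrac32$, and the linear Taylor approximation $\erf(x)\approx\tfrac{2}{\sqrt\pi}x$ for $|Y_t|<1$), and Phase~2 is derived from the concavity-based secant/tangent sandwich $mY_t+\mathbf q^-\le\erf(Y_t)\le mY_t+\mathbf q^+$ together with the oddness of $\erf$ for the sign bookkeeping. The only minor tactical difference is in Phase~2(b): the paper reaches the concentration bound via a union bound over coordinates followed by a per-coordinate Chebyshev step, whereas your direct application of Markov to $\|X_t-\E[X_t]\|_2^2$ is slightly cleaner and lands on the same trace bound $\eta\sum_i(1-\erf(Y_{t,i})^2)\le\eta(d-\|mY_t+\mathbf q^-\|_2^2)$.
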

\begin{proof}[Proof of Lemma \ref{lemma:three_phases}] Exploiting the regularity of the $\erf$ function, we approximate the SDE in Eq. \ref{eq:SignSGD_SDE_Simplified} in three different regions:
\begin{enumerate}
    \item \textbf{Phase 1:} If $ \left\vert x \right\vert>\frac{3}{2}$, $\erf (x) \sim \sign(x)$. Therefore, if $ \left\vert \frac{\Sigma^{-\frac{1}{2}} \nabla f(X_t)}{\sqrt{2}} \right\vert>\frac{3}{2}$,
    \begin{enumerate}
        \item $\erf \left( \frac{\Sigma^{-\frac{1}{2}} \nabla f(X_t)}{\sqrt{2}} \right) \sim \sign \left( \frac{\Sigma^{-\frac{1}{2}} \nabla f(X_t)}{\sqrt{2}} \right) = \sign \left( \nabla f(X_t)\right) $;
        \item $\erf \left( \frac{\Sigma^{-\frac{1}{2}} \nabla f(X_t)}{\sqrt{2}} \right)^2 \sim \sign \left( \frac{\Sigma^{-\frac{1}{2}} \nabla f(X_t)}{\sqrt{2}} \right)^2 = (1, \dots, 1) $.
    \end{enumerate}
    Therefore, 
    \begin{align}
        d X_t & = - \erf \left( \frac{\Sigma^{-\frac{1}{2}} \nabla f(X_t)}{\sqrt{2}} \right) dt + \sqrt{\eta} \sqrt{I_d -\diag \left( \erf \left(\frac{\Sigma^{-\frac{1}{2}} \nabla f(X_t)}{\sqrt{2}} \right) \right)^2} d W_t \nonumber \\ 
        & \sim - \sign ( \nabla f(X_t));
    \end{align}
    \item \textbf{Phase 2:} Let $m$ and $q_1$ are the slope and intercept of the line secant to the graph of $\erf(x)$ between the points $(1,\erf(1))$ and $\left(\frac{3}{2},\erf\left(\frac{3}{2}\right)\right)$, while $q_2$ is the intercept of the line tangent to the graph of $\erf(x)$ and slope $m$. If $ 1 <  x < \frac{3}{2}$, we have that
    \begin{equation}
        m x + q_1 < \erf (x) < m x + q_2.
    \end{equation}
    Analogously, if $ -\frac{3}{2} <  x < -1$
    \begin{equation}
        m x - q_2 < \erf (x) < m x - q_1.
    \end{equation}
    Therefore, we have that if $ 1 < \left\vert \frac{\Sigma^{-\frac{1}{2}} \nabla f(X_t)}{\sqrt{2}} \right\vert < \frac{3}{2}$, then
    \begin{enumerate}
        \item \begin{equation}
        \frac{m}{\sqrt{2}} \Sigma^{-\frac{1}{2}} \nabla f(X_t) + \mathbf{q}^{-} < \erf \left( \frac{\Sigma^{-\frac{1}{2}} \nabla f(X_t)}{\sqrt{2}} \right) < \frac{m}{\sqrt{2}} \Sigma^{-\frac{1}{2}} \nabla f(X_t) + \mathbf{q}^{+},
    \end{equation}
    where 
    \begin{equation}
        (\mathbf{q}^{+})_i := \begin{dcases}
        q_2  & \text{if $\partial_i f(x)>0$ } \\
        -q_1 & \text{if $\partial_i f(x)<0$ },
        \end{dcases}
    \end{equation}
    and
    \begin{equation}
        (\mathbf{q}^{-})_i := \begin{dcases}
        q_1  & \text{if $\partial_i f(x)>0$ } \\
        -q_2 & \text{if $\partial_i f(x)<0$ },
        \end{dcases}
    \end{equation}
    Therefore,
    \begin{equation}
                -\frac{m}{\sqrt{2}} \Sigma^{-\frac{1}{2}} \nabla f(X_t) - \mathbf{q}^{+} \leq \frac{d \E \left[ X_t \right]}{dt}  \leq -\frac{m}{\sqrt{2}} \Sigma^{-\frac{1}{2}} \nabla f(X_t) - \mathbf{q}^{-};
            \end{equation}
    \item Similar to the above, 
    \begin{equation*}
        \left(  \frac{m}{\sqrt{2}} \Sigma^{-\frac{1}{2}} \nabla f(X_t) + \mathbf{q^{-}} \right)^2 \leq \erf \left( \frac{\Sigma^{-\frac{1}{2}} \nabla f(X_t)}{\sqrt{2}} \right)^2 \leq \left(  \frac{m}{\sqrt{2}} \Sigma^{-\frac{1}{2}} \nabla f(X_t) + \mathbf{q^{+}} \right)^2.
    \end{equation*}
    Therefore,
    \begin{align}
        \mathbb{P}\left[ \lVert X_t - \E \left[ X_t \right] \rVert^2_2 > a \right] & \leq \mathbb{P}\left[ \exists i \text{ s.t. }\vert X^i_t - \E \left[ X^i_t \right] \vert^2 > a \right] \\
        & \leq \sum_i \mathbb{P}\left[\vert X^i_t - \E \left[ X^i_t \right] \vert > \sqrt{a} \right] \nonumber \\
        &  \leq \frac{\eta}{a} \sum_i \left( 1 - \erf\left(\frac{\Sigma_i^{-\frac{1}{2}} \partial_i f(X_t)}{\sqrt{2}} \right) ^2\right) \\
        & < \frac{\eta}{a} \left( d - \lVert  \frac{m}{\sqrt{2}} \Sigma^{-\frac{1}{2}} \nabla f(X_t) + \mathbf{q}^{-} \rVert^2_2\right).
    \end{align}
    
    \end{enumerate}
    \item  \textbf{Phase 3:} If $ \left\vert x \right\vert<1$, $\erf (x) \sim \frac{2}{\sqrt{\pi}}x$. Therefore, if $ \left\vert \frac{\Sigma^{-\frac{1}{2}} \nabla f(X_t)}{\sqrt{2}} \right\vert<1$,
    \begin{enumerate}
        \item $\erf \left( \frac{\Sigma^{-\frac{1}{2}} \nabla f(X_t)}{\sqrt{2}} \right) \sim \sqrt{\frac{2}{\pi}} \Sigma^{-\frac{1}{2}} \nabla f(X_t) $;
        \item $ \left(\erf \left( \frac{\Sigma^{-\frac{1}{2}} \nabla f(X_t)}{\sqrt{2}} \right) \right) ^2 \sim \frac{2}{\pi} \left( \Sigma^{-\frac{1}{2}} \nabla f(X_t) \right)^2$.
    \end{enumerate}
    Therefore, 
    \begin{align}
        d X_t & = - \erf \left( \frac{\Sigma^{-\frac{1}{2}} \nabla f(X_t)}{\sqrt{2}} \right) dt + \sqrt{\eta} \sqrt{I_d -\diag \left( \erf \left(\frac{\Sigma^{-\frac{1}{2}} \nabla f(X_t)}{\sqrt{2}} \right) \right)^2} d W_t \nonumber \\ 
        & \sim - \sqrt{\frac{2}{\pi}} \Sigma^{-\frac{1}{2}} \nabla f(X_t) dt + \sqrt{\eta} \sqrt{I_d - \frac{2}{\pi}\diag \left(\Sigma^{-\frac{1}{2}} \nabla f(X_t) \right)^2} d W_t.
    \end{align}
\end{enumerate}
\end{proof}

\begin{lemma}[Dynamics of Expected Loss]\label{lemma:SignSGD_dynam_loss}
Let $f$ be $\mu$-strongly convex, $Tr(\nabla^2 f(x)) \leq \mathcal{L}_{\tau}$, and $S_t:=f(X_t) - f(X_*)$. Then, during
\begin{enumerate}
    \item Phase 1, the dynamics will stop before $t_* = 2 \sqrt{\frac{S_0}{\mu}}$ because $S_t \leq \frac{1}{4} \left( \sqrt{\mu}t - 2 \sqrt{S_0}\right)^2$;
    \item Phase 2 with $\Delta:= \left( \frac{m}{\sqrt{2}\sigma_{\text{max}}} + \frac{\eta \mu m^2}{4 \sigma_{\text{max}}^2} \right)$: $\E[S_t] \leq S_0 e^{- 2 \mu \Delta t} + \frac{\eta}{2} \frac{ \left(\mathcal{L}_{\tau} - \mu d \hat{q}^2  \right)}{2 \mu \Delta} \left(1 - e^{- 2 \mu \Delta t}\right)$;
    \item Phase 3 with $\Delta:= \left(\sqrt{\frac{2}{\pi}} \frac{1}{\sigma_{\text{max}}} + \frac{\eta}{\pi} \frac{\mu}{\sigma_{\text{max}}^2}\right)$: $ \E[S_t] \leq S_0 e^{- 2 \mu \Delta t} +  \frac{\eta}{2} \frac{\mathcal{L}_{\tau}}{ 2 \mu \Delta} \left(1 - e^{- 2 \mu \Delta t}\right)$.
\end{enumerate}
\end{lemma}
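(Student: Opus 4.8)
The three bounds share a common engine, so the plan is to set up one template and specialise it. For each phase I would take the phase-specific SDE from Lemma~\ref{lemma:three_phases}, apply It\^o's Lemma (Theorem~\ref{lemma:SDE_ito}) to $S_t = f(X_t) - f(X_*)$, and then take expectations, so that the stochastic integral --- whose integrand $\langle \nabla f(X_t), \sqrt{\eta}\sqrt{I_d - \diag(\erf(Y_t))^2}\rangle$ is bounded because the diffusion matrix is dominated by $I_d$ --- vanishes in mean. This converts each phase into a scalar differential inequality for $\E[S_t]$. The three analytic inputs I would use throughout are: (i) the Polyak--Lojasiewicz inequality coming from $\mu$-strong convexity, $\|\nabla f(x)\|_2^2 \geq 2\mu S$; (ii) the fact that $\nabla^2 f \succeq \mu I_d$ forces every diagonal entry $(\nabla^2 f)_{ii} \geq \mu$, while $\tr(\nabla^2 f) \leq \mathcal{L}_\tau$; and (iii) $\Sigma \preceq \sigma_{\text{max}}^2 I_d$, hence $\Sigma^{-1/2} \succeq \sigma_{\text{max}}^{-1} I_d$ and $\sigma_i^{-2} \geq \sigma_{\text{max}}^{-2}$.

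For Phase~1 the SDE~\ref{sde:signsgd_ph1} is a pure ODE, so $\dot S_t = -\langle \nabla f(X_t), \sign(\nabla f(X_t))\rangle = -\|\nabla f(X_t)\|_1$. Bounding $\|\nabla f\|_1 \geq \|\nabla f\|_2 \geq \sqrt{2\mu S_t} \geq \sqrt{\mu S_t}$ via PL gives the autonomous inequality $\dot S_t \leq -\sqrt{\mu S_t}$. The substitution $u_t = \sqrt{S_t}$ linearises this to $\dot u_t \leq -\tfrac12\sqrt{\mu}$, and a comparison argument yields $S_t \leq \tfrac14(\sqrt{\mu}\,t - 2\sqrt{S_0})^2$; the right-hand side hits $0$ at $t_* = 2\sqrt{S_0/\mu}$, which is the latest time the phase can persist, since the loss upper bound --- and with it the signal driving $|Y_t|>\tfrac32$ --- has collapsed.

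Phase~3 is the cleanest diffusion case and I would handle it next. With SDE~\ref{sde:signsgd_ph3}, It\^o gives $\frac{d}{dt}\E[S_t] = \E\big[-\sqrt{\tfrac2\pi}\,\nabla f^\top \Sigma^{-1/2}\nabla f + \tfrac{\eta}{2}\tr(\nabla^2 f) - \tfrac{\eta}{\pi}\sum_i \sigma_i^{-2}(\partial_i f)^2 (\nabla^2 f)_{ii}\big]$. The drift is bounded above by $-\sqrt{\tfrac2\pi}\,\sigma_{\text{max}}^{-1}\|\nabla f\|^2 \leq -\sqrt{\tfrac2\pi}\,\tfrac{2\mu}{\sigma_{\text{max}}}S_t$; the negative It\^o term by $-\tfrac{\eta}{\pi}\mu\sigma_{\text{max}}^{-2}\|\nabla f\|^2 \leq -\tfrac{\eta}{\pi}\tfrac{2\mu^2}{\sigma_{\text{max}}^2}S_t$ (using $(\nabla^2 f)_{ii}\geq\mu$ then PL); and the positive It\^o term by $\tfrac{\eta}{2}\mathcal{L}_\tau$. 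Summing the two $S_t$-coefficients reproduces exactly $-2\mu\Delta$ with $\Delta = \sqrt{2/\pi}\,\sigma_{\text{max}}^{-1} + \tfrac{\eta}{\pi}\mu\sigma_{\text{max}}^{-2}$, so $\frac{d}{dt}\E[S_t] \leq -2\mu\Delta\,\E[S_t] + \tfrac{\eta}{2}\mathcal{L}_\tau$, and Gr\"onwall delivers the stated expression.

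Phase~2 follows the same template but replaces the exact slope $\sqrt{2/\pi}$ by the secant/tangent sandwich $mY_t + \mathbf{q}^- \preceq \erf(Y_t) \preceq mY_t + \mathbf{q}^+$ of Lemma~\ref{lemma:three_phases}. Pairing the envelope against $\nabla f$ componentwise (the relevant envelope flips with the sign of $\partial_i f$, but in either case the $\mathbf{q}$-contribution $|\partial_i f|\,q$ is nonnegative) yields the drift bound $-\tfrac{\sqrt2\,\mu m}{\sigma_{\text{max}}}S_t$, while squaring the envelope inside the It\^o term produces the $m^2 Y_t^2$ piece $\tfrac{\eta \mu^2 m^2}{2\sigma_{\text{max}}^2}S_t$ together with a constant $-\tfrac{\eta}{2}\mu d\hat q^2$, the cross term being nonnegative and hence discardable. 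These combine into $-2\mu\Delta\,\E[S_t] + \tfrac{\eta}{2}(\mathcal{L}_\tau - \mu d \hat q^2)$ for the Phase~2 value of $\Delta$, and Gr\"onwall finishes. I expect this last phase to be the main obstacle: keeping the four sign-cases of $\mathbf{q}^{\pm}$ consistent when taking inner products against a $\nabla f$ of varying sign, and matching the residual constant to the advertised $-\mu d \hat q^2$ (in particular which of $q_1,q_2$ survives in $\hat q$), is where the bookkeeping is delicate, whereas the convexity estimates and the Gr\"onwall step are routine.
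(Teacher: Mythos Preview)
Your proposal is correct and follows essentially the same approach as the paper: for each phase you apply It\^o's Lemma to $S_t$ using the phase-specific SDE of Lemma~\ref{lemma:three_phases}, bound the drift via $\Sigma^{-1/2}\succeq\sigma_{\max}^{-1}I_d$ and the PL inequality, bound the It\^o correction using $\tr(\nabla^2 f)\le\mathcal{L}_\tau$ and $(\nabla^2 f)_{ii}\ge\mu$, and close with Gr\"onwall. Your anticipated difficulty in Phase~2 is also exactly where the paper's bookkeeping lies: it expands the squared envelope, discards the nonnegative cross term and the $\hat q\|\nabla f\|_1$ contributions, and retains only the $m^2$ and $\hat q^2$ pieces, just as you outline.
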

\begin{proof}[Proof of Lemma \ref{lemma:SignSGD_dynam_loss}]
We prove each point by leveraging the shape of the law of $X_t$ derived in Lemma \ref{lemma:three_phases}:
\begin{enumerate}
    \item \textbf{Phase 1:}
    \begin{equation}
        d(f(X_t) - f(X_*)) = - \nabla f(X_t) \sign ( \nabla f(X_t))dt = -\lVert \nabla f(X_t)\rVert_1dt \leq -\lVert \nabla f(X_t)\rVert_2dt
    \end{equation}
    Since $f$ is $\mu-PL$, we have that $-\lVert \nabla f(X_t)\rVert_2^2<-  2 \mu ( f(X_t) - f(X_*))$, which implies that
    \begin{equation}
            f(X_t) - f(X_*) \leq \frac{1}{4} \left( \sqrt{\mu}t - 2 \sqrt{f(X_0) - f(X_*)}\right)^2,
        \end{equation}
        meaning that the dynamics will stop before $t_* = 2 \sqrt{\frac{f(X_0) - f(X_*)}{\mu}}$;
    \item \textbf{Phase 2:} By applying the Itô Lemma to $f(X_t) - f(X_*)$ and that
    \begin{equation}
        \frac{m}{\sqrt{2}} \Sigma^{-\frac{1}{2}} \nabla f(X_t) + \mathbf{q}^{-} < \erf \left( \frac{\Sigma^{-\frac{1}{2}} \nabla f(X_t)}{\sqrt{2}} \right) < \frac{m}{\sqrt{2}} \Sigma^{-\frac{1}{2}} \nabla f(X_t) + \mathbf{q}^{+},
    \end{equation}
    we have that if $\hat{q}:= \max(q_1,q_2)$,
    \begin{align}
        d(f(X_t) - f(X_*))  \leq & - \left(   \frac{m}{\sqrt{2}} \Sigma^{-\frac{1}{2}} \nabla f(X_t) + \mathbf{q}^{-} \right)^{\top}\nabla f(X_t) dt  + \bigO(\text{Noise}) \\
         & + \frac{\eta}{2} \tr\left[ \nabla^2 f(X_t) \left( I_d - \diag \left(   \frac{m}{\sqrt{2}} \Sigma^{-\frac{1}{2}} \nabla f(X_t) + \mathbf{q}^{-} \right)^2\right) \right]dt \\
        \leq & - \frac{m}{\sqrt{2}}\frac{1}{\sigma_{\text{max}}} \lVert \nabla f(X_t) \rVert_2^2 dt - \hat{q} \lVert \nabla f(X_t) \rVert_1dt + \frac{\eta \mathcal{L}_{\tau}}{2}dt \\
         & - \frac{\eta \mu}{2} \lVert  \frac{m}{\sqrt{2}} \Sigma^{-\frac{1}{2}} \nabla f(X_t) + \mathbf{q}^{-} \rVert_2^2 dt + \bigO(\text{Noise}) \\
         \leq & - \frac{m}{\sqrt{2}}\frac{1}{\sigma_{\text{max}}} \lVert \nabla f(X_t) \rVert_2^2 dt - \hat{q} \lVert \nabla f(X_t) \rVert_1 dt + \frac{\eta \mathcal{L}_{\tau}}{2}dt \\
         & - \frac{\eta \mu m^2 }{4 \sigma_{\text{max}}^2}\lVert \nabla f(X_t) \rVert_2^2 dt - \frac{\eta \mu d \hat{q}^2}{2} dt - \frac{\sqrt{2} m \hat{q}}{\sigma_{\text{max}}}\lVert \nabla f(X_t) \rVert_1 dt \\
         & + \bigO(\text{Noise}) \\
         \leq & - 2 \mu \left( \frac{m}{\sqrt{2}\sigma_{\text{max}}} + \frac{\eta \mu m^2}{4 \sigma_{\text{max}}^2} \right) (f(X_t) - f(X_*)) dt \\
         &  + \frac{\eta}{2} \left(\mathcal{L}_{\tau} - \mu d \hat{q}^2  \right) dt + \bigO(\text{Noise}),
    \end{align}
    which implies that if $k:=2 \mu \left( \frac{m}{\sqrt{2}\sigma_{\text{max}}} + \frac{\eta \mu m^2}{4 \sigma_{\text{max}}^2} \right)$,
    \begin{equation}
        \E[f(X_t) - f(X_*)] \leq (f(X_0) - f(X_*))) e^{- k t} + \frac{\eta \left(\mathcal{L}_{\tau} - \mu d \hat{q}^2  \right)}{2 k} \left(1 - e^{- k t}\right).
    \end{equation}
    \item \textbf{Phase 3:} By applying the Itô Lemma to $f(X_t) - f(X_*)$, we have that:
    \begin{align}
        d(f(X_t) - f(X_*))  = & - \sqrt{\frac{2}{\pi}} \nabla f(X_t)^{\top}  \Sigma^{-\frac{1}{2}} \nabla f(X_t) dt + \bigO(\text{Noise}) \\
        & + \frac{\eta}{2} \tr \left( \left( I_d - \frac{2}{\pi}\diag \left(\Sigma^{-\frac{1}{2}} \nabla f(X_t) \right)^2 \right) \nabla^2 f(X_t) \right) dt \\
        & \leq -\sqrt{\frac{2}{\pi}} \frac{1}{\sigma_{\text{max}}} \lVert \nabla f(X_t) \rVert_2^2 dt + \bigO(\text{Noise}) \\
        & + \frac{\eta}{2} \tr \left( \nabla^2 f(X_t) \right) dt - \frac{\eta}{\pi} \frac{\mu}{\sigma_{\text{max}}^2}\lVert \nabla f(X_t) \rVert_2^2 dt \\
        & \leq - \left(\sqrt{\frac{2}{\pi}} \frac{1}{\sigma_{\text{max}}} + \frac{\eta}{\pi} \frac{\mu}{\sigma_{\text{max}}^2}\right) \lVert \nabla f(X_t) \rVert_2^2 dt \\
        & + \frac{\eta}{2} Tr(\nabla^2 f(X_t)) dt + \bigO(\text{Noise})
    \end{align}
    Since $f$ is $\mu$-Strongly Convex, $f$ is also $\mu$-PL. Therefore, we have
    \begin{align}
        d(f(X_t) - f(X_*)) \leq &  -2 \mu \left(\sqrt{\frac{2}{\pi}} \frac{1}{\sigma_{\text{max}}} + \frac{\eta}{\pi} \frac{\mu}{\sigma_{\text{max}}^2}\right) (f(X_t) - f(X_*)) dt \\
        & + \frac{\eta}{2} Tr(\nabla^2 f(X_t)) dt + \bigO(\text{Noise}) .
    \end{align}
    Therefore,
    \begin{align}
        d \E[f(X_t) - f(X_*)] \leq - 2 \mu \left(\sqrt{\frac{2}{\pi}} \frac{1}{\sigma_{\text{max}}} + \frac{\eta}{\pi} \frac{\mu}{\sigma_{\text{max}}^2}\right) (\E[f(X_t) - f(X_*)]) dt + \frac{\eta}{2} \mathcal{L}_{\tau} dt,
    \end{align}
    which implies that if $k:=2 \mu \left(\sqrt{\frac{2}{\pi}} \frac{1}{\sigma_{\text{max}}} + \frac{\eta}{\pi} \frac{\mu}{\sigma_{\text{max}}^2}\right)$,
    \begin{equation}
        \E[f(X_t) - f(X_*)] \leq (f(X_0) - f(X_*))) e^{- k t} + \frac{\eta \mathcal{L}_{\tau}}{2 k} \left(1 - e^{- k t}\right).
    \end{equation}
\end{enumerate}
\end{proof}

We can weaken the regularity of $f$ from $\mu$-strongly convex to $\mu$-PL: This results in less tight bounds as expected.

\begin{lemma}[Dynamics of Expected Loss]\label{lemma:SignSGD_dynam_loss_PL_Smooth}
Let $f$ be $\mu$-PL, $L$-smooth, and $S_t:=f(X_t) - f(X_*)$. Then, during
\begin{enumerate}
    \item Phase 1, the dynamics will stop before $t_* = 2 \sqrt{\frac{S_0}{\mu}}$ because $S_t \leq \frac{1}{4} \left( \sqrt{\mu}t - 2 \sqrt{S_0}\right)^2$;
    \item Phase 2 with $\Delta:=  \frac{m}{\sqrt{2}\sigma_{\text{max}}}$: $\E[S_t] \leq S_0 e^{- 2 \mu \Delta t} + \frac{\eta L d }{4 \mu \Delta} \left(1 - e^{- 2 \mu \Delta t}\right)$;
    \item Phase 3 with $\Delta:= \sqrt{\frac{2}{\pi}} \frac{1}{\sigma_{\text{max}}}$: $ \E[S_t] \leq S_0 e^{- 2 \mu \Delta t} +  \frac{\eta L d}{4 \mu \Delta}  \left(1 - e^{- 2 \mu \Delta t}\right)$.
\end{enumerate}
\end{lemma}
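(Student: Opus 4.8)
The plan is to follow the same three-phase decomposition used in the proof of Lemma~\ref{lemma:SignSGD_dynam_loss}, reusing the phase-specific dynamics established in Lemma~\ref{lemma:three_phases} and applying It\^o's lemma to $S_t = f(X_t) - f(X_*)$ in each regime. The only structural change is that every invocation of $\mu$-strong convexity is replaced by the weaker $\mu$-PL inequality $\|\nabla f(x)\|_2^2 \geq 2\mu\, S_t$, while the trace bound $\tr(\nabla^2 f) \leq \mathcal{L}_{\tau}$ is replaced by $L$-smoothness, which supplies both $\tr(\nabla^2 f) \leq Ld$ and the entrywise bound $|(\nabla^2 f)_{ii}| \leq L$.

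For Phase~1 I expect nothing to change: the flow is the deterministic ODE $dX_t = -\sign(\nabla f(X_t))\,dt$, so $dS_t = -\|\nabla f(X_t)\|_1\,dt \leq -\|\nabla f(X_t)\|_2\,dt \leq -\sqrt{2\mu S_t}\,dt$, where the last step is PL. Integrating this scalar differential inequality reproduces $S_t \leq \tfrac14(\sqrt{\mu}\,t - 2\sqrt{S_0})^2$ and the exit time $t_* = 2\sqrt{S_0/\mu}$ verbatim, since the original argument already used only the PL inequality.

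For Phases~2 and~3 I would apply It\^o to $S_t$ with the corresponding SDE, take expectations to remove the martingale term, and bound the drift. The gradient term is handled exactly as before: the preconditioner yields $-\sqrt{2/\pi}\,\nabla f^\top \Sigma^{-1/2}\nabla f \leq -\tfrac{\sqrt{2/\pi}}{\sigma_{\max}}\|\nabla f\|_2^2$ in Phase~3 (and the analogous $-\tfrac{m}{\sqrt{2}\,\sigma_{\max}}\|\nabla f\|_2^2$, together with the favorable term $-\hat q\,\|\nabla f\|_1 \leq 0$, in Phase~2), and PL converts $\|\nabla f\|_2^2$ into $2\mu S_t$, producing the stated rate $2\mu\Delta$ with $\Delta = \sqrt{2/\pi}/\sigma_{\max}$ (resp.\ $m/(\sqrt{2}\,\sigma_{\max})$). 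The diffusion correction $\tfrac{\eta}{2}\tr(\sigma_t\sigma_t^\top \nabla^2 f(X_t))$ is where the new bound enters: the diffusion matrix is diagonal with entries $1 - \tfrac{2}{\pi}(\Sigma^{-1/2}\nabla f)_i^2$, whose absolute value is at most $1$ in Phase~3 because $|Y_t| < 1$ forces $(\Sigma^{-1/2}\nabla f)_i^2 < 2$; combined with $|(\nabla^2 f)_{ii}| \leq L$ this gives $\tr(\sigma_t\sigma_t^\top \nabla^2 f) \leq Ld$, hence a constant source $\tfrac{\eta}{2}Ld$. Solving $\tfrac{d}{dt}\E[S_t] \leq -2\mu\Delta\,\E[S_t] + \tfrac{\eta}{2}Ld$ by the integrating factor then gives exactly the claimed bounds, with asymptotic level $\tfrac{\eta Ld}{4\mu\Delta}$.

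The main obstacle, and the only place the relaxation genuinely bites, is controlling the diffusion correction without $\nabla^2 f \succeq \mu I$. In Lemma~\ref{lemma:SignSGD_dynam_loss}, strong convexity was used to sign-control $\tr(\diag(\Sigma^{-1/2}\nabla f)^2 \nabla^2 f)$, which both sharpened $\Delta$ by an $\eta$-order term $\tfrac{\eta}{\pi}\tfrac{\mu}{\sigma_{\max}^2}\|\nabla f\|_2^2$ and produced the Phase~2 contribution $-\mu d\hat q^2$. With only PL and $L$-smoothness I cannot control that trace's sign, so I would bound the whole correction crudely via the entrywise estimates above; this discards the favorable terms and is precisely why the bounds loosen to rate $2\mu\Delta$ with no $\eta$-order improvement and source $\tfrac{\eta}{2}Ld$ in place of $\tfrac{\eta}{2}(\mathcal{L}_{\tau} - \mu d\hat q^2)$. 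I would double-check that the crude entrywise bound on the diffusion matrix remains valid throughout Phase~2, where the secant/tangent approximation of $\erf$ is in force, and that no hidden use of strong convexity survives in the Phase~2 concentration estimate carried over from Lemma~\ref{lemma:three_phases}.
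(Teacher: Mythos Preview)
Your proposal is correct and follows essentially the same approach as the paper's own proof: reuse the phase-specific SDEs from Lemma~\ref{lemma:three_phases}, apply It\^o to $S_t$, replace the strong-convexity step by the PL inequality $\|\nabla f\|_2^2 \geq 2\mu S_t$, bound the diffusion correction crudely by $\tfrac{\eta}{2}Ld$, and integrate the resulting linear differential inequality. Your discussion of why the $\eta$-order improvement to $\Delta$ and the $-\mu d\hat q^2$ term disappear is exactly the point, and your entrywise justification that the Phase~3 diffusion entries stay in $[-1,1]$ is in fact more explicit than what the paper writes.
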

\begin{proof}[Proof of Lemma \ref{lemma:SignSGD_dynam_loss}]
We prove each point by leveraging the shape of the law of $X_t$ derived in Lemma \ref{lemma:three_phases}:
\begin{enumerate}
    \item \textbf{Phase 1:}
    \begin{equation}
        d(f(X_t) - f(X_*)) = - \nabla f(X_t) \sign ( \nabla f(X_t)) dt = -\lVert \nabla f(X_t)\rVert_1 dt \leq -\lVert \nabla f(X_t)\rVert_2 dt.
    \end{equation}
    Since $f$ is $\mu-PL$, we have that $-\lVert \nabla f(X_t)\rVert_2^2<-  2 \mu ( f(X_t) - f(X_*))$, which implies that
    \begin{equation}
            f(X_t) - f(X_*) \leq \frac{1}{4} \left( \sqrt{\mu}t - 2 \sqrt{f(X_0) - f(X_*)}\right)^2,
        \end{equation}
        meaning that the dynamics will stop before $t_* = 2 \sqrt{\frac{f(X_0) - f(X_*)}{\mu}}$;
    \item \textbf{Phase 2:} By applying the Itô Lemma to $f(X_t) - f(X_*)$ and that
    \begin{equation}
        \frac{m}{\sqrt{2}} \Sigma^{-\frac{1}{2}} \nabla f(X_t) + \mathbf{q}^{-} < \erf \left( \frac{\Sigma^{-\frac{1}{2}} \nabla f(X_t)}{\sqrt{2}} \right) < \frac{m}{\sqrt{2}} \Sigma^{-\frac{1}{2}} \nabla f(X_t) + \mathbf{q}^{+},
    \end{equation}
    we have that if $\hat{q}:= \max(q_1,q_2)$,
    \begin{align}
        d(f(X_t) - f(X_*))  \leq & - \left(   \frac{m}{\sqrt{2}} \Sigma^{-\frac{1}{2}} \nabla f(X_t) + \mathbf{q}^{-} \right)^{\top}\nabla f(X_t) dt  + \bigO(\text{Noise}) \\
         & + \frac{\eta}{2} \tr\left[ \nabla^2 f(X_t) \left( I_d - \diag \left(   \frac{m}{\sqrt{2}} \Sigma^{-\frac{1}{2}} \nabla f(X_t) + \mathbf{q}^{-} \right)^2\right) \right]dt \\
        \leq & - \frac{m}{\sqrt{2}}\frac{1}{\sigma_{\text{max}}} \lVert \nabla f(X_t) \rVert_2^2 dt - \hat{q} \lVert \nabla f(X_t) \rVert_1dt + \frac{\eta L d}{2}dt \\
         \leq & - 2 \mu \frac{m}{\sqrt{2}\sigma_{\text{max}}} (f(X_t) - f(X_*)) dt + \frac{\eta L d}{2}dt + \bigO(\text{Noise}),
    \end{align}
    which implies that if $ \Delta:= \frac{m}{\sqrt{2}\sigma_{\text{max}}} $,
    \begin{equation}
        \E[f(X_t) - f(X_*)] \leq (f(X_0) - f(X_*))) e^{- 2 \mu  \Delta t} + \frac{\eta L d}{4 \mu \Delta} \left(1 - e^{- 2 \mu \Delta t}\right).
    \end{equation}
    \item \textbf{Phase 3:} By applying the Itô Lemma to $f(X_t) - f(X_*)$, we have that:
    \begin{align}
        d(f(X_t) - f(X_*))  = & - \sqrt{\frac{2}{\pi}} \nabla f(X_t)^{\top}  \Sigma^{-\frac{1}{2}} \nabla f(X_t) dt + \bigO(\text{Noise}) + \frac{\eta L d}{2} dt \\
        & \leq -\sqrt{\frac{2}{\pi}} \frac{1}{\sigma_{\text{max}}} \lVert \nabla f(X_t) \rVert_2^2 dt  + \bigO(\text{Noise}) + \frac{\eta L d}{2} dt
    \end{align}
    Since $f$ is $\mu$-PL, we have
    \begin{align}
        d(f(X_t) - f(X_*)) \leq &  -2 \mu \sqrt{\frac{2}{\pi}} \frac{1}{\sigma_{\text{max}}} (f(X_t) - f(X_*)) dt  + \frac{\eta L d}{2}dt  + \bigO(\text{Noise}) .
    \end{align}
    Therefore, for $\Delta:= \sqrt{\frac{2}{\pi}} \frac{1}{\sigma_{\text{max}}}$,
    \begin{equation}
        \E[f(X_t) - f(X_*)] \leq (f(X_0) - f(X_*))) e^{- 2 \mu \Delta t} + \frac{\eta L d}{4 \mu \Delta} \left(1 - e^{- 2 \mu \Delta t}\right).
    \end{equation}
\end{enumerate}
\end{proof}

We can weaken the regularity of $f$ from $\mu$-PL to $L$-Smooth: Of course, we can only bound the expected norm of the gradient.

\begin{lemma}[Dynamics of Expected Gradient Norm]\label{lemma:SignSGD_dynam_loss_Smooth}
Let $f$ be $L$-smooth, $\eta_t$ be a learning rate scheduler such that $\lim_{t \rightarrow \infty} \frac{\phi_t^2}{\phi^1_t} \overset{t \rightarrow \infty}{\rightarrow} 0$ and $\phi^1_t \overset{t \rightarrow \infty}{\rightarrow} \infty$, where $\phi^i_t = \int_0^t (\eta_s)^i ds $. Then, during
\begin{enumerate}
    \item Phase 1, $\lVert \nabla f\left(X_{\Tilde{t}^1}\right)\rVert_1 \leq \frac{f(X_0) - f(X_*)}{\phi_t^1} \overset{t \rightarrow \infty}{\rightarrow} 0$;
    \item Phase 2, $$ \left( \frac{m}{\sqrt{2}}\E \lVert \nabla f\left(X_{\Tilde{t}^{(1,2)}}\right)\rVert_2^2 + \hat{q} \sigma_{\text{max}} \E \lVert \nabla f\left(X_{\Tilde{t}^{(2,2)}}\right)\rVert_1 \right) \leq \sigma_{\text{max}} \left( \frac{f(X_0) - f(X_*)}{\phi^1_t} + \frac{\eta L d}{2} \frac{\phi_t^2}{\phi^1_t} \right) \overset{t \rightarrow \infty}{\rightarrow} 0;$$
    \item Phase 3, $\E \lVert \nabla f\left(X_{\Tilde{t}^3}\right)\rVert_2^2 \leq \sqrt{\frac{\pi}{2}} \frac {\sigma_{\text{max}} \eta L d}{2} \frac{\phi_t^2}{\phi^1_t} + \sqrt{\frac{\pi}{2}} \sigma_{\text{max}}  \frac{f(X_0) - f(X_*)}{\phi^1_t} \overset{t \rightarrow \infty}{\rightarrow} 0$;
\end{enumerate}
where $\Tilde{t}^1$, $\Tilde{t}^{(1,2)}$, $\Tilde{t}^{(2,2)}$, and $\Tilde{t}^3$ are random times with distribution $\frac{\eta_t}{\phi^1_t}$.
\end{lemma}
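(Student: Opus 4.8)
The plan is to handle the three phases separately, in each case starting from the corresponding process in Lemma \ref{lemma:three_phases} with the constant rate replaced by the scheduler, applying It\^o's lemma to $S_t := f(X_t)-f(X_*)$, and then exploiting \emph{only} $L$-smoothness. Since we no longer have strong convexity or the PL inequality, we cannot convert gradient norms into $S_t$; instead we keep the gradient-norm terms and integrate them against $\eta_t$. The decisive final step in every phase is a random-time argument: dividing a time-integrated inequality by $\phi^1_t=\int_0^t\eta_s\,ds$ turns $\frac{1}{\phi^1_t}\int_0^t\eta_s\,(\cdot)\,ds$ into $\E[(\cdot)]$ evaluated at a time $\tilde t$ sampled with density $\eta_t/\phi^1_t$, which is precisely the object appearing in the statement (the martingale part of the It\^o differential has zero expectation and drops out).

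Concretely, in \textbf{Phase 1} the dynamics is the deterministic scheduled SignGD flow $dX_t=-\eta_t\sign(\nabla f(X_t))\,dt$, so $dS_t=-\eta_t\|\nabla f(X_t)\|_1\,dt$; integrating and using $S_t\ge 0$ gives $\int_0^t\eta_s\|\nabla f(X_s)\|_1\,ds\le S_0$, and dividing by $\phi^1_t$ (interpreting the normalized integral as the random-time average) yields the claim. In \textbf{Phase 3}, It\^o's lemma applied to the scheduled version of Eq.~\ref{sde:signsgd_ph3} produces a drift $-\eta_t\sqrt{2/\pi}\,\nabla f^\top\Sigma^{-\frac12}\nabla f$ and an It\^o correction $\tfrac12\eta\eta_t^2\tr((I_d-\tfrac{2}{\pi}\diag(\Sigma^{-\frac12}\nabla f)^2)\,\nabla^2 f)$; bounding $\nabla f^\top\Sigma^{-\frac12}\nabla f\ge \sigma_{\text{max}}^{-1}\|\nabla f\|_2^2$ and $\tr((I_d-\tfrac{2}{\pi}\diag(\cdot)^2)\nabla^2 f)\le Ld$, taking expectations, integrating in time, and dividing by $\phi^1_t$ gives the stated bound with its $\phi^2_t/\phi^1_t$ and $1/\phi^1_t$ terms. \textbf{Phase 2} is identical in spirit but uses the two-sided $\erf$ bounds of Lemma \ref{lemma:three_phases}: the lower bound on $\erf$ simultaneously generates a $\tfrac{m}{\sqrt2}\sigma_{\text{max}}^{-1}\|\nabla f\|_2^2$ term and a $\hat q\|\nabla f\|_1$ term in the drift, so that after multiplying through by $\sigma_{\text{max}}$ and dividing by $\phi^1_t$ the two integrals become the separate random-time expectations $\E\|\nabla f(X_{\tilde t^{(1,2)}})\|_2^2$ and $\E\|\nabla f(X_{\tilde t^{(2,2)}})\|_1$.

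The routine ingredients are the It\^o computations and the elementary estimate $\tr(A\nabla^2 f)\le Ld$ for $0\preceq A\preceq I_d$ (von Neumann's trace inequality with $\|\nabla^2 f\|\le L$). The point that requires care, and which I expect to be the crux, is getting the powers of $\eta_t$ right: under the scheduler the drift integrates against $\eta_s$ (producing $\phi^1_t$), whereas the diffusion's quadratic variation scales like $\eta_s^2$ (producing $\phi^2_t$). This mismatch is exactly what makes the noise contribution $\tfrac{\eta Ld}{2}\,\phi^2_t/\phi^1_t$ vanish under the hypothesis $\phi^2_t/\phi^1_t\to 0$, while $\phi^1_t\to\infty$ kills the $S_0/\phi^1_t$ term, so that all three bounds tend to $0$.
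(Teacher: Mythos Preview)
Your proposal is correct and follows essentially the same approach as the paper: in each phase you apply It\^o's lemma to the scheduled SDE, bound the drift via $\nabla f^\top\Sigma^{-1/2}\nabla f\ge\sigma_{\text{max}}^{-1}\|\nabla f\|_2^2$ (and the $\erf$ sandwich in Phase~2) and the It\^o correction via $L$-smoothness, take expectations, integrate, and divide by $\phi^1_t$ to obtain random-time averages via the law of the unconscious statistician. Your observation that the drift contributes $\phi^1_t$ while the diffusion contributes $\phi^2_t$ is exactly the mechanism the paper relies on.
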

\begin{proof}[Proof of Lemma \ref{lemma:SignSGD_dynam_loss}]
We prove each point by leveraging the shape of the law of $X_t$ derived in Lemma \ref{lemma:three_phases}:
\begin{enumerate}
    \item \textbf{Phase 1:}
    \begin{align}
        d(f(X_t) - f(X_*)) & = - \eta_t \nabla f(X_t) \sign ( \nabla f(X_t)) dt = - \eta_t \lVert \nabla f(X_t)\rVert_1 dt \\
        & = - \phi_t^1 \frac{\eta_t \lVert \nabla f(X_t)\rVert_1 }{\phi_t^1} dt
    \end{align}
    Therefore, by integrating over time and using the law of the unconscious statistician
    \begin{equation}
        \lVert \nabla f\left(X_{\Tilde{t}^1}\right)\rVert_1 \leq \frac{f(X_0) - f(X_*)}{\phi_t^1} \overset{t \rightarrow \infty}{\rightarrow} 0;
    \end{equation}
    \item \textbf{Phase 2:} By applying the Itô Lemma to $f(X_t) - f(X_*)$ and that
    \begin{equation}
        \frac{m}{\sqrt{2}} \Sigma^{-\frac{1}{2}} \nabla f(X_t) + \mathbf{q}^{-} < \erf \left( \frac{\Sigma^{-\frac{1}{2}} \nabla f(X_t)}{\sqrt{2}} \right) < \frac{m}{\sqrt{2}} \Sigma^{-\frac{1}{2}} \nabla f(X_t) + \mathbf{q}^{+},
    \end{equation}
    Similar to what we have shown above, we have that
    \begin{align}
        d(f(X_t) - f(X_*))  \leq & - \frac{m}{\sqrt{2}}\frac{1}{\sigma_{\text{max}}} \eta_t \lVert \nabla f(X_t) \rVert_2^2 dt - \eta_t \hat{q} \lVert \nabla f(X_t) \rVert_1dt \\
        + & \eta_t^2 \frac{\eta L d}{2}dt + \bigO(\text{Noise}).
    \end{align}
    Therefore, by integrating over time and using the law of the unconscious statistician we have
    \begin{equation}
         \frac{m}{\sqrt{2}}\E \lVert \nabla f\left(X_{\Tilde{t}^{(1,2)}}\right)\rVert_2^2 + \hat{q} \sigma_{\text{max}} \E \lVert \nabla f\left(X_{\Tilde{t}^{(2,2)}}\right)\rVert_1  \leq \frac{\sigma_{\text{max}}}{\phi^1_t} \left( f(X_0) - f(X_*) + \frac{\eta L d \phi_t^2}{2} \right) \overset{t \rightarrow \infty}{\rightarrow} 0;
    \end{equation}
    \item \textbf{Phase 3:} By applying the Itô Lemma to $f(X_t) - f(X_*)$, we have that:
    \begin{align}
        d(f(X_t) - f(X_*)) \leq -\sqrt{\frac{2}{\pi}} \frac{1}{\sigma_{\text{max}}} \eta_t \lVert \nabla f(X_t) \rVert_2^2 dt  + \bigO(\text{Noise}) + \eta_t^2 \frac{\eta L d}{2} dt
    \end{align}
    Therefore, by integrating over time and using the law of the unconscious statistician we have
    \begin{equation}
        \E \lVert \nabla f\left(X_{\Tilde{t}^3}\right)\rVert_2^2 \leq \sqrt{\frac{\pi}{2}} \frac {\sigma_{\text{max}} \eta L d}{2} \frac{\phi_t^2}{\phi^1_t} + \sqrt{\frac{\pi}{2}} \sigma_{\text{max}}  \frac{f(X_0) - f(X_*)}{\phi^1_t} \overset{t \rightarrow \infty}{\rightarrow} 0.
    \end{equation}
\end{enumerate}
\end{proof}

\begin{lemma}\label{lemma:Schedulers}
    Under the assumptions of Lemma \ref{lemma:SignSGD_dynam_loss_Insights}, for any step size scheduler $\eta_t$ such that
    \begin{equation}
        \int_0^\infty \eta_s ds = \infty \text{ and } \lim_{t \rightarrow \infty} \eta_t = 0 \implies
        \E[f(X_t) - f(X_*)] \overset{t \rightarrow \infty}{\rightarrow}  0.
    \end{equation}
\end{lemma}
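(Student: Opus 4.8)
The plan is to reduce the claim to a scalar comparison inequality for $\E[S_t]$, with $S_t := f(X_t) - f(X_*)$, and then to analyze the resulting linear ODE with a time-varying coefficient. I would start from the Phase~3 SDE of Lemma~\ref{lemma:three_phases}, now with the scheduler $\eta_t$ multiplying both the drift and the diffusion coefficient, apply Itô's lemma to $S_t$, and take expectations exactly as in the proof of Lemma~\ref{lemma:SignSGD_dynam_loss}. Using $\mu$-strong convexity (hence the PL inequality $\lVert\nabla f(X_t)\rVert_2^2 \ge 2\mu S_t$) together with $\Tr(\nabla^2 f)\le \mathcal{L}_\tau$, this yields a differential inequality of the form
\begin{equation}
\frac{d\,\E[S_t]}{dt} \le -a(t)\,\E[S_t] + b(t),
\end{equation}
where $a(t) = 2\mu\sqrt{\tfrac{2}{\pi}}\,\sigma_{\text{max}}^{-1}\eta_t$ is the (now time-dependent) contraction rate and $b(t) = \tfrac{\eta \mathcal{L}_\tau}{2}\eta_t^2$ is the forcing produced by the diffusion–curvature trace term.

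The second step is to integrate this inequality using the integrating factor $e^{A(t)}$, with $A(t):=\int_0^t a(s)\,ds$, to obtain the Grönwall-type bound
\begin{equation}
\E[S_t] \le S_0\, e^{-A(t)} + \int_0^t b(s)\, e^{-(A(t)-A(s))}\,ds.
\end{equation}
The first term vanishes because $A(t) = 2\mu\sqrt{\tfrac{2}{\pi}}\,\sigma_{\text{max}}^{-1}\int_0^t \eta_s\,ds \to \infty$, which is exactly where the hypothesis $\int_0^\infty \eta_s\,ds = \infty$ enters. The asymptotic loss level announced in Lemma~\ref{lemma:Schedulers_Insights}, namely $\E[S_t]\lesssim \tfrac{\mathcal{L}_\tau \sigma_{\text{max}}}{4\mu}\sqrt{\tfrac{\pi}{2}}\,\eta_t$, will emerge from the second, convolution-type term.

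The hard part will be showing that the convolution term $\int_0^t b(s)e^{-(A(t)-A(s))}\,ds \to 0$. The clean trick I would use is to write $b(s) = c(s)\,a(s)$ with $c(s):=b(s)/a(s)\propto \eta_s$ (the proportionality constant reproduces the asymptotic level of Lemma~\ref{lemma:Schedulers_Insights}), so that $c(s)\to 0$ precisely because $\lim_{t\to\infty}\eta_t = 0$, and then to exploit the exact identity $\int_s^t a(u)e^{-(A(t)-A(u))}\,du = 1 - e^{-(A(t)-A(s))}$. Fixing $\epsilon>0$ and choosing $T_\epsilon$ with $c(s)<\epsilon$ for $s>T_\epsilon$, I would split the integral at $T_\epsilon$: the tail is bounded by $\epsilon\int_0^t a(u)e^{-(A(t)-A(u))}\,du \le \epsilon$, while the head is bounded by $\big(\sup_{[0,T_\epsilon]}c\big)\,e^{-(A(t)-A(T_\epsilon))}\to 0$ as $t\to\infty$ (again using $A(t)\to\infty$). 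Letting $t\to\infty$ and then $\epsilon\to 0$ gives $\limsup_{t\to\infty}\E[S_t]\le 0$, hence convergence to $0$.

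I would close by noting that the exact same integrating-factor-and-split argument applied to the SGD inequality $\tfrac{d\,\E[S_t]}{dt}\le -2\mu\eta_t\,\E[S_t] + \tfrac{\eta\mathcal{L}_\tau\sigma_{\text{max}}^2}{2}\eta_t^2$ yields the companion SGD rate stated in the Remark, the only difference being that the constant in $c(s)$ now scales quadratically rather than linearly in $\sigma_{\text{max}}$. The argument is robust to the precise constants in $a(t)$ and $b(t)$, so I would not grind through the exact coefficients beyond what is needed to identify the leading $\eta_t$-dependence of $c(s)$.
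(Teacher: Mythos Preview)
Your proposal is correct and follows essentially the same route as the paper: write the Phase~3 SDE with the scheduler multiplying drift and diffusion, derive the same linear differential inequality for $\E[S_t]$, and solve via the integrating factor to obtain the Gr\"onwall-type bound. The only cosmetic difference is the last step: the paper dispatches the convolution term in one line via l'H\^opital's rule applied to $\int_0^t e^{A(s)}b(s)\,ds \big/ e^{A(t)}$, whereas you carry out the equivalent $\epsilon$-splitting argument explicitly.
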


\begin{proof}[Proof of Lemma \ref{lemma:Schedulers}] For any scheduler $\eta_k$ used in
\begin{align} \label{eq:SignSGD_Discr_Update_Extended}
x_{k+1} = x_k - \eta \eta_k \sign \left(f_{\gamma_k }(x_k) \right),
\end{align}
the SDE of Phase 3 is
\begin{equation} \label{sde:signsgd_ph3_Insights_Ext}
            d X_t = - \sqrt{\frac{2}{\pi}} \Sigma^{-\frac{1}{2}} \nabla f(X_t) \eta_t dt + \sqrt{\eta} \eta_t \sqrt{I_d - \frac{2}{\pi}\diag \left(\Sigma^{-\frac{1}{2}} \nabla f(X_t) \right)^2} d W_t.
        \end{equation}
Therefore, analogously to the calculations in Lemma \ref{lemma:SignSGD_dynam_loss}, we have that
\begin{equation}
            \E[f(X_t) - f(X_*)] \leq \frac{f(X_0) - f(X_*) + \frac{\eta \mathcal{L}_{\tau}}{2} \int_0^t e^{2 \mu  \int_0^s \left(\sqrt{\frac{2}{\pi}} \frac{1}{\sigma_{\text{max}}} \eta_l + \frac{\eta}{\pi} \frac{\mu}{\sigma_{\text{max}}^2} \eta^2_l \right) dl} \eta^2_s ds}{e^{2 \mu  \int_0^t \left(\sqrt{\frac{2}{\pi}} \frac{1}{\sigma_{\text{max}}} \eta_s + \frac{\eta}{\pi} \frac{\mu}{\sigma_{\text{max}}^2} \eta^2_s \right) ds}}.
\end{equation}
Therefore, using l'Hôpital's rule we have that
\begin{equation}
        \int_0^\infty \eta_s ds = \infty \text{ and } \lim_{t \rightarrow \infty} \eta_t = 0 \implies
        \E[f(X_t) - f(X_*)] \overset{t \rightarrow \infty}{\rightarrow}  0.
\end{equation}
\end{proof}

\begin{lemma}
\label{lemma:SignSGD_StaDistr}
Let $H = \diag (\lambda_1, \dots, \lambda_d)$ and $M_t:=e^{-2 \left(\sqrt{\frac{2}{\pi}} \Sigma^{-\frac{1}{2}}H + \frac{\eta}{\pi} \Sigma^{-1} H^2 \right) t}$. Then,
\begin{enumerate}
    \item $\E  \left[ X_t \right] = e^{- \sqrt{\frac{2}{\pi}} \Sigma^{-\frac{1}{2}} H t} X_0$;
    \item $ Var  \left[ X_t \right] = \left( M_t - e^{-2 \sqrt{\frac{2}{\pi}} \Sigma^{-\frac{1}{2}} H t} \right) X_0^2 + \frac{\eta}{2} \left( \sqrt{\frac{2}{\pi}} I_d + \frac{\eta}{\pi} H \Sigma^{-\frac{1}{2}}\right)^{-1} H^{-1} \Sigma^{\frac{1}{2}} \left( I_d - M_t \right)$.
\end{enumerate}
\end{lemma}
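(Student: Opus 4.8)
The plan is to specialize the Phase~3 SDE from Lemma~\ref{lemma:three_phases} (Eq.~\ref{sde:signsgd_ph3}) to the convex quadratic $f(x) = \frac{1}{2} x^\top H x$, so that $\nabla f(x) = H x$. Because $H$ and $\Sigma$ are diagonal, the system decouples completely into $d$ scalar SDEs
\begin{equation*}
dX_t^i = -\sqrt{\tfrac{2}{\pi}}\, \sigma_i^{-1}\lambda_i X_t^i \, dt + \sqrt{\eta}\sqrt{1 - \tfrac{2}{\pi}\sigma_i^{-2}\lambda_i^2 (X_t^i)^2}\, dW_t^i,
\end{equation*}
each driven by an independent Brownian motion. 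I would exploit this decoupling throughout: since $X_0$ is deterministic, the coordinates evolve independently, so the off-diagonal covariances vanish and it suffices to track the scalar moments $\E[X_t^i]$ and $\E[(X_t^i)^2]$. This is equivalent to, but cleaner than, applying It\^o's Lemma directly to $X_t X_t^\top$ as suggested in the proof idea.

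For the first claim, I would take the expectation of the scalar equation. The diffusion term is a martingale and drops out, leaving the linear ODE $\frac{d}{dt}\E[X_t^i] = -\sqrt{2/\pi}\,\sigma_i^{-1}\lambda_i\,\E[X_t^i]$, whose solution is $\E[X_t^i] = e^{-\sqrt{2/\pi}\sigma_i^{-1}\lambda_i t}\, X_0^i$; collecting coordinates yields $\E[X_t] = e^{-\sqrt{2/\pi}\Sigma^{-1/2}H t} X_0$. For the second claim, I would apply It\^o's Lemma to $(X_t^i)^2$. The key observation, which is what makes the computation closed-form, is that the diffusion coefficient is \emph{quadratic} in $X_t^i$, so its It\^o contribution $\eta(1 - \frac{2}{\pi}\sigma_i^{-2}\lambda_i^2 (X_t^i)^2)$ involves only the second moment: the moment hierarchy closes and I obtain the autonomous linear ODE
\begin{equation*}
\frac{d}{dt}\E[(X_t^i)^2] = -a_i\,\E[(X_t^i)^2] + \eta, \qquad a_i := 2\left(\sqrt{\tfrac{2}{\pi}}\sigma_i^{-1}\lambda_i + \tfrac{\eta}{\pi}\sigma_i^{-2}\lambda_i^2\right),
\end{equation*}
whose solution is $\E[(X_t^i)^2] = (X_0^i)^2 e^{-a_i t} + \frac{\eta}{a_i}(1 - e^{-a_i t})$. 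I would then recognize $e^{-a_i t}$ as the $i$-th diagonal entry of $M_t$.

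To finish, I would subtract $(\E[X_t^i])^2 = (X_0^i)^2 e^{-2\sqrt{2/\pi}\sigma_i^{-1}\lambda_i t}$ from the second moment to get the variance coordinate-wise, then repackage the scalar expressions as the diagonal matrices in the statement. The only algebraic care required is verifying that $\frac{\eta}{a_i} = \left[\frac{\eta}{2}(\sqrt{2/\pi}\, I_d + \frac{\eta}{\pi} H\Sigma^{-1/2})^{-1}H^{-1}\Sigma^{1/2}\right]_{ii}$, which follows by factoring $\sigma_i^{-1}\lambda_i$ out of the denominator $a_i$. I do not expect a genuine obstacle: the computation is a formal but exact moment calculation on the linear-drift, quadratic-diffusion SDE. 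The one point worth flagging is that the moment closure is exactly the step that distinguishes this SDE from generic ones, and that the argument implicitly treats the process as remaining in the near-minimizer regime where the diffusion radicand stays nonnegative, consistent with its interpretation as a stationary-distribution analysis. Sending $t \to \infty$ then reads off $(\E[X_\infty], \mathrm{Cov}[X_\infty]) = (0, \frac{\eta}{2}(\sqrt{2/\pi}\, I_d + \frac{\eta}{\pi}H\Sigma^{-1/2})^{-1}H^{-1}\Sigma^{1/2})$.
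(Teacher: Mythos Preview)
Your proposal is correct and follows essentially the same approach as the paper: take expectations of the Phase~3 SDE to get a linear ODE for the mean, apply It\^o's lemma to the second moment (the paper phrases this as ``the SDE of $X_tX_t^\top$'' and invokes It\^o isometry), integrate the resulting linear ODE, and subtract the squared mean. Your explicit coordinate-wise decoupling is just the diagonal specialization of the paper's matrix formulation, and your observation that the quadratic diffusion closes the moment hierarchy is exactly what makes the computation tractable.
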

\begin{proof}[Proof of Lemma \ref{lemma:SignSGD_StaDistr}]
The proof is banal: The expected value derivation leverages the martingale property of the Brownian motion while that of the variance uses the Ito Isomerty.
\end{proof}

\begin{lemma}\label{lemma:SignSGD_dynam_quad_loss}
Let $H = \diag (\lambda_1, \dots, \lambda_d)$. Then, $\E \left[\frac{X_t^{\top} H X_t}{2}\right] $ is equal to
    \begin{equation}
    \sum_{i=1}^{d} \frac{\lambda_i (X^i_0)^2}{2} e^{- 2\lambda_i\left( \sqrt{\frac{2}{\pi}} \frac{ 1}{\sigma_i} + \frac{ \lambda_i \eta}{\pi \sigma_i^2} \right) t} + \frac{\eta}{4\left( \sqrt{\frac{2}{\pi}} \frac{ 1}{\sigma_i} + \frac{ \lambda_i \eta}{\pi \sigma_i^2} \right)} \left( 1 - e^{- 2 \lambda_i \left( \sqrt{\frac{2}{\pi}} \frac{ 1}{\sigma_i} + \frac{ \lambda_i \eta}{\pi \sigma_i^2} \right) t} \right).
\end{equation}

\end{lemma}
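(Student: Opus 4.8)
The plan is to exploit that both $H=\diag(\lambda_1,\dots,\lambda_d)$ and $\Sigma=\diag(\sigma_1^2,\dots,\sigma_d^2)$ are diagonal, so that on the quadratic $f(x)=\tfrac12 x^\top H x$ (hence $\nabla f(x)=Hx$) the Phase~3 SDE of Eq.~\ref{sde:signsgd_ph3} decouples into $d$ independent scalar equations. For each coordinate $i$ this reads
\[
dX_t^i = -\sqrt{\tfrac{2}{\pi}}\,\frac{\lambda_i}{\sigma_i}\,X_t^i\,dt + \sqrt{\eta}\,\sqrt{1-\tfrac{2}{\pi}\tfrac{\lambda_i^2}{\sigma_i^2}(X_t^i)^2}\,dW_t^i .
\]
Since $\E[\tfrac12 X_t^\top H X_t]=\tfrac12\sum_i \lambda_i\,\E[(X_t^i)^2]$, it suffices to compute $u_i(t):=\E[(X_t^i)^2]$ coordinate by coordinate and then take the weighted sum.

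Next I would apply Itô's Lemma (Theorem~\ref{lemma:SDE_ito}) to $g(x)=x^2$, for which $g'(x)=2x$ and $g''(x)=2$. The drift of $(X_t^i)^2$ receives two contributions: the term $2X_t^i b_i=-2\sqrt{\tfrac{2}{\pi}}\tfrac{\lambda_i}{\sigma_i}(X_t^i)^2$ from the drift of $X_t^i$, and the quadratic-variation term, which equals the squared diffusion coefficient $\eta\big(1-\tfrac{2}{\pi}\tfrac{\lambda_i^2}{\sigma_i^2}(X_t^i)^2\big)$. Taking expectations annihilates the martingale term, and—this is the structurally important point—the \emph{state-dependent} diffusion coefficient feeds its own multiple of $(X_t^i)^2$ back into the linear part. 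One obtains the closed linear ODE
\[
\frac{d u_i}{dt} = -k_i\,u_i + \eta, \qquad k_i := 2\lambda_i\Big(\sqrt{\tfrac{2}{\pi}}\tfrac{1}{\sigma_i}+\tfrac{\eta\lambda_i}{\pi\sigma_i^2}\Big).
\]

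Solving this first-order linear ODE with initial condition $u_i(0)=(X_0^i)^2$ gives $u_i(t)=(X_0^i)^2 e^{-k_i t}+\tfrac{\eta}{k_i}\big(1-e^{-k_i t}\big)$. Multiplying by $\lambda_i/2$ and summing over $i$, the factor $\lambda_i$ in the inhomogeneous term cancels against the $\lambda_i$ inside $k_i$, yielding exactly the claimed expression (the exponent is $-k_i t$ and the constant becomes $\tfrac{\eta}{4(\sqrt{2/\pi}/\sigma_i+\eta\lambda_i/(\pi\sigma_i^2))}$). I note this result is in fact equivalent to Lemma~\ref{lemma:SignSGD_StaDistr}: one can read it off from $\E[(X_t^i)^2]=\Var[X_t^i]+(\E[X_t^i])^2$, where the $(\E[X_t^i])^2$ term cancels the $-e^{-2\sqrt{2/\pi}\Sigma^{-1/2}Ht}$ piece of the variance and leaves the $M_t$ contribution. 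The only step requiring care—and the reason the rate is $k_i$ rather than the naive $2\sqrt{2/\pi}\lambda_i/\sigma_i$—is correctly tracking how the state dependence of SignSGD's diffusion enters the effective decay rate; this is precisely the $\tfrac{\eta\lambda_i}{\pi\sigma_i^2}$ correction that is absent for additive-noise SGD.
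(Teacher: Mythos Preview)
Your proposal is correct and follows essentially the same approach as the paper: decouple coordinate-wise using the diagonal structure, apply It\^o's Lemma to the squared coordinate to obtain the linear ODE $\frac{du_i}{dt}=-k_i u_i+\eta$ with $k_i=2\lambda_i\big(\sqrt{2/\pi}/\sigma_i+\eta\lambda_i/(\pi\sigma_i^2)\big)$, solve it, and sum. The only cosmetic difference is that the paper applies It\^o directly to $\tfrac{\lambda_i}{2}(X_t^i)^2$ rather than to $(X_t^i)^2$ followed by scaling; your added remark linking the result to Lemma~\ref{lemma:SignSGD_StaDistr} is a nice consistency check not present in the paper.
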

\begin{proof}[Proof of Lemma \ref{lemma:SignSGD_dynam_quad_loss}]
Since the matrix $H$ is diagonal, we focus on a single component. We apply the Ito Lemma to $\frac{\lambda_i (X^i_t)^2}{2}$:
\begin{equation}
    d \left( \frac{\lambda_i (X^i_t)^2}{2} \right) = - 2 \sqrt{\frac{2}{\pi}} \frac{ \lambda_i}{\sigma_i} \frac{\lambda_i (X^i_t)^2}{2} dt + \frac{\eta \lambda_i}{2} dt - \frac{2 \lambda_i^2 \eta}{\pi \sigma_i^2} \frac{\lambda_i (X^i_t)^2}{2}dt + \bigO(\text{Noise}),
\end{equation}
which implies that
\begin{equation}
    \E \left[\frac{\lambda_i (X^i_t)^2}{2} \right] = \frac{\lambda_i (X^i_0)^2}{2} e^{- 2\left( \sqrt{\frac{2}{\pi}} \frac{ \lambda_i}{\sigma_i} + \frac{ \lambda_i^2 \eta}{\pi \sigma_i^2} \right) t} + \frac{\eta}{4\left( \sqrt{\frac{2}{\pi}} \frac{ 1}{\sigma_i} + \frac{ \lambda_i \eta}{\pi \sigma_i^2} \right)} \left( 1 - e^{- 2\left( \sqrt{\frac{2}{\pi}} \frac{ \lambda_i}{\sigma_i} + \frac{ \lambda_i^2 \eta}{\pi \sigma_i^2} \right) t} \right).
\end{equation}
Therefore,
\begin{equation}
    \E \left[\frac{X_t^{\top} H X_t}{2}\right] = \sum_{i=1}^{d} \frac{\lambda_i (X^i_0)^2}{2} e^{- 2\lambda_i\left( \sqrt{\frac{2}{\pi}} \frac{ 1}{\sigma_i} + \frac{ \lambda_i \eta}{\pi \sigma_i^2} \right) t} + \frac{\eta}{4\left( \sqrt{\frac{2}{\pi}} \frac{ 1}{\sigma_i} + \frac{ \lambda_i \eta}{\pi \sigma_i^2} \right)} \left( 1 - e^{- 2 \lambda_i \left( \sqrt{\frac{2}{\pi}} \frac{ 1}{\sigma_i} + \frac{ \lambda_i \eta}{\pi \sigma_i^2} \right) t} \right).
\end{equation}
\end{proof}

\begin{lemma}\label{lemma:SignSGD_dynam_wild_noise}
    Under the assumptions of Corollary \ref{thm:SignSGD_SDE_Simplified_WildNoise}, where  $\nabla f_{\gamma}(x) = \nabla f(x) +  \sqrt{\Sigma} Z$, we have that the dynamics of SignSGD in \textbf{Phase 3} is:
    \begin{equation} \label{sde:signsgd_ph3_wn}
            d X_t = - \sqrt{\frac{1}{2}} \Sigma^{-\frac{1}{2}} \nabla f(X_t) dt + \sqrt{\eta} \sqrt{I_d - \frac{1}{2}\diag \left(\Sigma^{-\frac{1}{2}} \nabla f(X_t) \right)^2} d W_t.
        \end{equation}
\end{lemma}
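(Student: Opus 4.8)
The plan is to derive the Phase~3 dynamics by linearizing the function $\Xi$ of Corollary~\ref{thm:SignSGD_SDE_Simplified_WildNoise} about the origin, in direct analogy with the Gaussian Phase~3 in Lemma~\ref{lemma:three_phases}, where $\erf(x/\sqrt{2})$ was replaced by its tangent line $\sqrt{2/\pi}\,x$. Phase~3 is the low signal-to-noise regime in which the argument $\Sigma^{-\frac{1}{2}}\nabla f(X_t)$ is small, so I would start from the full heavy-tailed SDE
\begin{equation*}
d X_t = - 2 \Xi\!\left( \Sigma^{-\frac{1}{2}} \nabla f(X_t) \right) dt + \sqrt{\eta} \sqrt{I_d - 4\diag\!\left( \Xi\!\left( \Sigma^{-\frac{1}{2}} \nabla f(X_t) \right) \right)^2} \, d W_t,
\end{equation*}
and reduce the claim to a componentwise Taylor expansion of $\Xi$ at $0$, specialized to $\nu=2$ as in Corollary~\ref{lemma:SignSGD_dynam_wild_noise_Insights}.

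First I would use the elementary identity ${}_2F_1(a,b;c;0)=1$. Since $\Xi(x)=x\,C\,{}_2F_1\!\left(\tfrac12,\tfrac{\nu+1}{2};\tfrac32;-\tfrac{x^2}{\nu}\right)$ with $C:=\frac{\Gamma((\nu+1)/2)}{\sqrt{\pi\nu}\,\Gamma(\nu/2)}$, and the hypergeometric factor depends on $x$ only through $x^2$, its expansion begins at $1+O(x^2)$; hence $\Xi(x)=Cx+O(x^3)$ and the leading linear coefficient is exactly $C$. One may note that $C$ is the density of $t_\nu$ at $0$, which explains the formal parallel with the Gaussian case, but this observation is not needed for the argument.

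Next I would specialize to $\nu=2$ and evaluate the constant with $\Gamma(3/2)=\tfrac{\sqrt{\pi}}{2}$ and $\Gamma(1)=1$, giving $C=\frac{\Gamma(3/2)}{\sqrt{2\pi}}=\frac{\sqrt{\pi}/2}{\sqrt{2\pi}}=\frac{1}{2\sqrt{2}}$. Substituting $\Xi(x)\approx Cx$ componentwise into the two coefficients then produces $2\,\Xi\!\left(\Sigma^{-\frac{1}{2}}\nabla f(X_t)\right)\approx 2C\,\Sigma^{-\frac{1}{2}}\nabla f(X_t)=\sqrt{\tfrac12}\,\Sigma^{-\frac{1}{2}}\nabla f(X_t)$ for the drift and $4\,\Xi(\cdot)^2\approx 4C^2\,\diag(\Sigma^{-\frac{1}{2}}\nabla f(X_t))^2=\tfrac12\,\diag(\Sigma^{-\frac{1}{2}}\nabla f(X_t))^2$ inside the diffusion, which is precisely Eq.~\ref{sde:signsgd_ph3_wn}.

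The main obstacle is not the algebra but justifying that the neglected remainder is harmless in Phase~3: the error in $\Xi(x)=Cx+O(x^3)$ is cubic, so after squaring it perturbs the diffusion coefficient only at order $x^4$, and since Phase~3 restricts $|x|$ to a small bounded range these contributions are uniformly controlled, exactly as the $\erf$ linearization was controlled in the Phase~3 derivation of Lemma~\ref{lemma:three_phases}. No analytic tool beyond this expansion and the Gamma-function evaluation is required.
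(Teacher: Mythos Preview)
Your proposal is correct and follows essentially the same approach as the paper: both specialize the heavy-tailed SDE of Corollary~\ref{thm:SignSGD_SDE_Simplified_WildNoise} to $\nu=2$ and linearize $\Xi$ at the origin to obtain $2\Xi(x)\sim x/\sqrt{2}$. Your version is simply more explicit, carrying out the Gamma-function evaluation and the hypergeometric identity ${}_2F_1(a,b;c;0)=1$ that the paper leaves implicit.
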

\begin{proof}[Proof of lemma \ref{lemma:SignSGD_dynam_wild_noise}]
We apply Eq. \ref{eq:SignSGD_SDE_Simplified_WildNoise} with $\nu=2$ and linearly approximate $\Xi(x)$ as $\vert x \vert < 1$, where $2\Xi(x) \sim \frac{x}{\sqrt{2}}$.
\end{proof}

\subsection{Alternative Noise Assumptions}\label{sec:AltNoiseAss}

In this subsection, we report the consequences of assuming different noise structures. We do not provide the proofs as they mimic those of Corollary \ref{thm:SignSGD_SDE_Simplified} and Lemma \ref{lemma:three_phases}. We validate our results in Figure \ref{fig:AlternativeNoise}.

\paragraph{Assumption from \citep{ziyin2021strength}.}
As per Eq. (16) in Corollary 2 of \citep{ziyin2021strength}, we take $\Sigma := \sigma^2 f(x_*) \nabla^2 f(x_*)$, where we added the constant $\sigma^2$ as a parameter to control the scale of the noise and $f(x_*)>0$. Under this assumption, we have that for $Y_t := \frac{\nabla^2 f(x_*)^{-\frac{1}{2}} \nabla f(X_t)}{\sqrt{2  \nabla f(x_*)} \sigma}$ and $\mathcal{S}_d(X_t):=\E_{\gamma}[(\sign(\nabla f_\gamma(X_t))(\sign(\nabla f_\gamma(X_t))^{\top}]$, Corollary \ref{thm:SignSGD_SDE_Simplified} becomes:
\begin{align}
d X_t = - \erf \left( Y_t \right) dt + \sqrt{\eta} \sqrt{\mathcal{S}_d(X_t) - \erf \left(Y_t \right) \erf \left(Y_t \right)^{\top}} d W_t.
\end{align}
As a consequence, Lemma \ref{lemma:three_phases} becomes:
\begin{lemma}
Let $f$ be $\mu$-strongly convex, $Tr(\nabla^2 f(x)) \leq \mathcal{L}_{\tau}$, $\lambda_{\text{max}}$ be the largest eigenvalue of $\nabla^2 f(x_*)$, and $S_t:=f(X_t) - f(x_*)$. Then, during
\begin{enumerate}
    \item Phase 1, the loss will reach $0$ before $t_* = 2 \sqrt{\frac{S_0}{\mu}}$ because $S_t \leq \frac{1}{4} \left( \sqrt{\mu}t - 2 \sqrt{S_0}\right)^2$;
    \item Phase 2 with $\Delta:= \left( \frac{m}{\sqrt{2 f(x_*)}\sigma_{\text{max}}\sqrt{\lambda_{\text{max}}}} + \frac{\eta \mu m^2}{4 f(x_*) \sigma_{\text{max}}^2 \lambda_{\text{max}} } \right)$: $\E[S_t] \leq S_0 e^{- 2 \mu \Delta t} + \frac{\eta}{2} \frac{ \left(\mathcal{L}_{\tau} - \mu d \hat{q}^2  \right)}{2 \mu \Delta} \left(1 - e^{- 2 \mu \Delta t}\right)$;
    \item Phase 3 with $\Delta:= \left(\sqrt{\frac{2}{\pi}} \frac{1}{\sqrt{ f(x_*)}\sigma_{\text{max}}\sqrt{\lambda_{\text{max}}}} + \frac{\eta}{\pi} \frac{\mu}{f(x_*) \sigma_{\text{max}}^2 \lambda_{\text{max}}}\right)$: $ \E[S_t] \leq S_0 e^{- 2 \mu \Delta t} +  \frac{\eta}{2} \frac{\mathcal{L}_{\tau}}{ 2 \mu \Delta} \left(1 - e^{- 2 \mu \Delta t}\right)$.
\end{enumerate}
\end{lemma}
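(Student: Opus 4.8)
The plan is to reduce this statement to the diagonal computation already carried out in Lemma~\ref{lemma:SignSGD_dynam_loss} by diagonalizing $\nabla^2 f(x_*)$. Since $\Sigma = \sigma^2 f(x_*)\nabla^2 f(x_*)$, the two matrices are simultaneously diagonalizable, so in the eigenbasis of $\nabla^2 f(x_*)$ one has $\Sigma^{-1/2} = \frac{1}{\sigma\sqrt{f(x_*)}}(\nabla^2 f(x_*))^{-1/2}$ and the signal-to-noise vector $Y_t$ acquires exactly the stated form. The single quantitative fact driving every phase is the operator bound $\Sigma^{1/2} \preceq \sigma_{\text{max}}\sqrt{f(x_*)\lambda_{\text{max}}}\,I_d$, equivalently $\Sigma^{-1/2}\succeq \frac{1}{\sigma_{\text{max}}\sqrt{f(x_*)\lambda_{\text{max}}}}I_d$, where $\sigma_{\text{max}}$ is the noise scale and $\lambda_{\text{max}}$ the top eigenvalue of $\nabla^2 f(x_*)$; this is precisely where the new factors $\sqrt{f(x_*)}$ and $\sqrt{\lambda_{\text{max}}}$ enter the constant $\Delta$ of each phase.

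Phase 1 is verbatim the original argument: once $|Y_t|>\tfrac32$ the drift $\erf(Y_t)$ collapses to $\sign(\nabla f(X_t))$, the process is the SignGD ODE, and $dS_t = -\|\nabla f(X_t)\|_1\,dt \le -\|\nabla f(X_t)\|_2\,dt$. Strong convexity gives $\|\nabla f(X_t)\|_2\ge\sqrt{2\mu S_t}$, and integrating the resulting inequality for $\sqrt{S_t}$ yields $S_t\le\tfrac14(\sqrt{\mu}\,t-2\sqrt{S_0})^2$ and the exit time $t_*=2\sqrt{S_0/\mu}$.

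For Phase 3 I would apply Itô's Lemma to $S_t$ along the Phase-3 SDE, whose drift is $-\sqrt{2/\pi}\,\Sigma^{-1/2}\nabla f(X_t)$. The first-order term $-\sqrt{2/\pi}\,\nabla f(X_t)^\top\Sigma^{-1/2}\nabla f(X_t)$ is lower-bounded via the operator inequality by $\sqrt{2/\pi}\,\frac{1}{\sigma_{\text{max}}\sqrt{f(x_*)\lambda_{\text{max}}}}\|\nabla f(X_t)\|_2^2$, and the second-order trace term splits into $\tfrac{\eta}{2}\mathcal{L}_{\tau}$ plus a negative strong-convexity contribution bounded by $-\tfrac{\eta\mu}{\pi}\frac{1}{\sigma_{\text{max}}^2 f(x_*)\lambda_{\text{max}}}\|\nabla f(X_t)\|_2^2$. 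Combining these, applying $\mu$-PL once more, and taking expectations yields the linear ODE $\tfrac{d}{dt}\E[S_t]\le -2\mu\Delta\,\E[S_t]+\tfrac{\eta}{2}\mathcal{L}_{\tau}$ with $\Delta$ as stated, which integrates to the claimed bound. Phase 2 is identical but replaces the linearization of $\erf$ with the secant/tangent envelopes $\frac{m}{\sqrt2}\Sigma^{-1/2}\nabla f+\mathbf{q}^-<\erf(Y_t)<\frac{m}{\sqrt2}\Sigma^{-1/2}\nabla f+\mathbf{q}^+$ of Lemma~\ref{lemma:three_phases}, producing the extra $-\mu d\hat q^2$ term in the numerator.

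The main obstacle is that, unlike Corollary~\ref{thm:SignSGD_SDE_Simplified}, here $\Sigma$ is non-diagonal, so the coordinates of $s:=\sign(\nabla f_\gamma(X_t))$ are correlated and the diffusion matrix $D_t=\mathcal{S}_d(X_t)-\erf(Y_t)\erf(Y_t)^\top=\Cov(s)$ is a full PSD matrix rather than the clean diagonal $I_d-\diag(\erf(Y_t))^2$. The delicate point is the Itô term $\tfrac{\eta}{2}\tr(D_t\nabla^2 f(X_t))$: expanding it as $\tfrac{\eta}{2}\big[\tr(\nabla^2 f)+\sum_{i\ne j}(\nabla^2 f)_{ij}\Cov(s_i,s_j)-\erf(Y_t)^\top\nabla^2 f\,\erf(Y_t)\big]$ exposes a cross-covariance sum that vanishes identically in the independent diagonal case but not here, and it is exactly this sum that must be controlled to recover $\tr(D_t\nabla^2 f)\le\mathcal{L}_{\tau}$. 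I would dispose of it by working in the eigenbasis of $H=\nabla^2 f(x_*)$ and invoking the near-minimum quadratic approximation $\nabla^2 f(X_t)\approx H$ used elsewhere in the paper, under which $\nabla^2 f$ is diagonal and the off-diagonal terms drop, leaving $\tr(D_tH)=\tr(H)-\sum_i H_{ii}\erf(Y_{t,i})^2\le\mathcal{L}_{\tau}$; establishing the same bound for a general strongly convex $f$, whose Hessian need not commute with $H$ along the trajectory, is the genuinely harder step.
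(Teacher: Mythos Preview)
Your approach is correct and is precisely what the paper does: it states only that the proof ``mimics those of Corollary~\ref{thm:SignSGD_SDE_Simplified} and Lemma~\ref{lemma:three_phases}'', with the substitution $\sigma_{\text{max}}\to\sigma_{\text{max}}\sqrt{f(x_*)\lambda_{\text{max}}}$ arising from the operator bound you identify.

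Your concern in the final paragraph, however, dissolves once the change of basis is carried through completely. In the eigenbasis of $H=\nabla^2 f(x_*)$ the covariance $\Sigma=\sigma^2 f(x_*)H$ is diagonal, so the Gaussian noise components $Z_i$ are \emph{independent}; hence so are the coordinates $s_i=\sign\bigl((\nabla f)_i+Z_i\bigr)$, and $D_t=\Cov(s)$ is itself diagonal with entries $1-\erf^2(Y_{t,i})\in[0,1]$. The cross-covariance sum you isolate is therefore identically zero in that basis, not merely approximately. With $D_t$ diagonal,
\[
\tr\bigl(D_t\,\nabla^2 f(X_t)\bigr)=\sum_i (D_t)_{ii}\,(\nabla^2 f(X_t))_{ii}\le\tr(\nabla^2 f(X_t))\le\mathcal{L}_\tau
\]
holds for \emph{any} convex $f$, since $(\nabla^2 f(X_t))_{ii}=e_i^\top\nabla^2 f(X_t)\,e_i\ge 0$; no commutation with $H$ and no quadratic approximation is needed. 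Likewise $(\nabla^2 f(X_t))_{ii}\ge\mu$ gives the strong-convexity lower bound on the subtracted $\erf^2$ term. So the ``genuinely harder step'' is not harder at all: after diagonalization you are literally back in the setting of Lemma~\ref{lemma:SignSGD_dynam_loss} with a rescaled noise level.
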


\paragraph{Assumption from \citep{wojtowytsch2024stochastic}.}
\citep{wojtowytsch2024stochastic} discusses two possible assumptions on $\Sigma$: $\|\Sigma(x)\| \leq C f(x)$ and $\|\Sigma(x)\| \leq C f(x)\left[1+|x|^2\right]$. As Section 2.4, they ultimately use $\Sigma = C f(x) I_d$. Therefore, we take $\Sigma := \sigma^2 f(x) I_d$, where we changed the constant to $\sigma^2$ to maintain consistency with the rest of our paper. Under this assumption, we have that for $Y_t := \frac{\nabla f(X_t)}{\sqrt{2  f(x)} \sigma}$, Corollary \ref{thm:SignSGD_SDE_Simplified} becomes:
\begin{align}
d X_t = - \erf \left( Y_t \right) dt + \sqrt{\eta} \sqrt{I_d - \diag(\erf \left(Y_t \right))^2} d W_t.
\end{align}
As a consequence, Lemma \ref{lemma:three_phases} becomes:
\begin{lemma}
Let $f$ be $\mu$-strongly convex, $Tr(\nabla^2 f(x)) \leq \mathcal{L}_{\tau}$, and $S_t:=f(X_t) - f(X_*)$. Then, during
\begin{enumerate}
    \item Phase 1, the loss will reach $0$ before $t_* = 2 \sqrt{\frac{S_0}{\mu}}$ because $S_t \leq \frac{1}{4} \left( \sqrt{\mu}t - 2 \sqrt{S_0}\right)^2$;
    \item Phase 2 with $\beta := \frac{\eta}{2} \left( \mathcal{L}_{\tau} - \mu d \hat{q}^2 - \frac{m^2 \mu^2}{\sigma^2}\right)$ and $\alpha:=  \frac{\sqrt{2} m \mu}{\sigma}$, 
    \begin{equation}
        \E[S_t] \leq \frac{\beta^2 \left( \mathcal{W}\left( \frac{(\beta + \sqrt{S_0} \alpha)}{\beta} \exp\left(-\frac{\alpha^2 t - 2 \sqrt{S_0} \alpha}{2 \beta} - 1 \right) \right) + 1 \right)^2}{\alpha^2} \overset{t \rightarrow \infty}{\rightarrow} \frac{\beta^2}{\alpha^2};
    \end{equation}
    \item Phase 3 with $\beta := \eta \left( \frac{\mathcal{L}_{\tau}}{2} - \frac{2 \mu^2}{\pi \sigma^2}\right)$ and $\alpha:= 2 \sqrt{\frac{2}{\pi}} \frac{\mu}{\sigma}$, 
    \begin{equation}
        \E[S_t] \leq \frac{\beta^2 \left( \mathcal{W}\left( \frac{(\beta + \sqrt{S_0} \alpha)}{\beta} \exp\left(-\frac{\alpha^2 t - 2 \sqrt{S_0} \alpha}{2 \beta} - 1 \right) \right) + 1 \right)^2}{\alpha^2} \overset{t \rightarrow \infty}{\rightarrow} \frac{\beta^2}{\alpha^2},
    \end{equation}
\end{enumerate}
where $\mathcal{W}$ is the Lambert $\mathcal{W}$ function.
\end{lemma}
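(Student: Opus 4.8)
The plan is to treat the three phases exactly as in Lemma \ref{lemma:SignSGD_dynam_loss}, the single new ingredient being the state-dependent noise scale $\Sigma=\sigma^2 f(x)I_d$. This scale is natural in the interpolation regime of \citep{wojtowytsch2024stochastic} because it forces the noise to vanish at the optimum, so I would adopt the implicit normalization $f(X_*)=0$; then $f(X_t)=S_t$ throughout, and the factors $\sqrt{f}$ and $f$ entering the drift and diffusion may be replaced by $\sqrt{S_t}$ and $S_t$. Phase 1 requires no new work: when $|Y_t|>\tfrac32$ the error function saturates to $\sign(\nabla f)$ and the diffusion coefficient $I_d-\diag(\erf(Y_t))^2$ vanishes, so the SDE collapses to the SignGD ODE irrespective of the noise structure; applying Itô to $S_t$ gives $dS_t=-\|\nabla f\|_1\,dt\le-\|\nabla f\|_2\,dt$, and the PL inequality reproduces the quadratic bound and the exit time $t_*$ verbatim.

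For Phases 2 and 3 I would apply Itô's lemma (Theorem \ref{lemma:SDE_ito}) to $S_t=f(X_t)$ using the approximate SDE displayed just before the lemma. In Phase 3 one has $\erf(Y_t)\approx\sqrt{2/\pi}\,\nabla f/(\sqrt{f}\,\sigma)$ and a diffusion second moment $\approx\frac{2}{\pi}\diag(\nabla f)^2/(f\sigma^2)$, so the drift contributes $-\sqrt{2/\pi}\,\|\nabla f\|_2^2/(\sqrt{f}\,\sigma)$ while the Itô correction is bounded by $\frac{\eta}{2}\mathcal{L}_{\tau}-\frac{\eta\mu}{\pi f\sigma^2}\|\nabla f\|_2^2$, using $\tr(\diag(\nabla f)^2\nabla^2 f)=\sum_i(\partial_i f)^2(\nabla^2 f)_{ii}\ge\mu\|\nabla f\|_2^2$ since strong convexity forces the diagonal Hessian entries above $\mu$. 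Invoking PL, $\|\nabla f\|_2^2\ge 2\mu S_t$, together with $f=S_t$, turns $\|\nabla f\|_2^2/\sqrt{f}\ge 2\mu\sqrt{S_t}$ and $\|\nabla f\|_2^2/f\ge 2\mu$, yielding the pathwise inequality $dS_t\le(\beta-\alpha\sqrt{S_t})\,dt+\text{(martingale)}$ with precisely $\alpha=2\sqrt{2/\pi}\,\mu/\sigma$ and $\beta=\eta(\mathcal{L}_{\tau}/2-2\mu^2/(\pi\sigma^2))$. Phase 2 is the identical computation with $\erf$ replaced by its secant bound $mY_t+\mathbf{q}^-$: the cross-term $\langle\nabla f,\mathbf{q}^-\rangle$ and the square $\|mY_t+\mathbf{q}^-\|_2^2$ generate the $-\mu d\hat q^2$ and $-m^2\mu^2/\sigma^2$ pieces of $\beta$ and the slope $\alpha=\sqrt{2}\,m\mu/\sigma$, mirroring Phase 2 of Lemma \ref{lemma:SignSGD_dynam_loss} term by term.

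It then remains to solve the scalar equation $\dot g=\beta-\alpha\sqrt{g}$ for $g(t)=\E[S_t]$. Substituting $u=\sqrt{g}$ gives the separable relation $2u\,du/(\beta-\alpha u)=dt$, whose implicit solution $w+\beta\ln w=\mathrm{const}-\tfrac{\alpha^2}{2}t$ with $w=\alpha u-\beta$ inverts through the Lambert $\mathcal{W}$ function to $\sqrt{g}=\tfrac{\beta}{\alpha}(\mathcal{W}(\cdots)+1)$, i.e. the displayed expression, and the fixed point $u^\star=\beta/\alpha$ delivers the stated limit $\beta^2/\alpha^2$ as $t\to\infty$.

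The main obstacle is exactly the passage from the pathwise inequality to the scalar ODE for $\E[S_t]$: taking expectations only gives $\dot g\le\beta-\alpha\,\E[\sqrt{S_t}]$, and since Jensen yields $\E[\sqrt{S_t}]\le\sqrt{g}$ the naive substitution $\E[\sqrt{S_t}]\mapsto\sqrt{g}$ points the wrong way, so the bound cannot be closed at this level without more. Consistent with the informal character of this alternative-noise section, I would close it by analysing the expected-loss dynamics through the comparison ODE $\dot g=\beta-\alpha\sqrt{g}$ (equivalently, a representative deterministic trajectory); upgrading this to a fully rigorous statement would require a reverse-Jensen control, e.g. a concentration or variance estimate giving a lower bound on $\E[\sqrt{S_t}]$ in terms of $\sqrt{\E[S_t]}$.
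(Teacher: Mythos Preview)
Your approach is essentially the one the paper has in mind: the authors do not write out a proof for this lemma, stating only that it ``mimics'' those of Corollary~\ref{thm:SignSGD_SDE_Simplified} and Lemma~\ref{lemma:three_phases}, and your derivation does exactly that---apply It\^o to $S_t$ phase by phase, bound the drift using strong convexity/PL and the diagonal-Hessian lower bound $(\nabla^2 f)_{ii}\ge\mu$, and read off $\alpha,\beta$. Your identification of the constants and the Lambert-$\mathcal W$ inversion of the comparison ODE $\dot g=\beta-\alpha\sqrt{g}$ are correct.

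The Jensen obstruction you flag is real and is not something the paper resolves either. In the constant-noise lemma (Lemma~\ref{lemma:SignSGD_dynam_loss}) the resulting inequality is \emph{linear} in $S_t$, so taking expectations closes cleanly; here the state-dependent scale $\Sigma=\sigma^2 f(x)I_d$ produces the concave term $-\alpha\sqrt{S_t}$, and $\E[\sqrt{S_t}]\le\sqrt{\E[S_t]}$ indeed points the wrong way. Your choice to treat the displayed bound as the solution of the comparison ODE for a representative trajectory is consistent with the informal spirit of this alternative-noise subsection (and with the paper's own omission of a proof). If you wanted to tighten it, the natural route is the one you sketch: control $\Var(S_t)$ (e.g.\ via a second application of It\^o to $S_t^2$) so that $\E[\sqrt{S_t}]\ge\sqrt{\E[S_t]}-O(\Var(S_t)/\E[S_t]^{3/2})$, or simply state the result for $\sqrt{\E[S_t]}$ replaced by $\E[\sqrt{S_t}]$ and note the gap.
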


\paragraph{Assumption from \citep{wu2022alignment}.}
\citep{wu2022alignment} proposes a novel structure of $\Sigma$ as being aligned with the Fisher Information Matrix and proportional to the loss function. Consistently with this, we take $\Sigma := \sigma^2 f(x) \nabla^2 f(x)$, where we changed the constants to $\sigma^2$ to maintain consistency with the rest of our paper. Under this assumption, we have that for $Y_t := \frac{ (\nabla^2 f(X_t))^{-\frac{1}{2}}\nabla f(X_t)}{\sqrt{2  \nabla f(x)} \sigma}$ and $\mathcal{S}_d(X_t):=\E_{\gamma}[(\sign(\nabla f_\gamma(X_t))(\sign(\nabla f_\gamma(X_t))^{\top}]$, Corollary \ref{thm:SignSGD_SDE_Simplified} becomes:
\begin{align}
d X_t = - \erf \left( Y_t \right) dt + \sqrt{\eta} \sqrt{\mathcal{S}_d(X_t) - \erf \left(Y_t \right) \erf \left(Y_t \right)^{\top}} d W_t.
\end{align}
As a consequence, Lemma \ref{lemma:three_phases} becomes:
\begin{lemma}
Let $f$ be $\mu$-strongly convex, $L$-smooth,  $Tr(\nabla^2 f(x)) \leq \mathcal{L}_{\tau}$, and $S_t:=f(X_t) - f(X_*)$. Then, during
\begin{enumerate}
    \item Phase 1, the loss will reach $0$ before $t_* = 2 \sqrt{\frac{S_0}{\mu}}$ because $S_t \leq \frac{1}{4} \left( \sqrt{\mu}t - 2 \sqrt{S_0}\right)^2$;
    \item Phase 2 with $\beta := \frac{\eta}{2} \left( \mathcal{L}_{\tau} - \mu d \hat{q}^2 - \frac{m^2 \mu^2}{\sigma^2 L }\right)$ and $\alpha:=  \frac{\sqrt{2} m \mu}{\sqrt{L}\sigma}$, 
    \begin{equation}
        \E[S_t] \leq \frac{\beta^2 \left( \mathcal{W}\left( \frac{(\beta + \sqrt{S_0} \alpha)}{\beta} \exp\left(-\frac{\alpha^2 t - 2 \sqrt{S_0} \alpha}{2 \beta} - 1 \right) \right) + 1 \right)^2}{\alpha^2} \overset{t \rightarrow \infty}{\rightarrow} \frac{\beta^2}{\alpha^2};
    \end{equation}
    \item Phase 3 with $\beta := \eta \left( \frac{\mathcal{L}_{\tau}}{2} - \frac{2 \mu^2}{\pi \sigma^2 L }\right)$ and $\alpha:= 2 \sqrt{\frac{2}{\pi}} \frac{\mu}{\sqrt{L}\sigma}$, 
    \begin{equation}
        \E[S_t] \leq \frac{\beta^2 \left( \mathcal{W}\left( \frac{(\beta + \sqrt{S_0} \alpha)}{\beta} \exp\left(-\frac{\alpha^2 t - 2 \sqrt{S_0} \alpha}{2 \beta} - 1 \right) \right) + 1 \right)^2}{\alpha^2} \overset{t \rightarrow \infty}{\rightarrow} \frac{\beta^2}{\alpha^2},
    \end{equation}
\end{enumerate}
where $\mathcal{W}$ is the Lambert $\mathcal{W}$ function.
\end{lemma}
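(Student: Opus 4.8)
The plan is to mirror the proof of Lemma~\ref{lemma:SignSGD_dynam_loss}, replacing the isotropic/diagonal noise by the Fisher-aligned structure $\Sigma=\sigma^2 f(x)\nabla^2 f(x)$ and using the phase-decomposed SDE stated immediately above the lemma. First I would record the three phase regimes exactly as in Lemma~\ref{lemma:three_phases}, but now with signal-to-noise variable $Y_t=(\nabla^2 f(X_t))^{-1/2}\nabla f(X_t)/(\sqrt{2f(X_t)}\,\sigma)$: on $\{|Y_t|>3/2\}$ the map $\erf$ saturates to $\sign$ and the diffusion matrix $\mathcal{S}_d(X_t)-\erf(Y_t)\erf(Y_t)^\top$ vanishes, so Phase~1 is governed by the deterministic SignGD ODE; on $\{1<|Y_t|<3/2\}$ I would use the secant/tangent affine envelopes of $\erf$ (the constants $m,\mathbf{q}^{\pm},\hat q$); and on $\{|Y_t|<1\}$ I would use the linearization $\erf(Y_t)\approx\tfrac{2}{\sqrt\pi}Y_t$.

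Phase~1 is immediate and identical to Lemma~\ref{lemma:SignSGD_dynam_loss}: since the drift $-\sign(\nabla f)$ does not see the noise, $\dot S_t=-\nabla f^\top\sign(\nabla f)=-\|\nabla f\|_1\le-\|\nabla f\|_2\le-\sqrt{\mu}\sqrt{S_t}$ by the PL inequality, whose integration yields $S_t\le\tfrac14(\sqrt\mu\,t-2\sqrt{S_0})^2$ and the exit time $t_*=2\sqrt{S_0/\mu}$.

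For Phases~2 and~3 the core step is to apply It\^o's Lemma to $S_t=f(X_t)-f(X_*)$, take expectations (the stochastic integral is a mean-zero martingale), and collapse the drift and second-order trace term to scalar quantities in $S_t$. In Phase~3 the drift of $S_t$ is $-\sqrt{2/\pi}\,\nabla f^\top(\nabla^2 f)^{-1/2}\nabla f/(\sqrt{f}\,\sigma)$; using $(\nabla^2 f)^{-1/2}\succeq L^{-1/2}I$ together with $\|\nabla f\|^2\ge 2\mu S$ and $f\ge S$ turns this into $-\alpha\sqrt{S_t}$ with $\alpha=2\sqrt{2/\pi}\,\mu/(\sqrt L\,\sigma)$, while the trace term $\tfrac{\eta}{2}\tr\!\big((\mathcal{S}_d-\erf(Y)\erf(Y)^\top)\nabla^2 f\big)$, bounded above by $\tfrac{\eta}{2}\mathcal{L}_\tau$ via $\tr(\nabla^2 f)\le\mathcal{L}_\tau$ and below through the exact cancellation $\tr((\nabla^2 f)^{-1/2}\nabla f\nabla f^\top(\nabla^2 f)^{-1/2}\nabla^2 f)=\|\nabla f\|^2$, contributes the constant $\beta=\eta(\mathcal{L}_\tau/2-2\mu^2/(\pi\sigma^2 L))$. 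Phase~2 is analogous with the affine-envelope constants replacing the linearization. This reduces each phase to the scalar differential inequality $\tfrac{d}{dt}\E[S_t]\le\beta-\alpha\sqrt{\E[S_t]}$.

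Finally I would solve the comparison ODE $\dot u=\beta-\alpha\sqrt u$, $u(0)=S_0$: the substitution $w=\sqrt u$ gives the separable equation $2w\,dw/(\beta-\alpha w)=dt$, whose implicit solution inverts through the Lambert~$\mathcal{W}$ function to exactly the closed form in the statement, with stationary value $\beta^2/\alpha^2$ attained as the exponential argument of $\mathcal{W}$ tends to $0$. The main obstacle is the nonlinearity: unlike the linear ODEs of Lemma~\ref{lemma:SignSGD_dynam_loss}, the drift is $\sqrt{S_t}$, so passing from $\tfrac{d}{dt}\E[S_t]\le\beta-\alpha\E[\sqrt{S_t}]$ to a closed inequality in $\E[S_t]$ requires a comparison argument that respects the concavity of $\sqrt{\,\cdot\,}$ (a bare Jensen step points the wrong way); I would handle this by comparing the diffusion pathwise with the deterministic envelope $u(t)$ before integrating, and by carefully tracking how the $(\nabla^2 f)^{-1/2}$ weighting generates the $1/\sqrt L$ and $1/L$ factors in $\alpha$ and $\beta$ that distinguish this Fisher-aligned case from the isotropic one.
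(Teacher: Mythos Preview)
Your plan matches the paper's own argument, which is not written out but is declared to mimic Corollary~\ref{thm:SignSGD_SDE_Simplified} and Lemma~\ref{lemma:three_phases} (and, by extension, Lemma~\ref{lemma:SignSGD_dynam_loss}); the reduction to the scalar inequality $\dot u\le\beta-\alpha\sqrt u$ and its Lambert-$\mathcal W$ integration is exactly what the $f$-dependent noise forces.

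Two small points to clean up. First, invoking $f\ge S$ to collapse the drift to $-\alpha\sqrt{S_t}$ goes the wrong way: if $f>S$ then $S/\sqrt f<\sqrt S$, so the drift bound \emph{weakens} rather than strengthens. What is actually needed (and is implicit in the lemma's constants, as in the $\Sigma=\sigma^2 f(x)I_d$ case just above) is the normalization $f(X_*)=0$, so that $f=S$ exactly. Second, your exact cancellation $\tr\bigl((\nabla^2 f)^{-1/2}\nabla f\nabla f^\top(\nabla^2 f)^{-1/2}\nabla^2 f\bigr)=\|\nabla f\|^2$ is correct but yields a sharper $\beta$-contribution ($-2\mu/(\pi\sigma^2)$) than the one stated; to recover the paper's $-2\mu^2/(\pi\sigma^2 L)$ you should instead bound $Y^\top\nabla^2 f\,Y\ge\mu\|Y\|^2$ and then use $(\nabla^2 f)^{-1}\succeq L^{-1}I$ on $\|Y\|^2$. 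Your Jensen concern is real and well spotted; the paper does not address it either.
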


\begin{figure}%
   \centering
    \subfloat[$\Sigma(x) := \sigma^2 f(x_*) \nabla^2 f(x_*)$.]{{\includegraphics[width=0.32\linewidth]{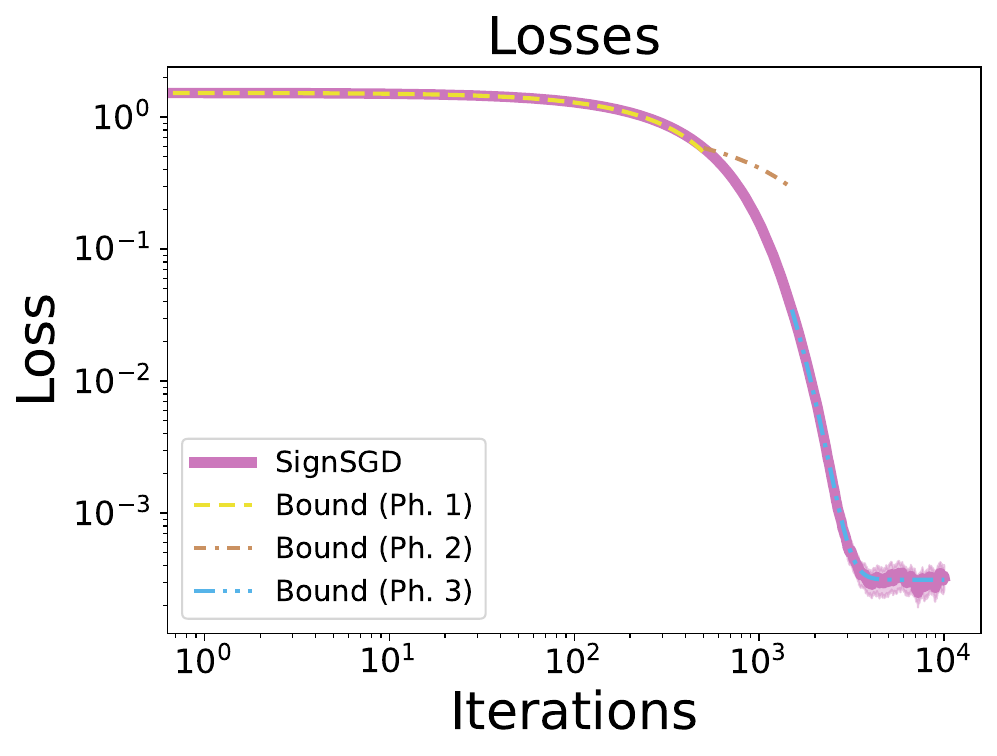} }}%
    \subfloat[$\Sigma(x) := \sigma^2 f(x) \nabla^2 f(x)$.]{{\includegraphics[width=0.32\linewidth]{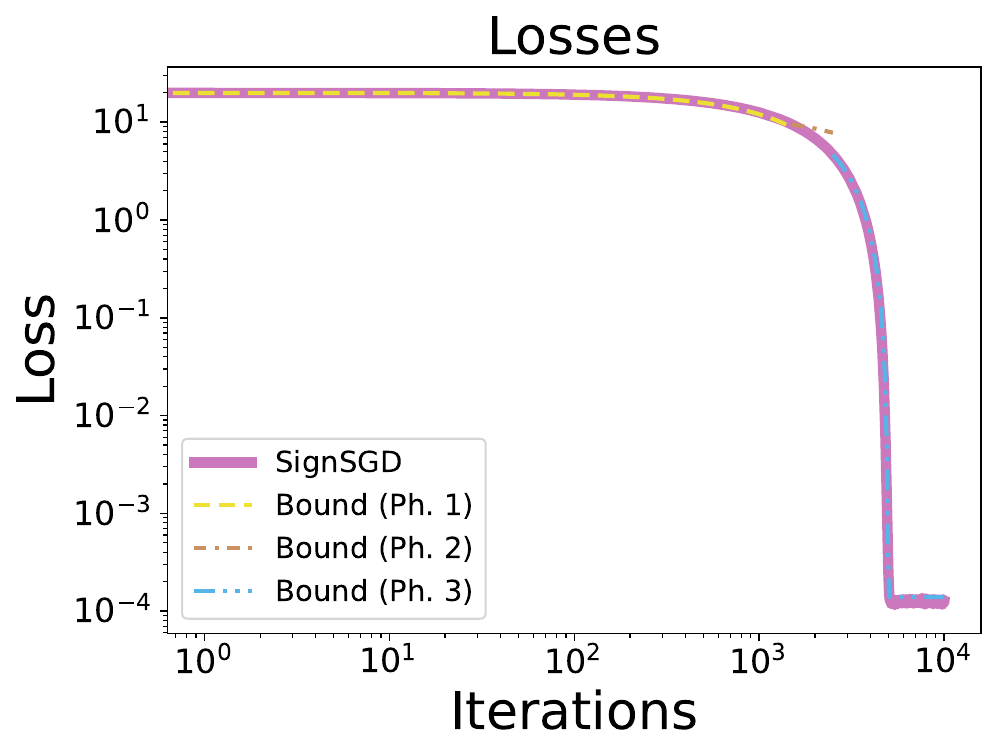} }}
    \subfloat[$\Sigma(x) := \sigma^2 f(x) I_d$.]{{\includegraphics[width=0.32\linewidth]{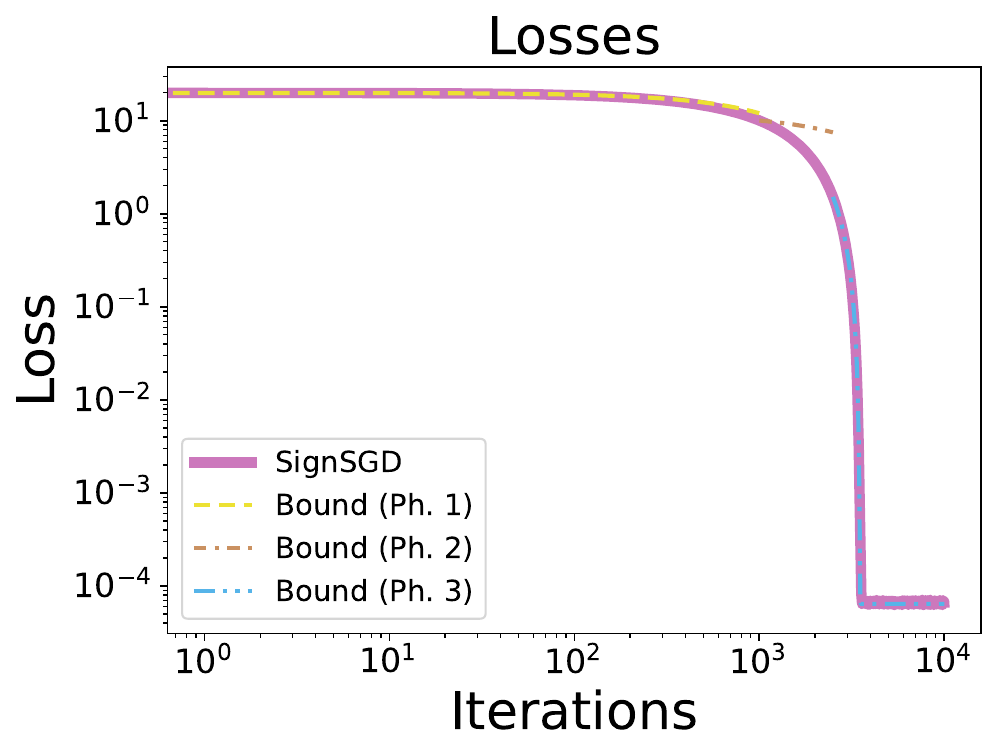} }} 
    \caption{Empirical validation of the bounds for SignSGD derived for the noise structure: $\Sigma(x) := \sigma^2 f(x_*) \nabla^2 f(x_*)$ (left), $\Sigma(x) := \sigma^2 f(x) \nabla^2 f(x)$ (center), $\Sigma(x) := \sigma^2 f(x) I_d$ (right). Each empirical validation has been carried out on strongly convex quadratic loss functions. The experiments are averaged over 500 runs.}%
    \label{fig:AlternativeNoise}%
\end{figure}

\paragraph{Assumption from \citep{paquette20244}.}
In this section, we adopt the notation of \citep{paquette20244}. As per Eq. 5 of \citep{paquette20244}, the loss function can be rewritten as
\begin{equation}
    f(\theta)= \frac{1}{2} \langle D(W \theta-b),(W \theta-b)\rangle, \quad \text { where } D=\operatorname{diag}\left(j^{-2 \alpha}\right) \in \mathbb{R}^{v \times v}.
\end{equation}
Without loss of generality, we define $\phi:= W \theta-b$, which implies that 
\begin{equation}
    f(\theta)= \frac{\phi^{\top} D \phi}{2}, \quad \text { where } D=\operatorname{diag}\left(j^{-2 \alpha}\right) \in \mathbb{R}^{v \times v}, \text{ where }, 1 \leq j \leq v.
\end{equation}
The stochastic gradient is unbiased and its covariance is the well-known $B \Sigma(\phi) = (\phi^{\top} D \phi) D + D \phi \phi^{\top} D = 2 f(\phi) D + \nabla f(\phi) \nabla f(\phi)^{\top}$, where $B$ is the batch size. Under this assumption,  we have that for $Y_t := \frac{\sqrt{B}(\Sigma(\phi_t))^{-\frac{1}{2}}\nabla f(\phi_t)}{\sqrt{2}}$ and $\mathcal{S}(\phi_t)=\mathbb{E}[(Sign(\nabla f_{\gamma}(\phi_t))(Sign(\nabla f_{\gamma}(\phi_t))^{\top}]$, Corollary \ref{thm:SignSGD_SDE_Simplified} becomes:
\begin{align}
d \phi_t = - Erf \left( Y_t \right) dt + \sqrt{\eta} \sqrt{\mathcal{S}(\phi_t) - Erf \left(Y_t \right) Erf \left(Y_t \right)^{\top}} d W_t.
\end{align}
As a consequence, Lemma \ref{lemma:three_phases} becomes:
\begin{lemma}\label{lemma:ICLR}
    Let $f$ be as above, $f_t:=f(\phi_t)$, and $\mathcal{L}_{\tau} := Tr(D)$. Let $\mu$ be the minimum eigenvalue of $D$, and $L$ be its maximum one. Then, during
    \begin{enumerate}
        \item Phase 1, the loss will reach $0$ before $t_* = 2 \sqrt{\frac{S_0}{\mu}}$ because $f_t \leq \frac{1}{4} \left( \sqrt{\mu}t - 2 \sqrt{f_0}\right)^2$;
        \item Phase 2 with $\beta := \frac{\eta}{2}  \mathcal{L}_{\tau}$ and $\alpha:=  \frac{ m \mu \sqrt{B}}{\sqrt{2 L}}$, 
        \begin{equation}
        \mathbb{E}[S_t] \leq \frac{\beta^2 \left( \mathcal{W}\left( \frac{(\beta + \sqrt{S_0} \alpha)}{\beta} \exp\left(-\frac{\alpha^2 t - 2 \sqrt{S_0} \alpha}{2 \beta} - 1 \right) \right) + 1 \right)^2}{\alpha^2} \overset{t \rightarrow \infty}{\rightarrow} \frac{\beta^2}{\alpha^2};
        \end{equation}
        \item  Phase 3 with $\beta := \frac{\eta}{2}  \mathcal{L}_{\tau}$ and $\alpha:= \sqrt{\frac{2}{\pi}} \frac{\mu \sqrt{B}}{\sqrt{L}}$;
        \begin{equation}
        \mathbb{E}[S_t] \leq \frac{\beta^2 \left( \mathcal{W}\left( \frac{(\beta + \sqrt{S_0} \alpha)}{\beta} \exp\left(-\frac{\alpha^2 t - 2 \sqrt{S_0} \alpha}{2 \beta} - 1 \right) \right) + 1 \right)^2}{\alpha^2} \overset{t \rightarrow \infty}{\rightarrow} \frac{\beta^2}{\alpha^2},
        \end{equation}
    \end{enumerate}
where $\mathcal{W}$ is the Lambert $\mathcal{W}$ function.
\end{lemma}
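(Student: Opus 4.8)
The plan is to treat each of the three phases separately, in each case applying Itô's lemma to $f_t := f(\phi_t)$ under the phase-specific form of the SignSGD SDE (exactly as in the proof of Lemma~\ref{lemma:three_phases}, but now with the alignment-type covariance $\Sigma(\phi)$ whose per-sample version is $2f(\phi)D+\nabla f(\phi)\nabla f(\phi)^\top$), reducing the loss dynamics to a scalar comparison ODE and then solving it. Phase~1 is identical to the Phase~1 argument of Lemma~\ref{lemma:SignSGD_dynam_loss}: the SDE collapses to the SignGD flow $d\phi_t=-\sign(\nabla f(\phi_t))\,dt$, so $\tfrac{d}{dt}f_t=-\|\nabla f(\phi_t)\|_1\le-\|\nabla f(\phi_t)\|_2\le-\sqrt{2\mu f_t}$ by $\mu$-strong convexity (here $\mu,L$ are the extreme eigenvalues of $D$ and $f_*=0$), and integrating $\tfrac{d}{dt}f_t\le -c\sqrt{f_t}$ gives the stated quadratic envelope that reaches $0$ at $t_*=2\sqrt{f_0/\mu}$.

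The substance is in Phases~2 and~3. First I would apply Itô using $\nabla^2 f=D$: the drift of $f_t$ is $-\nabla f(\phi_t)^\top\erf(Y_t)+\tfrac{\eta}{2}\tr\!\big((\mathcal S(\phi_t)-\erf(Y_t)\erf(Y_t)^\top)D\big)$. For the diffusion correction, since $\mathcal S_{ii}=\E[\sign(\partial_i f_\gamma)^2]=1$ under the positive-density assumption, one has $\tr(\mathcal S D)=\tr(D)=\mathcal L_\tau$ and $\tr(\erf(Y_t)\erf(Y_t)^\top D)=\erf(Y_t)^\top D\,\erf(Y_t)\ge 0$, so the correction is bounded above by $\beta:=\tfrac{\eta}{2}\mathcal L_\tau$, explaining why both phases share the same $\beta$. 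The key novelty is the drift: in Phase~3 one linearises $\erf(Y_t)\approx\tfrac{2}{\sqrt\pi}Y_t=\sqrt{\tfrac2\pi}\sqrt B\,\Sigma^{-1/2}\nabla f$, so the drift equals $-\sqrt{\tfrac2\pi}\sqrt B\,\nabla f^\top\Sigma^{-1/2}\nabla f$, and I must lower-bound this quadratic form against $\sqrt{f_t}$.

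This lower bound is the crux. Writing $A=2fD+\nabla f\nabla f^\top$, I would use $\nabla f^\top A^{-1/2}\nabla f\ge\lambda_{\max}(A)^{-1/2}\|\nabla f\|^2$, bound $\lambda_{\max}(A)\le 2fL+\|\nabla f\|^2\le 4Lf$ via $L$-smoothness ($\|\nabla f\|^2\le 2Lf$), and bound $\|\nabla f\|^2\ge 2\mu f$ via strong convexity, yielding $\nabla f^\top A^{-1/2}\nabla f\ge\tfrac{\mu}{\sqrt L}\sqrt f$. This gives $\tfrac{d}{dt}f_t\le-\alpha\sqrt{f_t}+\beta$ with $\alpha=\sqrt{\tfrac2\pi}\,\tfrac{\mu\sqrt B}{\sqrt L}$; in Phase~2 the identical computation with the secant slope $m$ of $\erf$ (dropping the non-positive $\mathbf q^-$ contribution) gives $\alpha=\tfrac{m\mu\sqrt B}{\sqrt{2L}}$. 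Crucially, because the per-sample noise scales like $f(\phi)$, the quadratic form scales like $\sqrt{f_t}$ rather than $f_t$, which is exactly what produces a Lambert-$\mathcal W$ solution instead of a pure exponential.

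Finally I would solve the comparison ODE $\dot y=-\alpha\sqrt y+\beta$: the substitution $u=\sqrt y$ gives $\tfrac{2u}{\beta-\alpha u}\,du=dt$, which integrates to $-\tfrac{2u}{\alpha}-\tfrac{2\beta}{\alpha^2}\ln|\beta-\alpha u|=t+C$; inverting this transcendental relation for $u$ introduces $\mathcal W$ and, after substituting $y=u^2$ and the initial condition $y(0)=S_0$, reproduces the stated bound, whose $t\to\infty$ limit is the equilibrium $\beta^2/\alpha^2$. The main obstacle I anticipate is not the ODE but passing from the pathwise drift inequality to a bound on $\E[S_t]$: after taking expectations one has $\tfrac{d}{dt}\E[f_t]\le\beta-\alpha\,\E[\sqrt{f_t}]$, and Jensen's inequality points the wrong way, so this step must be justified by a stochastic comparison argument (dominating $f_t$ by the solution of the comparison ODE) rather than by naively replacing $\E[\sqrt{f_t}]$ with $\sqrt{\E[f_t]}$, consistent with the level of rigor of the analogous lemmas in this appendix.
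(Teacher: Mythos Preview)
Your proposal is correct and follows essentially the same approach the paper indicates (the paper does not write out this proof, stating only that it ``mimics'' those of Corollary~\ref{thm:SignSGD_SDE_Simplified} and Lemma~\ref{lemma:three_phases}): apply It\^o with $\nabla^2 f=D$, bound the diffusion correction by $\tfrac{\eta}{2}\tr(D)=\beta$ via $\mathcal S_{ii}=1$, lower-bound $\nabla f^\top A^{-1/2}\nabla f$ through $\lambda_{\max}(A)\le 4Lf$ and $\|\nabla f\|^2\ge 2\mu f$ to obtain the $\sqrt{f_t}$ drift, and solve the resulting comparison ODE $\dot y\le -\alpha\sqrt{y}+\beta$ to get the Lambert-$\mathcal W$ bound; your computed $\alpha,\beta$ match the statement exactly, and your flagging of the Jensen-direction issue is consistent with the informal level of rigor the paper adopts for the analogous lemmas in this subsection.
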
 See Figure \ref{fig:ICLR} for an empirical validation.

\begin{figure}[H]
   \centering
    \subfloat[$v = 5$, $d=50$.]{{\includegraphics[width=0.48\linewidth]{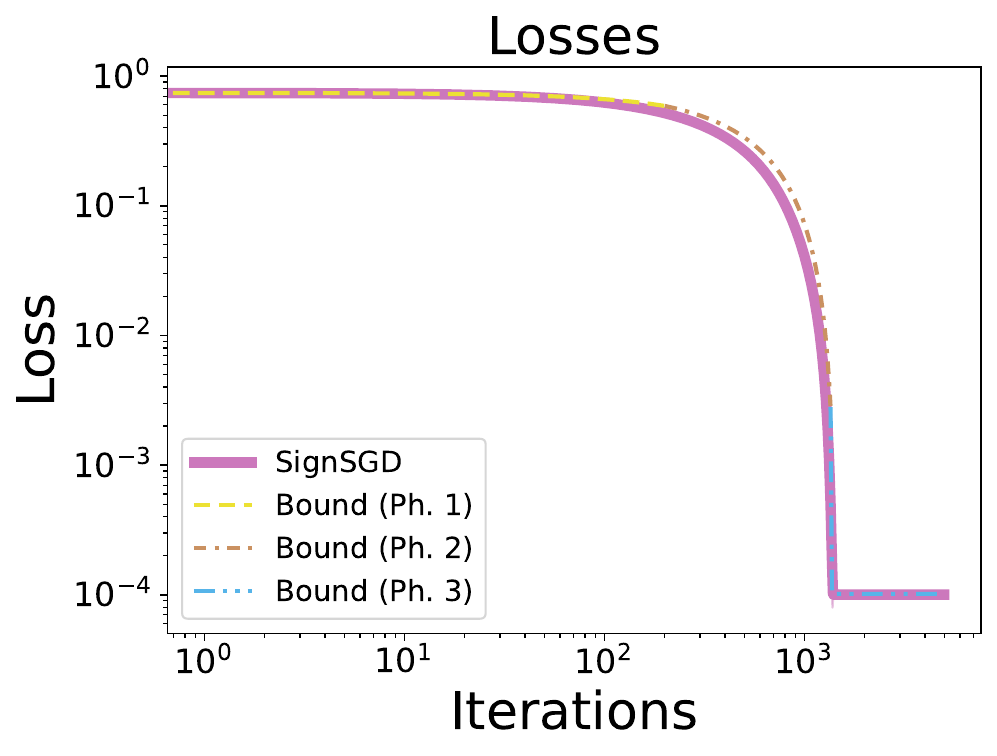} }}%
    \subfloat[$v = 50$, $d=500$.]{{\includegraphics[width=0.48\linewidth]{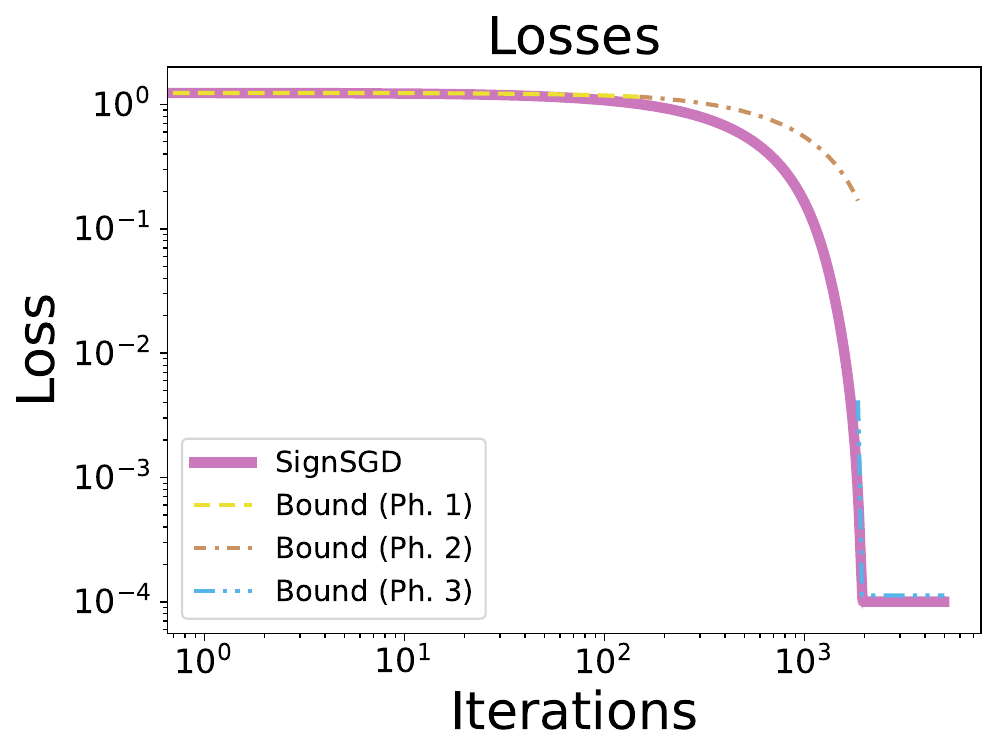} }}
    \caption{Empirical validation of the bounds for SignSGD derived in Lemma \ref{lemma:ICLR}: In both experiments, $\alpha=0.25$, $\beta=2$, $\eta=0.001$, $B=256$, $N=10000$, and trajectories are averaged over $500$ runs.}%
    \label{fig:ICLR}%
\end{figure}

\subsection{Formal derivation - RMSprop} \label{sec:formal_RMSprop}

\begin{figure}[H]%
    \centering
    \subfloat{{\includegraphics[width=0.35\linewidth]{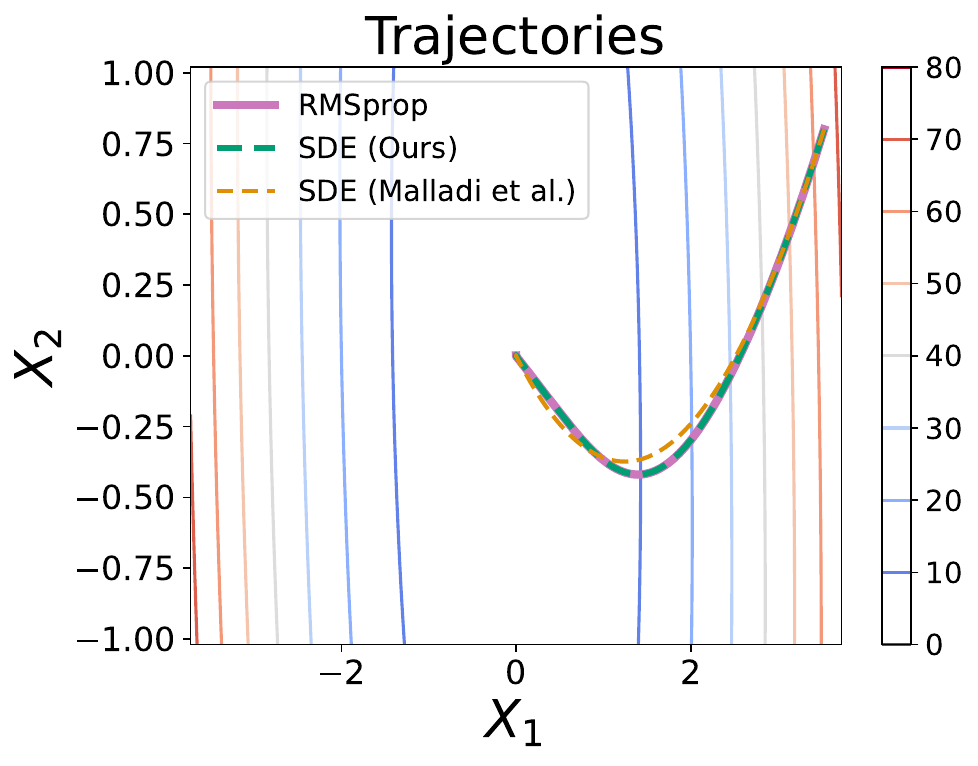} }}%
    \hspace{1.6cm}\subfloat{{\includegraphics[width=0.35\linewidth]{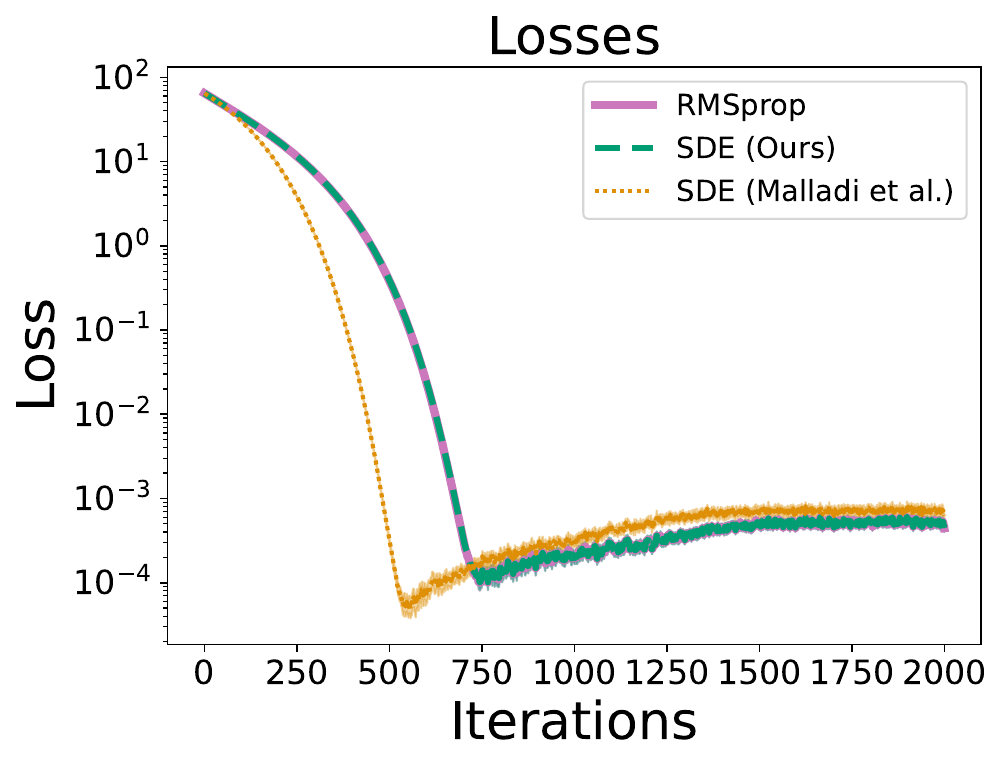} }} \\
    \subfloat{{\includegraphics[width=0.35\linewidth]{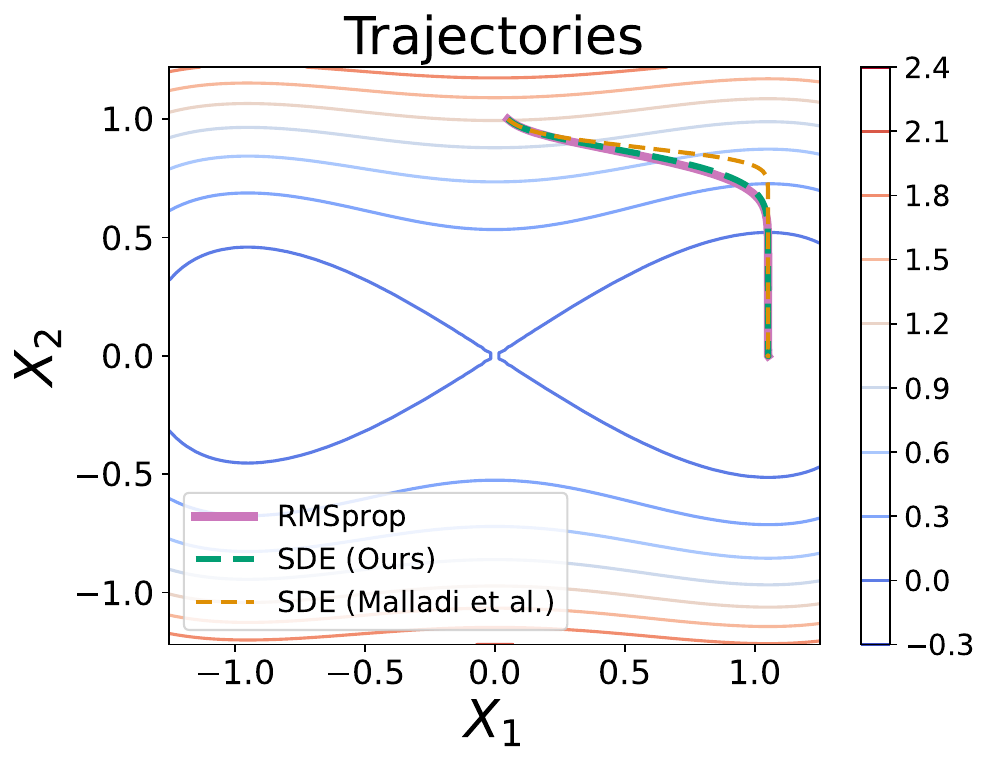} }}%
    \hspace{1.6cm}\subfloat{{\includegraphics[width=0.35\linewidth]{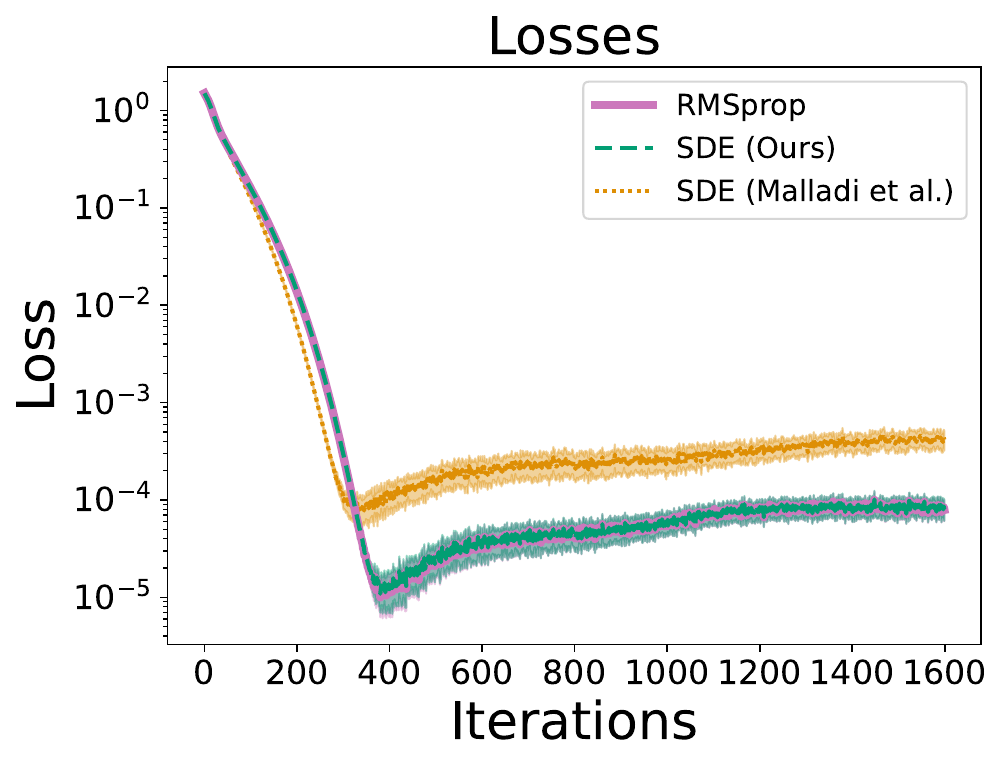} }}%
    \caption{The first two subfigures on the left compare our SDE, that from \cite{Malladi2022AdamSDE}, and RMSprop in terms of trajectories and  $f(x)$, respectively, for a convex quadratic function. The other subfigures do the same for an embedded saddle, and one can observe that our SDE tracks RMSprop more faithfully.}
\label{fig:RMSprop_SDE_Verification_Quadratic_EmbSaddle}
\end{figure}

In this subsection, we provide our formal derivation of an SDE model for RMSprop. Let us consider the stochastic process $ L_t := (X_t, V_t) \in \mathbb{R}^{d} \times \mathbb{R}^{d} $ defined as the solution of
\begin{align}
\label{eq:RMSprop_SDE}
    & d X_t = - P_t^{-1} (\nabla f(X_t) dt + \sqrt{\eta} \Sigma(X_t)^{\frac{1}{2}} d W_t) \\
    & d V_t = \rho( \textcolor{blue}{(\nabla f(X_t))^2 }+  \diag(\Sigma(X_t)) - V_t)) dt,
\end{align}
where $\beta = 1-\eta \rho$, $\rho = \mathcal{O}(1)$, and $P_t:=  \diag\left(V_t\right)^{\frac{1}{2}}+\epsilon I_d$.

\begin{remark} \label{remark:RMSpropSDE}
    We observe that the term in \textcolor{blue}{blue} is the only difference w.r.t. the SDE derived in \citep{Malladi2022AdamSDE} (see Theorem \ref{thm:Malladi_RMSprop}): This is extremely relevant when the gradient size is not negligible. Figure \ref{fig:RMSprop_SDE_Verification_Quadratic_EmbSaddle} shows the comparison between our SDE, the one derived in \citep{Malladi2022AdamSDE}, and RMSprop itself: It is clear that even on simple landscapes, our SDE tracks the algorithm more faithfully. Importantly, one can observe that the SDE derived in \citep{Malladi2022AdamSDE} is only slightly less accurate than ours at the end of the dynamics: As we show in Lemma \ref{lemma:corollRMSprop}, Theorem \ref{thm:Malladi_RMSprop} is a corollary of Theorem \ref{thm:RMSprop_SDE} when $\nabla f(x) = \mathcal{O}(\sqrt{\eta})$, e.g. it only describes the dynamics where the gradient is vanishing. In Figure \ref{fig:RMSprop_SDE_Verification}, we compare the two SDEs in question with RMSprop on an MLP, a CNN, a ResNet, and a Transformer: Our SDE exhibits a superior description of the dynamics. 
\end{remark}

\begin{figure}
    \centering
    \subfloat{{\includegraphics[width=0.35\linewidth]{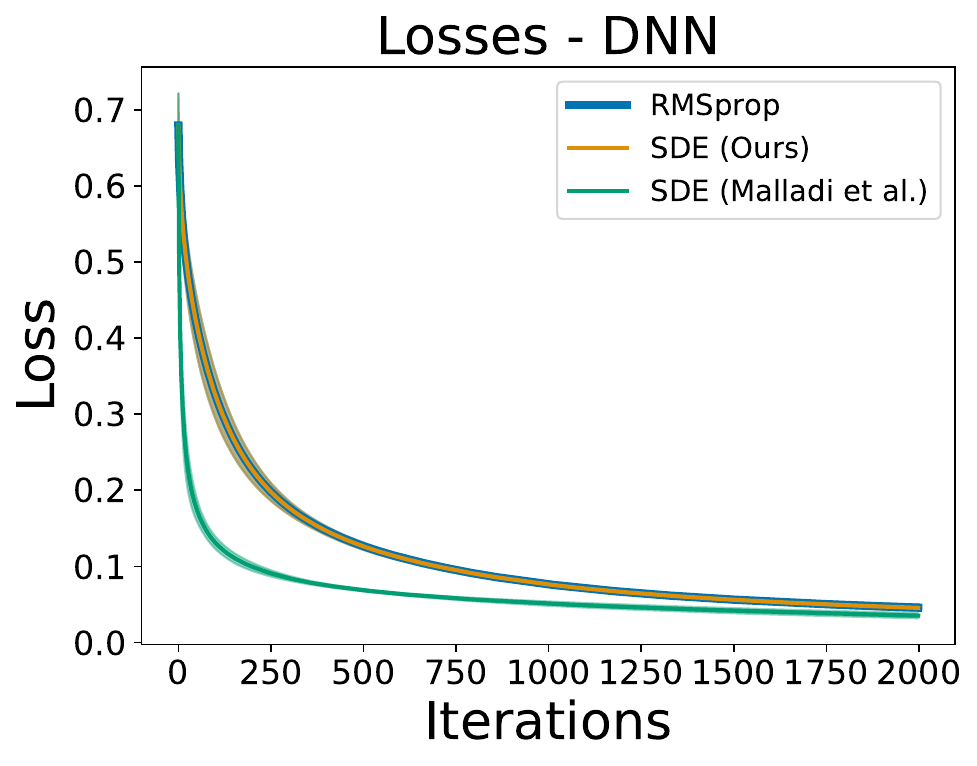} }}%
    \hspace{1.6cm}\subfloat{{\includegraphics[width=0.35\linewidth]{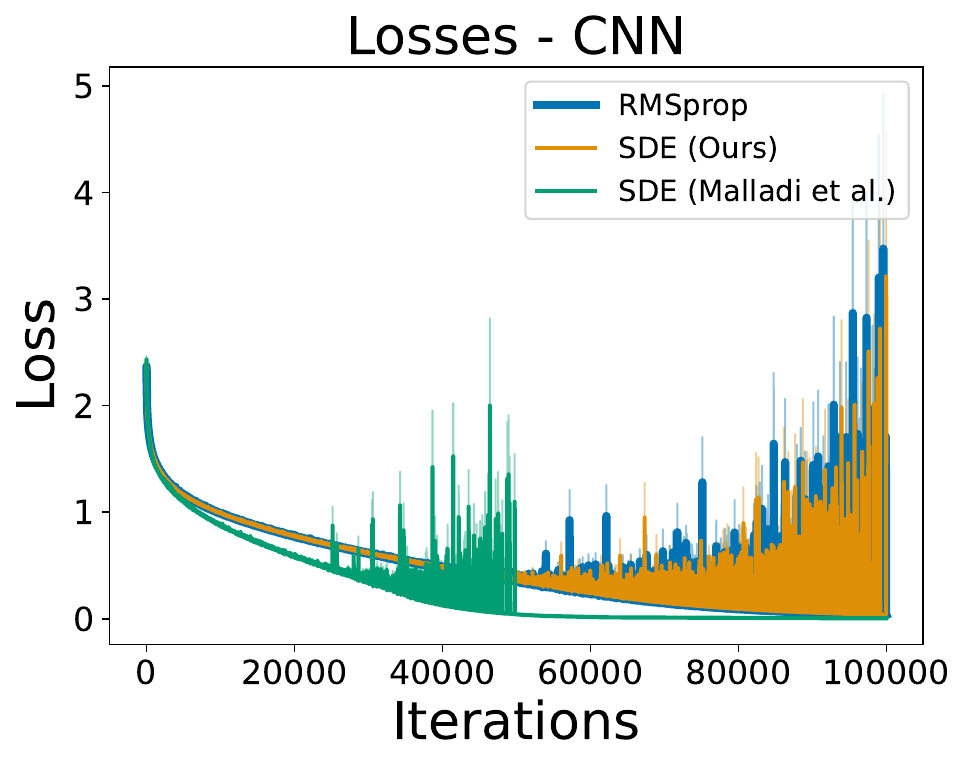} }} \\
    \subfloat{{\includegraphics[width=0.35\linewidth]{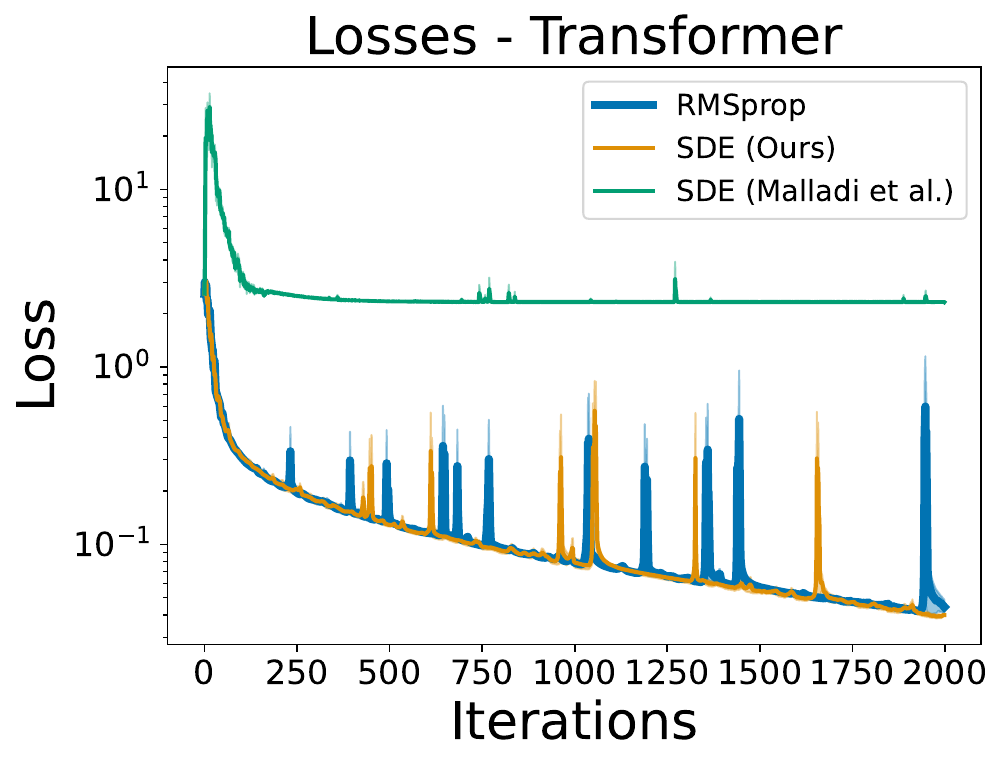} }}%
    \hspace{1.6cm}\subfloat{{\includegraphics[width=0.35\linewidth]{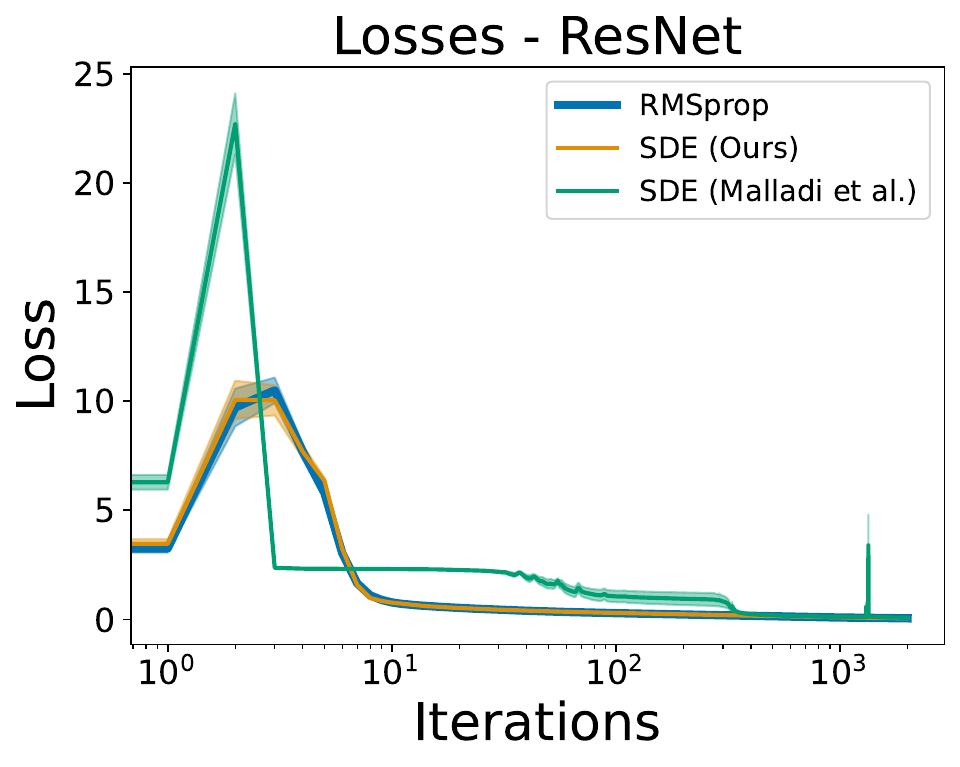} }}
    
    \caption{We compare our SDE, that from \cite{Malladi2022AdamSDE}, and RMSprop in terms of $f(x)$: The first is an MLP on the Breast Cancer dataset, the second a CNN on MNIST, the third a Transformer on MNIST, and the last a ResNet on CIFAR-10: One can observe that our SDE tracks RMSprop more faithfully.}\label{fig:RMSprop_SDE_Verification}
\end{figure}

The following theorem guarantees that such a process is a $1$-order SDE of the discrete-time algorithm of RMSprop
\begin{align}
\label{eq:RMSprop_Discr_Update_app}
    & x_{k+1} = x_{k} - \eta \frac{\nabla f_{\gamma_{k}}(x_{k})}{\sqrt{v_{k+1}}+\epsilon I_d} \\
    & v_{k+1} = \beta v_{k} + (1- \beta) \left( \nabla f_{\gamma_{k}}(x_{k}) \right)^2
\end{align}
with $ (x_0, v_0) \in \mathbb{R}^d\times\mathbb{R}^d$, $\eta \in \R^{>0}$ is the step size, $\beta = 1 - \rho\eta$ for $\rho = \mathcal{O}(1)$, the mini-batches $\{ \gamma_k \}$ are modelled as i.i.d.~random variables uniformly distributed on $\{ 1, \cdots, N \}$, and of size $B\geq 1$.

\begin{mybox}{gray}
\begin{theorem}[Stochastic modified equations] \label{thm:RMSprop_SDE}
Let $0<\eta<1, T>0$ and set $N=\lfloor T / \eta\rfloor$. Let $ l_k := (x_k,v_k) \in \mathbb{R}^{d} \times \mathbb{R}^{d}, 0 \leq k \leq N$ denote a sequence of RMSprop iterations defined by Eq.~\ref{eq:RMSprop_Discr_Update_app}. Consider the stochastic process $L_t$ defined in Eq.~\ref{eq:RMSprop_SDE} and fix some test function $g \in G$ and suppose that $g$ and its partial derivatives up to order 6 belong to $G$.

Then, under Assumption~\ref{ass:regularity_f} and $\rho=\mathcal{O}(1)$ there exists a constant $ C>0 $ independent of $ \eta $ such that for all $ k=0,1, \ldots, N $, we have

$$
\left|\E g\left(L_{k \eta}\right)-\E g\left(l_k\right)\right| \leq C \eta .
$$

That is, the SDE \ref{eq:RMSprop_SDE} is an order $ 1 $ weak approximation of the RMSprop iterations \ref{eq:RMSprop_Discr_Update_app}.
\end{theorem}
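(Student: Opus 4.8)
The plan is to follow the same template used for SignSGD (Lemma~\ref{lemma:SignSGD_SDE} together with the proof of Theorem~\ref{thm:SignSGD_SDE}): reduce everything to checking the four moment-matching conditions of Theorem~\ref{thm:mils}. Here the process is the $2d$-dimensional $L_t=(X_t,V_t)$, so I first read off the drift and diffusion of the SDE in Eq.~\ref{eq:RMSprop_SDE}, namely $b=(b_X,b_V)$ with $b_X=-P_t^{-1}\nabla f(X_t)$ and $b_V=\rho((\nabla f)^2+\diag\Sigma-V_t)$, and a diffusion that acts only on the $X$-block, $\sigma=(-P_t^{-1}\Sigma^{1/2};\,0)$. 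Plugging these into Lemma~\ref{lemma:li1} gives the SDE one-step moments; in parallel I compute the RMSprop one-step increment $\bar\Delta=(x_1-x,\,v_1-v)$ directly from Eq.~\ref{eq:RMSprop_Discr_Update_app}, using the decomposition $\nabla f_\gamma=\nabla f+Z$ with $\E[Z]=0$ and $\Cov(Z)=\Sigma$. The theorem then follows once conditions 1--4 are verified.

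The delicate computational point is the forward coupling: $x_{k+1}$ is divided by $\sqrt{v_{k+1}}+\epsilon$ rather than $\sqrt{v_k}+\epsilon$. Since $v_{k+1}-v_k=\rho\eta((\nabla f_\gamma)^2-v_k)=\bigO(\eta)$, I would Taylor-expand $(\sqrt{v_{k+1}}+\epsilon)^{-1}=(\sqrt{v_k}+\epsilon)^{-1}+\bigO(\eta)$; the correction is multiplied by $\eta$ in the $X$-update, so it perturbs the first moment of $\bar\Delta_X$ only at order $\eta^2$ and the second moment only at order $\eta^3$ --- both inside the tolerance of Theorem~\ref{thm:mils}. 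This yields $\E\bar\Delta_X=-\eta P_t^{-1}\nabla f+\bigO(\eta^2)$, matching $b_X\eta$, and, using $\E[\partial_i f_\gamma\,\partial_j f_\gamma]=\partial_i f\,\partial_j f+\Sigma_{ij}$,
\begin{equation}
\E[\bar\Delta_{X,i}\bar\Delta_{X,j}]=\eta^2\big[P_t^{-1}(\nabla f\nabla f^\top+\Sigma)P_t^{-1}\big]_{ij}+\bigO(\eta^3),
\end{equation}
which is exactly $(b_{X,i}b_{X,j}+(\sigma\sigma^\top)_{ij})\eta^2$. For the $V$-block, $\E\bar\Delta_V=\rho\eta((\nabla f)^2+\diag\Sigma-v)$ reproduces $b_V\eta$. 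Its second moment and the $X$--$V$ cross moment both carry genuine $\bigO(\eta^2)$ terms (the former governed by $\Cov((\partial_i f_\gamma)^2,(\partial_j f_\gamma)^2)$), whereas the SDE has no $V$-diffusion and no $X$--$V$ cross diffusion; crucially these mismatches are $\bigO(\eta^2)$, so condition 2 ($\le K_2\eta^2$) absorbs them and the drift-only $V$ equation is correct to first weak order. The triple products are $\bigO(\eta^3)$ for both processes, giving conditions 3 and 4.

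The part I expect to be the real obstacle is regularity and remainder bookkeeping rather than the leading-order algebra. To invoke Lemma~\ref{lemma:li1} and Theorem~\ref{thm:mils} I must certify that $b,\sigma$ and their derivatives lie in $G$ and are Lipschitz with affine growth; the stabilizer $\epsilon I_d$ keeps $P_t^{-1}$ smooth and uniformly bounded, and composing with the Adam/RMSprop assumptions ($\sqrt{\Sigma}$ bounded, Lipschitz and in $G$ with its derivatives, and $(\nabla f)^2$ Lipschitz of affine growth) should propagate the regularity through the preconditioner. The genuinely tedious step is showing every discarded remainder has a polynomial-growth bound $K_i(x)\in G$: the Taylor remainders of the $v_{k+1}$-expansion and the fourth-moment quantities entering the $V$ and cross second moments (e.g.\ $\E[(\partial_i f_\gamma)^2(\partial_j f_\gamma)^2]$) must be controlled uniformly, which is where the noise moment assumptions do the work. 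Once all four conditions hold with $K_1,\dots,K_4\in G$, Theorem~\ref{thm:mils} delivers $|\E g(L_{k\eta})-\E g(l_k)|\le C\eta$, i.e.\ the claimed order-$1$ weak approximation.
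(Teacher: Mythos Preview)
Your proposal is correct and follows essentially the same approach as the paper: reduce to the moment-matching conditions of Theorem~\ref{thm:mils}, Taylor-expand $(\sqrt{v_{k+1}}+\epsilon)^{-1}=(\sqrt{v_k}+\epsilon)^{-1}+\bigO(\eta)$ to handle the forward coupling, verify the first two moments match to the required order (with the $V$-block second moment and $X$--$V$ cross moment mismatches absorbed at $\bigO(\eta^2)$), and appeal to the Adam/RMSprop regularity assumptions plus the $\epsilon$-stabilized preconditioner to ensure $b,\sigma$ and their derivatives lie in $G$. The paper's proof is in fact slightly terser on the cross moments and collects the regularity discussion into a separate remark (including the option of a mollification argument \`a la~\cite{li2019stochastic}), but the structure and the key computational step are identical to what you wrote.
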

\end{mybox}

\begin{proof}
The proof is virtually identical to that of Theorem \ref{thm:SignSGD_SDE}. Therefore, we only report the key steps necessary to conclude the thesis. First of all, we observe that since $\beta = 1 - \eta \rho$
\begin{equation}
     v_{k+1} - v_{k} =  - \eta \rho  \left( v_{k} -  \left(\nabla f_{\gamma_{k}}(x_{k}) \right)^2\right).
\end{equation}
Then,
\begin{align}
    \frac{1}{\sqrt{v_{k+1}}} = \sqrt{\frac{v_k}{v_{k+1}}\frac{1}{v_k}} = \sqrt{\frac{v_{k+1} + \mathcal{O}(\eta)}{v_{k+1}}\frac{1}{v_k}} = \sqrt{1 + \frac{\mathcal{O}(\eta)}{v_{k+1}}}  \sqrt{\frac{1}{v_k}} \sim \sqrt{\frac{1}{v_k}} (1 + \mathcal{O}(\eta)).
\end{align}
Therefore, we work with the following algorithm as all the approximations below only carry an additional error of order $\mathcal{O}(\eta^2)$, which we can ignore. Therefore, we have that
\begin{align}
    & x_{k+1} - x_{k} =  - \eta \frac{\nabla f_{\gamma_{k}}(x_{k})}{\sqrt{v_{k}} + \epsilon I_d} \\
    & v_{k} - v_{k-1} =  - \eta \rho  \left( v_{k-1} -  \left(\nabla f_{\gamma_{k-1}}(x_{k-1}) \right)^2\right).
\end{align}
Therefore, if $\nabla f_{\gamma_{j}}(x_{j}) = \nabla f(x_{j}) + Z_j(x_j)$, $\E [Z_j(x_j)]=0$, and $Cov(Z_j(x_j)) = \Sigma(x_j)$ 
\begin{enumerate}
    \item $\E[x_{k+1} - x_{k}] = - \eta  \diag(v_{k}+\epsilon I_d)^{-\frac{1}{2}} \nabla f(x_{k})$ ;
    \item $\E[v_{k} - v_{k-1}] = \eta \rho\left[\left(\nabla f(x_{k-1}) \right)^2 + \diag(\Sigma(x_k)) -v_{k-1} \right]$  .
\end{enumerate}
Then, we have that if $\Phi_k := \frac{\nabla f(x_{k})}{\sqrt{v_{k}}+\epsilon I_d} - \frac{\nabla f_{\gamma_{k}}(x_{k})}{\sqrt{v_{k}}+\epsilon I_d}$
\begin{enumerate}
    \item\begin{align}
        \E[(x_{k+1} - x_{k})(x_{k+1} - x_{k})^{\top}] & = \E[(x_{k+1} - x_{k})]\E[(x_{k+1} - x_{k})]^{\top} \\
        & + \eta^2 \E\left[ \left( \Phi_k  \right) \left( \Phi_k \right)^{\top} \right] \\
        & = \E[(x_{k+1} - x_{k})]\E[(x_{k+1} - x_{k})]^{\top} \\
        & + \eta^2  (\diag (v_{k})+\epsilon I_d)^{-1}  \Sigma(x_k);
    \end{align}
    \item $ \E[(v_{k} - v_{k-1})(v_{k} - v_{k-1})^{\top}] = \E[(v_{k} - v_{k-1})]\E[(v_{k} - v_{k-1})]^{\top} + \mathcal{O}(\rho \eta^2);$
    \item $\E[(x_{k+1} - x_{k})(v_{k} - v_{k-1})^{\top}] = \E[(x_{k+1} - x_{k})] \E[(v_{k} - v_{k-1})^{\top}] + 0$.
\end{enumerate}
\begin{remark}\label{remark:Need}
    Let us remember that by assumption, $\nabla f(x)$ and $\sqrt{\Sigma}(x)$ are Lipschitz, grow at most affinely, and are in $G$ together with their derivative. Therefore, the drift and diffusion terms of the SDE governing $X_t$ are the ratio between regular functions and a uniformly lower bounded process. Therefore, they are in turn regular, modulo dividing by $\sqrt{V_t + \epsilon_V^2} + \epsilon$ s.t. $\epsilon_V^2 \sim 0$ rather than by $\sqrt{V_t} + \epsilon$ (See \cite{bock2021local} as they experimentally verify that this has no impact on the performance of the optimizer). Regarding the ODE governing $V_t$, $\Sigma(X_t)$ is Lipschitz because $\sqrt{\Sigma}(X_t)$ is bounded and Lipschitz. Additionally, it is smooth, and with affine growth. On top of this, we need the term $(\nabla f(x))^2$ to be Lipschitz and of affine growth, which is a consequence of assuming bounded gradients as often done in the literature on the convergence of RMSprop and Adam: Among many, see \citep{luo2018adaptive, defossez2020simple,guo2021novel,huang2021super} together with the discussion in Section 2.1 of \cite{shi2021rmsprop}. Alternatively, exactly as done in Theorem 9 of \cite{li2019stochastic}, one can regularize the drifts and the diffusion terms with mollifiers on a sufficiently large compact \cite{j2018on}, which automatically implies that drift and diffusion coefficients satisfy all necessary regularity conditions. Importantly, one needs to then send the mollification parameter $\epsilon$ to $0$ to conclude our statement. Therefore, we the SDE of RMSprop  for {\small$P_t:=  \diag\left(V_t\right)^{\frac{1}{2}}+\epsilon I_d$} is
\end{remark}
\begin{align}
    & d X_t = - P_t^{-1} (\nabla f(X_t) dt + \sqrt{\eta} \Sigma(X_t)^{\frac{1}{2}} d W_t) \\
    & d V_t = \rho( ((\nabla f(X_t))^2 +  \diag(\Sigma(X_t)) - V_t)) dt.
\end{align}
\end{proof}

\begin{remark}\label{remark:NoNeed}
 In all the following results, the reader will notice that all the drifts, diffusion terms, and noise assumptions are selected to guarantee that the SDE we derived for RMSprop is indeed a $1$ weak approximation for RMSprop even without the mollification argument. Importantly, our analysis of RMSprop focuses on its behavior at convergence, i.e. $(\nabla f(x) )^2 = \mathcal{O}(\eta)$. Therefore, there is no need to assume bounded gradients or a compact domain.
\end{remark}

\begin{lemma}\label{lemma:corollRMSprop}
    If $(\nabla f(x) )^2 = \mathcal{O}(\eta)$, Theorem \ref{thm:Malladi_RMSprop} is a Corollary of Theorem \ref{thm:RMSprop_SDE}.
\end{lemma}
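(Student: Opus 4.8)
The plan is to re-run the weak-approximation machinery (Lemma~\ref{lemma:li1} together with Theorem~\ref{thm:mils}) that already underlies the proof of Theorem~\ref{thm:RMSprop_SDE}, but now applied to the Malladi SDE of Theorem~\ref{thm:Malladi_RMSprop}, and to show that under the hypothesis $(\nabla f(x))^2 = \mathcal{O}(\eta)$ it still satisfies all four moment-matching conditions. As recorded in Remark~\ref{remark:RMSpropSDE}, the only discrepancy between our SDE and the Malladi SDE is the deterministic drift term $\rho(\nabla f(X_t))^2$ in the evolution of $V_t$; the two processes share an \emph{identical} $X_t$-equation. The key structural observation is therefore that every condition of Theorem~\ref{thm:mils} involving the $X$-coordinates is literally unchanged, so it suffices to re-examine only the $V$-coordinate moments.

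First I would compute the one-step increment of the Malladi SDE from a fixed initial point $(x,v)$ via Lemma~\ref{lemma:li1}. For the $V$-component, the Malladi drift $b_V = \rho(\diag(\Sigma(x)) - v)$ gives $\E \Delta_{V_i} = \rho(\diag(\Sigma(x)) - v)_i\,\eta + \mathcal{O}(\eta^2)$, whereas the discrete RMSprop increment computed in the proof of Theorem~\ref{thm:RMSprop_SDE} is $\E \bar\Delta_{V_i} = \eta\rho[(\nabla f(x))^2 + \diag(\Sigma(x)) - v]_i$. Subtracting, the mismatch is exactly $\eta\rho(\nabla f(x))^2 + \mathcal{O}(\eta^2)$. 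Invoking the hypothesis $(\nabla f(x))^2 \le K(x)\eta$ with $K \in G$ and $\rho = \mathcal{O}(1)$, this mismatch is bounded by $K_1(x)\eta^2$ for some $K_1 \in G$, so condition~1 of Theorem~\ref{thm:mils} holds for the $V$-block.

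Next I would verify that the higher-order conditions are untouched. Because the dropped term is deterministic it does not enter the diffusion, so the second-moment conditions reduce to comparing $b_V b_V^{\top}\eta^2$ between the two models; the difference is of order $\eta^2 (\nabla f)^2 = \mathcal{O}(\eta^3)$ under the hypothesis and is absorbed into the $\mathcal{O}(\rho\eta^2)$ slack already present in the discrete second moments, so condition~2 holds, while the $X$-$X$ and $X$-$V$ blocks are identical to those verified for Theorem~\ref{thm:RMSprop_SDE}. Conditions~3 and~4 concern only third-and-higher moments and the discrete increments, neither of which is affected by a change in the $V$-drift. Assembling these, the Malladi SDE satisfies all hypotheses of Theorem~\ref{thm:mils} and is therefore an order-$1$ weak approximation of RMSprop whenever $(\nabla f(x))^2 = \mathcal{O}(\eta)$, which is exactly the claim.

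The part that requires the most care is making explicit why the seemingly ``$\mathcal{O}(\eta)$'' perturbation of the $V$-drift is in fact harmless: over one Euler step the drift is multiplied by the stepsize, turning the $\mathcal{O}(\eta)$ size of $(\nabla f)^2$ into an $\mathcal{O}(\eta^2)$ contribution to $\E \Delta_{V}$, precisely the tolerance built into Theorem~\ref{thm:mils}. This is also what pins down why the weaker hypothesis $(\nabla f(x))^2 = \mathcal{O}(\eta)$, rather than a pointwise vanishing gradient, is the right condition. An alternative route would be a direct Gr\"onwall stability estimate between the two SDE solutions, exploiting that $P_t^{-1}$ is Lipschitz in $V_t$ thanks to $\epsilon>0$; I would retain this only as a fallback, since it duplicates analysis already encapsulated in the moment-matching framework.
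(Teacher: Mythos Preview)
Your proposal is correct and follows essentially the same approach as the paper. The paper's proof is a one-liner---``In the proof of Theorem~\ref{thm:RMSprop_SDE}, one drops the term $\eta(\nabla f(x))^2$ as it is of order $\eta^2$''---and you have spelled out exactly what this means inside the moment-matching framework of Theorem~\ref{thm:mils}, checking that the only discrepancy sits in the first $V$-moment and is absorbed by the $\mathcal{O}(\eta^2)$ tolerance once the hypothesis $(\nabla f(x))^2=\mathcal{O}(\eta)$ is invoked.
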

\begin{proof}
    In the proof of Theorem \ref{thm:RMSprop_SDE}, one drops the term $\eta (\nabla f(x) )^2$ as it is of order $\eta^2$.  
\end{proof}

\begin{corollary}
    \label{coroll:RMSpropRescaled}
    Under the assumptions of Theorem \ref{thm:RMSprop_SDE} with $\Sigma(x)=\sigma^2 I_d$, $\Tilde{\eta} = \kappa \eta$, $\Tilde{B}= B \delta$, and $\Tilde{\rho} = \alpha \rho$,
\begin{align}
    & d X_t = \kappa\diag(V_t)^{-\frac{1}{2}} \left(- \nabla f(X_t) dt + \frac{1}{\sqrt{\delta}}\sqrt{\frac{\eta}{B}}  \sigma I_d d W_t \right) \\
    & d V_t = \alpha\rho \left( (\nabla f(X_t))^2  + \frac{\sigma^2}{B \delta} \mathbf{1} - V_t \right) dt.
\end{align}
\end{corollary}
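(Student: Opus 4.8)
The statement is a specialization of Theorem~\ref{thm:RMSprop_SDE}: the plan is to take the general RMSprop SDE of that theorem, impose the isotropic noise model $\Sigma(x)=\sigma^2 I_d$, and then substitute the rescaled hyperparameters $\tilde\eta=\kappa\eta$, $\tilde B = B\delta$, and $\tilde\rho=\alpha\rho$, simplifying the resulting coefficients. No new approximation theorem is required, since by Remark~\ref{remark:NoNeed} the regularity needed to invoke the weak-approximation machinery already holds for the quantities involved once we restrict to the convergence regime $(\nabla f(x))^2=\bigO(\eta)$; the work is entirely algebraic bookkeeping of the scaling factors.

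Concretely, I would first record how the batch size enters the gradient-noise covariance. Since the per-sample covariance scale is $\sigma^2 I_d$ and the minibatch has size $\tilde B = B\delta$, the covariance of the stochastic gradient is $\frac{\sigma^2}{B\delta} I_d$, so that $\Sigma^{1/2}=\frac{\sigma}{\sqrt{B\delta}} I_d$ and $\diag(\Sigma)=\frac{\sigma^2}{B\delta}\mathbf{1}$. Plugging this into the $V$-equation of Eq.~\ref{eq:RMSprop_SDE} together with $\tilde\rho=\alpha\rho$ immediately yields $d V_t = \alpha\rho\big((\nabla f(X_t))^2 + \frac{\sigma^2}{B\delta}\mathbf{1} - V_t\big)dt$. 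For the $X$-equation I would drop the stabilization constant (so that $P_t^{-1}=\diag(V_t)^{-1/2}$, harmless since $\epsilon$ contributes only at $\bigO(\epsilon)$) and substitute $\tilde\eta=\kappa\eta$ into the diffusion term $\sqrt{\tilde\eta}\,\Sigma^{1/2}$, obtaining drift $-\diag(V_t)^{-1/2}\nabla f(X_t)$ and diffusion $\sqrt{\kappa\eta}\,\frac{\sigma}{\sqrt{B\delta}}\diag(V_t)^{-1/2}$.

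The one step that is not pure substitution is exposing the common prefactor $\kappa$ in the $X$-dynamics. Because the weak-approximation convention of Definition~\ref{def:weak_approximation} ties one optimizer step to a continuous-time increment equal to the step size, replacing $\eta\mapsto\kappa\eta$ corresponds to a reparameterization of time by $\kappa$; under this reparameterization the drift scales by $\kappa$ and the diffusion by $\sqrt\kappa$, turning the substituted $X$-equation into $-\kappa\diag(V_t)^{-1/2}\nabla f(X_t)\,dt + \kappa\sqrt{\tfrac{\eta}{B\delta}}\,\sigma\,\diag(V_t)^{-1/2}\,d W_t$, which factors as $\kappa\diag(V_t)^{-1/2}\big(-\nabla f(X_t)\,dt + \frac{1}{\sqrt\delta}\sqrt{\frac{\eta}{B}}\sigma\, d W_t\big)$, matching the claim.

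I expect the main obstacle to be exactly this bookkeeping of the rescaling exponents and the reconciliation of the time conventions used for the two coupled equations: the learning-rate factor $\kappa$ must enter the $X$-drift linearly but the $X$-diffusion through the combination of $\sqrt{\tilde\eta}$ with the time reparameterization, while the batch factor $\delta$ appears under the square root in the $X$-diffusion yet linearly in the variance injected into $V$. Keeping these powers of $\kappa$ and $\delta$ straight, tracking why the $\kappa$ prefactor surfaces in the $X$-equation while the $V$-equation is left in its substituted form, and verifying that the $\epsilon\to 0$ simplification of $P_t$ is legitimate at the order considered, are the only delicate points; everything else follows directly from Theorem~\ref{thm:RMSprop_SDE}.
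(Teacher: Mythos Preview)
Your substitute-then-reparameterize route has an internal inconsistency that you flag but do not resolve. You plug $\tilde\rho=\alpha\rho$ into the $V$-equation of Theorem~\ref{thm:RMSprop_SDE} and declare $dV_t=\alpha\rho(\cdots)dt$ \emph{before} any time change, and then, for the $X$-equation, you invoke a time reparameterization by $\kappa$ to produce the prefactor. But $X$ and $V$ are coupled and share the same time variable: if substituting $\tilde\eta=\kappa\eta$ really places you in a clock where one optimizer step equals $\kappa\eta$, then the rescaling $\tau=\kappa t$ must hit both equations, and the $V$-equation would become $dV_t=\kappa\alpha\rho(\cdots)dt$, contradicting the statement. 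Conversely, if no time change is needed for $V$, none is needed for $X$ either, and your intermediate $X$-equation (drift without $\kappa$, diffusion $\sqrt{\kappa\eta}$) is already the wrong object.

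The paper's route avoids this entirely by \emph{not} substituting into the SDE and then reparameterizing. Following the discussion around Eq.~\ref{eq:SGD_SDE_Rescaled_T}, one goes back to the discrete rescaled updates, keeps the base discretization $\Delta t=\eta$, and treats $\kappa$ as a constant scheduler multiplying the $X$-step. With the convention $\tilde\beta=1-\eta\tilde\rho$ (note the \emph{old} $\eta$; this is what makes $\tilde\rho=\kappa\rho$ correspond to $\tilde\beta=1-\kappa(1-\beta)$ in Lemma~\ref{lemma:HPSR_RMSprop}), the increments read $x_{k+1}-x_k=-\kappa\eta\,P^{-1}\nabla f_{\gamma}$ and $v_{k+1}-v_k=\eta\alpha\rho\big((\nabla f_\gamma)^2-v_k\big)$. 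Matching first and second moments at scale $\eta$ then gives the $\kappa$ in front of the $X$-drift and the $\kappa\sqrt{\eta/(B\delta)}$ in the $X$-diffusion directly, while the $V$-equation picks up only $\alpha\rho$ because its increment is already $\eta\alpha\rho(\cdots)$. This is precisely why ``the $\kappa$ prefactor surfaces in the $X$-equation while the $V$-equation is left in its substituted form'': it is not a time-reparameterization artifact but a consequence of $\kappa$ entering the algorithm only through the $X$-update and of the chosen parametrization of $\tilde\beta$.
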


\begin{lemma}[Scaling Rule at Convergence] 
\label{lemma:HPSR_RMSprop}
Under the assumptions of Corollary \ref{coroll:RMSpropRescaled}, $f$ is $\mu$-strongly convex, $\tr(\nabla^2 f(x)) \leq \mathcal{L}_{\tau}$, and $(\nabla f(x) )^2 = \mathcal{O}(\eta)$, the asymptotic dynamics of the iterates of RMSprop satisfies the classic scaling rule $\kappa = \sqrt{\delta}$ because
\begin{equation}
    \E[f(X_t) - f(X_*)] \overset{t \rightarrow \infty}{\leq} \frac{\eta \sigma \mathcal{L}_\tau}{4 \mu \sqrt{B}} \frac{\kappa}{\sqrt{\delta}}.
\end{equation}
By enforcing that the speed of $V_t$ matches that of $X_t$, one needs $\Tilde{\rho} = \kappa \rho$, which implies $\Tilde{\beta} = 1 - \kappa(1 - \beta)$.
\end{lemma}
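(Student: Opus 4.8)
The plan is to collapse the coupled $(X_t,V_t)$ system of Corollary~\ref{coroll:RMSpropRescaled} into a single effective SGD-like diffusion, in exactly the spirit of Phase~3 of Lemma~\ref{lemma:three_phases}, and then run the Itô-plus-Gr\"onwall argument already used for Phase~3 in Lemma~\ref{lemma:SignSGD_dynam_loss}. First I would exploit the convergence assumption $(\nabla f(x))^2 = \bigO(\eta)$: the $V_t$-equation is a stable linear relaxation toward $(\nabla f(X_t))^2 + \frac{\sigma^2}{B\delta}\mathbf{1} = \frac{\sigma^2}{B\delta}\mathbf{1} + \bigO(\eta)$ at rate $\alpha\rho>0$, so asymptotically $V_t = \frac{\sigma^2}{B\delta}\mathbf{1} + \bigO(\eta)$ and hence $\diag(V_t)^{-\frac12} = \frac{\sqrt{B\delta}}{\sigma}I_d + \bigO(\eta)$. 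Substituting this into the $X_t$-equation and absorbing the $\bigO(\eta)$ corrections into drift and diffusion (they perturb the final bound only at higher order) reduces the dynamics to
\begin{equation*}
dX_t = -\frac{\kappa\sqrt{B\delta}}{\sigma}\nabla f(X_t)\,dt + \kappa\sqrt{\eta}\,dW_t,
\end{equation*}
where the diffusion coefficient collapses because $\kappa\,\frac{\sqrt{B\delta}}{\sigma}\cdot\frac{1}{\sqrt{\delta}}\sqrt{\frac{\eta}{B}}\,\sigma = \kappa\sqrt{\eta}$.

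Next I would set $S_t := f(X_t)-f(X_*)$ and apply Itô's Lemma (Theorem~\ref{lemma:SDE_ito}). The drift contributes $-\frac{\kappa\sqrt{B\delta}}{\sigma}\lVert\nabla f(X_t)\rVert_2^2$, while the second-order term contributes $\frac{\kappa^2\eta}{2}\tr(\nabla^2 f(X_t)) \leq \frac{\kappa^2\eta}{2}\mathcal{L}_\tau$. Taking expectations annihilates the martingale part, and $\mu$-strong convexity supplies the PL inequality $\lVert\nabla f(X_t)\rVert_2^2 \geq 2\mu S_t$, yielding the linear differential inequality $\frac{d}{dt}\E[S_t] \leq -\frac{2\kappa\mu\sqrt{B\delta}}{\sigma}\E[S_t] + \frac{\kappa^2\eta\mathcal{L}_\tau}{2}$. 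Gr\"onwall's inequality then gives $\E[S_t] \leq S_0\, e^{-2\kappa\mu\sqrt{B\delta}\,t/\sigma} + \frac{\kappa\eta\sigma\mathcal{L}_\tau}{4\mu\sqrt{B\delta}}\bigl(1-e^{-2\kappa\mu\sqrt{B\delta}\,t/\sigma}\bigr)$, whose $t\to\infty$ limit is $\frac{\kappa\eta\sigma\mathcal{L}_\tau}{4\mu\sqrt{B\delta}} = \frac{\eta\sigma\mathcal{L}_\tau}{4\mu\sqrt{B}}\frac{\kappa}{\sqrt{\delta}}$, which is precisely the claimed bound.

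The scaling rule then follows by inspection: to keep this asymptotic loss level invariant when the batch size is inflated $B \mapsto B\delta$, the factor $\frac{\kappa}{\sqrt{\delta}}$ must be held fixed, forcing $\kappa=\sqrt{\delta}$. For the momentum parameter I would match the relaxation timescale of the $V_t$-process to that of the (now $\kappa$-accelerated) $X_t$-process; preserving this relative speed fixes $\tilde{\rho}$ and, via the discrete-to-continuous identification $1-\beta=\eta\rho$, translates into $\tilde{\beta} = 1 - \kappa(1-\beta)$. I expect the main obstacle to be making the $V_t$-equilibration step fully rigorous, i.e. controlling the $\bigO(\eta)$ error from freezing $\diag(V_t)^{-1/2}$ at its equilibrium \emph{uniformly} along the trajectory rather than pointwise, and, relatedly, pinning down exactly which notion of ``speed'' is being preserved in the $\beta$-rule, since the identification must be carried out consistently to land on $1-\kappa(1-\beta)$ rather than the $1-\kappa^2(1-\beta)$ rule of \cite{Malladi2022AdamSDE}.
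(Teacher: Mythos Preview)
Your proposal is correct and follows essentially the same approach as the paper: drop the $(\nabla f(X_t))^2$ term from the $V_t$ equation using $(\nabla f(x))^2=\bigO(\eta)$, let $V_t$ equilibrate to $\frac{\sigma^2}{B\delta}\mathbf{1}$, substitute $\diag(V_t)^{-1/2}=\frac{\sqrt{B\delta}}{\sigma}I_d$ into the $X_t$ equation to obtain $dX_t=-\frac{\kappa\sqrt{B\delta}}{\sigma}\nabla f(X_t)\,dt+\kappa\sqrt{\eta}\,dW_t$, then apply It\^o, the PL inequality from strong convexity, and Gr\"onwall to get the asymptotic bound $\frac{\eta\sigma\mathcal{L}_\tau}{4\mu\sqrt{B}}\frac{\kappa}{\sqrt{\delta}}$; the paper treats the $V_t$-freezing step at exactly the same heuristic level you do, and derives the $\tilde\beta$ rule by the same ``match the speed of $V_t$ to that of $X_t$'' argument, setting $\alpha=\kappa$ so that $\tilde\rho=\kappa\rho$.
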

\begin{proof}[Proof of Lemma \ref{lemma:HPSR_RMSprop}]
In order to recover the scaling of $\beta$, we enforce that the rate at which $V_t$ converges to its limit matches the speed of $X_t$: We need $\Tilde{\rho} = \kappa \rho$, which recovers the novel scaling $\Tilde{\beta} = 1 - \kappa(1 - \beta)$. Additionally, since $(\nabla f(x) )^2 = \mathcal{O}(\eta)$ we have that
\begin{align}
    & d X_t = \kappa\diag(V_t)^{-\frac{1}{2}} \left(- \nabla f(X_t) dt + \frac{1}{\sqrt{\delta}}\sqrt{\frac{\eta}{B}}  \sigma I_d d W_t \right) \\
    & d V_t = \kappa \rho \left( \frac{\sigma^2}{B \delta} \mathbf{1} - V_t \right) dt.
\end{align}
Therefore, $ V_t \overset{t \rightarrow \infty}{\rightarrow }\frac{\sigma^2}{B \delta} \mathbf{1}$, meaning that under these conditions:
\begin{equation}
    dX_t =- \frac{\sqrt{B \delta}\kappa}{\sigma} \nabla f(X_t) dt + \kappa \sqrt{\eta}I_d d W_t,
\end{equation}
which satisfies the following for $\mu$-strongly convex functions \begin{equation}
    d \E[f(X_t) - f(X_*)] \leq  -2 \kappa \mu \frac{\sqrt{B \delta}}{\sigma} \E[f(X_t) - f(X_*)]dt + \frac{\kappa^2 \eta \mathcal{L}_{\tau} }{ 2}dt,
\end{equation}
meaning that
$\E[f(X_t) - f(X_*)] \overset{t \rightarrow \infty}{\leq} \frac{\eta \sigma \mathcal{L}_\tau}{4 \mu \sqrt{B}} \frac{\kappa}{\sqrt{\delta}}$.

Since the asymptotic the loss is $\frac{\eta}{2} \frac{\mathcal{L}_{\tau} \sigma}{2 \mu \sqrt{B}}  \frac{\kappa}{\sqrt{\delta}}$ does not depend on $\kappa$ and $\delta$ if $\frac{\kappa}{\sqrt{\delta}}=1$, we recover the classic scaling rule.
\end{proof}
\textbf{Remark:} Under the same conditions, SGD satisfies
\begin{equation}
    d X_t = - \kappa \nabla f(X_t) dt +  \kappa \frac{1}{\sqrt{\delta}}\sqrt{\frac{\eta}{B}}\sigma I_d d W_t
\end{equation}
and therefore
\begin{equation}
    \E[f(X_t) - f(X_*)] \leq (f(X_0) - f(X_*)) e^{- 2 \mu \kappa t} + \frac{\eta}{2} \frac{\mathcal{L}_{\tau} \sigma^2}{2 \mu B}  \frac{\kappa}{\delta}\left(1 - e^{- 2 \mu \kappa t}\right),
\end{equation}
meaning that asymptotically the loss is $\frac{\eta}{2} \frac{\mathcal{L}_{\tau} \sigma^2}{2 \mu B}  \frac{\kappa}{\delta}$ which does not depend on $\kappa$ and $\delta$ if $\frac{\kappa}{\delta}=1$.

\newpage

\begin{lemma} \label{lemma:RMSprop_StaDistr}
For $f(x) := \frac{x^{\top} H x}{2}$, the stationary distribution of RMSprop is $\left(\E [X_\infty] ], Cov(X_\infty)\right)=\left(0, \frac{\eta}{2} \Sigma^{\frac{1}{2}} H^{-1} \right)$.
\end{lemma}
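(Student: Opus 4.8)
The plan is to specialize the RMSprop SDE of Theorem~\ref{thm:RMSprop_SDE} to the quadratic $f(x)=\tfrac12 x^\top H x$, for which $\nabla f(x)=Hx$, and then exploit that we study the dynamics \emph{at convergence}, i.e.\ $(\nabla f(x))^2=\mathcal{O}(\eta)$ (cf.\ Remark~\ref{remark:NoNeed}). Here $H=\diag(\lambda_1,\dots,\lambda_d)$ and $\Sigma$ are diagonal positive-definite matrices. Under this regime the squared-gradient term in the $V_t$ equation is of order $\eta^2$ and can be dropped, so that $dV_t=\rho(\diag(\Sigma)-V_t)\,dt$, whose solution converges to the equilibrium $V_\infty=\diag(\Sigma)$. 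Consequently $P_t=\diag(V_t)^{1/2}+\epsilon I_d\to \Sigma^{1/2}$ (taking $\epsilon\to 0$), and substituting $P_t^{-1}\to\Sigma^{-1/2}$ together with $\Sigma(X_t)^{1/2}=\Sigma^{1/2}$ into the $X_t$ equation collapses it to the linear Ornstein--Uhlenbeck SDE
\begin{equation*}
dX_t=-\Sigma^{-1/2}H\,X_t\,dt-\sqrt{\eta}\,\Sigma^{-1/2}\Sigma^{1/2}\,dW_t=-\Sigma^{-1/2}H\,X_t\,dt-\sqrt{\eta}\,dW_t.
\end{equation*}
Writing $A:=\Sigma^{-1/2}H$ (diagonal with strictly positive entries), this is the object to analyze.

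For the mean, taking expectations annihilates the martingale part and gives $\tfrac{d}{dt}\E[X_t]=-A\,\E[X_t]$, hence $\E[X_t]=e^{-At}X_0\to 0$ because $A$ has strictly positive eigenvalues; thus $\E[X_\infty]=0$.

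For the covariance, I would apply It\^o's lemma (Theorem~\ref{lemma:SDE_ito}) to $X_tX_t^\top$: the drift contributes $-A X_tX_t^\top-X_tX_t^\top A^\top$ and the quadratic variation of $\sqrt{\eta}\,dW_t$ contributes $\eta I_d$. Setting $C_t:=\E[X_tX_t^\top]$ and taking expectations yields the matrix ODE $\dot C_t=-AC_t-C_tA^\top+\eta I_d$, whose stationary point solves the Lyapunov equation $AC+CA^\top=\eta I_d$. Since $A$ and the candidate $C=\tfrac{\eta}{2}\Sigma^{1/2}H^{-1}$ are diagonal and hence commute, one checks directly that $AC=\tfrac{\eta}{2}\Sigma^{-1/2}H\Sigma^{1/2}H^{-1}=\tfrac{\eta}{2}I_d=CA$, so $AC+CA^\top=\eta I_d$ as required; uniqueness of the stable solution of the Lyapunov equation then identifies this as the limit. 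Finally $Cov(X_\infty)=C-\E[X_\infty]\E[X_\infty]^\top=\tfrac{\eta}{2}\Sigma^{1/2}H^{-1}$, which is the claim.

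The main obstacle is not the linear-algebra verification (immediate once the SDE is diagonalized) but rather justifying the decoupling step: arguing that near the minimizer the fast variable $V_t$ has equilibrated to $\diag(\Sigma)$ so that $P_t$ may be replaced by $\Sigma^{1/2}$, and that dropping $(\nabla f)^2=\mathcal{O}(\eta)$ does not affect the stationary covariance at leading order. This mirrors the deliberately terse reasoning in Lemma~\ref{lemma:SignSGD_StaDistr} and Lemma~\ref{lemma:AdamW_StaDistr_Insights}, and relies on the standing at-convergence assumption and on $\Sigma$ being (locally) constant.
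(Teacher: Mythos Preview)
Your proposal is correct and follows essentially the same approach as the paper: both reduce the RMSprop SDE at convergence to the linear Ornstein--Uhlenbeck process $dX_t=-\Sigma^{-1/2}H X_t\,dt+\sqrt{\eta}\,dW_t$ via $V_t\to\diag(\Sigma)$ and $(\nabla f)^2=\mathcal{O}(\eta)$. The only minor difference is in the final computation of the stationary covariance: you derive and solve the Lyapunov equation $AC+CA^\top=\eta I_d$ via It\^o on $X_tX_t^\top$, whereas the paper writes the explicit variation-of-constants solution $X_t=e^{-At}\big(X_0+\sqrt{\eta}\int_0^t e^{As}\,dW_s\big)$ and invokes the It\^o isometry; both routes are standard and equivalent.
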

\begin{proof}
As $(\nabla f(x) )^2 = \mathcal{O}(\eta)$ and $t \rightarrow \infty$, we have 
\begin{equation}
    d X_t = - \Sigma^{-\frac{1}{2}} H X_t dt + \sqrt{\eta} I_d d W_t
\end{equation}
which implies that
\begin{equation}
    X_t = e^{- \Sigma^{-\frac{1}{2}} H t}\left(X_0 + \sqrt{\eta} \int_0^t e^{\Sigma^{-\frac{1}{2}} H s} dW_s\right).
\end{equation}
The thesis follows from the martingale property of Brownian motion and the Itô isometry.
\end{proof}

\subsection{RMSpropW} \label{sec:RMSpropW}

\begin{figure}
    \centering
    \subfloat{{\includegraphics[width=0.35\linewidth]{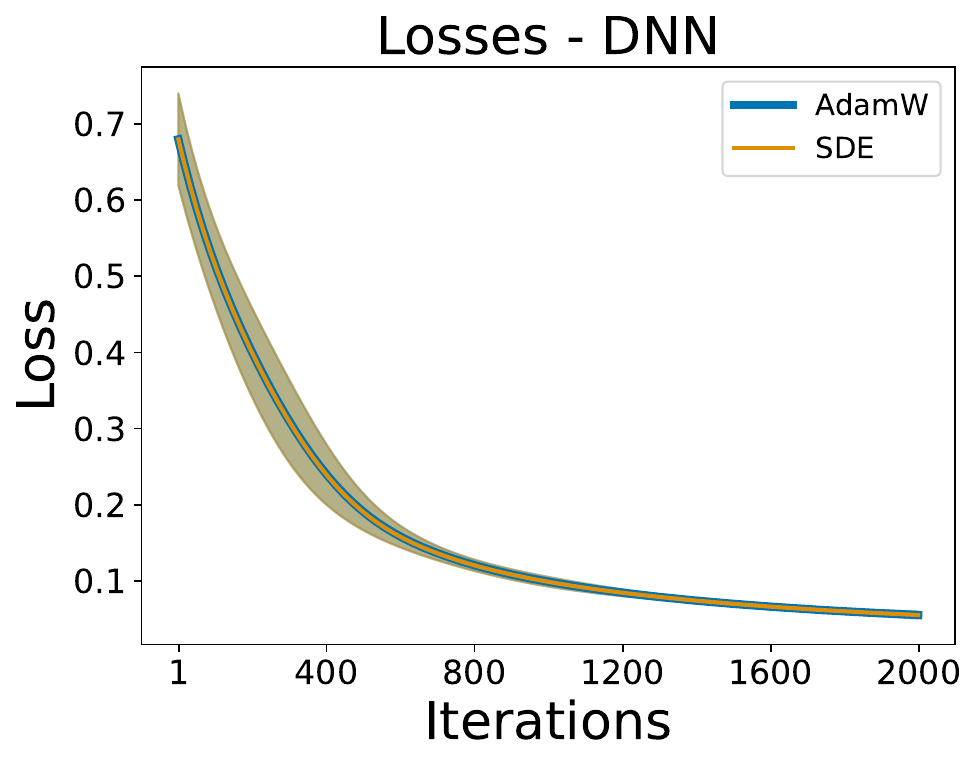} }}%
    \hspace{1.6cm}\subfloat{{\includegraphics[width=0.35\linewidth]{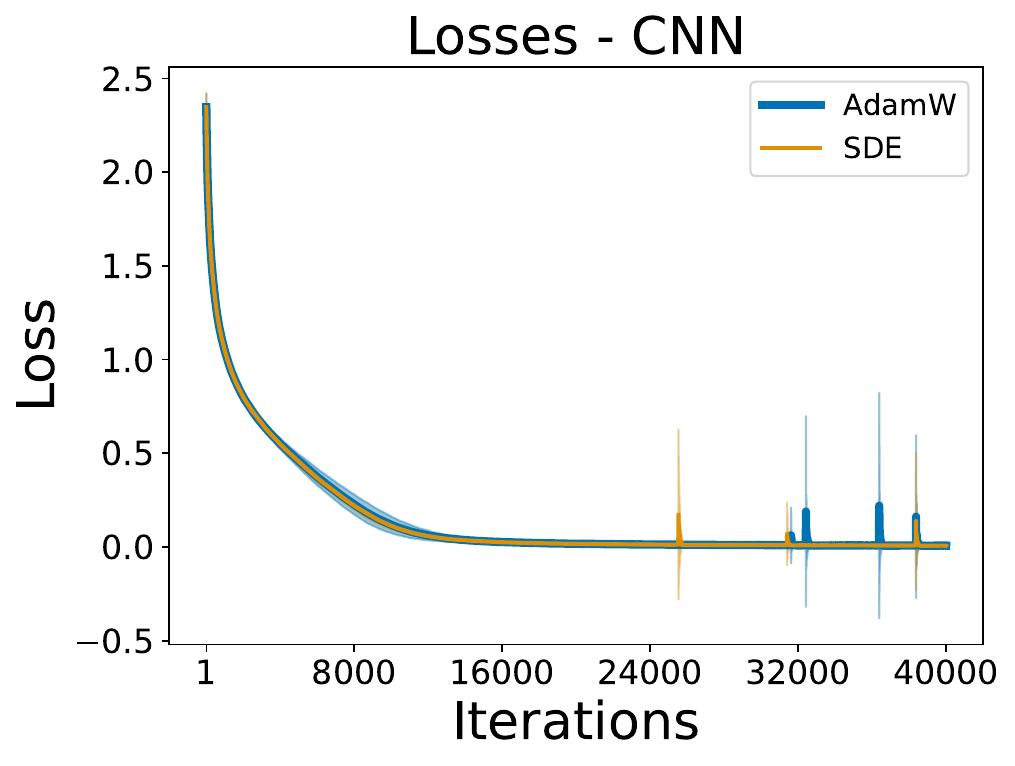} }} \\
    \subfloat{{\includegraphics[width=0.35\linewidth]{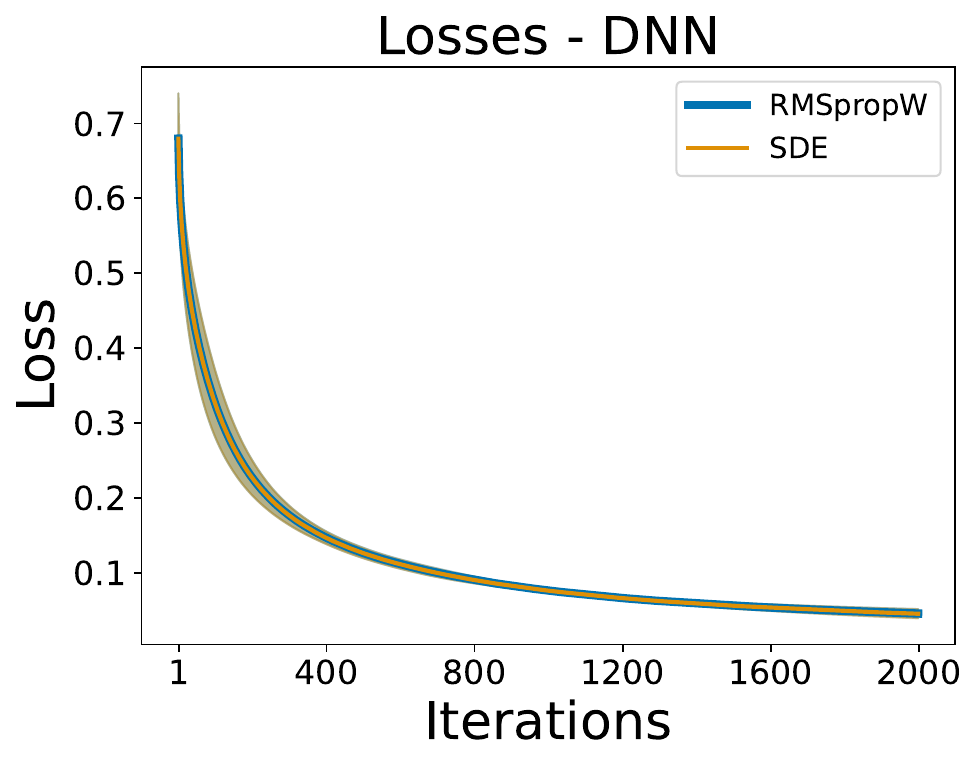} }}%
    \hspace{1.6cm}\subfloat{{\includegraphics[width=0.35\linewidth]{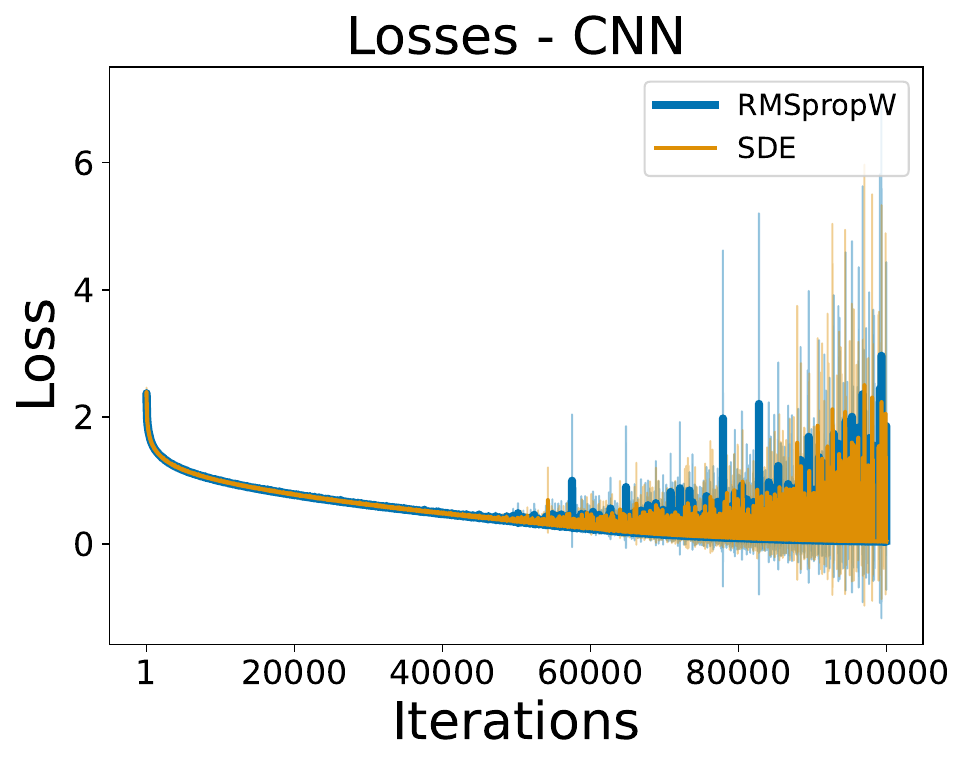} }}
    
    \caption{The two at the top represent the comparison between AdamW and its SDE in terms of $f(x)$. The other two do the same for RMSpropW. In both cases, the first is an MLP on the Breast Cancer Dataset and the second a CNN on MNIST: Our SDEs match the respective optimizers.}\label{fig:AdamW_RMSpropW_SDE_Verification_App}
\end{figure}

In this subsection, we derive the SDE of RMSpropW defined as
\begin{align}
\label{eq:RMSpropW_Discr_Update_app}
    & x_{k+1} = x_{k} - \eta \frac{\nabla f_{\gamma_{k}}(x_{k})}{\sqrt{v_{k+1}}+\epsilon I_d} - \eta \theta x_k \\
    & v_{k+1} = \beta v_{k} + (1- \beta) \left( \nabla f_{\gamma_{k}}(x_{k}) \right)^2
\end{align}
with $ (x_0, v_0) \in \mathbb{R}^d\times\mathbb{R}^d$, $\eta \in \R^{>0}$ is the step size, $\beta = 1 - \rho\eta$ for $\rho = \mathcal{O}(1)$, $\theta>0$, the mini-batches $\{ \gamma_k \}$ are modelled as i.i.d.~random variables uniformly distributed on $\{ 1, \cdots, N \}$, and of size $B\geq 1$.

\begin{theorem} \label{thm:RMSpropW_SDE}
    Under the same assumptions as Theorem \ref{thm:RMSprop_SDE}, the SDE of RMSpropW is
    \begin{align}
\label{eq:RMSpropW_SDE}
    & d X_t = - P_t^{-1} (\nabla f(X_t) dt + \sqrt{\eta} \Sigma(X_t)^{\frac{1}{2}} d W_t) - \theta X_t dt\\
    & d V_t = \rho( (\nabla f(X_t))^2 +  \diag(\Sigma(X_t)) - V_t)) dt,
\end{align}
where $\beta = 1-\eta \rho$, $\rho = \mathcal{O}(1)$, $\theta>0$, and $P_t:=  \diag\left(V_t\right)^{\frac{1}{2}}+\epsilon I_d$.
\end{theorem}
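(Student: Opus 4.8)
The plan is to mirror the proof of Theorem~\ref{thm:RMSprop_SDE} almost verbatim, treating the \emph{decoupled} weight-decay term as a deterministic, linear perturbation of the drift that leaves every moment-matching estimate intact. First I would record the increment of the discrete scheme, $\bar\Delta_x = x_{k+1}-x_k = -\eta \frac{\nabla f_{\gamma_k}(x_k)}{\sqrt{v_{k+1}}+\epsilon I_d} - \eta\theta x_k$, and reuse the expansion $\frac{1}{\sqrt{v_{k+1}}} = \frac{1}{\sqrt{v_k}}(1+\mathcal{O}(\eta))$ from the RMSprop derivation so that, up to an $\mathcal{O}(\eta^2)$ error that is harmless for a first-order approximation, the update reads $\bar\Delta_x = -\eta \diag(v_k+\epsilon I_d)^{-1/2}\nabla f_{\gamma_k}(x_k) - \eta\theta x_k$, together with the unchanged recursion $v_k - v_{k-1} = -\eta\rho(v_{k-1} - (\nabla f_{\gamma_{k-1}}(x_{k-1}))^2)$. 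Crucially, the decoupling means the $-\eta\theta x_k$ term carries \emph{no} preconditioner, matching the unpreconditioned $-\theta X_t$ drift of Eq.~\ref{eq:RMSpropW_SDE}.

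Next I would compute the first two conditional moments of $\bar\Delta$. Because the weight-decay contribution $-\eta\theta x_k$ is deterministic, it enters the first moment additively, giving $\E[\bar\Delta_x] = -\eta P_k^{-1}\nabla f(x_k) - \eta\theta x_k$ with $P_k = \diag(v_k)^{1/2}+\epsilon I_d$, which is exactly $\eta$ times the drift $b = -P_t^{-1}\nabla f(X_t) - \theta X_t$. For the second moment I would split $\E[\bar\Delta_x\bar\Delta_x^\top] = \E[\bar\Delta_x]\E[\bar\Delta_x]^\top + \eta^2 P_k^{-1}\Sigma(x_k) P_k^{-1}$: the deterministic weight-decay piece affects only the rank-one outer product $\E[\bar\Delta_x]\E[\bar\Delta_x]^\top$, which is precisely the $b_i b_j\,\eta^2$ term produced by Lemma~\ref{lemma:li1} for the candidate SDE, while the fluctuation term $\eta^2 P_k^{-1}\Sigma P_k^{-1}$ reproduces the $\sigma\sigma^\top$ contribution of the diffusion $\sqrt{\eta}\,P_t^{-1}\Sigma^{1/2}$. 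The mixed $X$--$V$ second moment and all moments of order $s\ge 3$ carry the same $\mathcal{O}(\eta^2)$/$\mathcal{O}(\eta^3)$ bounds as in the RMSprop case, since weight decay contributes only deterministic terms of size $\mathcal{O}(\eta)$ whose products are already accounted for.

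I would then verify the hypotheses of Theorem~\ref{thm:mils}: conditions (1)--(4) follow by subtracting the matching expansions of Lemma~\ref{lemma:li1} from the discrete moments above, yielding differences bounded by $K_i(x)\eta^2$ for suitable $K_i\in G$. The only point needing an explicit remark is regularity of the coefficients, but the added drift $-\theta X_t$ is linear, hence Lipschitz, of affine growth, and in $G$ together with all its derivatives; consequently it neither disturbs the existence-uniqueness argument nor the applicability of Lemma~\ref{lemma:li1}, and the delicate part --- regularity of $P_t^{-1}$ stemming from division by $\sqrt{V_t}$ --- is handled exactly as in Remark~\ref{remark:Need}.

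The main obstacle is therefore not analytic but bookkeeping: one must confirm that the deterministic weight-decay term does not generate any spurious $\mathcal{O}(\eta)$ contribution to the second or higher increment moments that would fail to match the SDE. This is resolved by observing that $-\eta\theta x_k$ is non-random, so it contributes to $\bar\Delta$ only through the mean, and its quadratic effect is automatically captured by the $b_i b_j\,\eta^2$ term common to both the discrete scheme and the weak Taylor expansion of the SDE --- exactly the mechanism that renders the $\theta$-term innocuous and makes the statement a near-immediate extension of Theorem~\ref{thm:RMSprop_SDE}.
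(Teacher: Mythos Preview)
Your proposal is correct and follows essentially the same approach as the paper: the paper's own proof is a one-line remark that ``the proof is the same as that of Theorem~\ref{thm:RMSprop_SDE} and the only difference is that $\eta\theta x_k$ is approximated with $\theta X_t\,dt$,'' which is precisely what you have spelled out in detail. Your additional observations---that the weight-decay term is deterministic and therefore only shifts the mean, and that $-\theta X_t$ is linear and hence poses no new regularity issues---are the right justifications for why this one-line reduction is valid.
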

\begin{proof}
    The proof is the same as the of Theorem \ref{thm:RMSprop_SDE} and the only difference is that $\eta \theta x_k$ is approximated with $\theta X_t dt$.
\end{proof}
Figure \ref{fig:AdamW_RMSpropW_SDE_Verification} and Figure \ref{fig:AdamW_RMSpropW_SDE_Verification_App} validate this result on a variety of architectures and datasets.

\begin{remark}
    See Remark \ref{remark:Need} and Remark \ref{remark:NoNeed} for a discussion on the regularity of the SDE derived in Theorem \ref{thm:RMSpropW_SDE}.
\end{remark}

\begin{corollary}\label{coroll:RMSpropWRescaled}
    Under the assumptions of Theorem \ref{thm:RMSpropW_SDE} with $\Sigma(x)=\sigma^2 I_d$, $\Tilde{\eta} = \kappa \eta$, $\Tilde{B}= B \delta$, and $\Tilde{\rho} = \alpha \rho$, and $\Tilde{\theta} = \xi \theta$,
\begin{align}
    & d X_t = \kappa\diag(V_t)^{-\frac{1}{2}} \left(- \nabla f(X_t) dt + \frac{1}{\sqrt{\delta}}\sqrt{\frac{\eta}{B}}  \sigma I_d d W_t \right) - \xi \theta \kappa X_t dt \\
    & d V_t = \alpha\rho \left( (\nabla f(X_t))^2  + \frac{\sigma^2}{B \delta} \mathbf{1} - V_t \right) dt.
\end{align}
\end{corollary}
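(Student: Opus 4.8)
The plan is to derive Corollary~\ref{coroll:RMSpropWRescaled} directly from the RMSpropW SDE of Theorem~\ref{thm:RMSpropW_SDE}, following verbatim the argument that yields the weight-decay-free Corollary~\ref{coroll:RMSpropRescaled} and merely transporting the extra drift $-\theta X_t\,dt$ through it. First I would specialize to isotropic noise $\Sigma(x)=\sigma^2 I_d$ and record the single place the batch size enters: since the mini-batch gradient noise covariance scales inversely with the number of summands, at batch size $\tilde B=B\delta$ the effective covariance is $\tfrac{\sigma^2}{B\delta} I_d$, so that $\Sigma^{1/2}=\tfrac{1}{\sqrt{\delta}}\sqrt{\tfrac{1}{B}}\,\sigma I_d$ and $\diag(\Sigma)=\tfrac{\sigma^2}{B\delta}\mathbf 1$. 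With $P_t^{-1}\approx\diag(V_t)^{-1/2}$ (dropping the negligible $\epsilon$-term, as throughout), substituting $\tilde\rho=\alpha\rho$ into the $V_t$ line of Theorem~\ref{thm:RMSpropW_SDE} already gives the stated $dV_t=\alpha\rho\bigl((\nabla f(X_t))^2+\tfrac{\sigma^2}{B\delta}\mathbf 1-V_t\bigr)dt$.

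For the $X_t$ line I would keep the original time grid $t=k\eta$ and match the first two moments exactly as in the proof of Theorem~\ref{thm:RMSprop_SDE}. Plugging $\tilde\eta=\kappa\eta$ and $\tilde\theta=\xi\theta$ into the update $x_{k+1}-x_k=-\tilde\eta\diag(v_{k+1})^{-1/2}\nabla f_{\gamma_k}(x_k)-\tilde\eta\tilde\theta x_k$, the one-step mean is $-\kappa\eta\diag(V_t)^{-1/2}\nabla f(X_t)-\kappa\eta\,\xi\theta X_t$, which over a step of original duration $\eta$ forces the drift $b=-\kappa\diag(V_t)^{-1/2}\nabla f(X_t)-\kappa\xi\theta X_t$; likewise the one-step noise covariance $\tilde\eta^2\diag(V_t)^{-1/2}\Sigma\diag(V_t)^{-1/2}=\kappa^2\eta^2(\cdots)$ forces the diffusion coefficient $\sqrt{\eta}\,s=\kappa\diag(V_t)^{-1/2}\tfrac{1}{\sqrt{\delta}}\sqrt{\tfrac{\eta}{B}}\,\sigma I_d$. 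Collecting these reproduces $dX_t=\kappa\diag(V_t)^{-1/2}\bigl(-\nabla f(X_t)\,dt+\tfrac{1}{\sqrt{\delta}}\sqrt{\tfrac{\eta}{B}}\,\sigma I_d\,dW_t\bigr)-\xi\theta\kappa X_t\,dt$, as claimed. Equivalently, one may invoke Theorem~\ref{thm:RMSpropW_SDE} with the rescaled hyperparameters on the new grid $t=k\tilde\eta$ and then time-change by $Y_t:=X_{\kappa t}$, using $W_{\kappa t}\stackrel{d}{=}\sqrt{\kappa}\,W_t$ so the drift (gradient and weight decay alike) gains a factor $\kappa$ and the diffusion a factor $\sqrt{\kappa}$, which combines with the explicit $\sqrt{\tilde\eta}$ to the same overall $\kappa$.

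The only genuinely new ingredient relative to Corollary~\ref{coroll:RMSpropRescaled} is the weight-decay drift, and since $-\theta X_t$ is linear and deterministic it transforms identically to the gradient drift under the learning-rate rescaling, so no new estimates are needed and the coefficient regularity established in Theorem~\ref{thm:RMSpropW_SDE} (see Remark~\ref{remark:Need}) carries over unchanged. I expect the main subtlety to be purely in the bookkeeping of $\kappa$: explaining why $X_t$ acquires an overall $\kappa$ on drift \emph{and} diffusion while $V_t$ retains only $\alpha\rho$. This is resolved by noting that $\rho$ and $\tilde\rho$ are relaxation rates defined relative to the original $\eta$ (so $\tilde\beta=1-\eta\tilde\rho$ and the $V_t$ line is untouched by $\tilde\eta$), whereas $\tilde\eta=\kappa\eta$ enters the $X_t$ increment per step and therefore, on the fixed original grid, rescales both its drift and its diffusion by $\kappa$ — the same factor that Lemma~\ref{lemma:HPSR_RMSprop} later pins down by setting $\alpha=\kappa$ to equalize the speeds of $X_t$ and $V_t$.
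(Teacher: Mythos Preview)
Your proposal is correct and follows the same approach the paper (implicitly) uses: the paper states this corollary without proof, treating it as immediate from Theorem~\ref{thm:RMSpropW_SDE} once one substitutes the rescaled hyperparameters and works on the original time grid $t=k\eta$ with $\kappa$ acting as a constant scheduler, exactly as laid out for SGD in Appendix~\ref{sec:BackgroundApp} (cf.\ Eq.~\ref{eq:SGD_SDE_Rescaled_T}). Your moment-matching computation and your bookkeeping remark that $\tilde\rho$ is defined relative to $\eta$ (so $\tilde\beta=1-\eta\tilde\rho$) are precisely the mechanics behind that one-line corollary, and your observation that the linear weight-decay drift transforms identically to the gradient drift is the only additional point beyond Corollary~\ref{coroll:RMSpropRescaled}.
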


\begin{lemma}[Scaling Rule at Convergence] 
\label{lemma:HPSR_RMSpropW}
Under the assumptions of Corollary \ref{coroll:RMSpropWRescaled}, $f$ is $\mu$-strongly convex and $L$-smooth, $\tr(\nabla^2 f(x)) \leq \mathcal{L}_{\tau}$, $X_*=0$, and $(\nabla f(x) )^2 = \mathcal{O}(\eta)$, the asymptotic dynamics of the iterates of RMSpropW satisfies the novel scaling rule if $\kappa = \sqrt{\delta}$ and $\xi = \kappa$ because
\begin{equation}
    \E[f(X_t) - f(X_*)] \overset{t \rightarrow \infty}{\leq} \frac{\eta \mathcal{L}_\tau \sigma L}{2} \frac{\kappa}{2 \mu \sqrt{B \delta} L + \sigma \xi \theta (L + \mu)}.
\end{equation}
By enforcing that the speed of $V_t$ matches that of $X_t$, one needs $\Tilde{\rho} = \kappa\rho$, which implies $\Tilde{\beta} = 1 - \kappa(1 - \beta)$.
\end{lemma}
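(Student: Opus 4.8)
The plan is to mirror the proof of Lemma~\ref{lemma:HPSR_RMSprop} for plain RMSprop, now carrying the additional decoupled weight-decay drift $-\xi\theta\kappa X_t\,dt$ that appears in Corollary~\ref{coroll:RMSpropWRescaled}. First I would fix the momentum rescaling: to make the relaxation rate of $V_t$ match the effective drift rate of $X_t$, I set $\alpha=\kappa$, i.e.\ $\Tilde{\rho}=\kappa\rho$, which is exactly the claim $\Tilde{\beta}=1-\kappa(1-\beta)$. Then, using the convergence assumption $(\nabla f(x))^2=\bigO(\eta)$, the squared-gradient source term in the $V_t$-ODE is negligible, so $V_t\to\frac{\sigma^2}{B\delta}\mathbf{1}$ and $\diag(V_t)^{-1/2}\to\frac{\sqrt{B\delta}}{\sigma}I_d$. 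Substituting this limit into the $X_t$-SDE and simplifying the diffusion coefficient collapses the dynamics to the Ornstein--Uhlenbeck-type equation
\begin{equation*}
dX_t=-\kappa\frac{\sqrt{B\delta}}{\sigma}\nabla f(X_t)\,dt-\xi\theta\kappa X_t\,dt+\kappa\sqrt{\eta}\,I_d\,dW_t.
\end{equation*}

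Next I would apply It\^o's lemma (Theorem~\ref{lemma:SDE_ito}) to $S_t:=f(X_t)-f(X_*)$ with $X_*=0$ and $f(X_*)=0$. The drift yields two nonpositive contributions, $-\kappa\frac{\sqrt{B\delta}}{\sigma}\lVert\nabla f(X_t)\rVert_2^2$ and $-\xi\theta\kappa\langle\nabla f(X_t),X_t\rangle$, while the It\^o correction gives $\frac{\kappa^2\eta}{2}\tr(\nabla^2 f(X_t))\le\frac{\kappa^2\eta\mathcal{L}_\tau}{2}$. The first term I bound via the PL inequality $\lVert\nabla f(X_t)\rVert_2^2\ge 2\mu S_t$, implied by $\mu$-strong convexity. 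For the second, strong convexity expanded at the minimizer gives $\langle\nabla f(X_t),X_t\rangle\ge S_t+\frac{\mu}{2}\lVert X_t\rVert_2^2$, and combining with the smoothness bound $\lVert X_t\rVert_2^2\ge\frac{2}{L}S_t$ yields $\langle\nabla f(X_t),X_t\rangle\ge\frac{L+\mu}{L}S_t$. Taking expectations produces the scalar linear differential inequality
\begin{equation*}
\frac{d\,\E[S_t]}{dt}\le-\left(\frac{2\kappa\mu\sqrt{B\delta}}{\sigma}+\frac{\kappa\xi\theta(L+\mu)}{L}\right)\E[S_t]+\frac{\kappa^2\eta\mathcal{L}_\tau}{2}.
\end{equation*}

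Integrating this ODE (equivalently, Gr\"onwall) sends $\E[S_t]$ to $\frac{\kappa^2\eta\mathcal{L}_\tau}{2c}$ with $c=\kappa\big(2\mu\sqrt{B\delta}L+\sigma\xi\theta(L+\mu)\big)/(\sigma L)$; rearranging reproduces the stated bound exactly. Substituting the scaling choice $\kappa=\sqrt{\delta}$, $\xi=\kappa$ then makes the common factor $\sqrt{\delta}$ cancel between numerator and denominator, so the asymptotic loss becomes independent of the batch-size multiplier $\delta$ --- precisely the preservation property the rule is designed to guarantee.

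The main obstacle is not any single calculation but the rigorous justification of replacing $V_t$ by its stationary value $\frac{\sigma^2}{B\delta}\mathbf{1}$: this is an asymptotic ($t\to\infty$) statement that treats the $V_t$-coordinate as fully relaxed, so one must argue that the $V_t$ transient does not alter the asymptotic loss level, just as in the RMSprop derivation of Lemma~\ref{lemma:HPSR_RMSprop}. By contrast, the two convexity/smoothness inequalities are standard and the linear-ODE integration is routine; the only genuine subtlety is the timescale decoupling of $V_t$ and $X_t$ in the $(\nabla f)^2=\bigO(\eta)$ regime.
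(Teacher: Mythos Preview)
Your proposal is correct and follows essentially the same approach as the paper's own proof: set $\alpha=\kappa$ to match speeds, use $(\nabla f)^2=\mathcal{O}(\eta)$ to relax $V_t$ to $\frac{\sigma^2}{B\delta}\mathbf{1}$, apply It\^o's lemma to $f(X_t)-f(X_*)$ on the resulting SDE, and integrate the linear differential inequality. In fact you spell out the two inequalities $\langle\nabla f(X_t),X_t\rangle\ge S_t+\tfrac{\mu}{2}\lVert X_t\rVert_2^2$ and $\lVert X_t\rVert_2^2\ge\tfrac{2}{L}S_t$ more explicitly than the paper, which simply asserts the combined bound $\xi\theta\bigl(1+\tfrac{\mu}{L}\bigr)$ ``for $\mu$-strongly convex and $L$-smooth functions'' without detailing this step.
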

\begin{proof}[Proof of Lemma \ref{lemma:HPSR_RMSpropW}]
In order to recover the scaling of $\beta$, we enforce that the rate at which $V_t$ converges to its limit matches the speed of $X_t$: We need $\Tilde{\rho} = \kappa \rho$, which recovers the novel scaling $\Tilde{\beta} = 1 - \kappa(1 - \beta)$. Additionally, since $(\nabla f(x) )^2 = \mathcal{O}(\eta)$ we have that
\begin{align}
    & d X_t = \kappa\diag(V_t)^{-\frac{1}{2}} \left(- \nabla f(X_t) dt + \frac{1}{\sqrt{\delta}}\sqrt{\frac{\eta}{B}}  \sigma I_d d W_t \right) - \kappa \xi \theta X_t dt \\
    & d V_t = \kappa \rho \left( \frac{\sigma^2}{B \delta} \mathbf{1} - V_t \right) dt.
\end{align}
Therefore, $ V_t \overset{t \rightarrow \infty}{\rightarrow }\frac{\sigma^2}{B \delta} \mathbf{1}$, meaning that under these conditions:
\begin{equation}
    dX_t =- \frac{\sqrt{B \delta}\kappa}{\sigma} \nabla f(X_t) dt + \kappa \sqrt{\eta}I_d d W_t - \kappa \xi \theta X_t dt,
\end{equation}
which satisfies the following for $\mu$-strongly convex and $L$-smooth functions \begin{equation}
    d \E[f(X_t) - f(X_*)] \leq  \kappa \left(2 \mu \frac{\sqrt{B \delta}}{\sigma} + \xi \theta \left(1 + \frac{\mu}{L}\right)\right)\E[f(X_t) - f(X_*)]dt + \frac{\kappa^2 \eta \mathcal{L}_{\tau} }{ 2}dt,
\end{equation}
meaning that
$\E[f(X_t) - f(X_*)] \overset{t \rightarrow \infty}{\leq} \frac{\eta \mathcal{L}_\tau \sigma L}{2} \frac{\kappa}{2 \mu \sqrt{B \delta} L + \sigma \xi \theta (L + \mu)}$. 

Since the asymptotic the loss $\frac{\eta \mathcal{L}_\tau \sigma L}{2} \frac{\kappa}{2 \mu \sqrt{B \delta} L + \sigma \xi \theta (L + \mu)}$ does not depend on $\kappa$ and $\delta$ and $\xi$ if $\kappa=\xi=\sqrt{\delta}$, we recover the novel scaling rule.
\end{proof}
A similar result can be derived for a general $X_*$: The final expression is very convoluted and brings marginally negligible added value.

\begin{lemma} \label{lemma:RMSpropW_StaDistr}
For $f(x) := \frac{x^{\top} H x}{2}$, the stationary distribution of RMSpropW is $\left(\E [X_\infty] ], Cov(X_\infty)\right)=\left(0, \frac{\eta}{2} (H \Sigma^{-\frac{1}{2}} + \theta I_d)^{-1} \right)$.
\end{lemma}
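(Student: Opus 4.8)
The plan is to follow the proof of Lemma~\ref{lemma:RMSprop_StaDistr} almost verbatim, inserting the single additional weight-decay drift $-\theta X_t$ that distinguishes RMSpropW from RMSprop. I start from the RMSpropW SDE of Theorem~\ref{thm:RMSpropW_SDE}, specialized to the quadratic $f(x)=\tfrac12 x^\top H x$ (so that $\nabla f(X_t)=H X_t$ and $H=\diag(\lambda_1,\dots,\lambda_d)$) and to constant noise covariance $\Sigma(x)=\Sigma$.

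First I would pass to the convergence regime $(\nabla f(x))^2=\mathcal{O}(\eta)$, exactly as in Lemma~\ref{lemma:corollRMSprop}. In the $V_t$-equation the term $(\nabla f(X_t))^2$ is then of order $\eta$ and is absorbed into the $\mathcal{O}(\eta)$ error, so the drift reduces to $\rho(\diag(\Sigma)-V_t)$ and $V_t\to\diag(\Sigma)$ as $t\to\infty$. Hence $P_t=\diag(V_t)^{1/2}+\epsilon I_d\to\Sigma^{1/2}$ and $P_t^{-1}\to\Sigma^{-\frac12}$, which collapses the diffusion coefficient $P_t^{-1}\sqrt{\eta}\,\Sigma^{1/2}$ to $\sqrt{\eta}\,I_d$. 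This turns the $X_t$-dynamics into the linear Ornstein--Uhlenbeck SDE
\[
dX_t=-\left(\Sigma^{-\frac12}H+\theta I_d\right)X_t\,dt+\sqrt{\eta}\,I_d\,dW_t.
\]

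Writing $A:=\Sigma^{-\frac12}H+\theta I_d$, the solution is $X_t=e^{-At}\bigl(X_0+\sqrt{\eta}\int_0^t e^{As}\,dW_s\bigr)$. Taking expectations and using that the Itô integral is a martingale gives $\E[X_t]=e^{-At}X_0\to 0$, since $A$ is positive definite (the $\lambda_i$, the diagonal entries of $\Sigma$, and $\theta$ are all positive). For the second moment, the Itô isometry yields
\[
Cov(X_t)=\eta\,e^{-At}\left(\int_0^t e^{As}e^{A^\top s}\,ds\right)e^{-A^\top t}=\frac{\eta}{2}A^{-1}\left(I_d-e^{-2At}\right),
\]
where the last equality uses that $H$ and $\Sigma$ are diagonal, so $A$ is symmetric and $\Sigma^{-\frac12}H=H\Sigma^{-\frac12}$. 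Letting $t\to\infty$ gives $Cov(X_\infty)=\frac{\eta}{2}A^{-1}=\frac{\eta}{2}(H\Sigma^{-\frac12}+\theta I_d)^{-1}$, which is the claim.

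The hard part will be the reduction step, not the computation: one must argue that, in the regime $(\nabla f)^2=\mathcal{O}(\eta)$, replacing $P_t^{-1}$ by $\Sigma^{-\frac12}$ (equivalently, $V_t$ by its equilibrium $\diag(\Sigma)$) perturbs the stationary statistics only at order $\eta$ and therefore leaves the leading-order covariance untouched. This is exactly the approximation already invoked for RMSprop in Lemma~\ref{lemma:RMSprop_StaDistr} and justified through Lemma~\ref{lemma:corollRMSprop}, so it carries over unchanged. Everything downstream is the textbook Ornstein--Uhlenbeck calculation, and the compact closed form crucially relies on the diagonality of $H$ and $\Sigma$ to make $A$ symmetric, so that $\tfrac{\eta}{2}A^{-1}$ solves the associated Lyapunov equation $A\,Cov(X_\infty)+Cov(X_\infty)\,A^\top=\eta I_d$.
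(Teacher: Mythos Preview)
Your proposal is correct and follows essentially the same approach as the paper: reduce the RMSpropW SDE in the regime $(\nabla f)^2=\mathcal{O}(\eta)$ to the linear Ornstein--Uhlenbeck process $dX_t=-(\Sigma^{-1/2}H+\theta I_d)X_t\,dt+\sqrt{\eta}\,dW_t$, write its explicit solution, and read off the stationary mean and covariance via the martingale property and the It\^o isometry. Your write-up is in fact more detailed than the paper's (which leaves the covariance computation implicit), and your remark that diagonality of $H$ and $\Sigma$ is what makes $A$ symmetric and the Lyapunov equation trivially solvable is a useful clarification.
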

\begin{proof}
As $(\nabla f(x) )^2 = \mathcal{O}(\eta)$ and $t \rightarrow \infty$, we have 
\begin{equation}
    d X_t = - \Sigma^{-\frac{1}{2}} H X_t dt + \sqrt{\eta} I_d d W_t - \theta X_t dt
\end{equation}
which implies that
\begin{equation}
    X_t = e^{- (\Sigma^{-\frac{1}{2}} H + \gamma I_d) t}\left(X_0 + \sqrt{\eta} \int_0^t e^{(\Sigma^{-\frac{1}{2}} H + \theta I_d) s} dW_s\right).
\end{equation}
The thesis follows from the martingale property of Brownian motion and the Itô isometry.
\end{proof}

\subsection{Formal derivation - Adam} \label{sec:formal_Adam}

\begin{figure}%
   \centering
    \subfloat{{\includegraphics[width=0.35\linewidth]{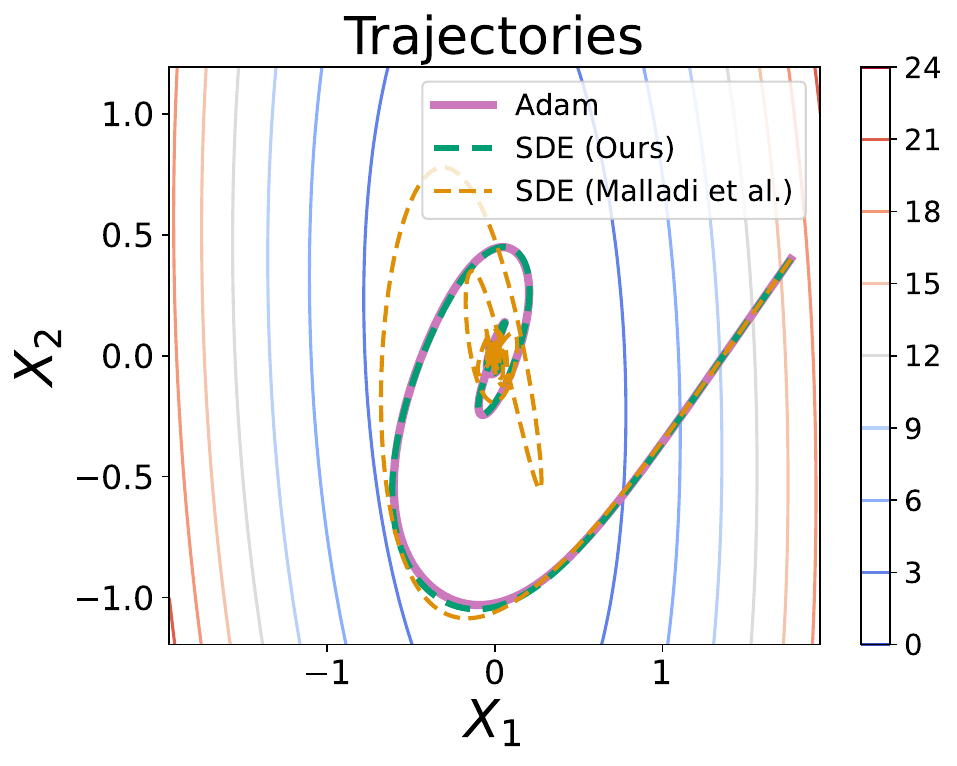} }}%
    \hspace{1.6cm}    \subfloat{{\includegraphics[width=0.35\linewidth]{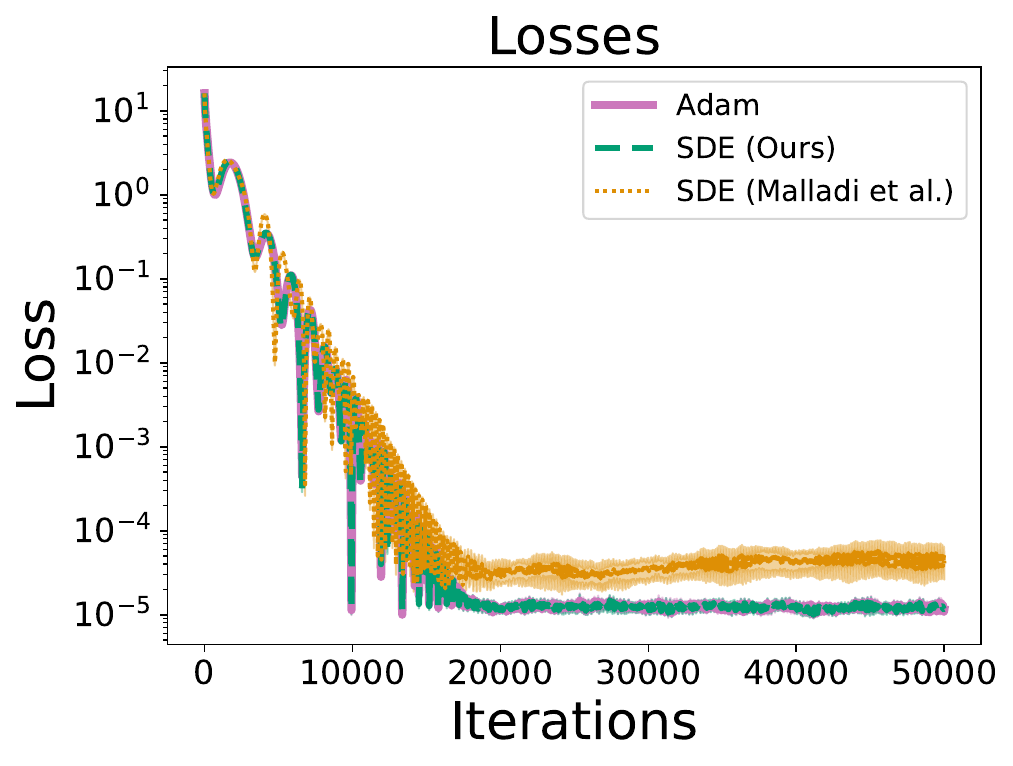} }} \\
    \subfloat{{\includegraphics[width=0.35\linewidth]{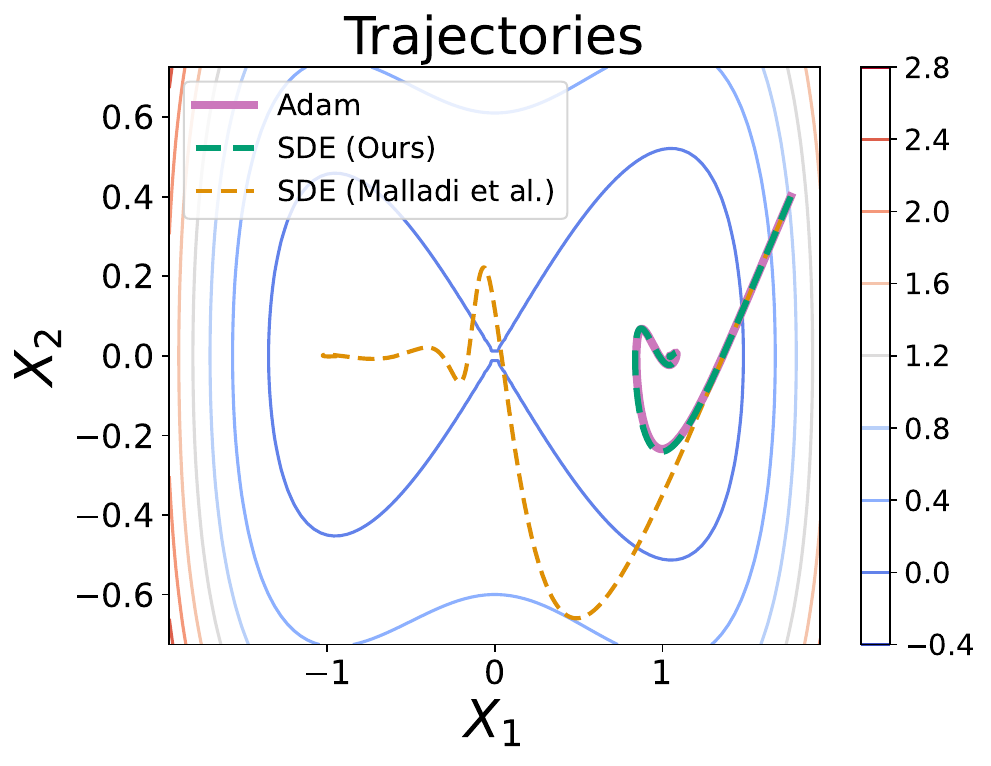} }}%
    \hspace{1.6cm}\subfloat{{\includegraphics[width=0.35\linewidth]{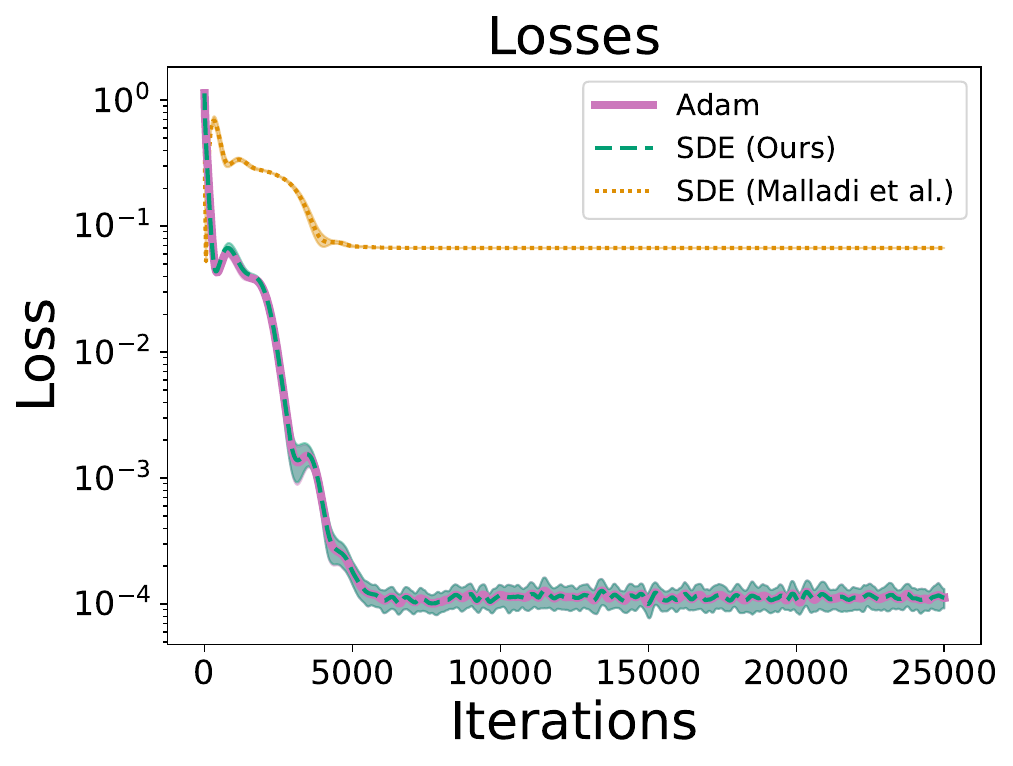} }}%
    \caption{The first two on the left compare our SDE, that from \cite{Malladi2022AdamSDE}, and Adam in terms of trajectories and  $f(x)$, respectively, for a convex quadratic function. The others do the same for an embedded saddle: Our SDE is shown here to track Adam more faithfully.}
    \label{fig:Adam_SDE_Verification_Quadratic_EmbSaddle}
\end{figure}

In this subsection, we provide our formal derivation of an SDE model for Adam. Let us consider the stochastic process $ L_t := (X_t,M_t, V_t) \in \mathbb{R}^{d} \times \mathbb{R}^{d} \times \mathbb{R}^{d} $ defined as the solution of
\begin{align}
\label{eq:Adam_SDE}
    d X_t & =-\frac{\sqrt{\iota_2(t)}}{\iota_1(t)} P_t^{-1} (M_t + \textcolor{purple}{\eta \rho_1 \left(\nabla f\left(X_t\right)-M_t\right)}) d t \\
    d M_t & =\rho_1\left(\nabla f\left(X_t\right)-M_t\right) d t+\sqrt{\eta} \rho_1 \Sigma^{1 / 2}\left(X_t\right) d W_t \\
    d V_t & =\rho_2\left( \textcolor{blue}{(\nabla f(X_t))^2} +  \diag\left(\Sigma\left(X_t\right)\right)-V_t\right) d t,
\end{align}
where $\beta_i = 1 - \eta \rho_i$, $\iota_i(t) = 1 - e^{-\rho_i t}$, $\rho_1 = \mathcal{O}(\eta^{-\zeta})$ s.t. $\zeta \in (0,1)$, $\rho_2 = \mathcal{O}(1)$, $t>t_0$, and $P_t = \diag{\sqrt{V_t}} + \epsilon \sqrt{\iota_2(t)}I_d$.

\begin{remark} \label{remark:AdamSDE}
    The terms in \textcolor{purple}{purple} and in \textcolor{blue}{blue} are the two differences w.r.t. that of \citep{Malladi2022AdamSDE} which is reported in Theorem \ref{thm:MalladiAdam}. The \textcolor{purple}{first} appears because we assume realistic values of $\beta_1$ while the \textcolor{blue}{second} appears because we allow the gradient size to be non-negligible. For two simple landscapes, Figure \ref{fig:Adam_SDE_Verification_Quadratic_EmbSaddle} compares our SDE and that of \cite{Malladi2022AdamSDE} with Adam: In both cases, the first part of the dynamics is perfectly represented only by our SDE. While the discrepancy between the SDE of \citep{Malladi2022AdamSDE} and Adam is asymptotically negligible in the convex setting, we observe that in the non-convex case, it converges to a different local minimum than ours and of Adam. Finally, Theorem \ref{thm:MalladiAdam} is a corollary of ours when $(\nabla f(x) )^2 = \mathcal{O}(\eta)$ and $\rho_1 = \mathcal{O}(1)$: It only describes the dynamics where the gradient to noise ratio is vanishing and only for unrealistic values of $\beta_1 = 1 - \eta \rho_1$. In Figure \ref{fig:Adam_SDE_Verification}, we compare the dynamics of our SDE, that of \cite{Malladi2022AdamSDE}, and Adam on an MLP, a CNN, a ResNet, and a Transformer. One can observe that our SDE captures the Adam's dynamics more accurately. Details on these experiments are in Appendix \ref{sec:Exper}.
\end{remark}

The following theorem guarantees that such a process is a $1$-order SDE of the discrete-time algorithm of Adam
\begin{align}
\label{eq:Adam_Discr_Update_app}
    & v_{k+1} = \beta_2 v_{k} + (1- \beta_2) \left( \nabla f_{\gamma_{k}}(x_{k}) \right)^2 \\
    & m_{k+1} = \beta_1 m_{k} + (1- \beta_1)  \nabla f_{\gamma_{k}}(x_{k}) \\
    & \hat{m}_{k} = m_{k}\left( 1 - \beta_1^k\right)^{-1} \\
    & \hat{v}_{k} = v_{k}\left( 1 - \beta_2^k\right)^{-1} \\
    & x_{k+1} = x_{k} - \eta \frac{\hat{m}_{k+1}}{\sqrt{\hat{v}_{k+1}}+ \epsilon I_d},
\end{align}
with $ (x_0, m_0, v_0) \in \mathbb{R}^d\times\mathbb{R}^d\times \mathbb{R}^d$, $\eta \in \R^{>0}$ is the step size, $\beta_i = 1 - \rho_i\eta$ for $\rho_1 = \mathcal{O}(\eta^{-\zeta})$ s.t. $\zeta \in (0,1)$, $\rho_2 = \mathcal{O}(1)$, the mini-batches $\{ \gamma_k \}$ are modelled as i.i.d.~random variables uniformly distributed on $\{ 1, \cdots, N \}$, and of size $B\geq 1$.

\begin{mybox}{gray}
\begin{theorem}[Stochastic modified equations] \label{thm:Adam_SDE}
Let $0<\eta<1, T>0$ and set $N=\lfloor T / \eta\rfloor$. Let $ l_k := (x_k,m_k,v_k) \in \mathbb{R}^{d} \times \mathbb{R}^{d}\times \mathbb{R}^{d}, 0 \leq k \leq N$ denote a sequence of Adam iterations defined by Eq.~\ref{eq:Adam_Discr_Update_app}. Consider the stochastic process $L_t$ defined in Eq.~\ref{eq:Adam_SDE} and fix some test function $g \in G$ and suppose that $g$ and its partial derivatives up to order 6 belong to $G$.

Then, under Assumption~\ref{ass:regularity_f} $\rho_1 = \mathcal{O}(\eta^{-\zeta})$ s.t. $\zeta \in (0,1)$, while $\rho_2 = \mathcal{O}(1)$, there exists a constant $ C>0 $ independent of $ \eta $ such that for all $ k=0,1, \ldots, N $, we have

$$
\left|\E g\left(L_{k \eta}\right)-\E g\left(l_k\right)\right| \leq C \eta .
$$

That is, the SDE \ref{eq:Adam_SDE} is an order $ 1 $ weak approximation of the Adam iterations \ref{eq:Adam_Discr_Update_app} for $t>t_0$.
\end{theorem}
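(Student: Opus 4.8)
The plan is to follow the template already used for Theorems \ref{thm:SignSGD_SDE} and \ref{thm:RMSprop_SDE}: treat the joint process $L_t = (X_t, M_t, V_t) \in \mathbb{R}^{3d}$ as the solution of a single SDE whose drift and diffusion are block-structured across the three components, and then verify the four moment-matching conditions of Theorem \ref{thm:mils} which, together with Lemma \ref{lemma:li1}, certify an order-$1$ weak approximation. Concretely, writing $\bar\Delta = l_1 - l_0$ for one step of Adam (Eq.~\ref{eq:Adam_Discr_Update_app}) and $\Delta = L_\eta - L_0$ for one step of the SDE (Eq.~\ref{eq:Adam_SDE}), I would show that the first moments agree up to $\mathcal{O}(\eta^2)$, the second moments up to $\mathcal{O}(\eta^2)$, all moments of order $\geq 3$ up to $\mathcal{O}(\eta^2)$, and that $\E\prod_{j=1}^{3}|\bar\Delta_{i_j}| = \mathcal{O}(\eta^2)$, so that Theorem \ref{thm:mils} yields $|\E g(L_{k\eta}) - \E g(l_k)| \leq C\eta$.

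I would compute the discrete increments block by block. For $V$, since $\beta_2 = 1-\eta\rho_2$, one gets $v_{k+1} - v_k = \eta\rho_2((\nabla f_{\gamma_k}(x_k))^2 - v_k)$, and combining the expectation with $\E[(\nabla f_\gamma(x))^2] = (\nabla f(x))^2 + \diag(\Sigma(x))$ reproduces the $V$-drift; its second moment is $\mathcal{O}(\eta^2\rho_2^2) = \mathcal{O}(\eta^2)$, consistent with the absence of a diffusion term in the $V$-equation. The \emph{blue} term is precisely this $(\nabla f)^2$ contribution, which the Malladi SDE drops. For $M$, one has $m_{k+1} - m_k = \eta\rho_1(\nabla f_{\gamma_k}(x_k) - m_k)$, whose mean gives the $M$-drift and whose fluctuation $\eta\rho_1 Z_k$ produces $\E[\bar\Delta^M(\bar\Delta^M)^\top] = \eta^2\rho_1^2\Sigma + \cdots$, matching the diffusion $\sqrt{\eta}\,\rho_1\Sigma^{1/2}$. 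The delicate block is $X$: expanding $x_{k+1} - x_k = -\eta\,\hat m_{k+1}/(\sqrt{\hat v_{k+1}} + \epsilon)$ and substituting $m_{k+1} = m_k + \eta\rho_1(\nabla f_{\gamma_k}(x_k) - m_k)$ shows that the update uses the \emph{already-updated} momentum; collecting the resulting $\mathcal{O}(\eta\rho_1)$ correction reproduces the \emph{purple} term $\eta\rho_1(\nabla f(X_t) - M_t)$, which is exactly the signature of allowing $\beta_1 = 1-\eta\rho_1$ with a non-trivial $\rho_1$.

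Before invoking Lemma \ref{lemma:li1}, I would establish the regularity of the block coefficients — membership in $G$ together with their derivatives, Lipschitz continuity, and affine growth — as in Remark \ref{remark:Need}: the $X$-drift is a ratio of such functions over $P_t = \diag(\sqrt{V_t}) + \epsilon\sqrt{\iota_2(t)}\,I_d$, which is uniformly invertible because $\epsilon > 0$ and, crucially, $\iota_2(t) \geq \iota_2(t_0) > 0$ for $t > t_0$. This is where the restriction $t > t_0$ enters: it keeps the bias-correction factors $\iota_1(t),\iota_2(t)$ and hence $P_t^{-1}$ and $\sqrt{\iota_2(t)}/\iota_1(t)$ bounded and smooth, avoiding the degenerate transient near $t = 0$. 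I would also justify replacing the discrete bias factors $1-\beta_i^{k+1}$ by $\iota_i(k\eta)$: for $\rho_2 = \mathcal{O}(1)$ this follows from $(1-\eta\rho_2)^{k+1} = e^{-\rho_2 t}(1 + \mathcal{O}(\eta))$, while for the large $\rho_1$ the factor $\iota_1(t) = 1 - e^{-\rho_1 t}$ saturates to $1$ for $t > t_0$, so $\hat m_{k+1} \approx m_{k+1}$ up to exponentially small terms.

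The main difficulty — and the reason this is not a verbatim copy of the RMSprop argument — is the scaling $\rho_1 = \mathcal{O}(\eta^{-\zeta})$ with $\zeta \in (0,1)$: the effective drift and diffusion of $M$ now carry $\eta$-dependent magnitudes, so the clean $\mathcal{O}(\eta^3)$ remainders of Lemma \ref{lemma:li1} must be re-derived with explicit powers of $\rho_1$. For instance, the stochastic part of $\bar\Delta^X$ is of order $\eta^2\rho_1 = \eta^{2-\zeta}$, and the cross-moment $\E[\bar\Delta^X_i\bar\Delta^M_j]$ picks up a fluctuation contribution of order $\eta^3\rho_1^2 = \eta^{3-2\zeta}$; the careful accounting must confirm that every such $\rho_1$-amplified term, on both the discrete and the continuous side, either cancels against its counterpart or remains within the $\mathcal{O}(\eta^2)$ budget on the window $t > t_0$, where the saturation $\iota_1(t)\to 1$ damps the most singular factors. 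Securing these $\rho_1$-weighted estimates uniformly — so that the functions $K_1,\dots,K_4 \in G$ of Theorem \ref{thm:mils} stay genuinely $\eta$-independent after the rescaling — is the crux of the proof; once it is in place, Theorem \ref{thm:mils} delivers the stated bound and hence the order-$1$ weak approximation for $t > t_0$.
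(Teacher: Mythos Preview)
Your proposal is correct and follows essentially the same route as the paper: compute the block-wise first and second moments of the discrete increments $(x_{k+1}-x_k,\,m_{k+1}-m_k,\,v_{k+1}-v_k)$, match them against Lemma~\ref{lemma:li1} applied to the joint SDE, and conclude via Theorem~\ref{thm:mils}, with the regularity handled exactly as in Remark~\ref{remark:Need}. The paper is in fact terser on the point you flag as the crux---it simply records the $\rho_1$-weighted orders (e.g.\ $\mathcal{O}(\eta^4\rho_1^2)$ for the $X$-covariance, $\mathcal{O}(\eta^2\rho_1\rho_2)$ for the $V$--$M$ cross term) and then invokes the scaling $\rho_1=\mathcal{O}(\eta^{-\zeta})$, $\rho_2=\mathcal{O}(1)$ without spelling out the uniform-in-$\eta$ bound you identify as needing care.
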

\end{mybox}

\begin{figure}
    \centering
    \subfloat{{\includegraphics[width=0.35\linewidth]{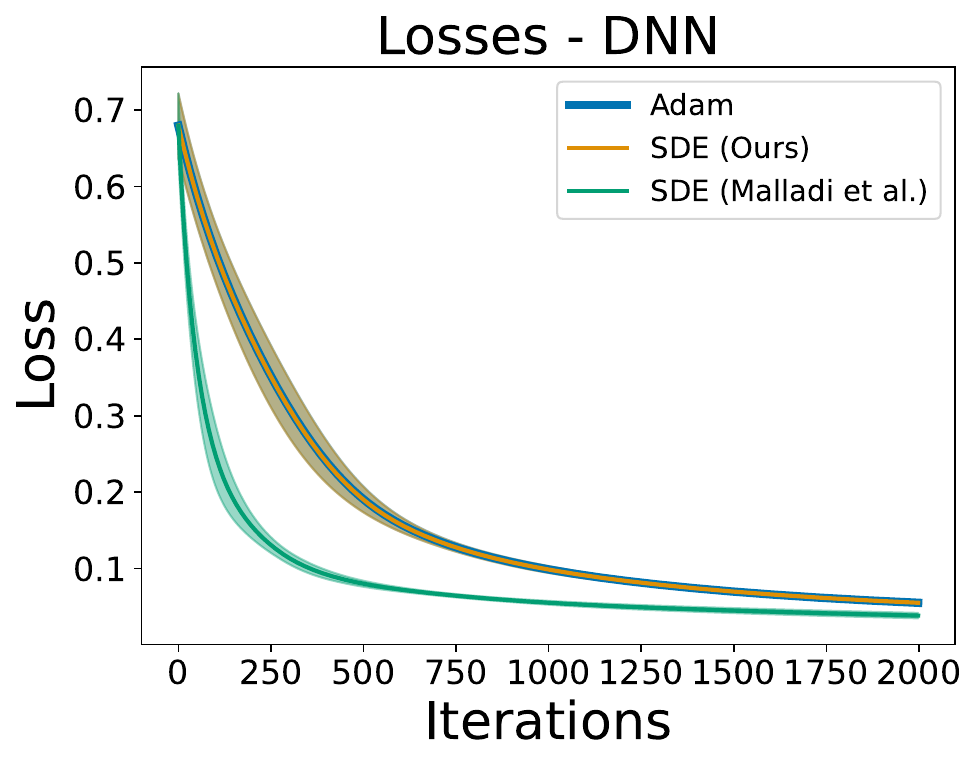} }}%
    \hspace{1.6cm}\subfloat{{\includegraphics[width=0.35\linewidth]{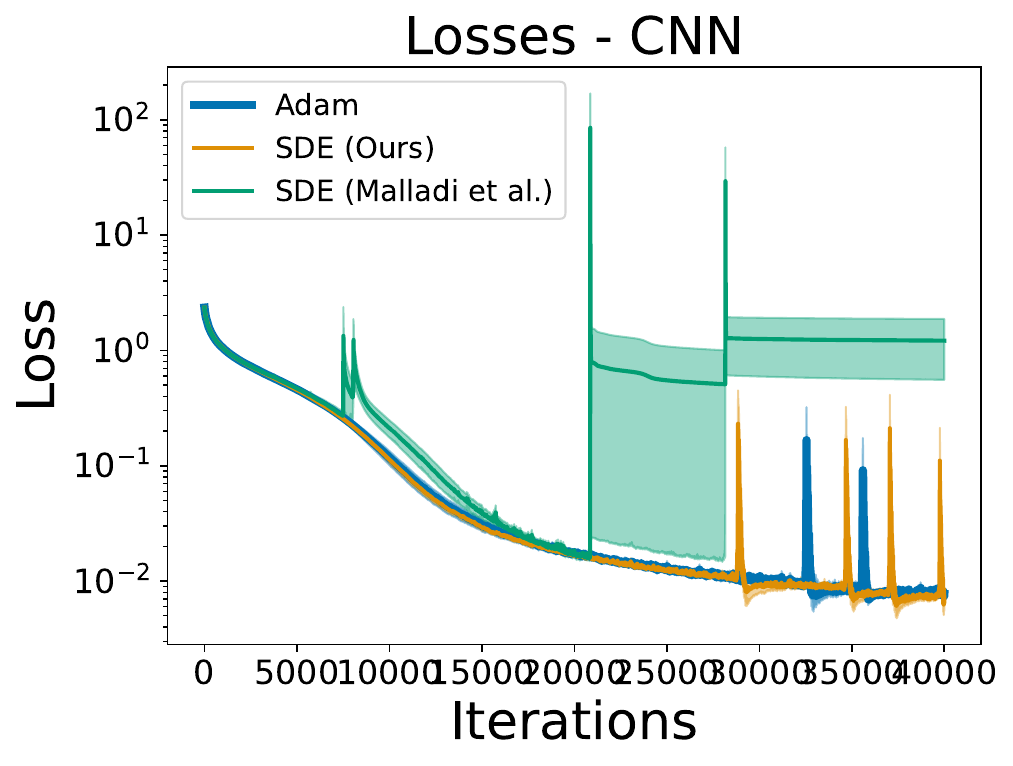} }} \\
    \subfloat{{\includegraphics[width=0.35\linewidth]{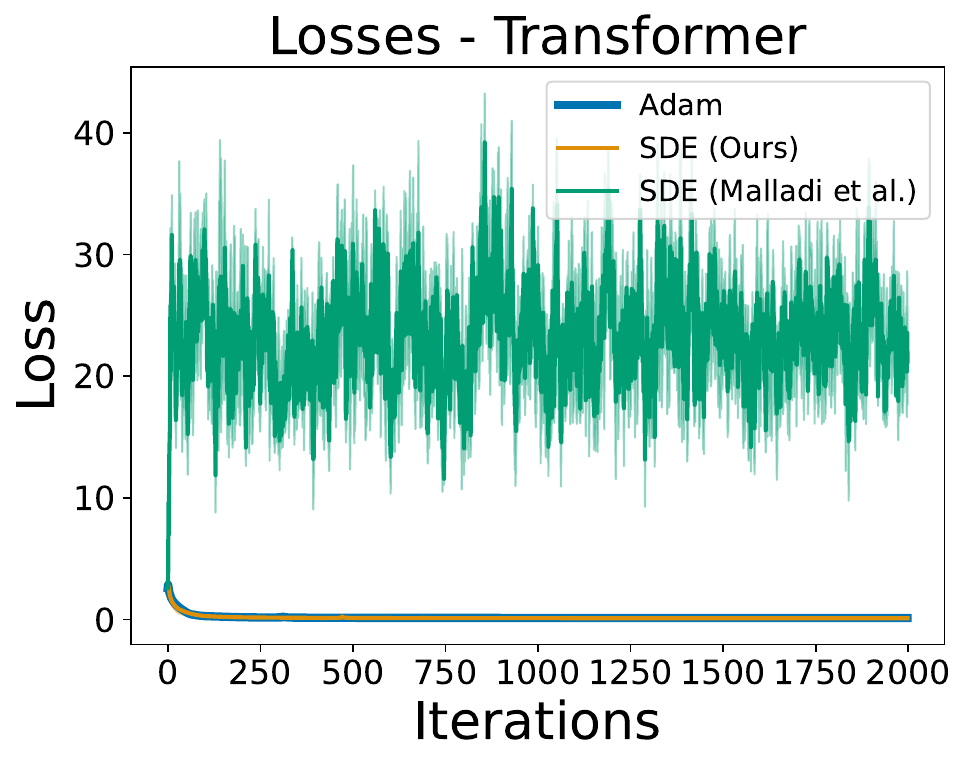} }}%
    \hspace{1.6cm}\subfloat{{\includegraphics[width=0.35\linewidth]{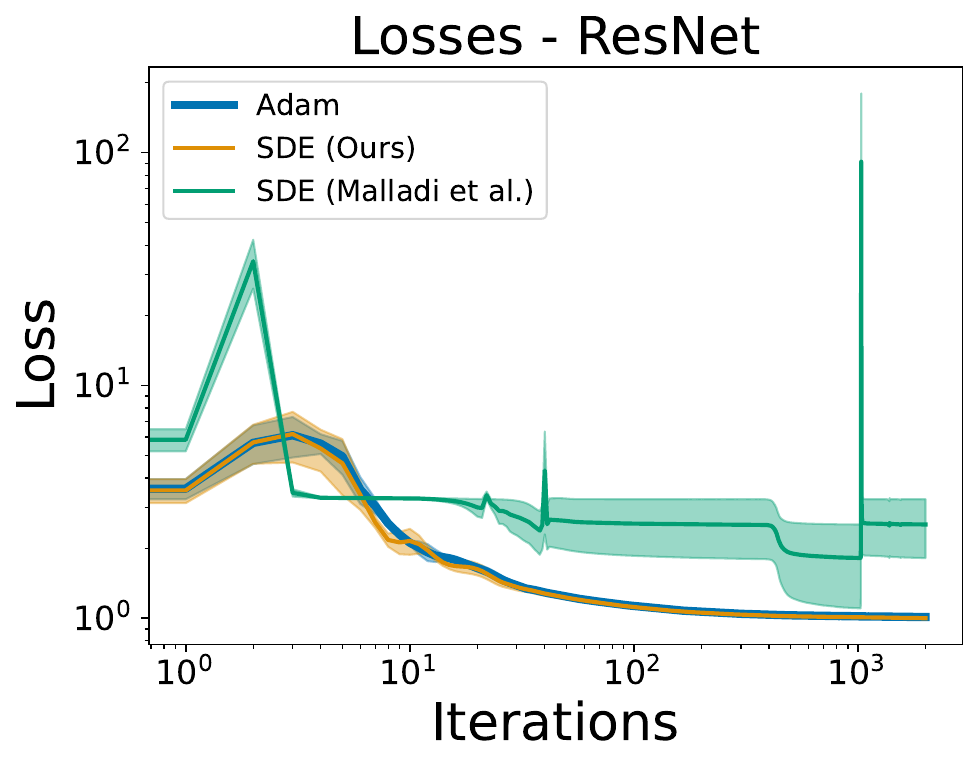} }}
    
    \caption{We compare our SDE, that from \cite{Malladi2022AdamSDE}, and Adam in terms of $f(x)$: The first is an MLP on the Breast Cancer dataset, the second a CNN on MNIST, the third a Transformer on MNIST, and the last a ResNet on CIFAR-10: One can observe that our SDE matches the algorithm more accurately.}\label{fig:Adam_SDE_Verification}
\end{figure}

\begin{proof}
The proof is virtually identical to that of Theorem \ref{thm:SignSGD_SDE}. Therefore, we only report the key steps necessary to conclude the thesis. First of all, we observe that since $\beta_1 = 1 - \eta \rho_1$
\begin{equation}
     v_{k+1} - v_{k} =  - \eta \rho_1  \left( v_{k} -  \left(\nabla f_{\gamma_{k}}(x_{k}) \right)^2\right).
\end{equation}
Then,
\begin{align}
    \frac{1}{\sqrt{v_{k+1}}} = \sqrt{\frac{v_k}{v_{k+1}}\frac{1}{v_k}} = \sqrt{\frac{v_{k+1} + \mathcal{O}(\eta)}{v_{k+1}}\frac{1}{v_k}} = \sqrt{1 + \frac{\mathcal{O}(\eta)}{v_{k+1}}}  \sqrt{\frac{1}{v_k}} \sim \sqrt{\frac{1}{v_k}} (1 + \mathcal{O}(\eta)).
\end{align}
Therefore, we work with the following algorithm as all approximations only carry an additional error of order $\mathcal{O}(\eta^2)$, which we can ignore. Therefore, we have that
\begin{align}
    & v_{k} - v_{k-1} =  - \eta \rho_2  \left( v_{k-1} -  \left(\nabla f_{\gamma_{k-1}}(x_{k-1}) \right)^2\right) \\
    & m_{k+1} - m_{k} =  - \eta \rho_1  \left( m_{k} -  \nabla f_{\gamma_{k}}(x_{k}) \right) \\
    & \hat{m}_{k} = m_{k}\left( 1 - \beta_1^k\right)^{-1} \\
    & \hat{v}_{k} = v_{k}\left( 1 - \beta_1^k\right)^{-1} \\
    & x_{k+1} - x_{k} = - \frac{\eta}{\sqrt{v_{k}} +\epsilon I_d} \frac{\sqrt{ 1 - (1-\eta \rho_2)^{k}}}{1 - (1-\eta \rho_1)^{k+1}} (m_k + \eta\rho_1 (\nabla f_{\gamma_k}(x_k)-m_k)).
\end{align}
Therefore, if $\nabla f_{\gamma_{j}}(x_{j}) = \nabla f(x_{j}) + Z_j(x_j)$ and $\E [Z_j(x_j)]=0$, and $Cov(Z_j(x_j)) = \Sigma(x_j)$, we have that
\begin{enumerate}
    \item $\E[v_{k} - v_{k-1}] = \eta \rho_2 \left[\left(\nabla f(x_{k-1}) \right)^2 + \diag(\Sigma(x_k)) -v_{k-1} \right]$ ;
    \item $\E[m_{k+1} - m_{k}] = \eta \rho_1 \left[ \nabla f(x_{k}) -m_{k} \right]$ ;
    \item $\E[x_{k+1} - x_{k}] = - \frac{\eta}{\sqrt{v_k}+\epsilon I_d} \frac{\sqrt{ 1 - (1-\eta \rho_2)^k}}{1 - (1-\eta \rho_1)^{k+1}} (m_k + \eta\rho_1 (\nabla f(x_k)-m_k))$ .
\end{enumerate}
Then, we have
\begin{enumerate}
    \item $\E[(x_{k+1} - x_{k})(x_{k+1} - x_{k})^{\top}] = \E[(x_{k+1} - x_{k})]\E[(x_{k+1} - x_{k})]^{\top} + \mathcal{O}(\eta^4 \rho_1^2)$;
    \item $\E[(x_{k+1} - x_{k})(m_{k} - m_{k-1})^{\top}] = \E[(x_{k+1} - x_{k})]\E[(m_{k} - m_{k-1})]^{\top} + 0$;
    \item $\E[(x_{k+1} - x_{k})(v_{k} - v_{k-1})^{\top}] = \E[(x_{k+1} - x_{k})]\E[(v_{k} - v_{k-1})]^{\top} + 0$;
    \item $ \E[(v_{k} - v_{k-1})(v_{k} - v_{k-1})^{\top}] = \E[(v_{k} - v_{k-1})]\E[(v_{k} - v_{k-1})]^{\top} + \mathcal{O}(\eta^2\rho_2^2)$;
    \item $\E[(m_{k} - m_{k-1})(m_{k} - m_{k-1})^{\top}] = \E[(m_{k} - m_{k-1})]\E[(m_{k} - m_{k-1})]^{\top} + \eta^2 \rho_1^2 \Sigma(x_{k-1})$;
    \item $ \E[(v_{k} - v_{k-1})(m_{k} - m_{k-1})^{\top}] = \E[(v_{k} - v_{k-1})]\E[(m_{k} - m_{k-1})]^{\top} + \mathcal{O}(\eta^2\rho_1\rho_2)$.
\end{enumerate}
Since in real-world applications, $\rho_1 = \mathcal{O}(\eta^{-\zeta})$ s.t. $\zeta \in (0,1)$, while $\rho_2 = \mathcal{O}(1)$, we have
\begin{align}
    d X_t & =-\frac{\sqrt{\iota_2(t)}}{\iota_1(t)} P_t^{-1} (M_t + \eta \rho_1 \left(\nabla f\left(X_t\right)-M_t\right)) d t \\
    d M_t & =\rho_1\left(\nabla f\left(X_t\right)-M_t\right) d t+\sqrt{\eta} \rho_1 \Sigma^{1 / 2}\left(X_t\right) d W_t \\
    d V_t & =\rho_2\left( (\nabla f(X_t))^2 +  \diag\left(\Sigma\left(X_t\right)\right)-V_t\right) d t.
\end{align}
where $\beta_i = 1 - \eta \rho_i$, $\iota_i(t) = 1 - e^{-\rho_i t}$, $t>t_0$, and $P_t = \diag{\sqrt{V_t}} + \epsilon \sqrt{\iota_2(t)}I_d$.
\end{proof}
\begin{remark}
    See Remark \ref{remark:Need} and Remark \ref{remark:NoNeed} for a discussion on the regularity of the SDE derived in Theorem \ref{thm:Adam_SDE}.
\end{remark}

\begin{corollary} \label{coroll:AdamRescaled}
    Under the assumptions of Theorem \ref{thm:Adam_SDE} with $\Sigma(x)=\sigma^2 I_d$, $\Tilde{\eta} = \kappa \eta$, $\Tilde{B}=B \delta$, $\Tilde{\rho}_1 = \alpha_1 \rho_1$, and $\Tilde{\rho}_2 = \alpha_2 \rho_2$ 
\begin{align}
    d X_t & =-\kappa\frac{\sqrt{\iota_2(t)}}{\iota_1(t)} P_t^{-1} (M_t + \eta \alpha_1 \rho_1  \left(\nabla f\left(X_t\right)-M_t\right)) d t \\
    d M_t & =\alpha_1\rho_1\left(\nabla f\left(X_t\right)-M_t\right) d t+\sqrt{\eta} \alpha_1\rho_1 \frac{\sigma}{\sqrt{B \delta}} I_d d W_t \\
    d V_t & =\alpha_2\rho_2\left( (\nabla f(X_t))^2 +  \frac{\sigma^2}{B\delta} I_d-V_t\right) d t.
\end{align}
\end{corollary}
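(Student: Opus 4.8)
The plan is to obtain Corollary~\ref{coroll:AdamRescaled} as a direct specialization of Theorem~\ref{thm:Adam_SDE}, exactly mirroring the derivation of Corollary~\ref{coroll:RMSpropRescaled} for RMSprop. The idea is fourfold: (i) run Adam with the rescaled hyperparameters; (ii) invoke Theorem~\ref{thm:Adam_SDE} to produce its weak-approximating SDE; (iii) re-express that SDE in the original iteration-clock $t=k\eta$ by a deterministic time change; and (iv) collect terms. No new moment estimates are required, since the weak-approximation content is already supplied by Theorem~\ref{thm:Adam_SDE}; the whole task is bookkeeping of the rescaling.

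Concretely, I would first write the rescaled recursion: the updates of Eq.~\ref{eq:Adam_Discr_Update_app} with step size $\tilde\eta=\kappa\eta$, batch size $\tilde B=B\delta$, and momentum rates $\tilde\rho_i=\alpha_i\rho_i$, where $\tilde\beta_i=1-\tilde\rho_i\eta$ is parametrized through the \emph{reference} step size $\eta$ (this is precisely the convention fixed in Lemma~\ref{lemma:HPSR_AdamW_Insights}, where $\tilde\beta_i=1-\kappa(1-\beta_i)$). Since $\alpha_1,\alpha_2=\mathcal{O}(1)$, the rescaled rates still satisfy $\tilde\rho_1=\mathcal{O}(\eta^{-\zeta})$ and $\tilde\rho_2=\mathcal{O}(1)$, so the hypotheses of Theorem~\ref{thm:Adam_SDE} hold verbatim for the rescaled algorithm. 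Applying the theorem, and substituting the diagonal covariance $\Sigma=\sigma^2 I_d$ for a batch of size $\tilde B$ (so the minibatch covariance is $\tilde\Sigma=\tfrac{\sigma^2}{B\delta}I_d$), yields the system of Theorem~\ref{thm:Adam_SDE} with $\rho_i\!\to\!\alpha_i\rho_i$, $\Sigma\!\to\!\tfrac{\sigma^2}{B\delta}I_d$, written in the \emph{rescaled} clock $s=k\tilde\eta=\kappa t$.

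The last step is the time change $s=\kappa t$ back to the original clock $t=k\eta$, which is where all the prefactors appear. Writing the rescaled processes as $\bar X_t:=\tilde X_{\kappa t}$, etc., the dilation $ds=\kappa\,dt$ multiplies every drift by $\kappa$, while Brownian scaling $W_{\kappa t}\overset{d}{=}\sqrt{\kappa}\,W_t$ multiplies every diffusion by $\sqrt{\kappa}$. For $M_t$ and $V_t$ the drift rate in the $s$-clock is $\tfrac{\alpha_i\rho_i}{\kappa}$, the $1/\kappa$ arising precisely because $\tilde\beta_i=1-\tilde\rho_i\eta$ rather than $1-\tilde\rho_i\tilde\eta$; the factor $\kappa$ from $ds=\kappa\,dt$ then cancels it, leaving the clean rate $\alpha_i\rho_i$. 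The diffusion of $M_t$ combines $\tilde\rho_1\sqrt{\eta/\kappa}\,\tilde\Sigma^{1/2}$ with the $\sqrt{\kappa}$ from the Brownian scaling to give $\sqrt{\eta}\,\alpha_1\rho_1\tfrac{\sigma}{\sqrt{B\delta}}I_d$. For $X_t$ the drift has rate $\mathcal{O}(1)$ in the $s$-clock, so the $\kappa$ survives as the overall prefactor $-\kappa\tfrac{\sqrt{\iota_2(t)}}{\iota_1(t)}P_t^{-1}$, while the momentum-correction term retains the coefficient $\eta\alpha_1\rho_1$; the bias-correction factors become $\iota_i(t)=1-e^{-\alpha_i\rho_i t}$ since $1-\tilde\beta_i^{k}\approx 1-e^{-\tilde\rho_i\eta k}=1-e^{-\alpha_i\rho_i t}$. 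Collecting the three equations gives exactly the claimed system.

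The main obstacle is bookkeeping the two effects that both scale with $\kappa$ but enter through different channels: the deterministic time dilation $ds=\kappa\,dt$ and the $\eta$-tied reparametrization of $\beta_i$. Getting their cancellation right is what distinguishes the surviving overall $\kappa$ in the $X_t$ drift from its absence in the $M_t,V_t$ drifts, and what yields $\eta\alpha_1\rho_1$ (rather than $\kappa\eta\alpha_1\rho_1$) inside the momentum correction. A naive substitution of the rescaled constants into the SDE of Theorem~\ref{thm:Adam_SDE} without the accompanying time change would misplace these factors, so the time change is essential rather than cosmetic; everything else reduces to the routine algebra already present in the proof of Theorem~\ref{thm:Adam_SDE}.
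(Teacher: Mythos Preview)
Your proposal is correct and arrives at the stated system. The paper does not supply a separate proof for this corollary; it is presented as an immediate consequence of Theorem~\ref{thm:Adam_SDE}, in line with the discussion in Appendix~\ref{sec:BackgroundApp} where the rescaled SDE is obtained by keeping $t=k\eta$ as the clock and treating $\kappa$ as a constant scheduler (i.e., redoing the moment matching with the rescaled discrete increments directly). Your route---apply Theorem~\ref{thm:Adam_SDE} with learning rate $\tilde\eta=\kappa\eta$ in the $s$-clock, so the effective rates are $\rho_i'=\alpha_i\rho_i/\kappa$, then pull back via $s=\kappa t$---is an equivalent reformulation of exactly that computation, and your bookkeeping of the two $\kappa$-channels (time dilation vs.\ the $\eta$-tied parametrization $\tilde\beta_i=1-\eta\tilde\rho_i$) correctly explains why the surviving $\kappa$ sits only in front of the $X_t$ drift and why the momentum correction carries $\eta\alpha_1\rho_1$ rather than $\kappa\eta\alpha_1\rho_1$.
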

\begin{lemma} 
\label{lemma:HPSR_Adam}
Under the assumptions of Corollary \ref{coroll:AdamRescaled}, $f$ is $\mu$-strongly convex, $\tr(\nabla^2 f(x)) \leq \mathcal{L}_{\tau}$, and  $(\nabla f(x) )^2 = \mathcal{O}(\eta)$, the asymptotic dynamics of the iterates of Adam satisfies the classic scaling rule $\kappa = \sqrt{\delta}$ because $\E[f(X_t)] \overset{t \rightarrow \infty}{\leq} \frac{\eta \sigma \mathcal{L}_\tau}{4 \sqrt{B}} \frac{\kappa}{\sqrt{\delta}}$.
To enforce that the speed of $M_t$ and $V_t$ match that of $X_t$, one needs $\Tilde{\rho}_i = \kappa \rho_i$, which implies $\Tilde{\beta}_i = 1 - \kappa(1 - \beta_i)$.
\end{lemma}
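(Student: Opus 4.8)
The plan is to reduce the three-dimensional rescaled system of Corollary~\ref{coroll:AdamRescaled} to the same effective one-dimensional diffusion that governs RMSpropW in Lemma~\ref{lemma:HPSR_RMSprop}, and then reuse that Grönwall argument almost verbatim. First I would fix the timescale matching by setting $\alpha_1 = \alpha_2 = \kappa$, i.e.\ $\Tilde{\rho}_i = \kappa \rho_i$; since $\beta_i = 1 - \eta \rho_i$ this is exactly $\Tilde{\beta}_i = 1 - \kappa(1-\beta_i)$, which is the scaling rule to be established. With this choice the relaxation rates of $M_t$ and $V_t$ are accelerated by the same factor $\kappa$ that multiplies the drift of $X_t$, so the three processes are rescaled consistently.

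Next I would eliminate the two auxiliary processes. Because $(\nabla f(x))^2 = \mathcal{O}(\eta)$, the gradient term drops from the drift of $V_t$ at leading order, so $V_t \to \frac{\sigma^2}{B\delta}\mathbf{1}$ and $P_t^{-1} \to \frac{\sqrt{B\delta}}{\sigma} I_d$ as $t \to \infty$ (and $\frac{\sqrt{\iota_2(t)}}{\iota_1(t)} \to 1$). The momentum requires more care: $M_t$ obeys an Ornstein--Uhlenbeck equation with mean-reversion rate $a := \kappa\rho_1$ toward $\nabla f(X_t)$ and diffusion intensity $b := \sqrt{\eta}\,\kappa\rho_1\,\sigma/\sqrt{B\delta}$. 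Since $\rho_1 = \mathcal{O}(\eta^{-\zeta})$ with $\zeta \in (0,1)$, this is a fast process relative to $X_t$, so I would perform an adiabatic elimination: writing $M_t = \nabla f(X_t) + \delta_t$, the fluctuation $\delta_t$ is a fast OU process whose time integral acts on the slow scale as a Brownian motion with diffusion coefficient $b^2/a^2$. Consequently the contribution $-\kappa P_t^{-1}\delta_t\,dt$ to $dX_t$ behaves like effective noise of intensity $c^2 b^2/a^2$ with $c := \kappa\sqrt{B\delta}/\sigma$; a direct computation gives $c^2 b^2/a^2 = \kappa^2\eta$. The remaining correction $\eta\rho_1(\nabla f(X_t) - M_t)$ carries the prefactor $\eta\rho_1 = \mathcal{O}(\eta^{1-\zeta})$ and is of strictly higher order, hence dropped. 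The net effective dynamics is therefore
\begin{equation*}
dX_t = -\frac{\sqrt{B\delta}\,\kappa}{\sigma}\nabla f(X_t)\,dt + \kappa\sqrt{\eta}\,I_d\,dW_t,
\end{equation*}
which coincides with the reduced RMSpropW equation.

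From here the argument is the one already carried out in Lemma~\ref{lemma:HPSR_RMSprop}: applying Itô's lemma (Theorem~\ref{lemma:SDE_ito}) to $f(X_t)$, using $\mu$-strong convexity through the PL inequality $\|\nabla f\|_2^2 \ge 2\mu\,(f - f(X_*))$ for the drift and $\tr(\nabla^2 f) \le \mathcal{L}_\tau$ for the second-order term, yields
\begin{equation*}
d\,\E[f(X_t) - f(X_*)] \le -2\mu\frac{\sqrt{B\delta}\,\kappa}{\sigma}\,\E[f(X_t) - f(X_*)]\,dt + \frac{\kappa^2\eta\,\mathcal{L}_\tau}{2}\,dt.
\end{equation*}
Integrating this linear differential inequality and letting $t \to \infty$ produces an asymptotic loss proportional to $\frac{\kappa}{\sqrt{\delta}}$, of order $\frac{\eta\sigma\mathcal{L}_\tau}{\sqrt{B}}\frac{\kappa}{\sqrt{\delta}}$, which is invariant under the batch-size change precisely when $\kappa = \sqrt{\delta}$ --- the classic scaling rule, together with the momentum rescaling $\Tilde{\beta}_i = 1 - \kappa(1-\beta_i)$ fixed in the first step.

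The main obstacle is the adiabatic elimination of the momentum: unlike RMSpropW, Adam has the extra fast variable $M_t$, and the delicate point is to show rigorously that its fast OU fluctuations integrate up to \emph{exactly} the effective diffusion $\kappa^2\eta$ --- neither more nor less --- so that the reduced equation matches the RMSpropW one and the $\mu$-dependent constant carries over. This is where the assumption $\rho_1 = \mathcal{O}(\eta^{-\zeta})$, $\zeta\in(0,1)$, is essential: it guarantees the timescale separation that justifies the elimination and simultaneously renders the $\eta\rho_1(\nabla f(X_t) - M_t)$ correction asymptotically negligible.
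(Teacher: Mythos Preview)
Your approach is correct at the same heuristic level as the paper but takes a genuinely different route. The paper does \emph{not} eliminate $M_t$ to obtain a single effective SDE for $X_t$; instead, after the same reductions $V_t \to \sigma^2/(B\delta)$ and $M_t \to \nabla f(X_t)$ (invoked ``with high probability''), it keeps the coupled system $dX_t = -\kappa\frac{\sqrt{B\delta}}{\sigma}\nabla f(X_t)\,dt$, $dM_t = \kappa\sqrt{\eta}\,\rho_1\frac{\sigma}{\sqrt{B\delta}}\,dW_t$ and works with the Lyapunov function $H(X_t,M_t) = f(X_t) + \frac{\mathcal{L}_\tau \delta B}{\rho_1^2 \sigma^2}\frac{\lVert M_t\rVert_2^2}{2}$. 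It\^o's lemma on $H$ produces the very same $\frac{\kappa^2\eta\mathcal{L}_\tau}{2}$ source term directly from the quadratic variation of $M_t$; a splitting $\lVert\nabla f\rVert_2^2 = \xi\lVert\nabla f\rVert_2^2 + (1-\xi)\lVert\nabla f\rVert_2^2$ with a particular $\xi\in(0,1)$ is then used to match the $M$-part of $H$ and close the Gr\"onwall inequality, after which the informal substitution $M_t \approx \nabla f(X_t)$ identifies $H$ with $\frac{1}{\xi}f$. Your averaging argument is arguably more transparent --- it explains \emph{why} the asymptotic Adam bound coincides with RMSprop's and lets you import Lemma~\ref{lemma:HPSR_RMSprop} wholesale --- but it rests on a stochastic homogenization step (the $c^2b^2/a^2$ computation) that you state rather than prove. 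The paper's Lyapunov route sidesteps the homogenization machinery but pays with the somewhat ad hoc $\xi$-trick and the same informal $M_t\approx\nabla f(X_t)$ identification at the end. Both routes yield the identical bound $\frac{\eta\sigma\mathcal{L}_\tau}{4\mu\sqrt{B}}\frac{\kappa}{\sqrt{\delta}}$ and the same scaling rules.
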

\begin{proof}
First of all, we need to ensure that the relative speeds of $X_t$, $M_t$, and $V_t$ match. Therefore, we select $\alpha_i = \kappa$, which recovers the scaling rules for $\Tilde{\beta}_i = 1 - \kappa(1 - \beta_i)$. Then, recalling that $(\nabla f(x) )^2 = \mathcal{O}(\eta)$, we have that as $t \rightarrow \infty$, $V_t \rightarrow \frac{\sigma^2}{B \delta} $, and $M_t \rightarrow \nabla f(X_t)$ with high probability. Therefore,
\begin{align}
    d X_t & =-\kappa \frac{\sqrt{B \delta}}{\sigma} \nabla f(X_t) d t \\
    d M_t & = \kappa\sqrt{\eta} \rho_1 \frac{\sigma}{\sqrt{B \delta}} d W_t \\
    d V_t & = 0.
\end{align}
Therefore, if $H(X_t,V_t) := f(X_t) + \frac{\mathcal{L}_{\tau} \delta B}{\rho_1^2 \sigma^2} \frac{\lVert M_t \rVert_2^2}{2}$ and $\xi \in (0,1)$ we have that by Itô's lemma,

\begin{align}
    d H(X_t,V_t) & = - (\nabla f(X_t))^{\top} \left( \kappa \frac{\sqrt{B \delta}}{\sigma} \nabla f(X_t) \right) d t + \left( \frac{\mathcal{L}_{\tau} \delta B}{\rho_1^2 \sigma^2} M_t \right) \kappa\sqrt{\eta} \rho_1 \frac{\sigma}{\sqrt{B \delta}} d W_t \\
    & + \frac{1}{2} \left(\frac{\mathcal{L}_{\tau} \delta B}{\rho_1^2 \sigma^2} \right) \kappa^2 \eta \rho_1^2 \frac{\sigma^2}{B \delta} dt \\
    & = -  \left( \kappa \frac{\sqrt{B \delta}}{\sigma} \right) \lVert \nabla f(X_t) \rVert_2^2 d t + \text{Noise} + \frac{\kappa^2 \eta \mathcal{L}_{\tau} }{ 2} dt   \\
    & = -  \left( \kappa \frac{\sqrt{B \delta}}{\sigma} \right) \left( \xi \lVert \nabla f(X_t) \rVert_2^2 + (1-\xi) \lVert \nabla f(X_t) \rVert_2^2\right) d t  + \text{Noise} + \frac{\kappa^2 \eta \mathcal{L}_{\tau} }{ 2} dt \\
    & \leq -2 \kappa \mu \frac{\sqrt{B \delta}}{\sigma} \xi \left( f(X_t) + \frac{1-\xi}{\mu \xi} \frac{\lVert \nabla f(X_t) \rVert_2^2}{2} \right)dt  + \text{Noise} + \frac{\kappa^2 \eta \mathcal{L}_{\tau} }{ 2} dt.
\end{align}
Let us now select $\xi$ such that $\frac{1-\xi}{\mu \xi} = \frac{\mathcal{L}_{\tau} \delta B}{\rho_1^2 \sigma^2} $, this means that $\xi = \frac{\sigma^2 \rho_1^2}{\sigma^2 \rho_1^2 + \mu \mathcal{L}_{\tau} \sigma B} \in (0,1)$ and $\frac{1}{\xi} = 1 + \mu \frac{\mathcal{L}_{\tau} \delta B}{\rho_1^2 \sigma^2} $.
Since $M_t \rightarrow \nabla f(X_t)$, we have that
\begin{align}
    d H(X_t,V_t) & \leq   -2 \kappa \mu \frac{\sqrt{B \delta}}{\sigma} \xi H(X_t,V_t) dt + \frac{\kappa^2 \eta \mathcal{L}_{\tau}}{ 2} dt  + \text{Noise}.
\end{align}
Therefore,
\begin{align}
    \frac{\E[f(X_t)]}{\xi} = \left(1 + \mu \frac{\mathcal{L}_{\tau} \delta B}{\rho_1^2 \sigma^2}\right)\E[f(X_t)] \leq \E[H(X_t,V_t)] \overset{t \rightarrow \infty}{\leq} \frac{1}{\xi} \frac{\eta \sigma \mathcal{L}_\tau}{4 \mu \sqrt{B}} \frac{\kappa}{\sqrt{\delta}},
\end{align}
which implies that 
\begin{equation}
    \E[f(X_t)] \overset{t \rightarrow \infty}{\leq} \frac{\eta \sigma \mathcal{L}_\tau}{4 \mu \sqrt{B}} \frac{\kappa}{\sqrt{\delta}} .
\end{equation}
Analogously,
\begin{equation}
    \E[f(X_t) - f(X_*)] \overset{t \rightarrow \infty}{\leq} \frac{\eta \sigma \mathcal{L}_\tau}{4 \mu \sqrt{B}} \frac{\kappa}{\sqrt{\delta}} .
\end{equation}
which gives the square root scaling rule.
\end{proof}

\begin{lemma} 
\label{lemma:HPSR_Adam_Quad}
Under the assumptions of Corollary \ref{coroll:AdamRescaled}, $f(x) = \frac{x^{\top} H x}{2}$ s.t. $H = \diag(\lambda_1,\cdots,\lambda_d)$ and  $(\nabla f(x) )^2 = \mathcal{O}(\eta)$, the dynamics of Adam implies that $f(X_t) \rightarrow \frac{\eta \sigma d}{4 \sqrt{B}} \frac{\kappa}{\sqrt{\delta}}$.

\end{lemma}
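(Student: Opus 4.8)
The plan is to specialize the scaling-rule computation of Lemma~\ref{lemma:HPSR_Adam} to the quadratic $f(x)=\tfrac12 x^\top H x$ and to extract the \emph{exact} asymptotic loss rather than an upper bound. First I would start from the rescaled system of Corollary~\ref{coroll:AdamRescaled} and set $\alpha_1=\alpha_2=\kappa$, so that the three timescales of $X_t,M_t,V_t$ are matched (this is exactly the choice that recovers $\tilde\beta_i=1-\kappa(1-\beta_i)$). As $t\to\infty$ one has $\iota_i(t)\to1$, hence $\sqrt{\iota_2(t)}/\iota_1(t)\to1$; moreover, since $(\nabla f(x))^2=\mathcal{O}(\eta)$, the $V$-equation equilibrates to $V_t\to\frac{\sigma^2}{B\delta}I_d$, whence $P_t\to\frac{\sigma}{\sqrt{B\delta}}I_d$ (dropping $\epsilon$). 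The correction term $\eta\kappa\rho_1(\nabla f(X_t)-M_t)$ in the $X$-drift carries a prefactor $\eta\rho_1=\mathcal{O}(\eta^{1-\zeta})\to0$ and perturbs the final constant only at order $\mathcal{O}(\eta^{2-\zeta})$, so it may be discarded at leading order. This reduces the dynamics to a linear, Gaussian system in $(X_t,M_t)$ alone.

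Because $H=\diag(\lambda_1,\dots,\lambda_d)$ is diagonal, the reduced system decouples into $d$ independent two-dimensional Ornstein--Uhlenbeck processes, one per coordinate. For coordinate $i$, writing $a=\kappa\frac{\sqrt{B\delta}}{\sigma}$, $b=\kappa\rho_1$, and noise intensity $c=\sqrt{\eta}\,\kappa\rho_1\frac{\sigma}{\sqrt{B\delta}}$, the pair $(X^i_t,M^i_t)$ solves a linear SDE with drift matrix $A=\left(\begin{smallmatrix}0 & -a\\ b\lambda_i & -b\end{smallmatrix}\right)$ and diffusion only in the $M$-component. Since $\operatorname{tr}A=-b<0$ and $\det A=ab\lambda_i>0$, the matrix $A$ is stable and a stationary covariance $\Sigma_\infty^{(i)}=\left(\begin{smallmatrix}s_{XX} & s_{XM}\\ s_{XM} & s_{MM}\end{smallmatrix}\right)$ exists, characterized by the Lyapunov equation $A\Sigma_\infty^{(i)}+\Sigma_\infty^{(i)}A^\top+D=0$ with $D=\diag(0,c^2)$. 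Solving the three resulting scalar equations gives $s_{XM}=0$, $s_{MM}=c^2/(2b)$, and $s_{XX}=\frac{a c^2}{2b^2\lambda_i}$; substituting the definitions of $a,b,c$ yields $\E[(X^i_\infty)^2]=s_{XX}=\frac{\kappa\eta\sigma}{2\lambda_i\sqrt{B\delta}}$.

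Finally I would assemble the loss. Using $\E[X^i_\infty]=0$, we get $\E[f(X_\infty)]=\tfrac12\sum_{i}\lambda_i\,\E[(X^i_\infty)^2]=\tfrac12\sum_i\lambda_i\cdot\frac{\kappa\eta\sigma}{2\lambda_i\sqrt{B\delta}}$. The crucial feature is that $\lambda_i$ cancels in every summand, leaving $d$ identical contributions and producing $\E[f(X_\infty)]=\frac{d\,\kappa\eta\sigma}{4\sqrt{B\delta}}=\frac{\eta\sigma d}{4\sqrt{B}}\frac{\kappa}{\sqrt{\delta}}$, as claimed. As a consistency check, this matches the general bound of Lemma~\ref{lemma:HPSR_Adam} with $\mathcal{L}_\tau=\operatorname{tr}H=\sum_i\lambda_i$ and $\mu=\min_i\lambda_i$, since $\sum_i\lambda_i/\min_i\lambda_i\ge d$, with equality exactly when the spectrum is flat.

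I expect the algebra above to be routine; the genuine obstacle is rigorously justifying the equilibrium reductions. One must argue, exploiting the timescale separation $\rho_1=\mathcal{O}(\eta^{-\zeta})\gg\rho_2=\mathcal{O}(1)$ together with $(\nabla f)^2=\mathcal{O}(\eta)$, that replacing $V_t$ and $\iota_i(t)$ by their limits and dropping the $\mathcal{O}(\eta^{1-\zeta})$ correction leaves the leading-order constant unchanged. This is precisely the adiabatic/stationary-phase reasoning already used (heuristically) in the proof of Lemma~\ref{lemma:HPSR_Adam}; alternatively, one can follow that proof's Lyapunov-function route with $H(X_t,M_t)=f(X_t)+\frac{\mathcal{L}_\tau\delta B}{\rho_1^2\sigma^2}\frac{\lVert M_t\rVert_2^2}{2}$ and track equality rather than an inequality, arriving at the same constant. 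Once the reduction is granted, solving the $2\times2$ Lyapunov equation is immediate.
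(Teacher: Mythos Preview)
Your proof is correct and arrives at the same constant as the paper, but via a genuinely different computation. The paper's argument is a Lyapunov-\emph{function} approach: after reducing to the one-dimensional coordinate it introduces $H(X_t,M_t)=\tfrac{\lambda X_t^2}{2}+\tfrac{\lambda\delta B}{\rho_1^2\sigma^2}\tfrac{M_t^2}{2}$, applies It\^o's lemma, and then invokes the heuristic $M_t\to\lambda X_t$ to rewrite $H$ as a multiple of $f(X_t)$, yielding the limit $\E[f(X_t)]\to\frac{\eta\sigma}{4\sqrt{B}}\frac{\kappa}{\sqrt{\delta}}$ per coordinate. Your argument instead keeps the full coupled $(X^i,M^i)$ dynamics, recognizes it as a two-dimensional Ornstein--Uhlenbeck process, and solves the algebraic Lyapunov equation $A\Sigma_\infty+\Sigma_\infty A^\top+D=0$ directly for the stationary second moment $s_{XX}$. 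This buys you a cleaner derivation that does not rely on the mid-proof substitution $M_t\to\nabla f(X_t)$; it also makes transparent that the answer is independent of $\alpha_1$ (your $b$ cancels in $ac^2/(2b^2\lambda_i)$), which the paper's proof obscures by having already fixed $\alpha_1=\kappa$. Both approaches share the same heuristic step---replacing $V_t$ and $\iota_i(t)$ by their limits and dropping the $\mathcal{O}(\eta\rho_1)$ correction---and you correctly flag this as the only genuinely non-routine point.
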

\begin{proof}
Recalling that $(\nabla f(x) )^2 = \mathcal{O}(\eta)$, we have that as $t \rightarrow \infty$, $V_t \rightarrow \frac{\sigma^2}{B \delta} $, and $M_t \rightarrow \lambda X_t$ with high probability. Therefore, in the one-dimensional case
\begin{align}
    d X_t & =-\kappa \frac{\sqrt{B \delta}}{\sigma} \lambda X_t d t \\
    d M_t & = \kappa\sqrt{\eta} \rho_1 \frac{\sigma}{\sqrt{B \delta}} d W_t \\
    d V_t & = 0.
\end{align}
Therefore, if $H(X_t,V_t) := \frac{\lambda X_t^2}{2} + \frac{\lambda \delta B}{\rho_1^2 \sigma^2} \frac{M_t^2}{2},\footnote{Inspired by \citep{barakat2018convergence}}$ we have that by Itô's lemma,

\begin{align}
    d H(X_t,V_t) & = - (\lambda X_t) \left( \kappa \frac{\sqrt{B \delta}}{\sigma} \lambda X_t \right) d t + \left( \frac{\lambda \delta B}{\rho_1^2 \sigma^2} M_t \right) \kappa\sqrt{\eta} \rho_1 \frac{\sigma}{\sqrt{B \delta}} d W_t \\
    & + \frac{1}{2} \left(\frac{\lambda \delta B}{\rho_1^2 \sigma^2} \right) \kappa^2 \eta \rho_1^2 \frac{\sigma^2}{B \delta} dt \\
    & =   -2 \kappa \lambda \frac{\sqrt{B \delta}}{\sigma} f(X_t)dt + \frac{\kappa^2 \eta \rho_1^2 \sigma^2}{ 2 B \delta} \frac{\lambda \delta B}{\rho_1^2 \sigma^2} dt  + \text{Noise}.
     \\
    & =  -2 \kappa \lambda \frac{\sqrt{B \delta}}{\sigma} f(X_t)dt + \frac{\kappa^2 \eta \lambda }{ 2} dt  + \text{Noise}.
\end{align}
Once again, since $M_t \rightarrow \lambda X_t$, we have that

\begin{equation}
    H(X_t,V_t) = \frac{\lambda X_t^2}{2} + \frac{\lambda \delta B}{\rho_1^2 \sigma^2} \frac{M_t^2}{2} \rightarrow \frac{\lambda X_t^2}{2} + \lambda \frac{\lambda \delta B}{\rho_1^2 \sigma^2} \frac{\lambda X_t^2}{2} = \left( 1 + \lambda \frac{\lambda \delta B}{\rho_1^2 \sigma^2} \right) \frac{\lambda X_t^2}{2}  =: K f(X_t).
\end{equation}
Therefore,
\begin{equation}
    K d \E[f(X_t)] =  -2 \kappa \lambda \frac{\sqrt{B \delta}}{\sigma} \E[f(X_t)]dt + \frac{\kappa^2 \eta \lambda }{ 2}dt,
\end{equation}
which implies that $\E[f(X_t)] \rightarrow \frac{\eta \sigma}{4 \sqrt{B}} \frac{\kappa}{\sqrt{\delta}}$, which also gives the square root scaling rule. The generalization to $d$ dimension is analogous and one needs to sum across all the dimensions.
\end{proof}

\begin{lemma} \label{lemma:Adam_StaDistr}
Let $f(x) := \frac{x^{\top} H x}{2}$ where $H = \diag (\lambda_1, \dots, \lambda_d)$. The stationary distribution of Adam is $\left(\E [X_\infty] ], Cov(X_\infty)\right)=\left(0, \frac{\eta}{2} \Sigma^{\frac{1}{2}} H^{-1} \right)$.
\end{lemma}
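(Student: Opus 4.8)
The plan is to reduce the three-component system \eqref{eq:Adam_SDE} to a linear Gaussian (Ornstein--Uhlenbeck) system near the minimizer and then read off its stationary law, exactly paralleling the RMSprop computation in Lemma~\ref{lemma:RMSprop_StaDistr}. First I would pass to the asymptotic regime $t\to\infty$: there $\iota_1(t),\iota_2(t)\to 1$, so the prefactor $\sqrt{\iota_2(t)}/\iota_1(t)\to 1$, and since we work at convergence where $(\nabla f(x))^2=\mathcal{O}(\eta)$, the $V_t$-equation forces $V_t\to\diag(\Sigma)$, hence $P_t^{-1}\to\Sigma^{-1/2}$ (the $\epsilon$-term absorbed in the usual way). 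With $f(x)=\tfrac12 x^\top H x$ so that $\nabla f(X_t)=HX_t$, and $H,\Sigma$ diagonal as in Lemma~\ref{lemma:RMSprop_StaDistr}, this collapses the dynamics into $d$ independent two-dimensional linear SDEs in $(X_t^i,M_t^i)$:
\begin{align*}
dX_t &= -\Sigma^{-1/2}\bigl(M_t+\eta\rho_1(HX_t-M_t)\bigr)\,dt,\\
dM_t &= \rho_1(HX_t-M_t)\,dt+\sqrt{\eta}\,\rho_1\Sigma^{1/2}\,dW_t.
\end{align*}

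Next I would identify the stationary law of this OU system. The drift is linear and homogeneous, and once one checks that its $2\times2$ drift matrix is Hurwitz, the stationary distribution is a centered Gaussian; in particular $\E[X_\infty]=0$, which also follows directly by taking expectations and solving the fixed-point relations $\E[M_\infty]=H\E[X_\infty]$ and $\Sigma^{-1/2}\E[M_\infty]=0$. The stationary covariance of each coordinate, a symmetric matrix with entries $c_{11},c_{12},c_{22}$, solves the Lyapunov equation $AC+CA^\top+bb^\top=0$, where $b$ carries the single Brownian entry $\sqrt{\eta}\,\rho_1\sigma_i$. Solving the three scalar equations and expanding in $\eta$, using that $\eta\rho_1=\mathcal{O}(\eta^{1-\zeta})\to 0$ so that all terms proportional to $\eta\rho_1$ are higher order within the order-$1$ weak approximation, yields $c_{11}=\Var(X_\infty^i)=\tfrac{\eta}{2}\sigma_i/\lambda_i$, i.e. $\Cov[X_\infty]=\tfrac{\eta}{2}\Sigma^{1/2}H^{-1}$.

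An equivalent and more transparent route, which I would use as a consistency check, is adiabatic elimination of the momentum. Writing $N_t=M_t-\nabla f(X_t)$ for the momentum error, the large value $\rho_1=\mathcal{O}(\eta^{-\zeta})$ makes $N_t$ a fast Ornstein--Uhlenbeck variable relaxing on timescale $1/\rho_1$, while $X_t$ is slow and obeys $dX_t=-\Sigma^{-1/2}HX_t\,dt-\Sigma^{-1/2}(1-\eta\rho_1)N_t\,dt$. The stationary autocovariance of $N_t$ integrates to an effective white-noise forcing of $X_t$ whose diffusion coefficient is exactly $\sqrt{\eta}$ (the factor $\rho_1$ cancels, and $1-\eta\rho_1\to1$), so the slow dynamics reduces precisely to the RMSprop SDE $dX_t=-\Sigma^{-1/2}HX_t\,dt+\sqrt{\eta}\,dW_t$; the claim then follows verbatim from Lemma~\ref{lemma:RMSprop_StaDistr} via the martingale property of Brownian motion and the It\^o isometry.

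I expect the main obstacle to be the noise propagation itself: the $X_t$-equation in \eqref{eq:Adam_SDE} carries \emph{no} Brownian term, so all stochasticity must be transferred from $M_t$ through the coupling. Getting the effective diffusion right, namely showing that the large momentum rate $\rho_1$ cancels to leave a clean $\sqrt{\eta}$ and that the $\eta\rho_1$-corrections are genuinely subleading and hence discardable at weak order one, is the delicate step; the two routes above (exact Lyapunov solution versus time-scale separation) serve as mutual checks on this reduction.
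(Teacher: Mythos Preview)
Your proposal is correct, but it takes a different route from the paper's proof. The paper does not solve the two-by-two Lyapunov equation or carry out an explicit adiabatic elimination; instead it reuses the Lyapunov-functional trick from Lemma~\ref{lemma:HPSR_Adam_Quad}: working coordinate-wise it introduces the energy $H(X_t,M_t)=\tfrac{X_t^2}{2}+\tfrac{\lambda^2}{2\sigma^2\rho_1^2}\tfrac{M_t^2}{2}$, applies It\^o's lemma, and then invokes the asymptotic identification $M_t\to\lambda X_t$ to collapse $H$ onto a constant multiple of $X_t^2$, from which $\E[X_t^2]\to\tfrac{\eta\sigma}{2\lambda}$ follows immediately. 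Your approach---either the full Lyapunov-equation solve with a systematic $\eta\rho_1\to0$ expansion, or the fast--slow reduction to the RMSprop SDE of Lemma~\ref{lemma:RMSprop_StaDistr}---is more self-contained and makes the role of the small parameter $\eta\rho_1$ explicit, while the paper's argument is shorter but relies on the somewhat informal substitution $M_t\to\nabla f(X_t)$ already developed in the preceding lemma. Both arrive at the same answer; yours is arguably cleaner about exactly which terms are discarded and why.
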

\begin{proof}
The expected value follows immediately from the fact that 
\begin{equation}
    d X_t =-\Sigma^{-\frac{1}{2}}  X_t d t
\end{equation}
For the covariance, we focus on the one-dimensional case. We define $H(X_t,V_t):= \frac{X_t^2}{2} + \frac{\lambda^2}{2\sigma^2 \rho_1^2}\frac{M_t^2}{2}$. With the same arguments as Lemma \ref{lemma:HPSR_Adam_Quad}, we have
\begin{equation}
    d (X_t)^2 = - \frac{\lambda}{\sigma} X_t^2 dt + \frac{\eta}{2} dt + \text{Noise},
\end{equation}
which implies that
\begin{equation}
    \E[X_t^2] \overset{t \rightarrow 0}{\rightarrow} \frac{\eta}{2} \frac{\sigma}{\lambda}.
\end{equation}
The thesis follows by applying the same logic to multiple dimensions.
\end{proof}

In this subsection, we derive the SDE of AdamW defined as defined as
\begin{align}
\label{eq:AdamW_Discr_Update_app}
    & v_{k+1} = \beta_2 v_{k} + (1- \beta_2) \left( \nabla f_{\gamma_{k}}(x_{k}) \right)^2 \\
    & m_{k+1} = \beta_1 m_{k} + (1- \beta_1)  \nabla f_{\gamma_{k}}(x_{k}) \\
    & \hat{m}_{k} = m_{k}\left( 1 - \beta_1^k\right)^{-1} \\
    & \hat{v}_{k} = v_{k}\left( 1 - \beta_2^k\right)^{-1} \\
    & x_{k+1} = x_{k} - \eta \frac{\hat{m}_{k+1}}{\sqrt{\hat{v}_{k+1}}+ \epsilon I_d} - \eta \theta x_k
\end{align}
with $ (x_0, m_0, v_0) \in \mathbb{R}^d\times\mathbb{R}^d\times \mathbb{R}^d$, $\eta \in \R^{>0}$ is the step size, $\beta_i = 1 - \rho_i\eta$ for $\rho_1 = \mathcal{O}(\eta^{-\zeta})$ s.t. $\zeta \in (0,1)$, $\rho_2 = \mathcal{O}(1)$, $\theta>0$, the mini-batches $\{ \gamma_k \}$ are modelled as i.i.d.~random variables uniformly distributed on $\{ 1, \cdots, N \}$, and of size $B\geq 1$.

\begin{theorem}\label{thm:AdamW_SDE}
    Under the same assumptions as Theorem \ref{thm:Adam_SDE}, the SDE of AdamW is
    \begin{align}
    d X_t & =-\frac{\sqrt{\iota_2(t)}}{\iota_1(t)} P_t^{-1} (M_t + \eta \rho_1 \left(\nabla f\left(X_t\right)-M_t\right)) d t - \theta X_t dt \\
    d M_t & =\rho_1\left(\nabla f\left(X_t\right)-M_t\right) d t+\sqrt{\eta} \rho_1 \Sigma^{1 / 2}\left(X_t\right) d W_t \\
    d V_t & =\rho_2\left( (\nabla f(X_t))^2 +  \diag\left(\Sigma\left(X_t\right)\right)-V_t\right) d t.
\end{align}
where $\beta_i = 1 - \eta \rho_i$, $\theta>0$, $\iota_i(t) = 1 - e^{-\rho_i t}$, $t>t_0$, and $P_t = \diag{\sqrt{V_t}} + \epsilon \sqrt{\iota_2(t)}I_d$.
\end{theorem}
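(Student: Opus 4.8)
The plan is to follow the derivation of Theorem~\ref{thm:Adam_SDE} essentially verbatim, since the AdamW recursion in Eq.~\ref{eq:AdamW_Discr_Update_app} differs from the Adam recursion in Eq.~\ref{eq:Adam_Discr_Update_app} only through the additional deterministic summand $-\eta\theta x_k$ in the $x$-update. First I would reduce the discrete scheme to the simplified form used in the Adam proof, exploiting $\beta_i = 1-\eta\rho_i$ and the expansion $1/\sqrt{v_{k+1}} = (1/\sqrt{v_k})(1+\mathcal{O}(\eta))$, so that the bias-corrected preconditioner is replaced by $P_t^{-1}\sqrt{\iota_2(t)}/\iota_1(t)$ up to an $\mathcal{O}(\eta^2)$ error that is harmless for a first-order weak approximation. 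The only genuinely new contribution is then the term $-\eta\theta x_k$ entering the $x$-increment, which I claim is matched exactly by the extra drift $-\theta X_t$ in the proposed SDE.

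Next I would verify the four moment-matching conditions of Theorem~\ref{thm:mils}, comparing the AdamW increments $\bar\Delta = l_1 - l$ with the SDE increments $\Delta$ produced by Lemma~\ref{lemma:li1} for the augmented drift. For the first moment of the $x$-component, the weight decay contributes precisely $-\eta\theta x$, matching the leading-order term $-\theta X_t\cdot\eta$ coming from the new drift (recall $\E\Delta_i = b_i\eta + \mathcal{O}(\eta^2)$), so condition~1 persists with the same $K_1\in G$. For the second moments, the weight-decay term is deterministic, so it leaves the centered (covariance) part of $\E\bar\Delta_i\bar\Delta_j$ untouched; it only modifies the product-of-means contribution, which is exactly the $b_ib_j\eta^2$ structure generated by Lemma~\ref{lemma:li1} once $b$ is augmented by the $-\theta x$ block. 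Hence condition~2 survives with a comparison constant still in $G$, thanks to the affine growth of $x\mapsto\theta x$. Conditions~3 and~4 are unaffected, since adding a bounded-order-$\eta$ deterministic shift produces no new $\mathcal{O}(\eta^2)$ contributions to products of three or more increments.

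Finally I would confirm that the regularity hypotheses needed to invoke Lemma~\ref{lemma:li1} and Theorem~\ref{thm:mils} still hold, and that the SDE admits a unique solution via Theorem~\ref{thm:SDE_existence_uniqueness_global}. The map $x\mapsto -\theta x$ is linear, hence Lipschitz, of affine growth, and in $G$ together with all its derivatives, so appending it to the Adam drift preserves every regularity property established in the Adam case. I expect essentially no serious obstacle here: the whole point is that \emph{decoupled} weight decay appears additively and deterministically, decoupled from the stochastic and preconditioning structure, so it slots into both the discrete algorithm and the SDE without interacting with them at leading order. The only step requiring mild care is the bookkeeping of the $\mathcal{O}(\eta^2)$ cross-terms in the second-moment comparison, which is routine given the affine growth of the weight-decay term, exactly as for the other drift components; consequently the proof reduces to the single remark that $-\eta\theta x_k$ is the Euler increment of $-\theta X_t\,dt$.
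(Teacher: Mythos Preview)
Your proposal is correct and follows essentially the same approach as the paper: the paper's proof consists of a single sentence noting that the argument is identical to that of Theorem~\ref{thm:Adam_SDE}, with the sole difference that the term $-\eta\theta x_k$ is approximated by $-\theta X_t\,dt$. You have in fact supplied more detail than the paper, explicitly checking the moment-matching conditions and the regularity of the linear map $x\mapsto -\theta x$, but the underlying strategy is the same.
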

\begin{proof}
    The proof is the same as the of Theorem \ref{thm:Adam_SDE} and the only difference is that $\eta \theta x_k$ is approximated with $\theta X_t dt$.
\end{proof}
Figure \ref{fig:AdamW_RMSpropW_SDE_Verification} and Figure \ref{fig:AdamW_RMSpropW_SDE_Verification_App} validate this result on a variety of architectures and datasets.
\begin{remark}
    See Remark \ref{remark:Need} and Remark \ref{remark:NoNeed} for a discussion on the regularity of the SDE derived in Theorem \ref{thm:AdamW_SDE}.
\end{remark}
\begin{corollary} \label{coroll:AdamWRescaled}
    Under the assumptions of Theorem \ref{thm:AdamW_SDE} with $\Sigma(x)=\sigma^2 I_d$, $\Tilde{\eta} = \kappa \eta$, $\Tilde{B}=B \delta$, $\Tilde{\rho}_1 = \alpha_1 \rho_1$, $\Tilde{\theta}= \xi \theta$, and $\Tilde{\rho}_2 = \alpha_2 \rho_2$ 
\begin{align}
    d X_t & =-\kappa\frac{\sqrt{\iota_2(t)}}{\iota_1(t)} P_t^{-1} (M_t + \eta \alpha_1 \rho_1  \left(\nabla f\left(X_t\right)-M_t\right)) d t - \kappa \xi \theta X_t dt\\
    d M_t  & =\alpha_1\rho_1\left(\nabla f\left(X_t\right)-M_t\right) d t+\sqrt{\eta} \alpha_1\rho_1 \frac{\sigma}{\sqrt{B \delta}} I_d d W_t \\
    d V_t & =\alpha_2\rho_2\left( (\nabla f(X_t))^2 +  \frac{\sigma^2}{B\delta} I_d-V_t\right) d t.
\end{align}
\end{corollary}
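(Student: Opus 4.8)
The plan is to obtain the rescaled system by specializing Theorem~\ref{thm:AdamW_SDE} to the AdamW iteration run with the hyperparameters $\tilde\eta = \kappa\eta$, $\tilde B = B\delta$, $\tilde\rho_i = \alpha_i\rho_i$ and $\tilde\theta = \xi\theta$, exactly as was done for Adam in Corollary~\ref{coroll:AdamRescaled}. Since Theorem~\ref{thm:AdamW_SDE} already establishes that the displayed SDE is an order-$1$ weak approximation of AdamW for arbitrary admissible hyperparameters, no new moment-matching is needed: it suffices to substitute the rescaled quantities into the drift and diffusion coefficients and simplify. Before doing so I would check that the rescaled parameters leave the admissibility conditions intact, which they do for fixed $\alpha_i$, since $\tilde\rho_1 = \alpha_1\rho_1 = \mathcal{O}(\eta^{-\zeta})$ and $\tilde\rho_2 = \alpha_2\rho_2 = \mathcal{O}(1)$.

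First I would record the effect of the batch-size change on the gradient noise. The mini-batch gradient-noise covariance scales inversely with the batch size, so at batch size $\tilde B = B\delta$ it equals $\tilde\Sigma = \frac{\sigma^2}{B\delta} I_d$, where $\sigma^2$ is the reference (per-sample) noise level. Substituting $\rho_i \mapsto \alpha_i\rho_i$ and $\Sigma \mapsto \tilde\Sigma$ into the $M_t$ and $V_t$ equations of Theorem~\ref{thm:AdamW_SDE} immediately yields the last two displayed lines: the momentum diffusion becomes $\sqrt\eta\,\alpha_1\rho_1\,\tilde\Sigma^{1/2} = \sqrt\eta\,\alpha_1\rho_1\frac{\sigma}{\sqrt{B\delta}} I_d$ and the drift of $V_t$ acquires the term $\frac{\sigma^2}{B\delta} I_d$. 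Here the continuous-time scale is the reference one, $dt \leftrightarrow \eta$, so the factor $\sqrt\eta$ in the $M_t$ diffusion is held fixed and only $\rho_i$ and $\Sigma$ are rescaled.

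The one place that needs genuine care is the $X_t$ equation, where the two distinct roles of the step size must be separated. The reference step $\eta$ that fixes the time normalization enters the half-step (bias) correction as $\eta\,\alpha_1\rho_1(\nabla f(X_t) - M_t)$, whereas the actual update magnitude $\tilde\eta = \kappa\eta$ enters as an overall multiplicative factor on the whole parameter drift. Concretely, per reference time $\eta$ the discrete position step $-\tilde\eta\,\hat m_{k+1}/(\sqrt{\hat v_{k+1}}+\epsilon) - \tilde\eta\tilde\theta x_k$ contributes a factor $\tilde\eta/\eta = \kappa$, producing the prefactor $\kappa$ in front of $-\frac{\sqrt{\iota_2(t)}}{\iota_1(t)}P_t^{-1}(M_t + \eta\alpha_1\rho_1(\nabla f(X_t)-M_t))$ and simultaneously turning the decoupled weight-decay term $-\theta X_t\,dt$ into $-\kappa\xi\theta X_t\,dt$ (the $\xi$ coming from $\tilde\theta = \xi\theta$ and the $\kappa$ from the learning-rate factor). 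The bias-correction functions $\iota_i(t) = 1 - e^{-\rho_i t}$ and the preconditioner $P_t$ keep their form, with $\rho_i$ now understood as the rescaled $\tilde\rho_i$.

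I expect the main obstacle to be bookkeeping rather than analysis: disentangling the reference $\eta$ (which governs the $\sqrt\eta$ diffusion and the $\eta\alpha_1\rho_1$ correction) from the learning rate $\tilde\eta = \kappa\eta$ (which governs the overall $\kappa$ scaling of the position drift and the $\kappa\xi\theta$ weight-decay coefficient), so that every factor of $\kappa$ lands in exactly the right place and the momentum/preconditioner equations correctly retain $\alpha_i\rho_i$ without an extra $\kappa$. Once this separation is made explicit, collecting terms reproduces the three displayed lines verbatim, and all regularity required to invoke Theorem~\ref{thm:AdamW_SDE} is inherited directly from its hypotheses.
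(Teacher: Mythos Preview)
Your proposal is correct and follows the same approach the paper relies on: the corollary is stated without proof in the paper, being treated as a direct substitution into Theorem~\ref{thm:AdamW_SDE} once one adopts the ``constant scheduler'' interpretation of the rescaled learning rate laid out in Appendix~\ref{sec:BackgroundApp} (treating $\kappa$ as a multiplier while keeping $\eta$ as the reference time scale, exactly as in Eq.~\ref{eq:SGD_SDE_Rescaled_T}). Your careful separation of the reference $\eta$ (governing $\sqrt{\eta}$ in the $M_t$ diffusion and the $\eta\alpha_1\rho_1$ half-step correction) from the scheduler $\kappa$ (governing the overall $X_t$ drift and the $\kappa\xi\theta$ weight-decay coefficient) is precisely the bookkeeping the paper leaves implicit, and your remark that $\iota_i$ and $P_t$ must use the rescaled $\tilde\rho_i=\alpha_i\rho_i$ is correct and worth stating explicitly.
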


\begin{lemma}[Scaling Rule at Convergence] 
\label{lemma:HPSR_AdamW}
Under the assumptions of Corollary \ref{coroll:AdamWRescaled}, $f$ is $\mu$-strongly convex and $L$-smooth, $\tr(\nabla^2 f(x)) \leq \mathcal{L}_{\tau}$, $X_*=0$, and $(\nabla f(x) )^2 = \mathcal{O}(\eta)$, the asymptotic dynamics of the iterates of AdamW satisfies the novel scaling rule if $\kappa = \sqrt{\delta}$ and $\xi = \kappa$ because
\begin{equation}
    \E[f(X_t) - f(X_*)] \overset{t \rightarrow \infty}{\leq} \frac{\eta \mathcal{L}_\tau \sigma L}{2} \frac{\kappa}{2 \mu \sqrt{B \delta} L + \sigma \xi \theta (L + \mu)}
\end{equation}
By enforcing that the speed of $M_t$ and $V_t$ match that of $X_t$, one needs $\Tilde{\rho_i} = \kappa \rho_i$, which implies $\Tilde{\beta}_i = 1 - \kappa(1 - \beta_i)$.
\end{lemma}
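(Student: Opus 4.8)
The plan is to follow the template of the Adam scaling-rule proof (Lemma~\ref{lemma:HPSR_Adam}) for the momentum Lyapunov argument, and to graft onto it the weight-decay handling from the RMSpropW proof (Lemma~\ref{lemma:HPSR_RMSpropW}). I would start from the rescaled system in Corollary~\ref{coroll:AdamWRescaled} and first fix the scaling of the momentum parameters: to keep the relaxation rates of $M_t$ and $V_t$ commensurate with that of $X_t$ I set $\alpha_1=\alpha_2=\kappa$, which is exactly $\Tilde{\rho}_i=\kappa\rho_i$ and hence $\Tilde{\beta}_i=1-\kappa(1-\beta_i)$. This disposes of the momentum part of the scaling rule and leaves only the step-size/batch/weight-decay triple $(\kappa,\delta,\xi)$ to be pinned down by the loss bound.

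Next I would pass to the convergence regime. Using $(\nabla f(x))^2=\mathcal{O}(\eta)$ together with $\Sigma=\sigma^2 I_d$, the $V$-equation drives $V_t\to\frac{\sigma^2}{B\delta}\mathbf{1}$ and the $M$-equation forces $M_t\to\nabla f(X_t)$ (in the high-probability sense used in Lemma~\ref{lemma:HPSR_Adam}), so the effective $X$-dynamics collapses to
\begin{equation}
dX_t=-\kappa\frac{\sqrt{B\delta}}{\sigma}\nabla f(X_t)\,dt-\kappa\xi\theta X_t\,dt,
\end{equation}
with the residual diffusion carried by $M_t$ through $dM_t=\kappa\sqrt{\eta}\rho_1\frac{\sigma}{\sqrt{B\delta}}I_d\,dW_t$. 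I would then reuse the Lyapunov function $H(X_t,M_t):=f(X_t)+\frac{\mathcal{L}_{\tau}\delta B}{\rho_1^2\sigma^2}\frac{\lVert M_t\rVert_2^2}{2}$ and apply It\^o's lemma; the quadratic-variation term of $\lVert M_t\rVert_2^2$ contributes the constant $\frac{\kappa^2\eta\mathcal{L}_{\tau}}{2}$, exactly as in the Adam computation.

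The drift of $H$ has two negative contributions. The gradient term $-\kappa\frac{\sqrt{B\delta}}{\sigma}\lVert\nabla f(X_t)\rVert_2^2$ I would split as $\chi+(1-\chi)$ and, choosing $\chi$ so that $\frac{1-\chi}{\mu\chi}=\frac{\mathcal{L}_{\tau}\delta B}{\rho_1^2\sigma^2}$, convert it (via the $\mu$-PL inequality and $M_t\approx\nabla f(X_t)$) into a $-2\kappa\mu\frac{\sqrt{B\delta}}{\sigma}\chi\,H$ decay, precisely as in Lemma~\ref{lemma:HPSR_Adam}. The genuinely new term is the weight-decay drift $-\kappa\xi\theta\,\nabla f(X_t)^\top X_t$; here I would invoke $X_*=0$ and combine $\mu$-strong convexity with $L$-smoothness to obtain $\nabla f(X_t)^\top X_t\ge\frac{L+\mu}{L}\bigl(f(X_t)-f(X_*)\bigr)$, which turns it into a $-\kappa\xi\theta\frac{L+\mu}{L}$ decay of the loss. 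Adding the two rates, taking expectations, and integrating the resulting scalar ODE yields an asymptotic bound whose denominator is $2\mu\frac{\sqrt{B\delta}}{\sigma}+\xi\theta\frac{L+\mu}{L}$; clearing the factor $\sigma L$ reproduces $\frac{\eta\mathcal{L}_{\tau}\sigma L}{2}\frac{\kappa}{2\mu\sqrt{B\delta}L+\sigma\xi\theta(L+\mu)}$, and demanding that this be independent of $\delta$ forces $\kappa=\sqrt{\delta}$ and $\xi=\kappa$.

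I expect the main obstacle to be making the combination of the momentum term and the weight-decay term fully rigorous. Because the limiting $X$-SDE carries no explicit diffusion---the noise lives in $M_t$---the decay of the $\lVert M_t\rVert_2^2$ component of $H$ must be supplied entirely by the gradient contribution, whereas the weight-decay drift only touches the $f(X_t)$ component. Reconciling these so that both rates appear additively in the denominator, rather than the weight-decay rate being diluted by the factor relating $f$ to $H$, relies on the identification $M_t\approx\nabla f(X_t)$ at convergence and, in the Adam regime $\rho_1=\mathcal{O}(\eta^{-\zeta})$, on the smallness of the coupling coefficient $\frac{\mathcal{L}_{\tau}\delta B}{\rho_1^2\sigma^2}$, which makes $H\approx f$. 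Controlling this discrepancy---and the high-probability sense in which $M_t\to\nabla f(X_t)$ and $V_t\to\frac{\sigma^2}{B\delta}$---is the delicate step; everything else is a routine adaptation of the two preceding lemmas.
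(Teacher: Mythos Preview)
Your proposal is correct and follows essentially the same approach as the paper: the paper's proof is literally the one-line pointer ``The proof is the same as Lemma~\ref{lemma:HPSR_Adam} where we also use $L$-smoothness as in Lemma~\ref{lemma:HPSR_RMSpropW},'' and you have faithfully reconstructed this combination, including the Lyapunov function $H$, the $\chi/(1-\chi)$ split, and the bound $\nabla f(X_t)^\top X_t\ge\frac{L+\mu}{L}(f(X_t)-f(X_*))$ that underlies the $\xi\theta(1+\mu/L)$ term in the RMSpropW argument. Your discussion of the ``obstacle'' is more careful than the paper itself, which simply asserts the combination works; the resolution you sketch (using $M_t\approx\nabla f(X_t)$ and the smallness of the momentum coupling when $\rho_1=\mathcal{O}(\eta^{-\zeta})$ so that $H\approx f$) is the mechanism implicit in the Adam proof.
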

\begin{proof}
    The proof is the same as Lemma \ref{lemma:HPSR_Adam} where we also use $L$-smoothness as in Lemma \ref{lemma:HPSR_RMSpropW}.
\end{proof}
A similar result can be derived for a general $X_*$: The final expression is very convoluted and brings marginally negligible added value.
\begin{lemma} \label{lemma:AdamW_StaDistr}
For $f(x) := \frac{x^{\top} H x}{2}$, the stationary distribution of AdamW is $\left(\E [X_\infty] ], Cov(X_\infty)\right)=\left(0, \frac{\eta}{2} (H \Sigma^{-\frac{1}{2}}  + \theta I_d)^{-1} \right)$.
\end{lemma}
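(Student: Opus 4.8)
The plan is to mirror the proofs of Lemma~\ref{lemma:Adam_StaDistr} and Lemma~\ref{lemma:RMSpropW_StaDistr}: I first reduce the full AdamW system of Theorem~\ref{thm:AdamW_SDE} to a near-stationary linear regime, and then read off the first two moments. Since we analyze the behavior at a minimum with $(\nabla f(x))^2 = \mathcal{O}(\eta)$, the $V_t$-equation $dV_t = \rho_2((\nabla f(X_t))^2 + \diag(\Sigma) - V_t)\,dt$ drives $V_t \to \diag(\Sigma)$, so that $P_t^{-1} \to \Sigma^{-\frac{1}{2}}$ (using that $\Sigma$ is diagonal) and $\iota_i(t) = 1 - e^{-\rho_i t} \to 1$. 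Moreover, because $\rho_1 = \mathcal{O}(\eta^{-\zeta})$ with $\zeta \in (0,1)$, the correction $\eta\rho_1(\nabla f(X_t) - M_t)$ is of order $\eta^{1-\zeta}$ and can be dropped to leading order. Writing $\nabla f(X_t) = H X_t$, this leaves the linear pair
\begin{equation}
    dX_t = -\Sigma^{-\frac{1}{2}} M_t\,dt - \theta X_t\,dt, \qquad dM_t = \rho_1(H X_t - M_t)\,dt + \sqrt{\eta}\,\rho_1 \Sigma^{\frac{1}{2}}\,dW_t.
\end{equation}

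Second, I would obtain the mean. Taking expectations kills the Brownian term, and since $M_t \to H X_t$, the drift for $\E[X_t]$ reduces to $-(\Sigma^{-\frac{1}{2}} H + \theta I_d)\E[X_t]$; as $H$ and $\Sigma^{-\frac{1}{2}}$ are positive definite and $\theta>0$, this matrix is stable, giving $\E[X_\infty] = 0$, exactly as in Lemma~\ref{lemma:Adam_StaDistr}. For the covariance I would exploit that $H$ and $\Sigma$ are diagonal and commute, so the dynamics decouple into scalar modes; in the mode with curvature $\lambda$ and noise scale $\sigma$, the pair $(X_t, M_t)$ is a two-dimensional Ornstein--Uhlenbeck process with drift matrix $A = \begin{pmatrix} \theta & \sigma^{-1} \\ -\rho_1\lambda & \rho_1 \end{pmatrix}$ and diffusion $GG^\top = \diag(0,\ \eta\rho_1^2\sigma^2)$. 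Solving the Lyapunov equation $AC + CA^\top = GG^\top$ for the stationary covariance $C$ and extracting its $(1,1)$ entry yields, to leading order in the large parameter $\rho_1$,
\begin{equation}
    \E[X_\infty^2] = \frac{\eta\rho_1\sigma/2}{\theta^2\sigma + \lambda\theta + \rho_1\lambda + \rho_1\theta\sigma} \longrightarrow \frac{\eta}{2}\,\frac{\sigma}{\lambda + \theta\sigma} = \frac{\eta}{2}\left(\lambda\sigma^{-1} + \theta\right)^{-1}.
\end{equation}
Reassembling the modes gives $Cov[X_\infty] = \frac{\eta}{2}(H\Sigma^{-\frac{1}{2}} + \theta I_d)^{-1}$, the claimed expression (and, setting $\theta = 0$, it recovers Lemma~\ref{lemma:Adam_StaDistr}). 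Equivalently, one can avoid the explicit Lyapunov solve by introducing the energy function used in Lemma~\ref{lemma:HPSR_Adam_Quad}, namely $\mathcal{H}(X_t, M_t) = \tfrac{1}{2}X_t^2 + c\,\tfrac{1}{2}M_t^2$ with $c$ chosen to cancel the cross term, applying It\^o's lemma, and integrating the resulting scalar ODE for $\E[\mathcal{H}]$.

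I expect the main obstacle to be that, unlike RMSpropW where the gradient noise enters the $X_t$-equation directly (Lemma~\ref{lemma:RMSpropW_StaDistr}), here the noise is injected only through the momentum $M_t$; one must therefore show that the fast relaxation of $M_t$, driven by the large rate $\rho_1$, transmits exactly the same stationary position-covariance as in the memoryless case. Concretely, this means carefully controlling the cross-correlation $c_{12}$ together with the $\mathcal{O}(1/\rho_1)$ terms in the Lyapunov solution (or the cross term in the energy function), and justifying the approximation $M_t \to H X_t$ with high probability, so that these corrections do not contribute at the stated order in $\eta$. This mirrors the closing step of Lemma~\ref{lemma:HPSR_AdamW}, where the same large-$\rho_1$ reduction is used.
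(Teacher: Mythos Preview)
Your proposal is correct. The paper's own proof is a one-line pointer to Lemma~\ref{lemma:Adam_StaDistr}, which in turn uses exactly the energy-function device you mention as your alternative: one introduces $\mathcal{H}(X_t,M_t)=\tfrac{1}{2}X_t^2+c\,\tfrac{1}{2}M_t^2$ with $c$ chosen as in Lemma~\ref{lemma:HPSR_Adam_Quad}, applies It\^o's lemma, and reads off the limit of $\E[X_t^2]$ coordinate by coordinate. Your primary route is different but entirely legitimate: you write the reduced $(X_t,M_t)$ system as a two-dimensional Ornstein--Uhlenbeck process and solve the stationary Lyapunov equation $AC+CA^\top=GG^\top$ explicitly, then take the large-$\rho_1$ limit. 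This has the advantage of being systematic (no need to guess the right energy function or the cancelling constant $c$) and of making the $\mathcal{O}(1/\rho_1)$ corrections visible; the paper's approach is shorter once one already knows the right $\mathcal{H}$ from the earlier lemma. Your algebra for $c_{11}$ checks out, and the limit $\tfrac{\eta}{2}(\lambda\sigma^{-1}+\theta)^{-1}$ is exactly the claimed mode-wise variance.
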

\begin{proof}
    The proof is the same as Lemma \ref{lemma:Adam_StaDistr}.
\end{proof}

\section{SDEs from the literature}

\begin{theorem}[Original Malladi's Statement]\label{thm:Malladi_RMSprop_Original}
Let $\sigma_0 := \sigma \eta, \epsilon_0 := \epsilon \eta$, and $c_2 := \frac{1-\beta}{\eta^2}$. Define the state of the SDE as $L_t=\left(X_t, u_t\right)$ and the dynamics as
\begin{align}
    & d X_t=-P_t^{-1}\left(\nabla f\left(X_t\right) d t+\sigma_0 \Sigma^{1 / 2}\left(X_t\right) d W_t\right) \\  &du_t=c_2\left(\diag\left(\Sigma\left(X_t\right)\right)-u_t\right) d t
\end{align}
where $P_t:=\sigma_0 \diag\left(u_t\right)^{1 / 2}+\epsilon_0 I_d$.
\end{theorem}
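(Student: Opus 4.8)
The statement reproduces Malladi's SDE verbatim, so the claim to be established is the implicit one attached to it: that the process $L_t=(X_t,u_t)$ is an order-$1$ weak approximation (in the sense of Definition \ref{def:weak_approximation}) of the RMSprop iterations under Malladi's conventions $\sigma_0=\sigma\eta$, $\epsilon_0=\epsilon\eta$, $c_2=(1-\beta)/\eta^2$. The plan is to \emph{not} rerun the full machinery from scratch, but to exhibit this SDE as a special case of the general RMSprop SDE of Theorem \ref{thm:RMSprop_SDE}. The only structural difference is (i) the absence of the $(\nabla f(X_t))^2$ term in the $u_t$-equation and (ii) the change of hyperparameter conventions. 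Discrepancy (i) is dispatched exactly by Lemma \ref{lemma:corollRMSprop}: in the convergence regime $(\nabla f(x))^2=\mathcal{O}(\eta)$ that term contributes only at order $\eta^2$ to the discrete increments and can be dropped without degrading the order-$1$ approximation. So the first step is to invoke Lemma \ref{lemma:corollRMSprop} to align the second-moment drift, reducing the statement to the paper's own $V_t$-dynamics.

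The second step is the change of variables reconciling the two scaling conventions. I would set $\beta=1-\eta\rho$, identify Malladi's relaxation rate $c_2$ with our $\rho$ up to the $\eta^2$ rescaling implicit in $c_2=(1-\beta)/\eta^2$, and rescale the preconditioner state so that $P_t=\diag(V_t)^{1/2}+\epsilon I_d$ in our convention maps onto $\sigma_0\diag(u_t)^{1/2}+\epsilon_0 I_d$; one then checks that the induced rescaling carries our drift $-P_t^{-1}\nabla f(X_t)$ and diffusion $-\sqrt{\eta}\,P_t^{-1}\Sigma^{1/2}(X_t)$ onto Malladi's $-P_t^{-1}\nabla f(X_t)$ and $-\sigma_0 P_t^{-1}\Sigma^{1/2}(X_t)$. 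This part is bookkeeping, but it must be done carefully because the two derivations place $1-\beta$ at genuinely different orders in $\eta$, which is precisely the point the paper stresses when it says Malladi's SDE is accurate only near minima.

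For rigor and self-containedness I would, in parallel, verify that the statement also follows directly from the weak-approximation framework of Theorem \ref{thm:mils}. Concretely: expand $1/\sqrt{v_1}=v^{-1/2}(1+\mathcal{O}(\eta))$ in the RMSprop increment $\bar\Delta=(x_1-x,\;v_1-v)$, compute $\E\bar\Delta_i$ and $\E\bar\Delta_i\bar\Delta_j$ to the required order, and match them against the SDE-increment moments furnished by Lemma \ref{lemma:li1} applied to Malladi's drift and diffusion, finishing with the third-moment bound. Before any moment matching, one must confirm that Malladi's coefficients are admissible, i.e. that $-P_t^{-1}\nabla f$ and $-\sigma_0 P_t^{-1}\Sigma^{1/2}$ lie in $G$ together with their derivatives, are Lipschitz, and satisfy affine growth; this is exactly the regularity issue handled for our own SDE in Remark \ref{remark:Need}.

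The main obstacle is the control of the preconditioner $P_t^{-1}$. Unlike our general result, where the mollification/compactness argument of Remark \ref{remark:Need} suffices, Malladi's statement genuinely relies on the noise-dominance assumption $\sigma\gg\lVert\nabla f\rVert$: this is what keeps $u_t$ (hence $\diag(u_t)^{1/2}$) uniformly bounded away from zero so that $P_t^{-1}$ is bounded, Lipschitz, and smooth with polynomial-growth derivatives, which is the hypothesis every step above quietly uses. Establishing this lower bound on $u_t$ uniformly in $\eta$, and thereby justifying the coefficient admissibility, is the crux; once it is in place, both the reduction via Lemma \ref{lemma:corollRMSprop} and the direct moment-matching via Theorem \ref{thm:mils} go through routinely.
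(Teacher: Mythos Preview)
The paper does not prove this statement: Theorem~\ref{thm:Malladi_RMSprop_Original} is simply a verbatim quotation of Malladi's result, placed in the ``SDEs from the literature'' section as an external reference. The only argument the paper supplies in connection with it is the short lemma immediately following, which shows that Theorem~\ref{thm:Malladi_RMSprop_Original} and the paper's reformulation Theorem~\ref{thm:Malladi_RMSprop} are equivalent via the time rescaling $t:=\eta\xi$ together with the Brownian scaling $W_{\eta\xi}=\sqrt{\eta}\,W_\xi$. Separately, Lemma~\ref{lemma:corollRMSprop} records that the reformulated version is a corollary of the general Theorem~\ref{thm:RMSprop_SDE} once $(\nabla f(x))^2=\mathcal{O}(\eta)$ is assumed.

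Your proposal is therefore broadly correct but significantly over-engineered relative to what the paper actually does. Your Step~1 (invoke Lemma~\ref{lemma:corollRMSprop}) is exactly right and matches the paper. Your Step~2, however, describes a state-by-state rescaling of the preconditioner and diffusion coefficients; the paper collapses all of this into the single time change $t\mapsto\eta t$, under which $c_2\,dt=(\rho/\eta)\cdot\eta\,d\xi=\rho\,d\xi$, the factor $\eta$ in $P_t=\eta(\sigma\diag(u_t)^{1/2}+\epsilon I_d)$ cancels against the $\eta\,d\xi$ in the drift, and $\sigma_0\,dW_t=\sigma\eta\cdot\eta^{-1/2}\,dW_\xi=\sigma\sqrt{\eta}\,dW_\xi$. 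No separate bookkeeping on $V_t$ versus $u_t$ is needed beyond this. Your Step~3 (rerun the moment matching of Theorem~\ref{thm:mils}) and the discussion of the $P_t^{-1}$ lower bound are sound but redundant here: the paper delegates those issues to the proof of Theorem~\ref{thm:RMSprop_SDE} and Remark~\ref{remark:Need}, and then inherits them via the corollary plus time rescaling.
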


\begin{theorem}[Informal Statement of Theorem C.2 \cite{Malladi2022AdamSDE}]
\label{thm:Malladi_RMSprop}
    Under sufficient regularity conditions and $ \nabla f(x) = \mathcal{O}(\sqrt{\eta})$, the following SDE is an order $1$ weak approximation of RMSprop:
\begin{align}
\label{eq:RMSprop_Malladi_SDE}
    & d X_t = - P_t^{-1} (\nabla f(X_t) dt + \sqrt{\eta} \Sigma(X_t)^{\frac{1}{2}} d W_t) \\
    & d V_t = \rho( \diag(\Sigma(X_t)) - V_t)) dt,
\end{align}
where $\beta = 1-\eta \rho$, $\rho = \mathcal{O}(1)$, and $P_t:=  \diag\left(V_t\right)^{\frac{1}{2}}+\epsilon I_d$.
\end{theorem}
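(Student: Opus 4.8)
The plan is to obtain this statement as an immediate specialization of the authors' own RMSprop SDE from Theorem \ref{thm:RMSprop_SDE}, exactly along the route flagged in Lemma \ref{lemma:corollRMSprop}. The first observation is that the hypothesis $\nabla f(x) = \mathcal{O}(\sqrt{\eta})$ is, componentwise, equivalent to $(\nabla f(x))^2 = \mathcal{O}(\eta)$. The only structural difference between the target SDE in Eq.~\ref{eq:RMSprop_Malladi_SDE} and the SDE in Eq.~\ref{eq:RMSprop_SDE} already proved in Theorem \ref{thm:RMSprop_SDE} is the drift term $\rho\,(\nabla f(X_t))^2$ appearing in the $V_t$ dynamics. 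Hence it suffices to show that, under this scaling, that single term may be discarded without degrading the order $1$ weak approximation guarantee of Definition \ref{def:weak_approximation}.

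First I would recall that the proof of Theorem \ref{thm:RMSprop_SDE} establishes the four moment-matching conditions of Theorem \ref{thm:mils} by comparing the one-step increments of discrete RMSprop (computed using the expansion $1/\sqrt{v_{k+1}} = \sqrt{1/v_k}\,(1+\mathcal{O}(\eta))$) against the continuous increments supplied by Lemma \ref{lemma:li1}. I would re-run precisely these computations with the term deleted. In the first-moment condition for $V_t$, the discrete increment contributes $\eta\rho\,(\nabla f(x))^2 = \mathcal{O}(\eta^2)$, so removing it from the SDE drift alters the matched first moment by at most $K(x)\eta^2$ for some $K \in G$; this is exactly the slack permitted by condition 1 of Theorem \ref{thm:mils}.

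Next I would control how this $\mathcal{O}(\eta^2)$ modification of the $V_t$ drift propagates into the $X_t$ increment through the preconditioner $P_t^{-1} = (\diag(V_t)^{1/2} + \epsilon I_d)^{-1}$. The key structural fact, already exploited in Remark \ref{remark:Need}, is that $\sqrt{V_t} + \epsilon \geq \epsilon > 0$ uniformly, so $P_t^{-1}$ is bounded and Lipschitz as a function of $V_t$; an $\mathcal{O}(\eta^2)$ perturbation of $V_t$ therefore produces only an $\mathcal{O}(\eta^2)$ perturbation of the $X_t$-drift and diffusion coefficients. Consequently the first- and second-moment conditions (conditions 1 and 2 of Theorem \ref{thm:mils}) remain satisfied, while conditions 3 and 4 are untouched since they already sit at $\mathcal{O}(\eta^3)$ and $\mathcal{O}(\eta^2)$, respectively. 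With all four conditions verified, the bound $|\E g(X_{k\eta}) - \E g(l_k)| \leq C\eta$ follows verbatim from the same invocation of Lemma \ref{lemma:li1} and Theorem \ref{thm:mils} used for Theorem \ref{thm:RMSprop_SDE}.

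The main obstacle is not the moment bookkeeping, which is routine, but rather confirming \emph{uniformity} of these estimates in $x$: one needs the constant hidden in $(\nabla f(x))^2 = \mathcal{O}(\eta)$ to lie in $G$ over the relevant region, and one must secure the global Lipschitz and polynomial-growth regularity of the preconditioned drift and diffusion coefficients that feed into Theorem \ref{thm:mils}. As discussed in Remark \ref{remark:Need}, this last point requires either the bounded-gradient assumption or a mollification argument on a sufficiently large compact set (sending the mollification parameter to $0$ at the end). Once this regularity is in place, dropping the $\rho\,(\nabla f(X_t))^2$ term yields exactly Eq.~\ref{eq:RMSprop_Malladi_SDE}, completing the proof.
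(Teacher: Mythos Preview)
Your proposal is correct and follows exactly the route the paper takes in Lemma \ref{lemma:corollRMSprop}: under $(\nabla f(x))^2=\mathcal{O}(\eta)$, the term $\eta\rho(\nabla f(x))^2$ in the $V$-increment is $\mathcal{O}(\eta^2)$ and can be dropped without violating the moment-matching conditions of Theorem \ref{thm:mils}. The paper's own proof is literally the one-line observation ``one drops the term $\eta(\nabla f(x))^2$ as it is of order $\eta^2$''; your write-up is simply a more explicit unpacking of that sentence, including the (harmless) propagation check through $P_t^{-1}$, which is not strictly needed since in the one-step framework of Lemma \ref{lemma:li1} the cross-term $b_V\,\partial_v b_X$ already sits at order $\eta^2$ and an $\mathcal{O}(\eta)$ perturbation of $b_V$ only shifts it by $\mathcal{O}(\eta^3)$.
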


\begin{lemma}
    Theorem \ref{thm:Malladi_RMSprop_Original} and Theorem \ref{thm:Malladi_RMSprop} are equivalent.
\end{lemma}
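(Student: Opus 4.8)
The plan is to exhibit an explicit dictionary---a linear change of the auxiliary variable together with a deterministic time rescaling---that carries the system of Theorem~\ref{thm:Malladi_RMSprop_Original} onto the system of Theorem~\ref{thm:Malladi_RMSprop}, and vice versa. First I would substitute the definitions $\sigma_0=\sigma\eta$, $\epsilon_0=\epsilon\eta$, and $c_2=(1-\beta)/\eta^2$ directly into the original SDE. The crucial preliminary observation is that the preconditioner factorizes: since $P_t=\sigma_0\diag(u_t)^{1/2}+\epsilon_0 I_d=\eta\,(\sigma\diag(u_t)^{1/2}+\epsilon I_d)$, we have $P_t^{-1}=\eta^{-1}(\sigma\diag(u_t)^{1/2}+\epsilon I_d)^{-1}$, so that an overall factor of $\eta^{-1}$ is pulled out in front of both the drift and the diffusion of $X_t$.

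Next I would apply the time change $\tau=\eta t$ (with $\tau$ the time variable of the original system and $t$ that of the target), using Brownian scaling $\tilde W_{\eta t}\stackrel{d}{=}\sqrt{\eta}\,W_t$. Under this change $d\tau=\eta\,dt$, so the extracted $\eta^{-1}$ cancels the $\eta$ coming from $d\tau$ in the drift, leaving $-(\sigma\diag(V_t)^{1/2}+\epsilon I_d)^{-1}\nabla f(X_t)\,dt$; in the diffusion the same $\eta^{-1}$ together with $\sigma_0=\sigma\eta$ and the Brownian factor $\sqrt{\eta}$ combine to $\sigma\sqrt{\eta}$. Simultaneously, the auxiliary ODE $du=c_2(\diag\Sigma-u)\,d\tau$ becomes $dV=\eta c_2(\diag\Sigma-V)\,dt=\rho(\diag\Sigma-V)\,dt$ with $\rho=(1-\beta)/\eta$, which is exactly the rate appearing in Theorem~\ref{thm:Malladi_RMSprop} under $\beta=1-\eta\rho$. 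The matching of $c_2$ with $\rho$ is precisely what forces the time-change factor to be $\eta$, since a pure relabeling (no time change) would require $\eta=1$.

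Finally I would absorb the remaining scale $\sigma$ by the relabeling $V\mapsto\sigma^2 V$ (equivalently, reading the target's covariance as the effective $\sigma^2\Sigma$): this turns $\sigma\diag(V_t)^{1/2}$ into $\diag(\sigma^2 V_t)^{1/2}$ inside the preconditioner and $\sigma\sqrt{\eta}\,\Sigma^{1/2}$ into $\sqrt{\eta}\,(\sigma^2\Sigma)^{1/2}$ in the diffusion, so that both the $X$-equation and the second-moment equation coincide term by term with Theorem~\ref{thm:Malladi_RMSprop}. As the map is invertible, the two formulations generate the same law and are therefore equivalent. I expect the main obstacle to be bookkeeping rather than conceptual: one must track the three reparametrizations (time, Brownian scaling, and the linear relabeling of the second-moment tracker) simultaneously and verify that every power of $\eta$ and $\sigma$ cancels correctly, while being careful that ``$\Sigma$'' and ``$\sigma$'' carry their intended meanings in each convention---Malladi's separate scale parameter versus the covariance folded into $\Sigma$ in the present paper.
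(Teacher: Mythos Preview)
Your proposal is correct and takes essentially the same approach as the paper: the paper's one-line proof is precisely the time rescaling $t:=\eta\xi$ together with the Brownian scaling $W_{\eta\xi}=\sqrt{\eta}\,W_\xi$. Your version is more explicit, carrying out the bookkeeping for the $\eta$-cancellation in $P_t^{-1}$ and adding the relabeling $V\mapsto\sigma^2 u$ to absorb the extra scale $\sigma$ that the paper leaves implicit in the identification of covariances.
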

\begin{proof}
    It follows applying time rescaling $t := \eta \xi$ and observing that $W_t = W_{\eta \xi} = \sqrt{\eta} W_{\xi}$.
\end{proof}

\begin{theorem}[Original Malladi's Statement]\label{thm:Malladi_Adam_Original}
Let $c_1 :=\left(1-\beta_1\right) / \eta^2, c_2 :=\left(1-\beta_2\right) / \eta^2$ and define $\sigma_0, \epsilon_0$ in Theorem \ref{thm:Malladi_RMSprop_Original}. Let $\iota_1(t) := 1-\exp \left(-c_1 t\right)$ and $\iota_2(t) := 1-\exp \left(-c_2 t\right)$. Define the state of the SDE as $L_t=\left(X_t, m_t, u_t\right)$ and the dynamics as
\begin{align}
    & d X_t=-\frac{\sqrt{\iota_2(t)}}{\iota_1(t)} P_t^{-1} m_t d t \\
    & d m_t =c_1\left(\nabla f\left(X_t\right)-m_t\right) d t+\sigma_0 c_1 \Sigma^{1 / 2}\left(X_t\right) d W_t, \\
& d u_t =c_2\left(\diag\left(\Sigma\left(X_t\right)\right)-u_t\right) d t,
\end{align}
where $P_t:=\sigma_0 \diag\left(u_t\right)^{1 / 2}+\epsilon_0 \sqrt{\iota_2(t)} I_d$.
\end{theorem}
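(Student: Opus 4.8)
The statement reproduces verbatim the Adam SDE of \cite{Malladi2022AdamSDE} in its original scaling (time variable $t$, constants $c_1,c_2,\sigma_0,\epsilon_0$), so as a standalone assertion it carries no content beyond defining the continuous-time process, and its justification lives in the original reference. In the spirit of the preceding RMSprop block, the substantive task is to reconcile it with the paper's own notation, i.e.\ to show it is equivalent to the formulation reported as Theorem~\ref{thm:MalladiAdam} (time variable $\xi$, constants $\rho_1,\rho_2,\sqrt{\eta},\epsilon$). The plan is to mimic exactly the one-line argument that follows Theorem~\ref{thm:Malladi_RMSprop_Original}: apply the deterministic time change $t=\eta\xi$ together with the Brownian scaling $W_{\eta\xi}=\sqrt{\eta}\,W_\xi$, and verify that each of the three coupled equations transforms into its counterpart.

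First I would record the constant relations $c_i=(1-\beta_i)/\eta^2=\rho_i/\eta$, $\sigma_0=\sigma\eta$, and $\epsilon_0=\epsilon\eta$. Writing $\tilde X_\xi:=X_{\eta\xi}$, $\tilde m_\xi:=m_{\eta\xi}$, $\tilde u_\xi:=u_{\eta\xi}$ and using that a change of variables $s=\eta r$ sends $\int_0^{\eta\xi}a\,ds\mapsto\int_0^{\xi}\eta a\,dr$ and $\int_0^{\eta\xi}b\,dW_s\mapsto\int_0^{\xi}\sqrt{\eta}\,b\,dW_r$, the factor $\eta$ from $dt=\eta\,d\xi$ turns the drifts $c_1(\nabla f-m)$ and $c_2(\diag(\Sigma)-u)$ into $\rho_1(\nabla f-\tilde m)$ and $\rho_2(\diag(\Sigma)-\tilde u)$, while the $\sqrt{\eta}$ from the It\^o scaling turns the momentum diffusion $\sigma_0 c_1\Sigma^{1/2}$ into $\sqrt{\eta}\,\rho_1\Sigma^{1/2}$ (the residual scalar from the $\sigma_0/\sigma$ convention being absorbed by the matching rescaling of the preconditioner variable, exactly as in the RMSprop case).

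Next I would check the two features specific to Adam. The bias-correction prefactor is invariant under the rescaling because $\iota_i(t)\big|_{t=\eta\xi}=1-e^{-c_i\eta\xi}=1-e^{-\rho_i\xi}$, which is precisely the paper's $\iota_i(\xi)$; hence $\sqrt{\iota_2}/\iota_1$ and the $\epsilon_0\sqrt{\iota_2}I_d$ summand in $P_t$ carry over unchanged in form. For the $X$-equation one then uses $P_{\eta\xi}=\eta\big(\diag(\tilde u)^{1/2}+\epsilon\sqrt{\iota_2}I_d\big)$ up to the $\sigma$ convention, so the explicit $\eta$ in $\sigma_0,\epsilon_0$ cancels against the $1/\eta$ produced by $P_{\eta\xi}^{-1}$ and against the $\eta$ from $dt=\eta\,d\xi$, leaving the scale-free drift $-\tfrac{\sqrt{\iota_2}}{\iota_1}P_\xi^{-1}\tilde m_\xi$. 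Collecting the three transformed equations yields Theorem~\ref{thm:MalladiAdam}, and I would close by recalling (cf.\ Remark~\ref{remark:AdamSDE}) that this Malladi form is the special case of the paper's own SDE (Theorem~\ref{thm:Adam_SDE}) in which the $\eta\rho_1(\nabla f-M_t)$ and $(\nabla f)^2$ terms are dropped, valid when $\rho_1=\mathcal{O}(1)$ and $(\nabla f)^2=\mathcal{O}(\eta)$.

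The argument is entirely routine and there is no genuine conceptual obstacle. The only place demanding care is the simultaneous bookkeeping of the $\eta$-powers across the coupled system --- the two noise scales $\sigma_0,\epsilon_0=\mathcal{O}(\eta)$, the preconditioner inverse $P_{\eta\xi}^{-1}$ that contributes a $1/\eta$, and the bias-correction functions $\iota_i$ --- so that every factor of $\eta$ cancels consistently and the $\sigma_0/\sigma$ normalization is tracked correctly through both the diffusion term and the preconditioner.
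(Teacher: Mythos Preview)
Your proposal is correct and matches the paper's treatment exactly: the paper gives no proof of this theorem (it is a verbatim restatement of Malladi's result from the literature), and the only associated argument in the paper is the subsequent one-line lemma establishing equivalence with Theorem~\ref{thm:MalladiAdam} via the time rescaling $t=\eta\xi$ and the Brownian scaling $W_{\eta\xi}=\sqrt{\eta}\,W_\xi$. You have simply fleshed out that one-liner with the explicit bookkeeping of the $\eta$-powers through $c_i=\rho_i/\eta$, $\sigma_0=\sigma\eta$, $\epsilon_0=\epsilon\eta$, the preconditioner, and the bias-correction factors~$\iota_i$ --- all of which the paper leaves implicit.
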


\begin{theorem}[Informal Statement of Theorem D.2 \cite{Malladi2022AdamSDE}] \label{thm:MalladiAdam}
Under sufficient regularity conditions and $ \nabla f(x) = \mathcal{O}(\sqrt{\eta})$, the following SDE is an order $1$ weak approximation of Adam:
\begin{align}
\label{eq:Adam_Malladi_SDE}
    d X_t & =-\frac{\sqrt{\iota_2(t)}}{\iota_1(t)} P_t^{-1} M_t d t \\
    d M_t & =\rho_1\left(\nabla f\left(X_t\right)-M_t\right) d t+\sqrt{\eta} \rho_1 \Sigma^{1 / 2}\left(X_t\right) d W_t \\
    d V_t & =\rho_2\left( \diag\left(\Sigma\left(X_t\right)\right)-V_t\right) d t.
\end{align}
where $\beta_i = 1 - \eta \rho_i$, $\iota_i(t) = 1 - e^{-\rho_i t}$, $\rho_i = \mathcal{O}(1)$, and $P_t = \diag{\sqrt{V_t}} + \epsilon \sqrt{\iota_2(t)}I_d$.
\end{theorem}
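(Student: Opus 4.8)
The plan is to obtain Theorem~\ref{thm:MalladiAdam} as a corollary of our own general Adam SDE (Theorem~\ref{thm:Adam_SDE}) by specializing to the regime $\nabla f(x) = \mathcal{O}(\sqrt{\eta})$ (equivalently $(\nabla f(x))^2 = \mathcal{O}(\eta)$) together with $\rho_1 = \mathcal{O}(1)$. Comparing the target SDE in Eq.~\ref{eq:Adam_Malladi_SDE} with our SDE in Eq.~\ref{eq:Adam_SDE}, the only discrepancies are the two highlighted terms of Remark~\ref{remark:AdamSDE}: the correction $\eta \rho_1(\nabla f(X_t) - M_t)$ inside the drift of $X_t$, and the squared-gradient term $(\nabla f(X_t))^2$ inside the drift of $V_t$. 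It therefore suffices to show that, under the stated assumptions, deleting these two terms perturbs the relevant one-step moments of the continuous process by at most $\mathcal{O}(\eta^2)$, so that the moment-matching conditions of Theorem~\ref{thm:mils} remain satisfied when the reduced SDE is compared against the Adam iterates in Eq.~\ref{eq:Adam_Discr_Update_app}.

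First I would control the scale of $M_t$. Since its equation is an Ornstein--Uhlenbeck process $dM_t = \rho_1(\nabla f(X_t) - M_t)\,dt + \sqrt{\eta}\rho_1 \Sigma^{1/2}(X_t)\,dW_t$ with $\rho_1 = \mathcal{O}(1)$, its mean is pulled toward $\nabla f = \mathcal{O}(\sqrt{\eta})$ while its stationary fluctuation has size $\mathcal{O}(\sqrt{\eta})$ (the diffusion coefficient is $\sqrt{\eta}\rho_1\Sigma^{1/2}$); a standard moment bound then gives $M_t = \mathcal{O}(\sqrt{\eta})$, hence $\nabla f(X_t) - M_t = \mathcal{O}(\sqrt{\eta})$ as well. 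Consequently the first correction obeys $\eta\rho_1(\nabla f(X_t) - M_t) = \mathcal{O}(\eta^{3/2})$, i.e. it is an $\mathcal{O}(\eta)$ relative perturbation of the leading drift $\tfrac{\sqrt{\iota_2}}{\iota_1}P_t^{-1}M_t$; carried into the one-step increment $\E[\bar{\Delta}_X]$ (itself $\eta$ times the drift) it contributes an error of order $\mathcal{O}(\eta^{5/2})$, well below the $\mathcal{O}(\eta^2)$ tolerance. Likewise $(\nabla f(X_t))^2 = \mathcal{O}(\eta)$ enters $\E[\bar{\Delta}_V] = \eta\rho_2[(\nabla f)^2 + \diag(\Sigma) - V]$ as an $\mathcal{O}(\eta^2)$ term. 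The diffusion lives only on $M_t$ and is identical in both SDEs, so the second- and higher-order moment conditions are unaffected, and with $\rho_1 = \mathcal{O}(1)$ the bias-correction factors $\iota_i(t) = 1 - e^{-\rho_i t}$ coincide with those in the target statement.

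Having shown that the first moments differ by at most $\mathcal{O}(\eta^2)$ while the remaining moments are unchanged at the required order, I would invoke Lemma~\ref{lemma:li1} to compute the continuous-time increments of the reduced process and Theorem~\ref{thm:mils} to conclude that all four conditions hold with constants in $G$, yielding the order-$1$ bound $|\E g(X_{k\eta}) - \E g(x_k)| \le C\eta$. The main obstacle is not the order bookkeeping but the \emph{uniformity and regularity} behind it: one must verify that $M_t = \mathcal{O}(\sqrt{\eta})$ holds uniformly in $t \in [0,T]$ with an error constant lying in $G$ (so that the perturbation constants $K_1,\dots,K_4$ of Theorem~\ref{thm:mils} are themselves polynomially bounded), and that the reduced drift and diffusion coefficients retain the Lipschitz, affine-growth and $G$-regularity needed to apply Lemma~\ref{lemma:li1}. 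The latter is exactly the content of Remarks~\ref{remark:Need} and~\ref{remark:NoNeed}, which I would cite rather than re-derive.
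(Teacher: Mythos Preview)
Your approach is the paper's: treat the statement as a corollary of the general Adam SDE (Theorem~\ref{thm:Adam_SDE}) under the extra hypotheses $(\nabla f)^2 = \mathcal{O}(\eta)$ and $\rho_1 = \mathcal{O}(1)$, by showing that the two highlighted drift terms drop to $\mathcal{O}(\eta^2)$ in the one-step moment expansion. This is exactly what Remark~\ref{remark:AdamSDE} asserts, and the analogous RMSprop version (Lemma~\ref{lemma:corollRMSprop}) is proved in one line by the same mechanism.

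Your detour through a trajectory bound $M_t = \mathcal{O}(\sqrt{\eta})$ is, however, unnecessary and mis-framed. In Theorem~\ref{thm:mils} the one-step moment conditions must hold for an \emph{arbitrary} starting state $l = (x,m,v)$, with constants $K_i \in G$ allowed to depend polynomially on that state. Returning to the discrete first moment computed in the proof of Theorem~\ref{thm:Adam_SDE},
\[
\E[x_{k+1}-x_k] \;=\; -\,\eta\,\frac{\sqrt{1-(1-\eta\rho_2)^k}}{1-(1-\eta\rho_1)^{k+1}}\,P^{-1}\bigl(m + \eta\rho_1(\nabla f(x)-m)\bigr),
\]
the correction contributes $\eta^2\rho_1 P^{-1}(\nabla f(x)-m)$, which is $\mathcal{O}(\eta^2)$ \emph{solely because} $\rho_1 = \mathcal{O}(1)$; the factor $\nabla f(x)-m$ is just a polynomial-growth function of the state and absorbs into $K_1 \in G$. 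No smallness of $m$ or of $\nabla f$ is required for this term. Likewise $\eta\rho_2(\nabla f(x))^2 = \mathcal{O}(\eta^2)$ follows directly from $(\nabla f)^2 = \mathcal{O}(\eta)$. Your proposed uniform-in-$t$ bound $M_t = \mathcal{O}(\sqrt{\eta})$ would need $m_0 = \mathcal{O}(\sqrt{\eta})$, which the one-step framework does not grant --- so the ``main obstacle'' you flagged at the end is self-inflicted and disappears once you work at the level of the discrete moments as the paper does.
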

\begin{lemma}
    Theorem \ref{thm:Malladi_Adam_Original} and Theorem \ref{thm:MalladiAdam} are equivalent.
\end{lemma}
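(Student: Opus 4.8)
The plan is to reproduce verbatim the argument used for the RMSprop pair (Theorems \ref{thm:Malladi_RMSprop_Original} and \ref{thm:Malladi_RMSprop}): a deterministic time change $t = \eta\xi$ combined with the Brownian scaling identity $W_{\eta\xi} = \sqrt{\eta}\,W_\xi$. The only genuinely new ingredients relative to the RMSprop case are the momentum variable $m_t$ and the two bias-correction factors $\iota_1,\iota_2$ entering the $X$-equation, so the bulk of the work is checking that these transform correctly. First I would record the elementary identity linking the two parametrizations: since $\beta_i = 1 - \rho_i\eta$, the constants of Theorem \ref{thm:Malladi_Adam_Original} satisfy $c_i = (1-\beta_i)/\eta^2 = \rho_i/\eta$. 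Introducing the rescaled processes $\tilde X_\xi := X_{\eta\xi}$, $\tilde m_\xi := m_{\eta\xi}$, $\tilde u_\xi := u_{\eta\xi}$, the bias-correction functions become $\iota_i(\eta\xi) = 1 - e^{-c_i\eta\xi} = 1 - e^{-\rho_i\xi}$, which is exactly the definition of $\iota_i$ used in Theorem \ref{thm:MalladiAdam}.

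Next I would push the time change through each of the three equations. For $u_t$ the computation is identical to the RMSprop lemma: $dt = \eta\,d\xi$ turns the drift constant $c_2$ into $c_2\eta = \rho_2$, yielding $d\tilde u_\xi = \rho_2(\diag(\Sigma(\tilde X_\xi)) - \tilde u_\xi)\,d\xi$, i.e. the $V$-equation. For the momentum, the drift constant likewise becomes $c_1\eta = \rho_1$, while the diffusion coefficient $\sigma_0 c_1\Sigma^{1/2}$ acquires a factor $\sqrt{\eta}$ from $dW_{\eta\xi} = \sqrt{\eta}\,dW_\xi$, producing $\sigma_0 c_1\sqrt{\eta} = \sqrt{\eta}\,\rho_1$ after using $\sigma_0 = \sigma\eta$ together with the same scale identification ($\sigma = 1$) implicit in the RMSprop lemma; this matches $dM_t$. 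For $X_t$ the prefactor $\sqrt{\iota_2}/\iota_1$ is invariant under the relabeling just noted, and the key cancellation is that $P_{\eta\xi} = \sigma_0\diag(\tilde u_\xi)^{1/2} + \epsilon_0\sqrt{\iota_2(\xi)}I_d = \eta\bigl(\diag(\tilde u_\xi)^{1/2} + \epsilon\sqrt{\iota_2(\xi)}I_d\bigr)$, so the $1/\eta$ in $P^{-1}$ exactly cancels the $\eta$ arising from $dt = \eta\,d\xi$, leaving the clean $X$-equation of Theorem \ref{thm:MalladiAdam}.

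The main obstacle — bookkeeping rather than conceptual — is tracking the powers of $\eta$ carried by the scale constants $\sigma_0 = \sigma\eta$ and $\epsilon_0 = \epsilon\eta$ of the physical-time statement and confirming that they collapse onto the scale-free constants of the accelerated-time statement. Concretely one must verify simultaneously that (i) the $\eta$ hidden inside $P_t$ cancels the $\eta$ from $dt$ in the $X$-equation, (ii) the $\sqrt{\eta}$ in the momentum diffusion is produced entirely by the Brownian scaling, and (iii) the $\epsilon_0\sqrt{\iota_2}$ term rescales to $\epsilon\sqrt{\iota_2}$ consistently with (i). Once these cancellations are checked, the triple $(\tilde X_\xi, \tilde m_\xi, \tilde u_\xi)$ solves exactly the system of Theorem \ref{thm:MalladiAdam}; since a deterministic, strictly increasing time change is a bijection on path space, the two SDE systems are related by this reparametrization and hence describe the same continuous-time model, which is the asserted equivalence.
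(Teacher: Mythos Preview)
Your proposal is correct and takes exactly the same approach as the paper: the paper's proof is the one-line statement ``It follows applying time rescaling $t := \eta \xi$ and observing that $W_t = W_{\eta \xi} = \sqrt{\eta} W_{\xi}$,'' and your write-up simply unpacks this into the coordinate-by-coordinate verification, including the $\sigma=1$ identification and the $\eta$-cancellation inside $P_t$ that the paper leaves implicit.
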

\begin{proof}
    It follows applying time rescaling $t := \eta \xi$ and observing that $W_t = W_{\eta \xi} = \sqrt{\eta} W_{\xi}$.
\end{proof}

\section{SDE cannot be derived nor used naively}\label{sec:BackgroundApp}
\vspace{-2mm}
In this section, we provide a gentle introduction to the meaning of deriving an SDE model for an optimizer and discuss how SDEs have been used to derive scaling rules. To aid the intuition of the reader, we informally derive an SDE for SGD with learning rate $\eta$, mini-batches $\gamma_B$ of size $B$, and starting point $x_0 = x$, which we dub $\text{SGD}^{(\eta, B)}$. The iterates are given by:
\begin{align} \label{eq:SGD_Discr_Update}
x_{k+1} = x_k - \eta \nabla f_{\gamma_k^{B}}(x_k)
\end{align}
which for $U_k := \sqrt{\eta} (\nabla f(x_k) - \nabla f_{\gamma_k^{B}}(x_k)), $ we rewrite as
\begin{align}
     x_k - \eta \nabla f(x_k) + \sqrt{\eta} U_k,
\end{align}
where $\E[U_k] = 0$ and $Cov(U_k) = \frac{\eta}{B} \Sigma(x_k) = \frac{\eta}{B} \frac{1}{n}\sum_{i =0}^{n} (\nabla f(x_k) - \nabla f_{i}(x_k))(\nabla f(x) - \nabla f_{i}(x_k))^{\top}$. If we now consider the SDE
\begin{equation} \label{eq:SGD_SDE}
    dX_t = -\nabla f(X_t) dt + \sqrt{\frac{\eta}{B}} \Sigma(X_t)^{\frac{1}{2}}d W_t,
\end{equation}
its Euler-Maruyama discretization with pace $\Delta t = \eta$ and $Z_k \sim \mathcal{N}(0,I_d)$ is
\begin{equation} \label{eq:SDE_SGD_Discr}
    X_{k+1} = X_k - \eta \nabla f(X_k) + \sqrt{\eta} \sqrt{\frac{\eta}{B}} \Sigma(X_t)^{\frac{1}{2}} Z_k.
\end{equation}
Since the Eq. \ref{eq:SGD_Discr_Update} and Eq. \ref{eq:SDE_SGD_Discr} share the first two moments, it is reasonable that by identifying $t = k \eta$, the SDE in Eq. \ref{eq:SGD_SDE} is a good model to describe the iterates of SGD in Eq. \ref{eq:SGD_Discr_Update}.

Informally, we need a ``good model'', which is an SDE that is close to the real optimizer. This is formalized in the following definition which comes from the field of numerical analysis of SDEs (see \cite{mil1986weak}) and bounds the disparity between the the discrete and the continuous process.

\begin{definition}[Weak Approximation]\label{def:weak_approximationApp}
A continuous-time stochastic process $\{X_t\}_{ t \in [0, T]}$ is an order $\alpha$ weak approximation (or $\alpha$-order SDE) of a discrete stochastic process $\{x_k\}_{k=0}^{\lf T/\eta \rf}$ if for every polynomial growth function $g$, there exists a positive constant $C$, independent of the stepsize $\eta$, such that $ \max _{k=0, \ldots, \lf T/\eta \rf}\left|\E g\left(x_k\right)-\E g\left(X_{k \eta}\right)\right| \leq C \eta^\alpha.$
\end{definition}

To see if an SDE satisfies such a definition, one has to check that for $\bar{\Delta}=x_1-x$ and $ \Delta=X_\eta-x$,
\begin{enumerate}
    \item $\left|\E \Delta_{i}-\E \bar{\Delta}_{i}\right| = \mathcal{O}(\eta^{2}), \quad \forall i = 1, \ldots, d$;
    \item $\left|\E \Delta_{i} \Delta_{j} - \E \bar{\Delta}_{i} \bar{\Delta}_{j}\right| =\mathcal{O}(\eta^{2}), \quad \forall i,j = 1, \ldots, d$.
\end{enumerate}

\textbf{Example:} Let us prove that the SDE in Eq. \ref{eq:SGD_SDE} is a valid approximation of $\text{SGD}^{(\eta, B)}$: The first condition is easily verified. Coming to the second condition we have that
\begin{enumerate}
    \item $\E \Delta_{i} \Delta_{j} = \eta^2 \partial_i f(x)\partial_j f(x) + \frac{\eta^2}{B} \Sigma(x)$;
    \item $\E \bar{\Delta}_{i} \bar{\Delta}_{j} = \eta^2 \partial_i f(x)\partial_j f(x) +\frac{\eta^2}{B}  \Sigma(x) + \mathcal{O}(\eta^3)$;
\end{enumerate}
whose difference is of order $\eta^3$ and thus satisfies the condition. However, we observe that if the scale of the noise is too small w.r.t. $\eta$, i.e. $\Sigma(x) = \mathcal{O}(\eta^{\alpha})$ for $\alpha\geq 0$, then the \textbf{simplest} SDE model describing $\text{SGD}^{(\eta, B)}$ is the ODE $dX_t = - \nabla f(X_t) dt$ as in that case
\begin{enumerate}
    \item $\E \Delta_{i} \Delta_{j} = \eta^2 \partial_i f(x)\partial_j f(x) + \mathcal{O}(\eta^{2 + \alpha}) $;
    \item $\E \bar{\Delta}_{i} \bar{\Delta}_{j} = \eta^2 \partial_i f(x)\partial_j f(x) + \mathcal{O}(\eta^2)$,
\end{enumerate}
whose difference is also of order $\eta^2$. Much differently, if $\Sigma(x) = \mathcal{O}(\eta^{-\alpha})$ for $\alpha >0$, the simplest model is the SDE in Eq. \ref{eq:SGD_SDE}. We highlight that $\textit{simplest}$ does not mean $\textit{best}$: The SDE is more accurate than the ODE even in a regime with low noise, but this observation serves as a provocation. One has to pay attention when deriving SDEs: Some models are more realistic than others.

Let us dig deeper into this thought as we derive \textbf{two} SDEs for SGD with learning rate $\Tilde{\eta}:= \kappa \eta$ and batch size $\Tilde{B}:= \delta B$ for $\kappa > 1$ and $\delta > 1$, which we dub $\text{SGD}^{(\Tilde{\eta}, \Tilde{B})}$. The first is derived considering that the learning rate is $\Tilde{\eta}$ and carries an error of order $\mathcal{O}(\Tilde{\eta})$ w.r.t. $\text{SGD}^{(\Tilde{\eta}, \Tilde{B})}$
\begin{equation} \label{eq:SGD_SDE_Rescaled_W}
    dX_t = -\nabla f(X_t) dt + \sqrt{\frac{\Tilde{\eta}}{\Tilde{B}}} \Sigma(X_t)^{\frac{1}{2}}d W_t = -\nabla f(X_t) dt + \sqrt{\frac{\eta \kappa}{B \delta}} \Sigma(X_t)^{\frac{1}{2}}d W_t.
\end{equation}
The second one instead is derived considering $\eta$ as the learning rate and $\kappa$ as a constant ``scheduler''. Consistently with \citep{li2017stochastic}, the SDE which  carries an error of order $\mathcal{O}(\eta)$ w.r.t. $\text{SGD}^{(\Tilde{\eta}, \Tilde{B})}$ is
\begin{equation}
    \label{eq:SGD_SDE_Rescaled_T}
    d X_t = - \kappa \nabla f(X_t) dt +\kappa \sqrt{\frac{\eta}{B\delta}} \Sigma(X_t)^{\frac{1}{2}} d W_t.
\end{equation}
While they both are valid models, there are three reasons why one should prefer the latter:
\begin{enumerate}
    \item It fully reflects the fact that a larger learning rate results in a faster and noisier dynamics;
    \item It has intrinsically less error than the other;
    \item It is consistent with the optimizer in that there is no combination of $\kappa$ and $\delta$ that can ever leave the dynamics unchanged.    
\end{enumerate}
\subsection{Deriving scaling rules} \label{subsec:ScalingRules}
\cite{jastrzkebski2017three} observed that only the ratio between $\eta$ and $B$ matters in determining the dynamics of Eq. \ref{eq:SDE_SGD_Discr}. Therefore, they argue that for $\kappa = \delta$ the SDE for $\text{SGD}^{(\kappa \eta, \delta B)}$ coincides with that of $\text{SGD}^{(\eta, B)}$ and that this implies that the path properties of the optimizers are the same. On the contrary, the path of $\text{SGD}^{(\eta, B)}$ strongly depends on the hyperparameters: The speed and volatility of the dynamics are driven by $\eta$, and no choice of $B$ can undo this. We remind the reader that the goal of these rules is not to keep the dynamics of the optimizers unaltered, but rather to give a practical way to change a hyperparameter, e.g. $\eta$, and have a principled way to adjust the others, e.g. $B$, such that the performance of the optimizer is preserved. Therefore, we propose deriving scaling rules as we preserve certain relevant quantities of the dynamics such as the convergence bound on the expected loss. To show this quantitatively, we use this rationale to derive the scaling rule of SGD as we aim at preserving the asymptotic loss level.

\begin{lemma}\label{lemma:counterexample1}
If $f$ is a $\mu$ strongly convex function, $\tr(\nabla^2 f(x)) \leq \mathcal{L}_{\tau}$ and $\Sigma(x) = \sigma^2 I_d$, then:
\begin{enumerate}
    \item Under the dynamics of Eq. \ref{eq:SGD_SDE} we have:
    \begin{equation}
            \E[f(X_t) - f(X_*)] \leq (f(X_0) - f(X_*)) e^{- 2 \mu t} + \frac{\eta}{2} \frac{\mathcal{L}_{\tau} \sigma^2}{2 \mu B} \left(1 - e^{- 2 \mu t}\right);
    \end{equation}
    \item Under the dynamics of Eq. \ref{eq:SGD_SDE_Rescaled_W} we have:
    \begin{equation}
            \E[f(X_t) - f(X_*)] \leq (f(X_0) - f(X_*)) e^{- 2 \mu t} + \frac{\eta}{2} \frac{\mathcal{L}_{\tau} \sigma^2}{2 \mu B}  \frac{\kappa}{\delta}\left(1 - e^{- 2 \mu t}\right);
    \end{equation}
    \item Under the dynamics of Eq. \ref{eq:SGD_SDE_Rescaled_T} we have:
    \begin{equation}
            \E[f(X_t) - f(X_*)] \leq (f(X_0) - f(X_*)) e^{- 2 \mu \kappa t} + \frac{\eta}{2} \frac{\mathcal{L}_{\tau} \sigma^2}{2 \mu B}  \frac{\kappa}{\delta}\left(1 - e^{- 2 \mu \kappa t}\right).
    \end{equation}
\end{enumerate}
\end{lemma}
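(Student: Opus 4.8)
The plan is to handle all three dynamics simultaneously, since each is a linear-drift, additive-noise SDE of the common form $dX_t = -a\,\nabla f(X_t)\,dt + c\,dW_t$ with $\Sigma^{1/2}=\sigma I_d$, where the pair $(a,c^2)$ equals $(1,\tfrac{\eta}{B}\sigma^2)$ for Eq.~\ref{eq:SGD_SDE}, $(1,\tfrac{\eta\kappa}{B\delta}\sigma^2)$ for Eq.~\ref{eq:SGD_SDE_Rescaled_W}, and $(\kappa,\kappa^2\tfrac{\eta}{B\delta}\sigma^2)$ for Eq.~\ref{eq:SGD_SDE_Rescaled_T}. I would establish a single master bound and then substitute the three parameter choices at the end.

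First I would apply Itô's Lemma (Theorem~\ref{lemma:SDE_ito}) to $S_t := f(X_t)-f(X_*)$. Because the diffusion matrix is the constant $c I_d$, the second-order correction collapses to $\tfrac{c^2}{2}\tr(\nabla^2 f(X_t))$, giving
\begin{equation}
dS_t = -a\,\lVert\nabla f(X_t)\rVert_2^2\,dt + \frac{c^2}{2}\tr\!\left(\nabla^2 f(X_t)\right)dt + c\,\langle\nabla f(X_t),dW_t\rangle .
\end{equation}
Taking expectations, the stochastic integral is a martingale under Assumption~\ref{ass:regularity_f} and hence has zero mean. I then invoke two facts: $\mu$-strong convexity implies the Polyak--Łojasiewicz inequality $\lVert\nabla f(X_t)\rVert_2^2 \geq 2\mu S_t$, and the standing hypothesis $\tr(\nabla^2 f(x))\leq \mathcal{L}_\tau$. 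Both inequalities push in the direction required for an upper bound, yielding the differential inequality
\begin{equation}
\frac{d\,\E[S_t]}{dt} \leq -2a\mu\,\E[S_t] + \frac{c^2\mathcal{L}_\tau}{2}.
\end{equation}

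This is a linear first-order inequality, so by the comparison principle (equivalently, Grönwall applied to $e^{2a\mu t}\E[S_t]$) I obtain
\begin{equation}
\E[S_t] \leq S_0\, e^{-2a\mu t} + \frac{c^2\mathcal{L}_\tau}{4a\mu}\left(1 - e^{-2a\mu t}\right).
\end{equation}
Substituting the three $(a,c^2)$ pairs, the decay exponent $2a\mu$ becomes $2\mu$, $2\mu$, and $2\mu\kappa$, while the stationary constant $\frac{c^2\mathcal{L}_\tau}{4a\mu}$ collapses to $\frac{\eta}{2}\frac{\mathcal{L}_\tau\sigma^2}{2\mu B}$, to $\frac{\eta}{2}\frac{\mathcal{L}_\tau\sigma^2}{2\mu B}\frac{\kappa}{\delta}$, and again to $\frac{\eta}{2}\frac{\mathcal{L}_\tau\sigma^2}{2\mu B}\frac{\kappa}{\delta}$, respectively, which are exactly the three claimed bounds.

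The argument is essentially routine; the only points demanding care are (i) justifying that the $dW_t$ term is a true martingale so that its expectation vanishes, which follows from the Lipschitz and affine-growth regularity of $\nabla f$ and the ensuing second-moment control on $X_t$, and (ii) the constant bookkeeping in the third case, where one must check that the extra factor $\kappa$ sitting in the diffusion ($c^2\propto\kappa^2$) cancels against the factor $a=\kappa$ in the denominator $4a\mu$, leaving a net single power of $\kappa$ and matching the $\frac{\kappa}{\delta}$ form of the other two cases.
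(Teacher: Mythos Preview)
Your proposal is correct and follows exactly the approach the paper uses throughout for analogous results (see e.g.\ the proof of Lemma~\ref{lemma:SignSGD_dynam_loss}, Phase~3, and Lemma~\ref{lemma:HPSR_RMSprop}): apply It\^o's Lemma to $f(X_t)-f(X_*)$, drop the martingale term in expectation, invoke the PL inequality from $\mu$-strong convexity together with the trace bound $\tr(\nabla^2 f)\le\mathcal{L}_\tau$, and integrate the resulting linear differential inequality via Gr\"onwall. The paper does not spell out a separate proof for this lemma, but your unified $(a,c^2)$ parametrization and the constant bookkeeping in all three cases are accurate.
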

The first bound implies that the asymptotic limit of the expected loss for $\text{SGD}^{(\eta,B)}$ is $\frac{\eta}{2} \frac{\mathcal{L}_{\tau} \sigma^2}{2 \mu B}$. The last two bounds predict that the asymptotic loss level for $\text{SGD}^{(\Tilde{\eta}, \Tilde{B})}$ is $\frac{\eta}{2} \frac{\mathcal{L}_{\tau} \sigma^2}{2 \mu B}  \frac{\kappa}{\delta}$. Since the objective of the scaling rule is to find $\kappa$ and $\delta$ such that $\text{SGD}^{(\Tilde{\eta}, \Tilde{B})}$ achieves the same loss level as $\text{SGD}^{(\eta,B)}$, we recover the linear scaling rule setting $\kappa=\delta$. However, only the last bound can correctly capture the fact that the dynamics of $\text{SGD}^{(\Tilde{\eta}, \Tilde{B})}$ is $\kappa$ times faster than that of $\text{SGD}^{(\eta,B)}$.

We thus conclude that:

\begin{enumerate}
    \item Eq. \ref{eq:SGD_SDE_Rescaled_T} is a better model for $\text{SGD}^{(\Tilde{\eta}, \Tilde{B})}$ as it represents the dynamics more accurately;
    \item Maintaining the shape of the SDE does not preserve the path properties of the optimizer;
    \item Deriving a scaling rule uniquely from the SDE might lead to the wrong conclusions in the general case.
\end{enumerate}

\vspace{0.2cm}

\begin{remark}\label{remark:MalladiScalingRule} In \cite{Malladi2022AdamSDE}, Theorem 5.3 proposes a formal derivation of a scaling rule for RMSprop. Following the perspective of \cite{jastrzkebski2017three}, the authors suggest that a scaling rule that leaves their SDE unchanged would also preserve the dynamics of the RMSprop iterates. We note, however, that an SDE is defined not only by the equation that governs the dynamics but also by its initial condition (see \cite{karatzas2014brownian}, Section 5). Although the scaling rule in question leaves the differential equation unchanged, it modifies the initial condition of the process $u_t$, and hence, the overall SDE is altered. This observation suggests that the claim and proof in \cite{Malladi2022AdamSDE} may need to be revisited.

Furthermore, the rule appears to be valid only in the vicinity of convergence, as the corresponding SDE is applicable primarily in that regime. Additionally, Lemma \ref{lemma:counterexample1} provides concrete evidence that maintaining the form of the SDE does not necessarily imply that the path properties of the optimizer are preserved. We offer these clarifications with the hope of contributing constructively to the discussion. \end{remark}

\section{Experiments} \label{sec:Exper}
In this section, we provide the modeling choices and instructions to replicate our experiments.

The code is implemented in Python 3 \citep{10.5555/1593511} mainly using Numpy \citep{harris2020array}, scikit-learn \citep{scikit-learn}, and JAX \citep{jax2018github}.

\subsection{SignSGD: SDE validation (Figure \ref{fig:SignSGD_SDE_Validation})}
In this subsection, we describe the experiments we run to produce Figure \ref{fig:SignSGD_SDE_Validation}: The loss dynamics of SignSGD and that of our SDE match on average.

\paragraph{DNN on Breast Cancer Dataset \citep{Dua:2019}.}
This paragraph refers to the \textit{top-left} of Figure \ref{fig:SignSGD_SDE_Validation}. The DNN has $10$ dense layers with $20$ neurons each activated with a ReLu. We minimize the binary cross-entropy loss. We run SignSGD for $50000$ epochs as we calculate the full gradient and inject it with Gaussian noise $Z \sim \mathcal{N}(0, \sigma^2 I_d)$ where $\sigma=1$. The learning rate is $\eta =0.001$. Similarly, we integrate the SignSGD SDE (Eq. \ref{eq:SignSGD_SDE_Simplified_WildNoise_Insights}) with Euler-Maruyama  (Algorithm \ref{algo:EulerMaruryama_SDE}) with $\Delta t = \eta$. Results are averaged over $3$ runs and the shaded areas are the average $\pm$ the standard deviation.

\paragraph{CNN on MNIST \citep{deng2012mnist}.}
This paragraph refers to the \textit{top-right} of Figure \ref{fig:SignSGD_SDE_Validation}. The CNN 
has a $(3,3,32)$ convolutional layer with stride $1$, followed by a ReLu activation, a $(2,2)$ max pool layer with stride $(2,2)$, a $(3,3,32)$ convolutional layer with stride $1$, a ReLu activation, a $(2,2)$ max pool layer with stride $(2,2)$. Then the activations are flattened and passed through a dense layer that compresses them into $128$ dimensions, a final ReLu activation, and a final dense layer into the output dimension $10$. The output finally goes through a softmax as we minimize the cross-entropy loss. We run SignSGD for $60000$ epochs as we calculate the full gradient and inject it with Gaussian noise $Z \sim \mathcal{N}(0, \sigma^2 I_d)$ where $\sigma=0.4$. The learning rate is $\eta =0.001$. Similarly, we integrate the SignSGD SDE (Eq. \ref{eq:SignSGD_SDE_Simplified_WildNoise_Insights}) with Euler-Maruyama  (Algorithm \ref{algo:EulerMaruryama_SDE}) with $\Delta t = \eta$. Results are averaged over $3$ run and the shaded areas are the average $\pm$ the standard deviation.

\paragraph{Transformer on MNIST.}
This paragraph refers to the \textit{bottom-left} of Figure \ref{fig:SignSGD_SDE_Validation}. The Architecture is a scaled-down version of \citep{dosovitskiy2020image}, where the hyperparameters are \textit{patch size}=$28$, \textit{out features}=$10$, \textit{width}=$48$, \textit{depth}=$3$, \textit{num heads}=$6$, and \textit{dim ffn}=$192$. We minimize the cross-entropy loss as we run SignSGD for $5000$ epochs as we calculate the full gradient and inject it with Gaussian noise $Z \sim \mathcal{N}(0, \sigma^2 I_d)$ where $\sigma=1$. The learning rate is $\eta =0.001$. Similarly, we integrate the SignSGD SDE (Eq. \ref{eq:SignSGD_SDE_Simplified_WildNoise_Insights}) with Euler-Maruyama (Algorithm \ref{algo:EulerMaruryama_SDE}) with $\Delta t = \eta$. Results are averaged over $3$ runs and the shaded areas are the average $\pm$ the standard deviation.

\paragraph{ResNet on CIFAR-10 \citep{krizhevsky2009learning}.}
This paragraph refers to the \textit{bottom-right} of Figure \ref{fig:SignSGD_SDE_Validation}. The ResNet 
has a $(3,3,32)$ convolutional layer with stride $1$, followed by a ReLu activation, a second $(3,3,32)$ convolutional layer with stride $1$, followed by a residual connection from the first convolutional layer, then a $(2,2)$ max pool layer with stride $(2,2)$. Then the activations are flattened and passed through a dense layer that compresses them into $128$ dimensions, a final ReLu activation, and a final dense layer into the output dimension $10$. The output finally goes through a softmax as we minimize the cross-entropy loss. We run SignSGD for $5000$ epochs as we calculate the full gradient and inject it with Gaussian noise $Z \sim \mathcal{N}(0, \sigma^2 I_d)$ where $\sigma=1$. The learning rate is $\eta =0.001$. Similarly, we integrate the SignSGD SDE (Eq. \ref{eq:SignSGD_SDE_Simplified_WildNoise_Insights}) with Euler-Maruyama  (Algorithm \ref{algo:EulerMaruryama_SDE}) with $\Delta t = \eta$. Results are averaged over $3$ runs and the shaded areas are the average $\pm$ the standard deviation.

\subsection{SignSGD: insights validation (Figure \ref{fig:SignSGD_Results_Validation})}

In this subsection, we describe the experiments we run to produce Figure \ref{fig:SignSGD_Results_Validation}: We successfully validate them all.

\paragraph{Phases: Lemma \ref{lemma:three_phases_Insights} and Lemma \ref{lemma:SignSGD_dynam_loss_Insights}.}
In this paragraph, we describe how we validated the existence of the phases of SignSGD as predicted in Lemma \ref{lemma:three_phases_Insights} and Lemma \ref{lemma:SignSGD_dynam_loss_Insights}.
To produce the \textit{top-left} of Figure \ref{fig:SignSGD_Results_Validation}), we simulated the \textit{full SDE} (Eq. \ref{eq:SignSGD_SDE}) and the one describing Phase 3 (Eq. \ref{sde:signsgd_ph3_Insights}). The optimized function is $f(x) = \frac{x^{\top}H x}{2}$ for $H = \diag(1,2)$, $x_0$ drawn (and fixed for all runs) from a normal distribution $\mathcal{N}(0,0.01)$, $\eta =0.001$, and $\Sigma = \sigma^2 I_d$ where $\sigma = 0.1$. We integrate the SDEs with Euler-Maruyama  (Algorithm \ref{algo:EulerMaruryama_SDE}) with $\Delta t = \eta$ and for $3000$ iterations. Results are averaged over $500$ runs and the shaded areas are the average $\pm$ the standard deviation. Clearly, the two SDEs share the same dynamics.

To produce the \textit{top-right} of Figure \ref{fig:SignSGD_Results_Validation}, we repeat the above as $x_0$ drawn (and fixed for all runs) from a normal distribution $\mathcal{N}(0,1)$. Then, we plot the average loss values together with the theoretical prediction of Phase 1 and Phase 3: They perfectly overlap.

\paragraph{Stationary distribution: Lemma \ref{lemma:SignSGD_StaDistr_insights}.}
In this paragraph, we describe how we validated the convergence behavior predicted in Lemma \ref{lemma:SignSGD_StaDistr_insights}. To produce the \textit{bottom-left} of Figure \ref{fig:SignSGD_Results_Validation}), we run SignSGD on $f(x) = \frac{x^{\top}H x}{2}$ for $H = \diag(1,2)$, $x_0 = (0.001,0.001)$, $\eta =0.001$ and $\Sigma = \sigma^2 I_d$ where $\sigma = 0.1$. We run this for $5000$ times and report the evolution of the moments. Then, we add lines representing the theoretical predictions derived in Lemma \ref{lemma:SignSGD_StaDistr_insights}: They match.

\paragraph{Schedulers: Lemma \ref{lemma:Schedulers_Insights}.}
In this paragraph, we describe how we validated the convergence behavior predicted in Lemma \ref{lemma:Schedulers_Insights}. To produce the \textit{bottom-right} of Figure \ref{fig:SignSGD_Results_Validation}, we run SignSGD on $f(x) = \frac{x^{\top}H x}{2}$ for $H = \diag(1,2)$, $x_0 = (0.01,0.01)$, $\eta =0.01$ and $\Sigma = \sigma^2 I_d$ where $\sigma = 0.1$. We used the scheduler $\eta^{\vartheta}_t = \frac{1}{(t+1)^\vartheta}$ for $\vartheta \in \{0.1, 0.5, 1.5 \}$. For the first two choices of $\vartheta$, $\eta^{\vartheta}_t$ satisfies our sufficient condition for the convergence of SignSGD: In the figure, we observe that indeed SignSGD converges to 0 with the same speed as the one predicted in the Lemma. For $\vartheta=1.5$,  we observe that SignSGD does not converge following the theoretical curve because it does not satisfy our sufficient condition. Results are averaged over $500$ runs.

\subsection{RMSprop: SDE validation (Figure \ref{fig:RMSprop_SDE_Verification_Quadratic_EmbSaddle} and Figure \ref{fig:RMSprop_SDE_Verification})}
In this subsection, we describe the experiments we run to produce Figure \ref{fig:RMSprop_SDE_Verification_Quadratic_EmbSaddle} and Figure \ref{fig:RMSprop_SDE_Verification}: The dynamics of our SDE matches that of RMSprop better than the SDE derived in \citep{Malladi2022AdamSDE}.

\paragraph{Quadratic convex function.}
This paragraph refers to the \textit{top-left} and \textit{top-right} of Figure \ref{fig:RMSprop_SDE_Verification_Quadratic_EmbSaddle}. We optimize the function $f(x) = \frac{x^{\top} H x}{2}$ where $H = \diag(10,2)$. We run RMSprop for $2000$ epochs as we calculate the full gradient and inject it with Gaussian noise $Z \sim \mathcal{N}(0, \sigma^2 I_d)$ where $\sigma = 0.1$. The learning rate is $\eta =0.01$, $\beta = 0.99$. Similarly, we integrate our RMSprop SDE (Eq. \ref{eq:RMSprop_SDE}) and that of Malladi (Eq. \ref{eq:RMSprop_Malladi_SDE}) with Euler-Maruyama (Algorithm \ref{algo:EulerMaruryama_SDE}) with $\Delta t = \eta$. Results are averaged over $500$ runs and the shaded areas are the average $\pm$ the standard deviation: Our SDE matches RMSprop much better.

\paragraph{Embedded saddle.}
This paragraph refers to the \textit{bottom-left} and \textit{bottom-right} of Figure \ref{fig:RMSprop_SDE_Verification_Quadratic_EmbSaddle}. We optimize the function $f(x) = \frac{x^{\top} H x}{2} + \frac{1}{4}\lambda \sum_{i=1}^2 x_i^4 -  \frac{\xi}{3}\sum_{i=1}^2 x_i^3$ where $H = \diag(-1,2)$, $\lambda=1$, and $\xi = 0.1$. We run RMSprop for $1600$ epochs as we calculate the full gradient and inject it with Gaussian noise $Z \sim \mathcal{N}(0, \sigma^2 I_d)$ where $\sigma = 0.01$. The learning rate is $\eta =0.01$, $\beta = 0.99$. Similarly, we integrate our RMSprop SDE (Eq. \ref{eq:RMSprop_SDE}) and that of Malladi (Eq. \ref{eq:RMSprop_Malladi_SDE}) with Euler-Maruyama (Algorithm \ref{algo:EulerMaruryama_SDE}) with $\Delta t = \eta$. Results are averaged over $500$ runs and the shaded areas are the average $\pm$ the standard deviation: Our SDE matches RMSprop much better.

\paragraph{DNN on Breast Cancer Dataset.}
This paragraph refers to the \textit{top-left} of Figure \ref{fig:RMSprop_SDE_Verification}. The architecture and loss are the same as used above for SignSGD. We run RMSprop for $2000$ epochs as we calculate the full gradient and inject it with Gaussian noise $Z \sim \mathcal{N}(0, \sigma^2 I_d)$ where $\sigma = 10^{-2}$. The learning rate is $\eta =10^{-4}$, $\beta = 0.9995$. Similarly, we integrate our RMSprop SDE (Eq. \ref{eq:RMSprop_SDE}) and that of Malladi (Eq. \ref{eq:RMSprop_Malladi_SDE}) with Euler-Maruyama (Algorithm \ref{algo:EulerMaruryama_SDE}) with $\Delta t = \eta$. Results are averaged over $3$ runs and the shaded areas are the average $\pm$ the standard deviation: Our SDE matches RMSprop much better.

\paragraph{CNN on MNIST.}
This paragraph refers to the \textit{top-right} of Figure \ref{fig:RMSprop_SDE_Verification}. The architecture and loss are the same as used above for SignSGD. We run RMSprop for $100000$ epochs as we calculate the full gradient and inject it with Gaussian noise $Z \sim \mathcal{N}(0, \sigma^2 I_d)$ where $\sigma=10^{-2}$. The learning rate is $\eta =10^{-4}$, $\beta = 0.999$. Similarly, we integrate our RMSprop SDE (Eq. \ref{eq:RMSprop_SDE}) and that of Malladi (Eq. \ref{eq:RMSprop_Malladi_SDE}) with Euler-Maruyama (Algorithm \ref{algo:EulerMaruryama_SDE}) with $\Delta t = \eta$. Results are averaged over $3$ run and the shaded areas are the average $\pm$ the standard deviation: Our SDE matches RMSprop much better.

\paragraph{Transformer on MNIST.}
This paragraph refers to the \textit{bottom-left} of Figure \ref{fig:RMSprop_SDE_Verification}. The architecture and loss are the same as used above for SignSGD. We run RMSprop for $2000$ epochs as we calculate the full gradient and inject it with Gaussian noise $Z \sim \mathcal{N}(0, \sigma^2 I_d)$ where $\sigma=10^{-2}$. The learning rate is $\eta =10^{-3}$, $\beta = 0.995$. Similarly, we integrate our RMSprop SDE (Eq. \ref{eq:RMSprop_SDE}) and that of Malladi (Eq. \ref{eq:RMSprop_Malladi_SDE}) with Euler-Maruyama (Algorithm \ref{algo:EulerMaruryama_SDE}) with $\Delta t = \eta$. Results are averaged over $3$ runs and the shaded areas are the average $\pm$ the standard deviation: Our SDE matches RMSprop much better.

\paragraph{ResNet on CIFAR-10.}
This paragraph refers to the \textit{bottom-right} of Figure \ref{fig:RMSprop_SDE_Verification}. The architecture and loss are the same as used above for SignSGD. We run RMSprop for $500$ epochs as we calculate the full gradient and inject it with Gaussian noise $Z \sim \mathcal{N}(0, \sigma^2 I_d)$ where $\sigma=10^{-4}$. The learning rate is $\eta =10^{-4}$, $\beta = 0.9999$. Similarly, we integrate our RMSprop SDE (Eq. \ref{eq:RMSprop_SDE}) and that of Malladi (Eq. \ref{eq:RMSprop_Malladi_SDE}) with Euler-Maruyama (Algorithm \ref{algo:EulerMaruryama_SDE}) with $\Delta t = \eta$. Results are averaged over $3$ runs and the shaded areas are the average $\pm$ the standard deviation: Our SDE matches RMSprop much better.

\subsection{Adam: SDE validation (Figure \ref{fig:Adam_SDE_Verification_Quadratic_EmbSaddle} and Figure \ref{fig:Adam_SDE_Verification})}
In this subsection, we describe the experiments we run to produce Figure \ref{fig:Adam_SDE_Verification} and Figure \ref{fig:Adam_SDE_Verification_Quadratic_EmbSaddle}: The dynamics of our SDE matches that of Adam better than that derived in \citep{Malladi2022AdamSDE}.

\paragraph{Quadratic convex function.}
This paragraph refers to the \textit{top-left} and \textit{top-right} of Figure \ref{fig:Adam_SDE_Verification_Quadratic_EmbSaddle}. We optimize the function $f(x) = \frac{x^{\top} H x}{2}$ where $H = \diag(10,2)$. We run Adam for $50000$ epochs as we calculate the full gradient and inject it with Gaussian noise $Z \sim \mathcal{N}(0, \sigma^2 I_d)$ where $\sigma = 0.01$. The learning rate is $\eta =0.001$, $\beta_1 = 0.9$, and $\beta_2=0.999$. Similarly, we integrate our Adam SDE (Eq. \ref{eq:Adam_SDE}) and that of Malladi (Eq. \ref{eq:Adam_Malladi_SDE}) with Euler-Maruyama (Algorithm \ref{algo:EulerMaruryama_SDE}) with $\Delta t = \eta$. Results are averaged over $500$ runs and the shaded areas are the average $\pm$ the standard deviation: Our SDE matches Adam much better.

\paragraph{Embedded saddle.}
This paragraph refers to the \textit{bottom-left} and \textit{bottom-right} of Figure \ref{fig:Adam_SDE_Verification_Quadratic_EmbSaddle}.  We optimize the function $f(x) = \frac{x^{\top} H x}{2} + \frac{1}{4}\lambda \sum_{i=1}^2 x_i^4 - \frac{\xi}{3}\sum_{i=1}^2 x_i^3$ where $H = \diag(-1,2)$, $\lambda=1$, and $\xi = 0.1$. We run Adam as we calculate the full gradient and inject it with Gaussian noise $Z \sim \mathcal{N}(0, \sigma^2 I_d)$ where $\sigma=0.1$. The learning rate is $\eta =0.001$, $\beta_1 = 0.9$, and $\beta_2=0.999$. Similarly, we integrate our Adam SDE (Eq. \ref{eq:Adam_SDE}) and that of Malladi (Eq. \ref{eq:Adam_Malladi_SDE}) with Euler-Maruyama (Algorithm \ref{algo:EulerMaruryama_SDE}) with $\Delta t = \eta$. Results are averaged over $500$ runs and the shaded areas are the average $\pm$ the standard deviation: Our SDE matches Adam much better.

\paragraph{DNN on Breast Cancer Dataset.}
This paragraph refers to the \textit{top-left} of Figure \ref{fig:Adam_SDE_Verification}. The architecture and loss are the same as used above for SignSGD. We run Adam for $2000$ epochs as we calculate the full gradient and inject it with Gaussian noise $Z \sim \mathcal{N}(0, \sigma^2 I_d)$ where $\sigma=10^{-2}$. The learning rate is $\eta =10^{-4}$, $\beta_1 = 0.99$, and $\beta_2=0.999$. Similarly, we integrate our Adam SDE (Eq. \ref{eq:Adam_SDE}) and that of Malladi (Eq. \ref{eq:Adam_Malladi_SDE}) with Euler-Maruyama (Algorithm \ref{algo:EulerMaruryama_SDE}) with $\Delta t = \eta$. Results are averaged over $3$ runs and the shaded areas are the average $\pm$ the standard deviation: Our SDE matches Adam much better.

\paragraph{CNN on MNIST.}
This paragraph refers to the \textit{top-right} of Figure \ref{fig:Adam_SDE_Verification}. The architecture and loss are the same as used above for SignSGD. We run Adam for $40000$ epochs as we calculate the full gradient and inject it with Gaussian noise $Z \sim \mathcal{N}(0, \sigma^2 I_d)$ where $\sigma=10^{-2}$. The learning rate is $\eta =10^{-3}$, $\beta_1 = 0.99$, and $\beta_2=0.999$. Similarly, we integrate our Adam SDE (Eq. \ref{eq:Adam_SDE}) and that of Malladi (Eq. \ref{eq:Adam_Malladi_SDE}) with Euler-Maruyama (Algorithm \ref{algo:EulerMaruryama_SDE}) with $\Delta t = \eta$. Results are averaged over $3$ runs and the shaded areas are the average $\pm$ the standard deviation: Our SDE matches Adam much better.

\paragraph{Transformer on MNIST.}
This paragraph refers to the \textit{bottom-left} of Figure \ref{fig:Adam_SDE_Verification}. The architecture and loss are the same as used above for SignSGD. We run Adam for $2000$ epochs as we calculate the full gradient and inject it with Gaussian noise $Z \sim \mathcal{N}(0, \sigma^2 I_d)$ where $\sigma=10^{-2}$. The learning rate is $\eta =10^{-2}$, $\beta_1 = 0.9$, and $\beta_2=0.99$. Similarly, we integrate our Adam SDE (Eq. \ref{eq:Adam_SDE}) and that of Malladi (Eq. \ref{eq:Adam_Malladi_SDE}) with Euler-Maruyama (Algorithm \ref{algo:EulerMaruryama_SDE}) with $\Delta t = \eta$. Results are averaged over $3$ runs and the shaded areas are the average $\pm$ the standard deviation: Our SDE matches Adam much better.

\paragraph{ResNet on CIFAR-10.}
This paragraph refers to the \textit{bottom-right} of Figure \ref{fig:Adam_SDE_Verification}. The architecture and loss are the same as used above for SignSGD. We run Adam for $2000$ epochs as we calculate the full gradient and inject it with Gaussian noise $Z \sim \mathcal{N}(0, \sigma^2 I_d)$ where $\sigma = 10^{-5}$. The learning rate is $\eta = 10^{-5}$, $\beta_1 = 0.99$, and $\beta_2=0.9999$. Similarly, we integrate our Adam SDE (Eq. \ref{eq:Adam_SDE}) and that of Malladi (Eq. \ref{eq:Adam_Malladi_SDE}) with Euler-Maruyama (Algorithm \ref{algo:EulerMaruryama_SDE}) with $\Delta t = \eta$. Results are averaged over $3$ runs and the shaded areas are the average $\pm$ the standard deviation: Our SDE matches Adam much better.

\subsection{RMSpropW \& AdamW: SDE validation (Figure \ref{fig:AdamWRMSpropW_SDE_Verification_Quadratic_EmbSaddle}, Figure \ref{fig:AdamW_RMSpropW_SDE_Verification})}
The settings are exactly the same as those for RMSprop and Adam. The regularization parameter used is always $\theta=0.01$. We observe that our SDEs match the respective algorithm with a good agreement.

\subsection{RMSpropW \& AdamW: insights validation (Figure \ref{fig:AdamW_RMSpropW_ScalingMoments})} In this subsection, we describe the experiments we run to produce Figure \ref{fig:AdamW_RMSpropW_ScalingMoments}: The theoretically predicted asymptotic loss value and moments of RMSpropW and AdamW match those empirically found.

\paragraph{Asymptotic loss \& scaling rule of AdamW.}
This paragraph refers to the \textit{top-left} of Figure \ref{fig:AdamW_RMSpropW_ScalingMoments}. We optimize the function $f(x) = \frac{x^{\top} H x}{2}$ where $H = \diag(1,3)$. We run AdamW for $20000$ epochs as we calculate the full gradient and inject it with Gaussian noise $Z \sim \mathcal{N}(0, \sigma^2 I_d)$ where $\sigma = 1$. The learning rate is $\eta =0.001$, $\beta_1 = 0.9$, and $\beta_2=0.999$. Experiments are run for both $\theta=1$ and $\theta=4$. The rescaled versions of the algorithms \textit{AdamW R} follow the novel scaling rule with $\kappa=2$. \textit{AdamW NR} follows the scaling rule but not for $\theta$ which is left unchanged. We plot the evolution of the loss values with the theoretical predictions of Lemma \ref{lemma:HPSR_Adam}: Results are averaged over $500$ runs.

\paragraph{Asymptotic loss \& scaling rule of RMSpropW.}
This paragraph refers to the \textit{top-right} of Figure \ref{fig:AdamW_RMSpropW_ScalingMoments}: The only difference with the previous paragraph is that we use RMSpropW with $\beta=0.999$.

\paragraph{AdamW: the role of the $\beta$s.} This paragraph refers to the \textit{bottom-left} of Figure \ref{fig:AdamW_RMSpropW_ScalingMoments}. We optimize the function $f(x) = \frac{x^{\top} H x}{2} + \frac{1}{4}\lambda \sum_{i=1}^2 x_i^4 - \frac{\xi}{3}\sum_{i=1}^2 x_i^3$ where $H = \diag(-1,2)$, $\lambda=1$, and $\xi = 0.1$. We run AdamW as we calculate the full gradient and inject it with Gaussian noise $Z \sim \mathcal{N}(0, \sigma^2 I_d)$ where $\sigma=0.1$. The learning rate is $\eta =0.001$, $\theta=0.1$, $\beta_1 \in \{0.99,0.999\}$, and $\beta_2 \in \{0.992,0.996, 0.998\}$: Clearly, three combinations go into a minimum and three go into the other. For each minimum, the three optimizers converge to the same asymptotic loss value independently on the values of $\beta_1$ and $\beta_2$. We argue that $\beta_1$, and $\beta_2$ select the basin and the speed of convergence, not the asymptotic loss value: This is consistent with Lemma \ref{lemma:HPSR_AdamW_Insights}.

\paragraph{Stationary distribution.}
This paragraph refers to the \textit{bottom-right} of Figure \ref{fig:AdamW_RMSpropW_ScalingMoments}. We optimize the function $f(x) = \frac{x^{\top} H x}{2}$ where $H = \diag(1,3)$. We run Adam for $20000$ epochs as we calculate the full gradient and inject it with Gaussian noise $Z \sim \mathcal{N}(0, \sigma^2 I_d)$ where $\sigma = 0.01$. The learning rate is $\eta =0.001$, $\theta=4$, $\beta = 0.999$, $\beta_1 = 0.9$, and $\beta_2=0.999$. We plot the evolution of the average variances with the theoretical predictions of Lemma \ref{lemma:RMSpropW_StaDistr} and Lemma \ref{lemma:AdamW_StaDistr_Insights}: Results are averaged over $100$ runs.

\subsection{Effect of noise - Validation (Figure \ref{fig:InfiniteSigma} and Figure \ref{fig:AdamL2_SigmaInfinite})}
In this subsection, we describe the experiments run to produce Figure \ref{fig:InfiniteSigma} and Figure \ref{fig:AdamL2_SigmaInfinite}: All bounds on the asymptotic expected loss value for SGD, SignSGD, Adam, and AdamW, and Adam on an $L^2$-regularized loss are perfectly verified.

We optimize the loss $f(x) = \frac{x^{\top} H x}{2}$ where $H = \diag(1,1)$ as we run each optimizer for $100000$ iterations with $\eta = 0.01$. We repeat this procedure five times, one for each $\sigma \in \{0.01, 0.1, 1,10,100 \}$. As we train, we inject noise on the gradient as distributed as $\mathcal{N}(0,\sigma^2 I_d)$. We plot the average loss together with the respective limits predicted by our Lemmas. For each optimizer and each $\sigma$, the average asymptotic loss matches the predicted limit. Therefore, we verify that the loss of SGD scales quadratically in $\sigma$, that of Adam on $f(x)$, Adam on $f(x) + \frac{\theta\lVert x\rVert_2^2}{2}$, and SignSGD scales linearly, and that of AdamW is limited in $\sigma$.

\begin{figure}%
   \centering
    {{\includegraphics[width=0.9\linewidth]{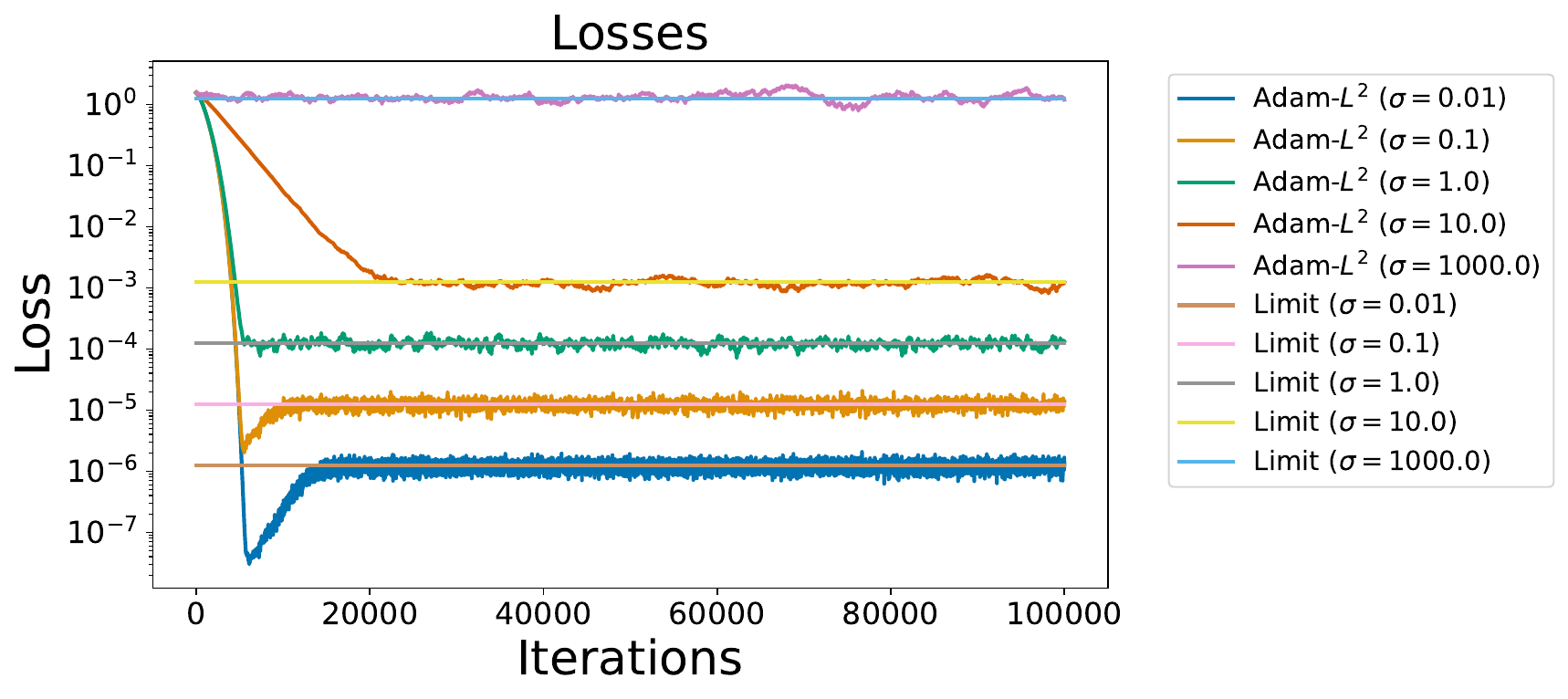} }}%
    \caption{Empirical validation of the bounds for Adam on an $L^2$-regularized loss $f(x) + \frac{\theta\lVert x\rVert_2^2}{2}$. For several levels of noise $\sigma$, we find that our theoretical predictions match the experimental results: The loss levels scale linearly in $\sigma$.}%
    \label{fig:AdamL2_SigmaInfinite}%
\end{figure}

\subsection{Increasing weight decay with the batch size (Figure \ref{fig:pythia_rms} and Table \ref{tab:beta_values_reduced})}
\label{sec:wd_batch}

\begin{figure}
    \centering
    \includegraphics[height=0.32\linewidth]{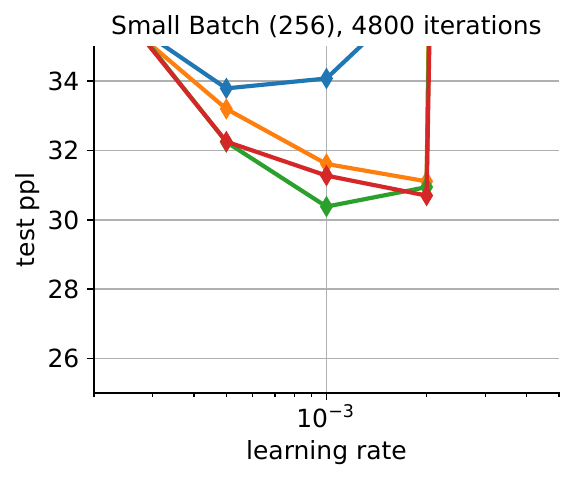}
    \includegraphics[height=0.32\linewidth]{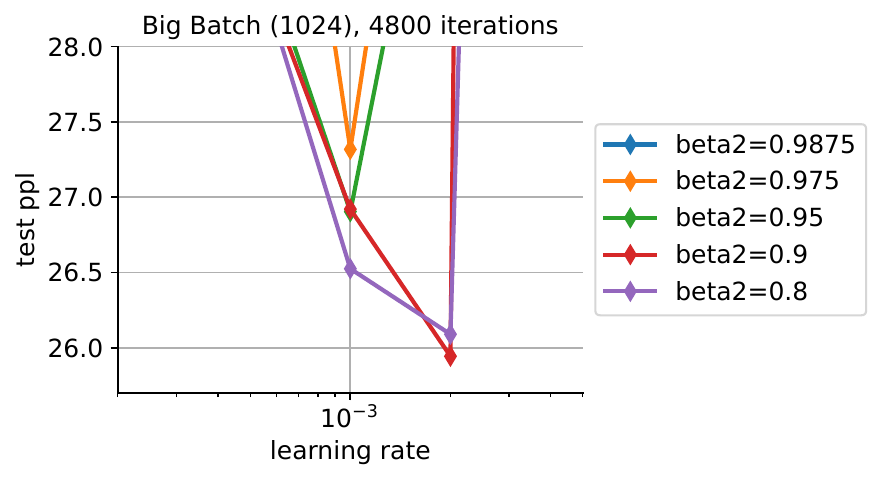}
    \caption{RMSprop Scaling. We first~(left plot) tune at a small batch size~(256) the learning rate and $\beta_2$ simultaneously and report all test perplexity results after cosine decay at 2.5B tokens~(4800 RMSprop steps). For these runs, weight decay is turned off to eliminate confounding factors in this first analysis~(see later Table~\ref{tab:beta_values_reduced}). We then report the results for the same tuning but at a batch size of 1024 sequences. One can clearly see that the optimal learning rate shifts up from $1e-3$ to $2e-3$, as predicted by both ours and \cite{Malladi2022AdamSDE} scalings~(factor $\delta = 4$). The best run in the base setting was with $\beta_2=0.95$. Our rule predicts a scaled up optimal $\beta_2$ with value $1-\sqrt{\delta}(1-\beta_2) = 1-2\times0.5 = 0.9$, while ~\cite{Malladi2022AdamSDE} predicts $1-\sqrt{\delta}(1-\beta_2) = 1-4\times 0.5 = 0.8$. The results show that while results are close~(biggest effects in learning rate scaling with $\sqrt{\delta}$), our scaling is slightly more performant.}
    \label{fig:pythia_rms}
\end{figure}

The analysis of \cite{Malladi2022AdamSDE} suggests that, when scaling batch size $B$ by a factor $\delta$, one also has to scale up $(\uparrow)$ the learning rate $\eta$ by a factor $\sqrt\delta$ and scale down $(\downarrow)$ $\beta_1$ to the value $1-\delta(1-\beta_1)$ and $\beta_2$ to the value $1-\delta(1-\beta_2)$. Our SDE analysis confirms similar rules~(Lemma~\ref{lemma:HPSR_AdamW_Insights}) but
\begin{enumerate}
    \item Proposes to scale down \textit{less} $\beta_1$ and $\beta_2$, i.e. as $1-\sqrt{\delta}(1-\beta)$.
    \item Additionally suggests scaling up the decoupled weight decay parameter $\theta$ by a factor $\sqrt{\delta}$.
\end{enumerate} 

We test this in the language modeling setting utilizing a Pythia-like 160M parameter transformer architecture~\citep{pythia} trained on 2.5B and 10B tokens from the SlimPajama dataset. We scale up the batch size by a factor of $\delta=4$ and keep the same number of iterations -- i.e., we have 10B tokens of training in the scaled-up runs. The sequence length in all of our experiments is 2048 tokens. We perform two experiments, and report results in Figure \ref{fig:pythia_rms} and Table \ref{tab:beta_values_reduced}:

\textbf{Vanilla experiment.} In Figure~\ref{fig:pythia_rms}, we set $\beta_1,\theta=0$ and study in isolation the effects of scaling the learning rate and $\beta_2$ as the batch size increases. The results clearly indicate that scaling up the learning rate is beneficial when increasing the batch size. In addition, they indicate that our scaling of $\beta_2$ might be slightly preferable compared to the one in~\cite{Malladi2022AdamSDE}.

\textbf{Scaling weight decay.} In Table \ref{tab:beta_values_reduced} we verify our scaling on AdamW runs~($\beta_1,\theta\ne0$). We operate in a similar regime as Figure~\ref{fig:pythia_rms} and scale 9 distinct configurations at a small batch size to a big batch size using different strategies. Again, results indicate the effectiveness of our strategy.

\textbf{Remark.} While the experiments above show promise, future research is needed in order to better compare and evaluate strategies. In particular, we noticed that in non-pathological settings, it might be beneficial to not scale down $\beta_1$ with the batch size and to perhaps keep the same learning rate.

\vspace{0.5cm}

\begin{table}
    \centering
    {\small
    \begin{tabular}{llll}
    \midrule
        Baseline& \citep{Malladi2022AdamSDE}& \citep{Malladi2022AdamSDE}& This paper, \\
        $B=256, \theta=0.1$ & $B=1024, \theta=0.1$ & $B=1024, \theta=0.2$ & $B=1024, \theta=0.2$ \\
        \midrule
        $\beta_1=0.975$, $\beta_2=0.9875$ & $\beta_1=0.9$, $\beta_2=0.95$ & $\beta_1=0.9$, $\beta_2=0.95$  & $\beta_1=0.95$, $\beta_2=0.975$ \\
        23.5889 & 20.6157 & 20.2158 & \textbf{19.5785} \\
        \addlinespace
        $\beta_1=0.975$, $\beta_2=0.975$ & $\beta_1=0.9$, $\beta_2=0.9$ & $\beta_1=0.9$, $\beta_2=0.9$ & $\beta_1=0.95$, $\beta_2=0.95$ \\
        23.7916 & 20.0463 & 19.7162 & \textbf{19.3467} \\
        \addlinespace

        $\beta_1=0.975$, $\beta_2=0.95$ & $\beta_1=0.9$, $\beta_2=0.8$ & $\beta_1=0.9$, $\beta_2=0.8$ & $\beta_1=0.95$, $\beta_2=0.9$ \\
        23.7461 & 20.0237 & 19.5846 & \textbf{19.2398} \\
        \addlinespace

        $\beta_1=0.95$, $\beta_2=0.9875$ & $\beta_1=0.8$, $\beta_2=0.95$ & $\beta_1=0.8$, $\beta_2=0.95$ & $\beta_1=0.9$, $\beta_2=0.975$ \\
        23.4310 & 22.7750 & 21.3613 & \textbf{20.6354} \\
        \addlinespace

        $\beta_1=0.95$, $\beta_2=0.975$ & $\beta_1=0.8$, $\beta_2=0.9$ & $\beta_1=0.8$, $\beta_2=0.9$ & $\beta_1=0.9$, $\beta_2=0.95$ \\
        23.3911 & 21.4489 & 20.4485 & \textbf{20.2158} \\
        \addlinespace

        $\beta_1=0.95$, $\beta_2=0.95$ & $\beta_1=0.8$, $\beta_2=0.8$ & $\beta_1=0.8$, $\beta_2=0.8$ & $\beta_1=0.9$, $\beta_2=0.9$ \\
        23.4654 & 20.5648 & 20.3054 & \textbf{19.7162} \\
        \addlinespace

        $\beta_1=0.9$, $\beta_2=0.9875$ & $\beta_1=0.6$, $\beta_2=0.95$ & $\beta_1=0.6$, $\beta_2=0.95$ & $\beta_1=0.8$, $\beta_2=0.975$ \\
        25.0240 & 1972.5442 & 185.4383 & \textbf{23.3668} \\
        \addlinespace

        $\beta_1=0.9$, $\beta_2=0.975$ & $\beta_1=0.6$, $\beta_2=0.9$ & $\beta_1=0.6$, $\beta_2=0.9$ & $\beta_1=0.8$, $\beta_2=0.95$ \\
        25.1012 & 646.8980 & 42.6782 & \textbf{21.3613} \\
        \addlinespace

        $\beta_1=0.9$, $\beta_2=0.95$ & $\beta_1=0.6$, $\beta_2=0.8$ & $\beta_1=0.6$, $\beta_2=0.8$ & $\beta_1=0.8$, $\beta_2=0.9$ \\
        23.6411 & 145.7700 & 24.2124 & \textbf{20.4485} \\
        \bottomrule
    \end{tabular}
    }
    \caption{We perform $9$ AdamW base runs at a batch size of $256$ sequences of length $2048$~(first column). A total of $4800$ steps are performed, for a total of $2.5 B$ tokens.  For these runs, we always select a learning rate of $0.004$ and weight decay $\theta$ of $0.1$. We report results~(test perplexity) for 9 different combinations of $\beta_1,\beta_2$. In the three right-most columns, we scale each setting according either to \cite{Malladi2022AdamSDE} or according to Lemma~\ref{lemma:HPSR_AdamW_Insights}. Since~\cite{Malladi2022AdamSDE} does not give prescriptions on the weight decay value $\theta$, we either scale it as we propose or leave it at $0.1$. Scaled up runs process $4\times$ the number of tokens, i.e. $10B$ tokens: the algorithm performs the same number of steps but with a batch size of $1024$ sequences~(factor 4). Test accuracy results indicate that our scaling is more effective in terms of final test perplexity compared to~\citep{Malladi2022AdamSDE}.}
    \label{tab:beta_values_reduced}
\end{table}

\end{document}